\documentclass{article}

\usepackage{microtype}
\usepackage{graphicx}
\usepackage{subcaption}
\usepackage{booktabs} 
\usepackage[numbers]{natbib}
\usepackage{enumitem}

\usepackage{tikz}
\usetikzlibrary{arrows.meta,positioning,fit,calc,backgrounds}

\usepackage{amsmath,amsfonts,bm}

\def\1{\bm{1}}

\DeclareMathAlphabet{\mathsfit}{\encodingdefault}{\sfdefault}{m}{sl}
\SetMathAlphabet{\mathsfit}{bold}{\encodingdefault}{\sfdefault}{bx}{n}

\newcommand{\E}{\mathbb{E}}

\usepackage{hyperref}
\usepackage{needspace}

\usepackage{mathtools}
\mathtoolsset{showonlyrefs = true}

\usepackage[accepted]{icml2026}

\usepackage{amsmath}
\usepackage{amssymb}
\usepackage{mathtools}
\usepackage{amsthm}
\usepackage{bm}

\usepackage[capitalize,noabbrev]{cleveref}

\theoremstyle{plain}
\newtheorem{theorem}{Theorem}[section]
\newtheorem{proposition}[theorem]{Proposition}
\newtheorem{lemma}[theorem]{Lemma}
\newtheorem{corollary}[theorem]{Corollary}
\theoremstyle{definition}

\newtheorem{assumption}[theorem]{Assumption}
\theoremstyle{remark}
\newtheorem{remark}[theorem]{Remark}

\usepackage[most]{tcolorbox}

\tcbset{
  colback=cyan!8,
  colframe=teal!65!black,
  boxrule=0.4pt,          
  arc=3pt,               
  left=5pt,right=5pt,     
  top=6pt,bottom=6pt,
  fontupper=\fontsize{10}{11}\selectfont,
  before skip=12pt,
  after skip=12pt,
  toptitle=1pt,     
  bottomtitle=1pt,
}

\newtcolorbox{takeawaybox}[1][]{
  enhanced,
  title=\textbf{Takeaway},
  fonttitle=\bfseries,
  #1
}

\usepackage[disable,textsize=tiny]{todonotes}

\icmltitlerunning{A Decision-Theoretic View of Test-Time Training}

\begin{document}

\twocolumn[
  \icmltitle{A Decision-Theoretic View of Test-Time Training:\\ When, How Far, and Which Directions to Adapt}




  \begin{icmlauthorlist}
    \icmlauthor{Tomoya Wakayama}{riken}
  \end{icmlauthorlist}
  \icmlaffiliation{riken}{RIKEN Center for Advanced Intelligence Project (AIP), Tokyo, Japan}
  \icmlcorrespondingauthor{Tomoya Wakayama}{tomoya.wakayama@riken.jp}
  \icmlkeywords{Test-Time Training, Bayesian Inference, Kernel Methods, Transformers,Pac-Bayes}

  \vskip 0.3in
]



\printAffiliationsAndNotice{}  

\begin{abstract}
Test-time training (TTT) adapts a pretrained model to each prompt via parameter updates, improving accuracy under pretraining-to-test distribution shifts. Yet, its performance often suffers from instability and sensitivity to hyperparameters such as update steps and subspace. We explain this behavior through a decision-theoretic lens, treating TTT as implicit Bayesian inference in the kernel regime. Under a Gaussian process benchmark, we show that TTT reduces prediction error when updates are spectrally matched to the prompt's signal-to-noise ratio and aligned with query-relevant eigen-directions. This perspective underpins the following results: (1) we show when fixed update steps and subspaces fail under distribution shifts, motivating adaptive strategies; (2) we prove that selecting update steps via prompt evidence admits a PAC-Bayes guarantee against overfitting; and (3) we characterize the Bayes-optimal update subspace under a linear-Gaussian correction model, yielding a scoring rule for selecting Transformer blocks and heads. Our theory helps explain the empirical instability of TTT, taking a step toward principled guidance for when, how far, and which directions to adapt.
\end{abstract}

\section{Introduction}
\label{sec:introduction}

Modern foundation models increasingly serve as general-purpose predictors, answering diverse user queries. Yet real deployments frequently deviate from pretraining conditions, in which a fixed (no-update) inference procedure can become brittle. This has kindled interest in test-time training (TTT), which updates model parameters using only the prompt to adapt to a specific test task. Following the seminal work by \citet{SunEtAl2020TTT}, a growing body of work has shown that TTT improves robustness and domain generalization in vision \citep{wang2021tent,Wang_2022_CVPR,yuan2023robust}, extends to high-stakes settings such as medical image segmentation \citep{pmlr-v227-valanarasu24a}, and adapts end-to-end automatic speech recognition models to non-stationary acoustic conditions \citep{Kim_2023_Interspeech_SGEM,Lin_2024_EMNLP}. More recently, analogous ideas have also been explored for large language models \citep{hardt2024test,hubotter2025efficiently}.

While these empirical improvements are noteworthy, TTT often exhibits instability in practice. TTT's performance is sensitive to optimization hyperparameters (e.g., update spaces and step count), and more steps can negatively impact performance \citep{pmlr-v202-zhao23d}. This sensitivity is exacerbated by non-stationary shifts and small-batch streaming \citep{Boudiaf_2022_CVPR,pmlr-v162-niu22a}, and analogous issues have been reported for language models \citep{akyurek2025the}. While recent works seek more stable updates and safer parameter choices \citep{alfarra2025testtime,sahoo2025layer}, we still lack principled guidance for TTT.

Prior theories~\citep{gozeten2025testtime,kuwataka2025test} have started to characterize when TTT can improve in-context learning (ICL, i.e., no parameter updates) in specific regimes (e.g., linearized transformers or structured task models). Complementary to the ICL-centered analyses, \citet{hubotter2025specialization} propose a mechanistic explanation for why TTT can help at scale even on nominally in-distribution data. Still, general principles guiding the choice of optimization hyperparameters remain elusive. Our work focuses on the following questions: (i) \textit{when TTT reduces the predictive risk relative to ICL}, (ii) \textit{how the number of test-time update steps affects performance}, and (iii) \textit{which parameter subspace is effective for updating}.

In a kernel (linearized-quadratic) regime, where TTT typically operates, we present a Bayes-risk perspective by interpreting TTT as \emph{implicit Bayesian inference}. Here, the update steps and the update subspace act as hyperparameters of a prompt-induced prior. Our claims are as follows.
\begin{enumerate}[label=\textbf{(C\arabic*)}, leftmargin=*,itemsep=1pt, parsep=0pt, topsep=0pt, partopsep=0pt, before=\vspace{-0.4\baselineskip}]
\item Under a Gaussian process benchmark that couples prompt and query, we derive a conditional risk identity showing that query predictions benefit from both filter match (in eigenvalue space) and eigen-alignment with the prompt-query coupling vector (Thm.~\ref{thm:predictive_gap_main}).

We further show that fixed steps and fixed update subspaces can fail (Propositions~\ref{prop:implication_prior_shift_informal}--\ref{prop:implication_subspace_informal}).

\item We show that selecting the number of updates by prompt evidence provides a PAC-Bayes guarantee and mitigates overfitting to the prompt (Thm.~\ref{thm:pac_bayes_horizon_main}).

\item Under a linear-Gaussian correction prior, we characterize the Bayes-optimal update subspace via an information-capture matrix, yielding an attention block/head scoring rule for prediction (Thm.~\ref{thm:optimal_subspace_sec8}); see App.~\ref{app:subspace} for a regret bound.
\end{enumerate}

Together, these results give a principled starting point for analyzing few-step, restricted-update TTT. We do not aim to give a complete theory of fully nonlinear TTT. Rather, we use a local Bayesian lens to identify design principles for TTT: how far to update, and which directions to update. The Transformer specializations in Secs.~\ref{subsec:transformer_geometry} and~\ref{subsec:sec8_transformer_subspace} make this link to block/head-level TTT explicit.

\subsection{Related Work}
\textbf{Test-time training.}\,
TTT is widely used for domain generalization and test-time adaptation \citep{SunEtAl2020TTT,pmlr-v162-niu22a}. Recent work studies parameter-efficient or specialization-oriented updates for foundation models \citep{hubotter2025specialization}. Theoretically, \citet{gozeten2025testtime} relate TTT to ICL in linearized transformer settings, and \citet{kuwataka2025test} show multi-step TTT can enhance ICL on single-index tasks. We cast TTT as Bayesian inference in a kernel regime, analyzing a risk identity and PAC-Bayes theory for steps and optimal update subspaces.

\textbf{Bayesian view of test-time behavior.}\,
ICL is often viewed as amortized Bayesian inference learned through pretraining~\citep{reuter2025transformers,mittal2025amortized}. Because its inference mapping is fixed, correcting for prior mismatch~\citep{wakayama2025context} or the amortization gap~\citep{margossian2024amortized} at test time remains challenging. Unlike fixed in-context inference, we view TTT as prompt-induced Bayesian inference, revealing how evidence-tuned steps and subspaces correct prior mismatch.

\section{Setup: tasks, prompts, and risk}
\label{sec:decision_theory}
In TTT and ICL, we consider a distribution over tasks: at test time, a latent task is sampled, and then the prompt and query are generated from the corresponding task-specific data distribution. This section introduces a mixture (hierarchical) regression setting as in~\citet{wakayama2025context} and formalizes an analytical framework. 

\paragraph{Mixture task model and test-time information}

Let $\alpha$ denote a latent task variable drawn from a meta-distribution $\pi$ over tasks. Conditioned on $\alpha$, data pairs $(X,Y)$ follow a task-specific distribution $\mathcal{P}_\alpha$ over $\mathcal{X}\times\mathcal{Y} \subset \mathbb{R}^{d_x}\times \mathbb{R}$. At test time, we observe an i.i.d. labeled prompt $P_n := \{(X_i,Y_i)\}_{i=1}^n$, followed by a query input $X$ and target $Y$, all generated from the same task
\begin{equation}
\alpha \sim \pi,\quad
(X_i,Y_i)\mid \alpha \stackrel{\mathrm{i.i.d.}}{\sim} \mathcal{P}_\alpha \ ,\quad
(X,Y)\mid \alpha \sim \mathcal{P}_\alpha.   \label{eq:mixture_model_sec2}
\end{equation}
At test time, only $(P_n,X)$ is observed, while $\alpha$ is not. Throughout, we assume each labeled example corresponds to a single token position.

\paragraph{Bayes gap.}

We use the squared loss $\ell(u,v)=(u-v)^2$, following~\citet{gozeten2025testtime,kuwataka2025test}. To evaluate a measurable map $M: (P_n,X) \mapsto\widehat{Y}\in\mathbb{R}$, we adopt the \emph{Bayes risk}~\citep[see][\S 5.3.1.2]{Murphy2022intro}, i.e., the expected squared prediction error $\mathcal{R}(M):=\mathbb{E}\bigl[\bigl\{M(P_n,X)-Y\bigr\}^2\bigr]$, where the expectation is over the joint distribution in \eqref{eq:mixture_model_sec2}. 

In this setting, the minimizer of the Bayes risk among all square-integrable measurable maps of $(P_n,X)$ is the conditional mean $M_{\mathrm{Bayes}}(P_n,X):=\mathbb{E}[Y\mid P_n,X]$. See Appendix~\ref{app:bayes_primer} for a review of Bayes risk and the Bayes predictor.

\begin{proposition}[Orthogonal Bayes-risk decomposition]
\label{prop:bayes_decomp_sec2}
Assume $\mathbb{E}[M(P_n,X)^2]<\infty$ and $\mathbb{E}[Y^2]<\infty$. Then
\begin{align}\vspace{-2mm}
\mathcal{R}(M)
&=
\underbrace{\mathbb{E}\Bigl[\bigl(M(P_n,X)-M_{\mathrm{Bayes}}(P_n,X)\bigr)^2\Bigr]}_{=:\ \mathcal{G}(M)\ \textnormal{(Bayes gap)}} \nonumber\\
&\quad+\underbrace{\mathbb{E}\Bigl[\bigl(M_{\mathrm{Bayes}}(P_n,X)-Y\bigr)^2\Bigr]}_{=:\ \mathcal{V}\ =\ \mathbb{E}[\mathrm{Var}(Y\mid P_n,X)]\ \textnormal{(posterior variance)}}.
\label{eq:bayes_decomp_sec2}
\end{align}\vspace{-2mm}
\end{proposition}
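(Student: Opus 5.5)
The plan is the standard Pythagorean argument in $L^2$. Write $Z:=(P_n,X)$, $M:=M(Z)$ and $B:=M_{\mathrm{Bayes}}(Z)=\mathbb{E}[Y\mid Z]$. Since $\mathbb{E}[Y^2]<\infty$, conditional Jensen gives $\mathbb{E}[B^2]\le\mathbb{E}[Y^2]<\infty$, so $M-B$ and $B-Y$ are both square-integrable. I then add and subtract $B$ inside the loss and expand:
\begin{equation}
\mathcal{R}(M)=\mathbb{E}\bigl[(M-B)^2\bigr]+\mathbb{E}\bigl[(B-Y)^2\bigr]+2\,\mathbb{E}\bigl[(M-B)(B-Y)\bigr].
\end{equation}
Each of the three expectations is finite. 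The two squared terms are finite by the previous step, and the cross term is finite by Cauchy--Schwarz, so the expansion is legitimate.

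The remaining step, and the only one needing care, is to show that the cross term vanishes. The factor $M-B$ is $\sigma(Z)$-measurable and square-integrable, and the product $(M-B)(B-Y)$ is integrable. So I can condition on $Z$ and use the tower property, pulling out what is known:
\begin{equation}
\mathbb{E}\bigl[(M-B)(B-Y)\bigr]=\mathbb{E}\bigl[(M-B)\,\mathbb{E}[B-Y\mid Z]\bigr]=\mathbb{E}\bigl[(M-B)(B-B)\bigr]=0.
\end{equation}
The integrability check justifies the "pull out what is known" step. I expect this to be the only technical point, and Cauchy--Schwarz settles it.

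Finally, I identify the second term as the posterior variance. Conditioning on $Z$ gives $\mathbb{E}[(B-Y)^2\mid Z]=\mathbb{E}[(Y-\mathbb{E}[Y\mid Z])^2\mid Z]=\mathrm{Var}(Y\mid Z)$. Taking expectations yields $\mathcal{V}=\mathbb{E}[\mathrm{Var}(Y\mid P_n,X)]$, which completes \eqref{eq:bayes_decomp_sec2}. As a byproduct, $\mathcal{G}(M)\ge 0$, with equality if and only if $M=M_{\mathrm{Bayes}}$ almost surely. This confirms the minimality claim stated before the proposition.
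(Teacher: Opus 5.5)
Your proof is correct and follows the same route as the paper's: the paper also writes $Y=M_{\mathrm{Bayes}}(P_n,X)+\xi$ with $\mathbb{E}[\xi\mid P_n,X]=0$, expands the square, and uses the tower property to make the cross term vanish. Your explicit integrability checks — conditional Jensen for $\mathbb{E}[M_{\mathrm{Bayes}}^2]<\infty$ and Cauchy--Schwarz for the cross term — supply details the paper leaves implicit.
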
\vspace{-3mm}

Here, $\mathcal{V}$ is the irreducible uncertainty about $Y$ after observing $(P_n,X)$ and is independent of predictor $M$. In contrast, the Bayes gap $\mathcal{G}(M)$ is the excess risk relative to $M_{\mathrm{Bayes}}$, measuring how accurately $M$ approximates $M_{\mathrm{Bayes}}$ as a map of $(P_n,X)$. Thus, any improvement of TTT must come from reducing $\mathcal{G}$.

\textbf{Correction view.}\,
To analyze the adaptation process, consider the setting where we correct a base predictor $B$ (e.g., ICL). We write any adapted rule as $M=B+C$ with correction $C$. Let $\Delta_B(P_n,X) := \mathbb E[Y\mid P_n,X]-B(P_n,X)$ be the Bayes-optimal correction. Under squared loss, the Bayes gap of $M$ simplifies to correction gap $\mathcal G(M)=\mathbb E\!\left[\{C(P_n,X)-\Delta_B(P_n,X)\}^2\right]$. Thus, improving over $B$ amounts to learning a prompt-dependent correction. 

As the Bayes correction under the general model is typically intractable, in Secs.~\ref{sec:bayes_gap}--\ref{sec:subspace_design}, we analyze the Bayes gap relative to a tractable benchmark Bayes correction.

\section{A Local Linearized Model for TTT}\label{sec:local_model}

This section builds a tractable local model for few-step TTT. We do not assume that practical TTT is globally linear. Instead, we use the first-order model as a controlled approximation to the nonlinear update path. This approximation is appropriate when the update subspace is restricted and the number of test-time steps is modest; Appendix~\ref{app:linearization_error} gives a finite-step deviation bound between the nonlinear and linearized prompt corrections. Within this model, TTT becomes a linear inverse problem in the update parameters. Since test-time supervision is available only through the labeled prompt, we use leave-one-out residuals as noisy observations of the desired correction at prompt points.

\textbf{Residual correction induced by a base predictor.}
Let TTT output $M_{\mathrm{TTT}}(P_n,x)\!=\!B(P_n,x)+C(P_n,x)$. We form leave-one-out residuals $r\in\mathbb R^n$ with $r_i:=y_i-B(P_n^{(-i)},x_i)$, where $P_n^{(-i)}:=P_n\setminus\{(x_i,y_i)\}$. Define the (prompt-point) Bayes-optimal correction by $f_{\star,i}:=\E[Y_i\mid P_n^{(-i)},x_i]-B(P_n^{(-i)},x_i)$. Then, $\E[ r_i - f_{\star,i}\mid P_n^{(-i)},x_i]=0$, so $r$ is an unbiased noisy observation of $f_\star$. We use $r$ to avoid label leakage: the target $y_i$ is not used in computing either $B(P_n^{(-i)},x_i)$ or the Jacobian feature, defined later.

\subsection{Restricted update subspace and Jacobian features}
\label{subsec:update_subspace_sec3}

Let $w\in\mathbb{R}^p$ denote the full parameter vector of a model and $w_0$ its pretrained value. Let $f_w(x;P_n)\in\mathbb{R}$ denote the model prediction at query input $x$ given prompt $P_n$ under parameters $w$. Note that $B(P_n,x) = f_{w_0}(x;P_n)$. Since practical TTT typically updates only a small subset of parameters or a low-rank reparameterization~\citep{wang2021tent,pmlr-v162-niu22a,hubotter2025efficiently,hubotter2025specialization}, we consider the following setting.

\begin{assumption}[Restricted update subspace]\label{assump:update_subspace_sec3}
There exists a $d(<p)$-dimensional linear subspace $\mathcal U\subset\mathbb R^p$ such that test-time updates satisfy $w\in w_0+\mathcal U$.
\end{assumption}

 Only the subspace $\mathcal U$ matters, not a particular basis, that is, any full-column-rank $\widetilde A$ with $\mathrm{range}(\widetilde A)=\mathcal U$ can be replaced, without loss of generality, by an orthonormal basis. Let $A\in\mathbb R^{p\times d}$ be such a basis ($A^\top A=I_d$), so that any update can be written as $w(\theta)=w_0+A\theta$. We denote the orthogonal projector by $\Pi_{\mathcal U}:=AA^\top$.

Since TTT uses few gradient steps within a low-dimensional update subspace, a first-order expansion in $\theta$ is a natural local surrogate~\citep{Jacot2018NTK,Lee2019WideNNLinear,chizat2019lazy}. We refer to this few-step approximation as a local linear (kernel/NTK) regime.

\begin{assumption}[Linearized surrogate model]
\label{assump:linearization_sec3}
We analyze TTT through the linearized surrogate $f_\theta(x;P_n) \ :=\ f_{w_0}(x;P_n) + \phi(x;P_n)^\top \theta$, with Jacobian features $\phi(x;P_n) := \nabla_\theta f_{w(\theta)}(x;P_n)\big|_{\theta=0}$.
\end{assumption}

Note that the approximation $f_\theta(x;P_n) \approx f_{w_0}(x;P_n) + \phi(x;P_n)^\top \theta$ is valid in the few-step regime, as in common practice~\citep{akyurek2025the,SunEtAl2020TTT}. See Appendix~\ref{app:linearization_error} for validation of the assumption.

Stack the Jacobian features into the matrix $\Phi := (\phi(x_1;P_n^{(-1)}),\ldots, \phi(x_n;P_n^{(-n)}))^{\top} \in \mathbb{R}^{n\times d}$. Under Assumption~\ref{assump:linearization_sec3}, the vector of prediction shifts on the prompt is approximately $\Phi\theta$. Hence, fitting the residuals $r$ reduces to a linear inverse problem in $\theta$.

\subsection{Quadratic prompt loss and the prompt-induced geometry $K(P_n)$}
\label{subsec:prompt_geometry_sec3}

Based on the linearization, we use the quadratic prompt objective $L(\theta):= \tfrac{1}{2\sigma^2}\|r-\Phi\theta\|_2^2$. For fixed $\sigma^2$, minimizing $L$ is maximum likelihood estimation in the Gaussian regression model $r = \Phi\theta_\star + \varepsilon$ with $\varepsilon\sim\mathcal{N}(0,\sigma^2 I_n)$, where $\theta_\star$ is the (unknown) optimal update parameter that best explains the residuals under the linearization and $\sigma^2$ is an effective noise level that accounts for label noise and linearization errors. In Sec.~\ref{sec:spectral_and_prior}, we study the few-step optimization dynamics of $L$.

In this kernel regime, the optimization dynamics on this objective are governed by the prompt-space Gram structure of the features, captured by $K(P_n) := \Phi\Phi^\top \in \mathbb{R}^{n\times n}$. We refer to $K$ as the \emph{prompt-induced geometry}, as it determines how updates couple the $n$ prompt residuals during optimization. Crucially, this geometry depends on both the prompt instance $P_n$ and the update mechanism $A$ (e.g., which attention heads are adaptable at test time). This dependence distinguishes TTT from classical early stopping with a fixed geometry, where the Gram structure is predetermined.

To relate the prompt geometry $K=\Phi\Phi^\top$ to the full parameter space and the chosen update subspace, collect the $n$ supervised prompt outputs as $s_w(P_n)=(s_1(w),\ldots,s_n(w))\in\mathbb{R}^n$ with $s_i(w):=f_w(x_i;P_n^{(-i)})$ and targets $y\in\mathbb{R}^n$; then $r= y-s_{w_0}(P_n)$. Let $J_w:=\nabla_w s_w(P_n)\big|_{w=w_0}\in\mathbb{R}^{n\times p}$. Since $w(\theta)=w_0+A\theta$, the Jacobian in update parameters is
$\nabla_\theta s_{w(\theta)}(P_n)\big|_{\theta=0}=J_wA$, i.e., $\Phi=J_wA$, and thus $K=\Phi\Phi^\top=(J_wA)(J_wA)^\top=J_w\Pi_{\mathcal{U}}J_w^\top$.

\subsection{Specializing the Prompt Kernel to Transformers}
\label{subsec:transformer_geometry}

We specialize $K$ to Transformer attention blocks~\citep{vaswani2017attention} under a local frozen-feature regime. Specifically, we evaluate all forward activations (including attention weights) and upstream gradients at $w_0$ and treat them as constants (i.e., we ignore their dependence on the test-time updates), in the spirit of~\citet[][]{chizat2019lazy,fu2023single_attention}. See Appendix~\ref{app:transformer_geometry_details} for detailed discussion.

Let $h\in\{1,\dots,H\}$ index attention heads. We index the $n$ supervised prompt positions (tokens) by $k\in\{1,\dots,n\}$ and define the head output at token $k$ by
\(
o_k^{(h)} := W_O^{(h)} u_k^{(h)},
u_k^{(h)} := \sum_{s\le k}\alpha_{ks}^{(h)}W_V^{(h)} z_s ,
\)
where $z_s$ is the residual stream and $\alpha_{ks}^{(h)}$ are attention weights (evaluated at $w_0$). Under tokenwise supervision ($i\equiv k$), let $g_k^{(h)} :=\nabla_{o_k^{(h)}} s_k(w)\big|_{w_0}$ be the upstream gradient.

Define \(m_k^{(h)}:=\sum_{s\le k}\alpha_{ks}^{(h)} z_s\) and \(
\beta_k^{(h)}:=(W_O^{(h)})^\top g_k^{(h)},\) so $u_k^{(h)}=W_V^{(h)}m_k^{(h)}.$ Then the Jacobian feature w.r.t.\ $W_V^{(h)}$ is $m_k^{(h)}\otimes \beta_k^{(h)}$ and $K^{(h,V)}_{kk'}=(m_k^{(h)\top}m_{k'}^{(h)})(\beta_k^{(h)\top}\beta_{k'}^{(h)})$.

Stack $Z=[z_1^\top;\dots;z_n^\top]$, $\mathcal{A}^{(h)}_{ks}=\alpha^{(h)}_{ks}$, $M^{(h)}:=\mathcal{A}^{(h)}Z$, and $\mathbf{B}^{(h)}=[\beta_1^{(h)\top};\dots;\beta_n^{(h)\top}]$.
Then
\begin{equation}
K^{(h,V)}=\bigl(M^{(h)}M^{(h)\top}\bigr)\odot\bigl(\mathbf{B}^{(h)}\mathbf{B}^{(h)\top}\bigr),
\label{eq:KV_hadamard}
\end{equation}
and if $g_k^{(h)}\equiv g^{(h)}$ (hence $\beta_k^{(h)}\equiv \beta^{(h)}$),
\begin{equation}
K^{(h,V)}
=\|\beta^{(h)}\|_2^2\,\mathcal{A}^{(h)}(ZZ^\top)\mathcal{A}^{(h)\top}.
\label{eq:KV_sandwich_globalbeta}
\end{equation}

\section{Few-step TTT as implicit Bayesian inference}
\label{sec:spectral_and_prior}

This section shows that the resulting update is a spectral filter and admits an exact auxiliary Bayesian interpretation.

\textbf{Gradient descent dynamics.}\,
To analyze the update dynamics under the local linear--quadratic surrogate of Sec.~\ref{sec:local_model}, fix a prompt $P_n$ and consider gradient descent on the quadratic prompt loss $L(\theta)$ starting from $\theta_0=0$. With a scaled step size $\rho:=\eta/\sigma^2$, the parameter update is
\begin{equation}
    \theta_{t+1}
    =\theta_t-\eta\nabla L(\theta_t)
    =\theta_t+\rho\, \Phi^\top(r-\Phi\theta_t).\label{eq:gd_theta_sec4}
\end{equation}
Multiplying \eqref{eq:gd_theta_sec4} by $\Phi$ yields the dynamics of the in-prompt prediction correction $\hat f_t := \Phi\theta_t$ as $\hat f_{t+1} = \hat f_t + \rho K (r-\hat f_t),$ where $K:=\Phi\Phi^\top$. This reveals that TTT operates on $r$ solely through $K(P_n)$.

\textbf{Spectral filtering.}\,
In this setup, the standard early-stopping identity~\citep{YaoRosascoCaponnetto2007EarlyStopping,BauerPereverzevRosasco2007} gives $\hat f_T = G_T r$, where $G_T := I_n-(I_n-\rho K)^T$. If $K=U\,\mathrm{diag}(\lambda_1,\dots,\lambda_n)\,U^\top$ denotes an eigen-decomposition with $\lambda_i\ge 0$, then
\begin{equation}
    \hat f_T =U\,\mathrm{diag}\!\bigl(g_T(\lambda_1),\dots,g_T(\lambda_n)\bigr)\,U^\top r,
\label{eq:spectral_filter_form_sec4}
\end{equation}
where $g_T(\lambda):=1-(1-\rho\lambda)^T$. Set $0<\rho<1/\lambda_{\max}(K)$ to ensure $g_T(\lambda)\in[0,1)$ for all $\lambda\in[0,\lambda_{\max}(K)]$ and $T<\infty$.

Under the same linearized model, the $T$-step update also induces a query-point correction of the form
$\hat h_T(x):=\phi(x;P_n)^\top\theta_T = k_x^\top q_T(K)\,r$, where $k_x:=\Phi\,\phi(x;P_n)\in\mathbb{R}^n$ and
$q_T(\lambda):=\{1-(1-\rho\lambda)^T\}/\lambda$ for $\lambda>0$ (with $q_T(0):=\rho T$). Thus, to understand when TTT helps, it suffices to compare the induced filter $q_T$ with the Bayes-optimal filter over the spectrum of $K$, with spectral weights scaled by the alignment of $k_x$ with the eigenvectors of $K$ (Sec.~\ref{sec:bayes_gap}).

\textbf{Bayesian view.}\,
Recalling that the quadratic prompt objective corresponds to a Gaussian observation model with noise level $\sigma^2$ as in Sec.~\ref{subsec:prompt_geometry_sec3}, we next show that the smoother $r\mapsto \hat f_T$ admits an exact representation as a Gaussian posterior mean. This interprets the algorithm-induced map $G_T$, and we do not assume the true residuals are generated from this prior. This view treats $T$ as a hyperparameter and selects it based on evidence (Sec.~\ref{sec:eb_horizon_selection}).

\begin{proposition}[TTT equals a Gaussian posterior mean]
\label{thm:implicit_prior_sec4}
Assume $0<\rho<1/\lambda_{\max}(K)$.
There exists a Gaussian prior $f\sim\mathcal{N}(0,\sigma^2 G_T\,(I_n-G_T)^{-1})$ such that, under the auxiliary observation model $r = f + \varepsilon,\, \varepsilon\sim\mathcal{N}(0,\sigma^2 I_n)$, the posterior mean satisfies $\mathbb{E}[f\mid r] = \hat f_T$.
\end{proposition}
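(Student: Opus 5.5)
The plan is to work entirely in the eigenbasis of $K$. By \eqref{eq:spectral_filter_form_sec4}, $G_T = U\,\mathrm{diag}(g_T(\lambda_1),\dots,g_T(\lambda_n))\,U^\top$ is a polynomial in the symmetric matrix $K$, hence symmetric and simultaneously diagonalizable with anything else built from $K$. The first step is to check that the proposed covariance is well defined and positive semidefinite. Under $0<\rho<1/\lambda_{\max}(K)$ we have $0\le 1-\rho\lambda_i<1$ for every $i$, so $g_T(\lambda_i)\in[0,1)$ for finite $T$. Hence $I_n-G_T = U\,\mathrm{diag}\bigl((1-\rho\lambda_i)^T\bigr)U^\top$ is invertible; its eigenvalues are strictly positive because $\rho\lambda_i<1$. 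Consequently
\begin{equation}
\Sigma_T := \sigma^2 G_T(I_n-G_T)^{-1} = \sigma^2 U\,\mathrm{diag}\!\Bigl(\tfrac{g_T(\lambda_i)}{1-g_T(\lambda_i)}\Bigr)U^\top
\end{equation}
is symmetric with nonnegative eigenvalues, so $\mathcal N(0,\Sigma_T)$ is a legitimate (possibly degenerate) Gaussian prior.

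The second step is to invoke the standard Gaussian conditioning formula for $r=f+\varepsilon$ with $f\sim\mathcal N(0,\Sigma_T)$ and $\varepsilon\sim\mathcal N(0,\sigma^2I_n)$ independent. This gives $\mathbb E[f\mid r]=\Sigma_T(\Sigma_T+\sigma^2 I_n)^{-1}r$. The formula needs only that $\Sigma_T+\sigma^2I_n$ be invertible, which holds since $\sigma^2>0$; it does not require $\Sigma_T$ to be nonsingular. This matters because some $\lambda_i$ may be zero, in which case $g_T(\lambda_i)=0$.

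The third step is the eigenwise computation. All matrices share the eigenvectors $U$, so it suffices to check that each scalar factor equals $g_T(\lambda_i)$. Writing $g=g_T(\lambda_i)$ and $s=\sigma^2 g/(1-g)$, we get
\begin{equation}
\frac{s}{s+\sigma^2}=\frac{g/(1-g)}{g/(1-g)+1}=g.
\end{equation}
Equivalently, $\Sigma_T(\Sigma_T+\sigma^2I_n)^{-1} = G_T(I_n-G_T)^{-1}\bigl(G_T(I_n-G_T)^{-1}+I_n\bigr)^{-1}=G_T$, since $G_T(I_n-G_T)^{-1}+I_n=(I_n-G_T)^{-1}$. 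Combined with $\hat f_T=G_Tr$ from the early-stopping identity, this yields $\mathbb E[f\mid r]=\hat f_T$.

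There is no genuine obstacle here; the only points requiring care are the invertibility of $I_n-G_T$, which is where the step-size condition and $T<\infty$ enter, and handling zero eigenvalues of $K$ through the singular-prior form of Gaussian conditioning. I will also note that the commutation of $G_T$ with $(I_n-G_T)^{-1}$ is what makes the matrix identity in the third step valid.
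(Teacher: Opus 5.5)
Your proposal is correct and follows the paper's proof essentially step by step: diagonalize $K$, define the covariance $\sigma^2 G_T(I_n-G_T)^{-1}$ eigenwise, and check that the Gaussian shrinkage factor $c/(c+\sigma^2)$ reduces to $g_T(\lambda_i)$. The paper conditions coordinate-wise in the eigenbasis rather than using the matrix formula, which makes no difference. One small slip: the eigenvalues $1-\rho\lambda_i$ lie in $(0,1]$, not $[0,1)$, since they equal $1$ on $\mathrm{null}(K)$; your argument only uses their strict positivity, so nothing breaks.
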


The result suggests that TTT corresponds to Bayesian inference under a prompt-induced Gaussian prior, in which the \textbf{prior shape} (eigenvectors) and the \textbf{prior strength} (eigenvalues via $T$) vary across prompts. In particular, changing which attention blocks/heads are updated changes the eigenmodes of $K$, i.e., it changes the shape of the implicit prior. 

Detailed derivations and discussions regarding extensions to other loss functions are provided in the Appendix~\ref{app:implicit_prior}.

\section{When does TTT help? Bayes-gap identities}
\label{sec:bayes_gap}
Sec.~\ref{sec:spectral_and_prior} shows that few-step TTT applies a spectral filter to the prompt residuals through $K(P_n)$. For prediction, however, our interest lies in the induced query-point correction: under the local linear model (Sec.~\ref{sec:local_model}), $\hat h_q(x)=k_x^\top q(K)r$ with $k_x:=\Phi\,\phi(x;P_n)$. See Prop.~\ref{prop:out_of_sample_form_app} for the derivation.

To keep the presentation simple and make the Bayes-optimal query correction explicit, we adopt the following Gaussian process benchmark~\citep{RasmussenWilliams2006GPML}. Below, $f_{\mathrm{Bayes}}(x)$ denotes the posterior mean under a benchmark (a Bayes-optimal correction).

\begin{assumption}[Gaussian benchmark]
\label{assump:gp_benchmark}
Fix prompt inputs $(x_1,\dots,x_n)$ and a query input $x$. Let $K\succeq 0$ be the prompt geometry and let the prompt–query covariance vector $k_x$ and the query variance $\kappa_x$ satisfy $\begin{psmallmatrix}K&k_x\\k_x^\top&\kappa_x\end{psmallmatrix}\succeq 0$. Assume latent functions $(f_\star,f_x)$ are Gaussian with mean $0$ and covariance $\tau^2\begin{psmallmatrix}K&k_x\\k_x^\top&\kappa_x\end{psmallmatrix}$, and we observe prompt residuals $r=f_\star+\varepsilon$ with $\varepsilon\sim\mathcal{N}(0,\sigma^2 I_n)$ independent. Write $\lambda_\star:=\sigma^2/\tau^2$.
\end{assumption}

We do \emph{not} claim real TTT residuals follow this model; rather, it provides an interpretable baseline for quantifying filter mismatch and motivates extensions beyond Gaussianity. See Appendix~\ref{app:second_moment_spectral}~and~\ref{app:anisotropic} for extensions.

\begin{theorem}[Conditional Bayes-gap identity]\label{thm:predictive_gap_main}
Grant Assumption~\ref{assump:gp_benchmark} and let $K=U\,\mathrm{diag}(\lambda_i)\,U^\top$ be an eigendecomposition with eigenvectors $\{u_i\}_{i=1}^n$. The Bayes-optimal estimator (posterior mean) of the query correction is $f_{\mathrm{Bayes}}(x)=k_x^\top (K+\lambda_\star I_n)^{-1}r$. Moreover, for any kernel-linear estimator $\hat h_q(x)=k_x^\top q(K)\,r$ with $q_\star(\lambda):=(\lambda+\lambda_\star)^{-1}$, the Bayes gap satisfies $\mathbb{E}\big[\{\hat h_q(x)-f_{\mathrm{Bayes}}(x)\}^2 \mid K,k_x,\kappa_x\big]
=\sum_{i=1}^n a_i(x)\,\bigl\{q(\lambda_i)-q_\star(\lambda_i)\bigr\}^2$, where the mode weights are $a_i(x):=(\tau^2\lambda_i+\sigma^2)\,(u_i^\top k_x)^2$.
\end{theorem}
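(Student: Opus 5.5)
The plan is to treat everything conditionally on $(K,k_x,\kappa_x)$, so that all randomness comes from the jointly Gaussian vector $(f_\star,f_x,\varepsilon)$ of Assumption~\ref{assump:gp_benchmark}.

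\textbf{Step 1: the Bayes-optimal correction.} Under the benchmark, $(r,f_x)$ is jointly Gaussian with mean zero. The covariances are
\[
\mathrm{Cov}(r)=\tau^2K+\sigma^2 I_n,\qquad \mathrm{Cov}(f_x,r)=\tau^2 k_x^\top,
\]
where the second uses the independence of $\varepsilon$ from $f_x$. The standard Gaussian conditioning formula then gives
\[
\mathbb{E}[f_x\mid r]=\tau^2k_x^\top(\tau^2K+\sigma^2I_n)^{-1}r=k_x^\top(K+\lambda_\star I_n)^{-1}r .
\]
The inverse exists because $\sigma^2>0$. This establishes the first claim, and it shows that $f_{\mathrm{Bayes}}(x)=k_x^\top q_\star(K)r$, with $q_\star(K)$ understood through the spectral calculus on $K=U\,\mathrm{diag}(\lambda_i)U^\top$.

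\textbf{Step 2: write the gap as a Gaussian quadratic form.} For any kernel-linear $q$,
\[
\hat h_q(x)-f_{\mathrm{Bayes}}(x)=k_x^\top D\,r,\qquad D:=q(K)-q_\star(K)=U\,\mathrm{diag}\bigl(q(\lambda_i)-q_\star(\lambda_i)\bigr)U^\top .
\]
Since this is a linear functional of the mean-zero vector $r$,
\[
\mathbb{E}\bigl[(k_x^\top D r)^2\mid K,k_x,\kappa_x\bigr]=k_x^\top D\,\mathrm{Cov}(r)\,D\,k_x .
\]

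\textbf{Step 3: diagonalize.} The matrices $D$ and $\mathrm{Cov}(r)=U\,\mathrm{diag}(\tau^2\lambda_i+\sigma^2)U^\top$ share the eigenbasis $U$. Hence
\[
k_x^\top D\,\mathrm{Cov}(r)\,D\,k_x=\sum_i (u_i^\top k_x)^2(\tau^2\lambda_i+\sigma^2)\{q(\lambda_i)-q_\star(\lambda_i)\}^2,
\]
which is exactly $\sum_i a_i(x)\{q(\lambda_i)-q_\star(\lambda_i)\}^2$.

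\textbf{Main obstacle.} There is no genuinely hard step. The one point needing care is the conditioning: the expectation is conditional on $(K,k_x,\kappa_x)$, which are fixed in the benchmark, so $r$ is exactly $\mathcal{N}(0,\tau^2K+\sigma^2I)$ given them. I would also note two technical points. First, $q(K)$ must be defined by functional calculus, so it commutes with $K$. Second, if $K$ has repeated eigenvalues the eigenvector choice is not unique, but the sum is invariant under that choice, since it depends only on the projections onto the eigenspaces.
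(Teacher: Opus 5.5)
Your proposal is correct and follows essentially the same route as the paper's proof: Gaussian conditioning with $\mathrm{Cov}(r)=\tau^2K+\sigma^2 I_n$ and $\mathrm{Cov}(f_x,r)=\tau^2 k_x^\top$, then writing the gap as $k_x^\top \Delta q(K)\,\mathrm{Cov}(r)\,\Delta q(K)\,k_x$ and diagonalizing in the shared eigenbasis of $K$. Your remark on invariance under the choice of eigenvectors for repeated eigenvalues is a small addition the paper leaves implicit.
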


Thm.~\ref{thm:predictive_gap_main} suggests that achieving a small Bayes gap at the query requires both: (i) \textbf{filter match}, meaning that $q_T(\lambda_i)$ is close to $q_\star(\lambda_i)=(\lambda_i+\lambda_\star)^{-1}$ on the modes with large weight $a_i(x)$; and (ii) \textbf{eigen-alignment}, since only modes with non-negligible $(u_i^\top k_x)^2$ contribute through $a_i(x)$. This perspective parallels spectral analyses of generalization on kernel regression~\citep{pmlr-v119-bordelon20a}. Importantly, simply fitting the model to the prompt is not sufficient.

\textbf{Specialization to TTT and ICL (Corollary~\ref{cor:icl_ttt_bayes_gaps_sec5_full}).}
TTT uses $q=q_T$, while naive ICL uses $q\equiv 0$. Thus, conditioned on $(K,k_x,\kappa_x)$, TTT improves over ICL in terms of the Bayes gap if and only if $\sum_{i=1}^n a_i(x)\,\bigl\{q_T(\lambda_i)-q_\star(\lambda_i)\bigr\}^2 < \sum_{i=1}^n a_i(x)\,q_\star(\lambda_i)^2$.

\begin{takeawaybox}[title=\textbf{Takeaway (C1-1): When TTT helps.}]
TTT improves upon ICL via \textbf{1. Optimal Steps}: the step count tunes shrinkage (so that $q_T$ matches $q_\star$), avoiding underfitting/overfitting; \textbf{2. Geometric Alignment}: the update subspace contains directions learned from the prompt that are also aligned with the query prediction.
\end{takeawaybox}

\subsection{Geometry and Dynamics of Transformer TTT}

\textbf{Prompt geometry as token coupling.}\,
In a Transformer, the prompt geometry \(K=\Phi\Phi^\top\) is an \(n\times n\) token--token Gram matrix: \(K_{kk'}=\langle \nabla_\theta s_k,\nabla_\theta s_{k'}\rangle\) measures the extent to which a small update (within the chosen update subspace) changes the outputs at tokens \(k\) and \(k'\) in similar directions. For value-matrix updates in head \(h\), \(K^{(h,V)}=(M^{(h)}M^{(h)\top})\odot(\mathbf{B}^{(h)}\mathbf{B}^{(h)\top})\) (Eq.~\eqref{eq:KV_hadamard}), so a token pair is emphasized only when it is both similar in attention-mixed representations \(M^{(h)}\) and aligned in upstream gradient directions \(\mathbf{B}^{(h)}\). In the global-upstream regime, this simplifies to \(K^{(h,V)}=\|\beta^{(h)}\|_2^2\,\mathcal{A}^{(h)}(ZZ^\top)\mathcal{A}^{(h)\top}\), an attention-sandwiched token-similarity matrix (symmetric PSD even for causal \(\mathcal{A}^{(h)}\)). Since head/block contributions sum to \(K=\sum_{b\in\mathcal B}K^{(b)}\), \emph{choosing which blocks/heads to update determines which eigen-directions in token-space are available for test-time correction}.

\textbf{Filtering dynamics and query coupling.}\,
Few-step TTT applies the polynomial filter $\hat f_T=\sum_{t=0}^{T-1}\rho K(I-\rho K)^t r$, which can be viewed as diffusion/smoothing of residual information over this attention-induced token geometry; the horizon $T$ sets the diffusion time and thus the amount of shrinkage across eigenmodes. Finally, Thm.~\ref{thm:predictive_gap_main} implies that query-point improvement comes only from eigen-directions that also align with the prompt-query vector $k_x$ (weights $(u_i^\top k_x)^2$), so \emph{head/block choices matter both through the spectrum of $K$ and through its coupling to the query}.

\section{Necessity of adapting the update steps and directions}\label{sec:three_implications}

The identity in Theorem~\ref{thm:predictive_gap_main} identifies two requirements for query improvement: the update filter must match the prompt signal-to-noise structure, and the updated eigendirections must align with the query. 
In this section, we show that fixing the number of update steps $T$ or the update subspace $A$ as in standard optimization can fail. We then argue for the necessity of prompt-dependent step selection and subspace design.

Recall from Sec.~\ref{sec:spectral_and_prior} that, for a fixed prompt $P_n$, few-step TTT on the quadratic prompt loss produces an in-prompt correction $\hat f_T = g_T(K)\,r$ with filter $g_T(\lambda)=1-(1-\rho\lambda)^T$. In this section, we consider the broader class of prompt-only spectral estimators $\hat f_g := g(K)\,r$ for measurable $g:[0,\infty)\to\mathbb{R}$ with $g(0)=0$, and denote by $\mathcal{S}(P_n)=\mathrm{range}(K(P_n))$ the representable subspace in prompt space.

We outline the three propositions below. For the formal statements, see Appendix~\ref{app:three_implications}.

\begin{proposition}[Necessity of adapting $T$ under prior shift; informal]\label{prop:implication_prior_shift_informal}
Assume $K$ is a rank-$m$ projector with $\{0,1\}$-eigenvalues. Under Assumption~\ref{assump:gp_benchmark}, the Bayes-optimal estimator is $f_{\mathrm{Bayes}}=\frac{1}{1+\lambda_\star}\Pi_{\mathcal S}r$, which varies with the test-time SNR $\tfrac{1}{\lambda_\star}$. Hence, no fixed horizon $T$ can be Bayes-optimal across two different priors.
\end{proposition}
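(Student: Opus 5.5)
Under Assumption~\ref{assump:gp_benchmark}, the plan is to first compute the Bayes-optimal prompt-space estimator and then compare it with the TTT filter on the eigenvalue $1$. The prompt-space posterior mean is $\mathbb{E}[f_\star\mid r]=K(K+\lambda_\star I_n)^{-1}r$; this is the same Gaussian conditioning used in Theorem~\ref{thm:predictive_gap_main}, with the query covariance $k_x$ replaced by $K$. When $K=\Pi_{\mathcal S}$ is a rank-$m$ orthogonal projector, diagonalize it in an eigenbasis with $m$ eigenvalues equal to $1$ and $n-m$ equal to $0$. The matrix $K(K+\lambda_\star I_n)^{-1}$ then acts as $\lambda/(\lambda+\lambda_\star)$ on each eigenvalue. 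This equals $1/(1+\lambda_\star)$ on $\mathcal S$ and $0$ on $\mathcal S^\perp$, which gives $f_{\mathrm{Bayes}}=\frac{1}{1+\lambda_\star}\Pi_{\mathcal S}r$.

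Next I would write the TTT estimator in the same eigenbasis. By \eqref{eq:spectral_filter_form_sec4}, $\hat f_T=g_T(K)r$. Since $g_T(0)=0$ and the spectrum is $\{0,1\}$, this reduces to $\hat f_T=g_T(1)\,\Pi_{\mathcal S}r$ with $g_T(1)=1-(1-\rho)^T$. The same reduction covers the whole class $\hat f_g=g(K)r$: any such estimator equals $c\,\Pi_{\mathcal S}r$ with $c=g(1)$. The estimator therefore depends on $T$ and $\rho$ only through the scalar $c_T:=1-(1-\rho)^T$, and it is independent of $\lambda_\star$.

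The conclusion follows by comparing coefficients on $\mathcal S$. Bayes-optimality for prior $j\in\{1,2\}$ requires $\hat f_T=f_{\mathrm{Bayes}}^{(j)}$ for almost every $r$. Since $r$ has a nondegenerate Gaussian law, $\Pi_{\mathcal S}r\neq 0$ with positive probability whenever $m\ge1$, so this forces $c_T=1/(1+\lambda_\star^{(j)})$. Two priors with $\lambda_\star^{(1)}\neq\lambda_\star^{(2)}$, i.e.\ different SNR $\tau^2/\sigma^2$, give two distinct required values, and one $T$ cannot produce both. The same conclusion can be made quantitative through Theorem~\ref{thm:predictive_gap_main}. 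The prompt-space Bayes gap is $\mathbb{E}\|\hat f_T-f_{\mathrm{Bayes}}\|^2=m(\tau^2+\sigma^2)\{c_T-1/(1+\lambda_\star)\}^2$, so any fixed $T$ incurs a strictly positive gap under at least one of the two priors. This gives an explicit lower bound of the form $\max_j$ gap $\ge \tfrac{m(\tau^2+\sigma^2)}{4}\bigl(\tfrac{1}{1+\lambda^{(1)}_\star}-\tfrac{1}{1+\lambda^{(2)}_\star}\bigr)^2$, up to the $\tau^2$ dependence of each prior.

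I expect the only delicate point to be stating precisely what ``fixed horizon'' means. The claim should be that $T$, and also $\rho$, are chosen without knowledge of $\lambda_\star$ and so cannot depend on the prior, while the prompt still determines $K$. The set $\{c_T:T\in\mathbb N\}$ is discrete, so even a single prior generally cannot be matched exactly. The statement only needs the weaker fact that one value of $c_T$ cannot equal two distinct targets, which is immediate. I would also note the degenerate case $m=0$, where every estimator is trivially optimal, and exclude it by assuming $m\ge1$.
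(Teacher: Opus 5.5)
Your proposal is correct and follows essentially the paper's proof (Prop.~\ref{thm:app_prior_shift_separation}). Gaussian conditioning gives $K(K+\lambda_\star I_n)^{-1}r=\frac{1}{1+\lambda_\star}Kr$ on the projector, and any spectral estimator with $g(0)=0$ collapses to $aKr$ with $a=g(1)$. The per-prior gap $m(\tau_j^2+\sigma^2)(a-s_j)^2$, combined with the midpoint argument, then yields exactly the paper's separation bound $m\min_j(\tau_j^2+\sigma^2)\bigl(|s_1-s_2|/2\bigr)^2$. Your added remark that $\Pi_{\mathcal S}r\neq 0$ with positive probability, so exact almost-sure optimality forces $c_T=s_j$, is a harmless refinement that the paper leaves implicit.
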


\textbf{Connection to phenomenon.}
Prop.~\ref{prop:implication_prior_shift_informal} underpins empirical reports that TTT is sensitive to the number of update steps~\citep{akyurek2025the,pmlr-v202-zhao23d} and that, in particular under test-time condition shifts across scenarios, fixing $T$ can lead to catastrophic failure~\citep{Boudiaf_2022_CVPR}.

\begin{proposition}[$\infty$-steps TTT can be sub-optimal; informal]
\label{prop:implication_stein_informal}
Assume the linearized quadratic prompt loss and $0<\rho<2/\lambda_{\max}(K)$. GD initialized at zero satisfies $\hat f_T\to \Pi_{\mathcal S}r$ as $T\to \infty$. If $\dim(\mathcal{S})\ge 3$ (with known Gaussian noise level), there exists a shrinkage rule within $\mathcal{S}$ that reduces its squared-error risk.
\end{proposition}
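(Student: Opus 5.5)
The proof has two parts. First, the limit $\hat f_T\to\Pi_{\mathcal S}r$ follows from the spectral filter representation in \eqref{eq:spectral_filter_form_sec4}. Second, the dominating shrinkage rule is the James--Stein estimator applied inside $\mathcal S$, written in an orthonormal eigenbasis of $\mathcal S$.

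\textbf{Step 1 (limit).} Write $K=U\,\mathrm{diag}(\lambda_i)\,U^\top$, so that $\hat f_T=U\,\mathrm{diag}(g_T(\lambda_i))\,U^\top r$ with $g_T(\lambda)=1-(1-\rho\lambda)^T$. For $\lambda_i=0$ we have $g_T(0)=0$ for every $T$. For $\lambda_i>0$, the assumption $0<\rho<2/\lambda_{\max}(K)$ gives $|1-\rho\lambda_i|<1$, so $g_T(\lambda_i)\to 1$. Since $n$ is finite, the limit filter is the indicator of $\{\lambda>0\}$. Hence $\hat f_T\to\sum_{i:\lambda_i>0}u_iu_i^\top r=\Pi_{\mathcal S}r$, using $\mathcal S=\mathrm{range}(K)=\mathrm{span}\{u_i:\lambda_i>0\}$.

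\textbf{Step 2 (reduction to a Gaussian mean problem).} Let $m=\dim\mathcal S\ge 3$ and let $U_{\mathcal S}\in\mathbb R^{n\times m}$ collect the eigenvectors with $\lambda_i>0$. The observation model is $r=f+\varepsilon$ with $\varepsilon\sim\mathcal N(0,\sigma^2I_n)$ and $\sigma^2$ known. We take $f$ fixed, or condition on it, and measure risk within $\mathcal S$ by $\mathbb E\|\hat f-\Pi_{\mathcal S}f\|^2$. Setting $z:=U_{\mathcal S}^\top r$ and $\mu:=U_{\mathcal S}^\top f$, we get $z\sim\mathcal N(\mu,\sigma^2I_m)$ by rotation invariance of isotropic Gaussian noise. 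The limit estimator $\Pi_{\mathcal S}r=U_{\mathcal S}z$ then has risk $\mathbb E\|z-\mu\|^2=m\sigma^2$ for every $\mu$.

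\textbf{Step 3 (James--Stein shrinkage).} Define
\[
\hat f_{\mathrm{JS}}:=\Bigl(1-\tfrac{(m-2)\sigma^2}{\|\Pi_{\mathcal S}r\|^2}\Bigr)\Pi_{\mathcal S}r=U_{\mathcal S}\Bigl(1-\tfrac{(m-2)\sigma^2}{\|z\|^2}\Bigr)z .
\]
This estimator lies in $\mathcal S$ and is a shrinkage rule. We apply Stein's lemma, $\mathbb E[(z-\mu)^\top h(z)]=\sigma^2\mathbb E[\nabla\cdot h(z)]$, with $h(z)=z/\|z\|^2$, whose divergence is $(m-2)/\|z\|^2$. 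This gives
\[
\mathbb E\|\hat f_{\mathrm{JS}}-\Pi_{\mathcal S}f\|^2=m\sigma^2-(m-2)^2\sigma^4\,\mathbb E\bigl[\|z\|^{-2}\bigr].
\]
The last expectation is strictly positive and finite when $m\ge 3$, since $\|z\|^2/\sigma^2$ is noncentral $\chi^2_m$ and its inverse moment is finite for $m\ge3$. So the James--Stein risk is strictly smaller than $m\sigma^2$ for every $f$. The positive-part version dominates further and may be mentioned in passing.

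\textbf{Main obstacle.} The delicate step is justifying Stein's lemma: $h$ is singular at the origin, so one must verify weak differentiability and integrability of $\|z\|^{-2}$, which holds exactly when $m\ge3$. This is where the dimension condition enters. The rest is bookkeeping: the risk is measured on the $\mathcal S$-component, and the orthogonal component is identical for both estimators, since both project out $\mathcal S^\perp$.
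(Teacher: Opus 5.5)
Your proposal is correct and follows the paper's route: the limit comes from the spectral filter $g_T(\lambda)=1-(1-\rho\lambda)^T$ with $|1-\rho\lambda_i|<1$ on $\mathrm{range}(K)$, and the dominating rule is the subspace James--Stein estimator. The paper simply invokes that estimator as a corollary of the Stein phenomenon (Prop.~\ref{prop:projection_ols_inadmissible}), whereas you write out the standard Stein-identity risk computation. Your reduction of general $f$ to its $\mathcal S$-component is exactly the paper's Pythagorean remark, and you obtain strict improvement for every $f$, slightly more than the paper's ``strict for some $f_\star$''.
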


\textbf{Connection to phenomenon.}  Prior experiments have reported that increasing the number of test-time adaptation steps can eventually hurt performance due to overfitting~\citep{pmlr-v202-zhao23d,alfarra2025testtime}. Prop.~\ref{prop:implication_stein_informal} provides a principled explanation of this.

\begin{proposition}[Irreducible error of subspace-mismatch; informal]
\label{prop:implication_subspace_informal}
For any target $f_\star\in\mathbb R^n$, the TTT estimator $\hat f_{g_T}$ satisfies $\inf_{T\ge 0}\|\hat f_{g_T}-f_\star\|_2^2 \ge \|\Pi_{\mathcal{S}^\perp}f_\star\|_2^2$.
\end{proposition}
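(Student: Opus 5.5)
The plan is to show that every TTT estimator lies in the representable subspace $\mathcal S=\mathrm{range}(K)$, and then to use the Pythagorean theorem for the orthogonal decomposition $\mathbb R^n=\mathcal S\oplus\mathcal S^\perp$.

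\textbf{Step 1: the estimator lies in $\mathcal S$.} Fix $T\ge 0$ and write $\hat f_{g_T}=g_T(K)\,r$ with $g_T(\lambda)=1-(1-\rho\lambda)^T$. Since $K$ is symmetric PSD, take the eigendecomposition $K=U\,\mathrm{diag}(\lambda_i)\,U^\top$ used in \eqref{eq:spectral_filter_form_sec4}. Then
\[
g_T(K)\,r=\sum_i g_T(\lambda_i)\,u_i u_i^\top r .
\]
Because $g_T(0)=0$, every eigenvector with $\lambda_i=0$ drops out of the sum. The eigenvectors with $\lambda_i>0$ span exactly $\mathrm{range}(K)=\mathcal S$, so $\hat f_{g_T}\in\mathcal S$. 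Equivalently, $\Pi_{\mathcal S^\perp}\hat f_{g_T}=0$.

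There is also a more algebraic route. $g_T(K)=I-(I-\rho K)^T$ is a polynomial in $K$ with zero constant term, so it equals $K\,p(K)$ for some polynomial $p$, and hence its range sits inside $\mathrm{range}(K)$. A third route is via the iteration $\hat f_T=\Phi\theta_T\in\mathrm{range}(\Phi)=\mathrm{range}(\Phi\Phi^\top)$. I would state the eigen-argument and mention the others as remarks. None of these routes needs any condition on $\rho$ or on $T$.

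\textbf{Step 2: Pythagorean lower bound.} For any $f_\star$, decompose
\[
\hat f_{g_T}-f_\star=\bigl(\hat f_{g_T}-\Pi_{\mathcal S}f_\star\bigr)-\Pi_{\mathcal S^\perp}f_\star .
\]
The first term lies in $\mathcal S$ and the second in $\mathcal S^\perp$, so they are orthogonal and
\[
\|\hat f_{g_T}-f_\star\|_2^2=\|\hat f_{g_T}-\Pi_{\mathcal S}f_\star\|_2^2+\|\Pi_{\mathcal S^\perp}f_\star\|_2^2\ge\|\Pi_{\mathcal S^\perp}f_\star\|_2^2 .
\]

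\textbf{Step 3: take the infimum.} The lower bound in Step 2 does not depend on $T$, so taking the infimum over $T\ge0$ gives the claim. The same argument applies to any measurable $g$ with $g(0)=0$, i.e., to the whole class of prompt-only spectral estimators. It holds for every realization of $r$, so it also survives taking expectations.

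The only point needing care is Step 1: the identification $\mathrm{span}\{u_i:\lambda_i>0\}=\mathrm{range}(K)$, which holds because $K$ is symmetric. Otherwise the proof is routine, and I expect no substantial obstacle.
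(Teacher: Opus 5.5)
Your proof is correct and matches the paper's: it likewise shows $g(K)r\in\mathcal S=\mathrm{range}(K)$ for any $g$ with $g(0)=0$ by diagonalizing $K$, then uses the Pythagorean split $f_\star=\Pi_{\mathcal S}f_\star+\Pi_{\mathcal S^\perp}f_\star$ to get a lower bound that does not depend on $T$. The alternative routes you mention (the polynomial factorization $g_T(K)=K\,p(K)$ and $\hat f_T=\Phi\theta_T\in\mathrm{range}(\Phi)$) are also valid, but they are not needed.
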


In the global-upstream regime \eqref{eq:KV_sandwich_globalbeta}, the representable prompt-space directions satisfy
$\mathrm{range}(K^{(h,V)})=\mathrm{range}(\mathcal{A}^{(h)}Z)$; see Lem.~\ref{lem:representable_subspace_WV} in the
Appendix.

\textbf{Connection to phenomenon.}
Prop.~\ref{prop:implication_subspace_informal} formalizes a representational bottleneck, that is, regardless of hyperparameters, the correction $\hat f_g$ lies in $\mathcal S$, so components outside $\mathcal S$ are unavoidable. Consistent with this view, \citet{zhang2025test} discuss gains from scaling test-time state capacity (expanding $\mathcal{S}$ in our context). However, simply expanding the update space is not sufficient. \citet{10.1007/978-3-031-19827-4_26} report settings where full-parameter updates can degrade performance, motivating careful design of the update mechanism $A$ (and hence the induced $\mathcal S$).

\begin{takeawaybox}[title=\textbf{Takeaway (C1-2): why $T$ and $A$ must adapt}]
A fixed $T$ can be Bayes-suboptimal under test-time prior shift; taking $T\!\to\!\infty$ can overfit; and any fixed representable subspace induces an irreducible mismatch floor.
\end{takeawaybox}

In addition, Appendix~\ref{app:oracle_horizon} explains how the spectrum of $K(P_n)$ governs when a fixed $T$ can (or cannot) match Bayes shrinkage across modes via the gradient flow and random matrix theory (RMT).

\section{Per-prompt update steps via PAC-Bayes}\label{sec:eb_horizon_selection}

The first failure mode above concerns how far to adapt. 
Building on the evidence (marginal likelihood) framework~\citep{10.1162/neco.1992.4.5.720}, we interpret $T$ as a hyperparameter in the implicit Bayesian formulation of Section~\ref{sec:spectral_and_prior} and analyze it via PAC-Bayes~\citep{Catoni2007PACBayes,alquier2024user}.

\textbf{Evidence family induced by TTT.}\,
Fix a prompt-induced geometry $K\succeq 0$ and $\rho\in(0,1/\lambda_{\max}(K))$. Motivated by Prop.~\ref{thm:implicit_prior_sec4}, we consider the associated auxiliary evidence family in which $r$ is distributed according to the Gaussian marginal induced by the implicit prior: 
\[
    r\mid (K,T)\sim\mathcal N(0,\Sigma_T),\quad \Sigma_T=\sigma^2(I-\rho K)^{-T}.
\]
This induces a data-dependent score for $T$, given by the normalized negative log-evidence (ignoring $T$-independent constants) $\ell_T(r):= \frac{1}{2n}\big\{\log\det\nolimits(\Sigma_T) + r^\top \Sigma_T^{-1} r\big\}$. We will use these scores to form a posterior over $T$ via a Gibbs construction introduced below.
To analyze the statistical validity of selecting $T$ via this objective, we formalize the data generating process as follows.
\begin{assumption}[Reference model for residuals]\label{eq:ref_model_pac}
Conditional on $K$, the true residual vector is Gaussian: $r \mid K \sim \mathcal N(0,\Sigma_\star),\, \Sigma_\star \succ 0.$
\end{assumption}
Under this reference model, define the normalized population cross-entropy risk as $\mathcal L(T\mid K):=\mathbb E[\ell_T(r)\mid K]$. Since $\Sigma_\star$ is unknown, we use the observed prompt residuals $r$ to evaluate $\ell_T(r)$ across candidates and convert these scores into the Gibbs posterior over $T$ introduced next.

\textbf{Gibbs posterior over update steps.}
Fix a finite candidate set $\mathcal T\subset\mathbb N_0$ and a prior $\pi_0$ on $\mathcal T$ with full support. For $\beta>0$, define the Gibbs posterior $\nu_\beta(T\mid r)\propto\pi_0(T)\exp\{-\beta n\,\ell_T(r)\}$. This posterior can be used either to select a point estimate $T_{\mathrm{MAP}}\in\arg\max_T \nu_\beta(T\mid r)$ or to perform model averaging (mixture-of-filters~\citep{DalalyanTsybakov2008ExponentialWeighting}) $\bar G:=\mathbb E_{T\sim\nu_\beta}[g_T(K)]$. When $\pi_0$ is uniform, $T_{\mathrm{MAP}}$ coincides with evidence minimization~\citep{NIPS2016_84d2004b}, defined as $T_{\mathrm{MAP}}\in\arg\min_{T\in\mathcal T}\ell_T(r)$.

Formally, this Gibbs posterior is defined as a Radon--Nikodym tilt of $\pi_0$; see Remark~\ref{rem:gibbs_rn_pacbayes}.

\begin{theorem}[PAC-Bayes guarantee for 
selecting $T$]\label{thm:pac_bayes_horizon_main}
Consider a point mass posterior $\nu=\delta_T$ and uniform prior $\pi_0$ on $\mathcal{T}$. Grant Assumption~\ref{eq:ref_model_pac} and let $\mathcal T$ be finite. Define $A_T:=\Sigma_\star^{1/2}\Sigma_T^{-1}\Sigma_\star^{1/2}\succeq 0$ and $\bar v:=\sup_{T\in\mathcal T} \frac{1}{n}\mathrm{tr}(A_T^2)$. Then, for any $\delta\in(0,1)$, with probability at least $1-\delta$ over $r\mid K$, choosing $\beta = 2\sqrt{\{\log|\mathcal T|+\log(1/\delta)\}/n\bar v}$ yields
\begin{equation}
    \mathcal L(T_{\mathrm{MAP}}\mid K)  \le  \ell_{T_{\mathrm{MAP}}}(r) + \sqrt{\frac{\bar v\bigl(\log|\mathcal T|+\log(1/\delta)\bigr)}{n}} .
\label{eq:pac_bound_main}
\end{equation}
\end{theorem}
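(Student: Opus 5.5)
We will prove the bound by a Chernoff–union (equivalently, Donsker–Varadhan with a point-mass posterior against the uniform prior) argument over the finite set $\mathcal T$. The only data-dependent part of $\ell_T(r)$ is the quadratic form. Write $r=\Sigma_\star^{1/2}\xi$ with $\xi\sim\mathcal N(0,I_n)$. Then
\[
\ell_T(r)-\mathcal L(T\mid K)=\tfrac{1}{2n}\bigl(\xi^\top A_T\xi-\mathrm{tr}(A_T)\bigr),
\]
because the $\log\det\Sigma_T$ term is deterministic given $K$. Diagonalising $A_T=\sum_j\mu_j v_jv_j^\top$ with $\mu_j\ge 0$ turns the deviation into $\frac{1}{2n}\sum_j\mu_j(\chi^2_{1,j}-1)$, a weighted sum of centred independent $\chi^2_1$ variables.

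\textbf{Step 1 (one-sided MGF).} We need the lower tail, i.e.\ to control $\mathcal L-\ell_T$. For $s\ge 0$ we have
\[
\log\mathbb E\exp\{-s(\chi^2_1-1)\}=s-\tfrac12\log(1+2s)\le s^2,
\]
valid for all $s\ge0$ since $\log(1+u)\ge u-u^2/2$ for $u\ge 0$. With $s=\lambda\mu_j/(2n)$ this gives
\[
\log\mathbb E\exp\{\lambda(\mathcal L(T\mid K)-\ell_T(r))\}\le \frac{\lambda^2}{4n^2}\sum_j\mu_j^2=\frac{\lambda^2\,\mathrm{tr}(A_T^2)}{4n^2}\le\frac{\lambda^2\bar v}{4n}.
\]
This uniform-in-$\lambda$ sub-Gaussian lower tail is the key point. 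The lower tail of a nonnegative-weighted $\chi^2$ is genuinely sub-Gaussian, so no restriction on $\lambda$ is needed. The upper tail would only be sub-exponential, and that is why the theorem is one-sided.

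\textbf{Step 2 (union bound).} By Markov's inequality and a union bound over $\mathcal T$ (the uniform prior $\pi_0$ contributes $\log|\mathcal T|$ as the KL of a point mass), the following holds with probability at least $1-\delta$, simultaneously for all $T\in\mathcal T$:
\[
\mathcal L(T\mid K)\le\ell_T(r)+\frac{\lambda\bar v}{4n}+\frac{\log|\mathcal T|+\log(1/\delta)}{\lambda}.
\]
Because the bound is simultaneous, it applies in particular to the data-dependent choice $T_{\mathrm{MAP}}$. This step is where the Gibbs/PAC-Bayes interpretation enters, and it avoids any measurability issue with selecting $T$.

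\textbf{Step 3 (optimise $\lambda$).} Set $\lambda=\beta n$ with $\beta$ as in the statement. Equivalently, $\lambda=2\sqrt{n\{\log|\mathcal T|+\log(1/\delta)\}/\bar v}$, which equates the two terms. Both terms then equal
\[
\tfrac12\sqrt{\bar v(\log|\mathcal T|+\log(1/\delta))/n},
\]
and their sum gives \eqref{eq:pac_bound_main}.

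The main obstacle is Step 1: getting the constant so that the final bound comes out exactly as stated. This needs the $\chi^2$ lower-tail MGF bound $s-\frac12\log(1+2s)\le s^2$ with the $1/(2n)$ normalisation tracked carefully. If the constants do not match, I will revisit whether the intended sub-Gaussian proxy is $\mathrm{tr}(A_T^2)/(2n^2)$ rather than $/(4n^2)$. Positive semidefiniteness of $A_T$ is essential here, since it is what makes every weight $\mu_j\ge0$.
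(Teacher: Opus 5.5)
Your proof is correct and follows the paper's route. You use the same reduction $\mathcal L(T\mid K)-\ell_T(r)=\frac{1}{2n}\{\mathrm{tr}(A_T)-Z^\top A_T Z\}$, the same Gaussian MGF bound via $\log(1+u)\ge u-u^2/2$ giving the proxy $\mathrm{tr}(A_T^2)/(4n^2)$ (so your constants are right and need no revisiting), and the same $\beta$; the only cosmetic difference is that the paper applies the general Donsker--Varadhan PAC-Bayes lemma over all $\nu\in\Delta(\mathcal T)$ and then specializes to $\nu=\delta_T$ with $\mathrm{KL}(\delta_T\|\pi_0)=\log|\mathcal T|$, which for point masses under a uniform prior is equivalent to your Chernoff-plus-union-bound argument.
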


Thm.~\ref{thm:pac_bayes_horizon_main} controls the population negative log-evidence of the prompt-residual model. It is not an end-to-end theorem that the evidence-minimizing horizon is always query-risk optimal. The link to prediction is through the shared filter: Sec.~\ref{sec:bayes_gap} shows that the query correction uses the same $q_T(K)$, weighted by the prompt--query coupling $k_x$, while Sec.~\ref{sec:subspace_design} handles query-dependent update directions. Thus, evidence selects the shrinkage level within the TTT filter family; query-aware subspace selection addresses the remaining query dependence. Under the GP reference model, App.~\ref{app:eb_to_bayes_gap} further relates evidence closeness to a normalized Bayes gap.

\begin{takeawaybox}[title=\textbf{Takeaway (C2): per-prompt $T$ selection}]
Selecting $T$ by prompt evidence enjoys a PAC-Bayes bound for the out-of-sample negative log-evidence, so tuning $T$ on the prompt does not overfit the prompt residuals.
\end{takeawaybox}

\section{Bayes-optimal subspace selection}\label{sec:subspace_design}

Evidence selection chooses how far to move within a fixed prompt geometry; it does not by itself choose which geometry is most useful for the query. 
We next turn from the update horizon to the update subspace.
The update subspace should capture query-aware prompt signal since $\hat h_T(x)=k_x^\top q_T(K)\,r$ (Sec.~\ref{sec:spectral_and_prior}) and improvements come primarily from eigenmodes aligned with $k_x$ (Thm.~\ref{thm:predictive_gap_main}). We formalize this test-time update subspace as a Bayesian design variable, in the spirit of \citet{attia2018goal}.

\begin{assumption}[Gaussian benchmark of subspace]\label{assump:linear_gaussian_prior_sec8}
There exists a latent parameter correction $u_\star\in\mathbb{R}^p$ such that $u_\star \sim \mathcal{N}(0,\Sigma)$, $r = J_w u_\star + \varepsilon$, $\varepsilon\sim\mathcal{N}(0,\sigma^2 I_n)$, and $u_\star\perp \varepsilon$, where $\Sigma\succ 0$ is task-to-task variability and $J_w:=\nabla_w s_w(P_n)\vert_{w=w_0}\in\mathbb{R}^{n\times p}$.
\end{assumption}

This assumption is the subspace version of Assumption~\ref{assump:gp_benchmark}~(see Lem.~\ref{lem:lg_implies_gp}). For a query input $x$, denote the linearized query correction by $f_x:= j_x^\top u_\star$, where $j_x:=\nabla_w f_{w_0}(x;P_n)\in\mathbb{R}^p$.

\subsection{Bayes-optimal update subspace}\label{subsec:optimal_subspace_sec8}

To focus on query prediction, we assess an estimator $\widehat u(r)$ by the mean squared error of the linearized query correction $\mathbb{E}\!\left[\{j_X^\top(\widehat u-u_\star)\}^2\right]$, where the expectation is over the test-time query $X$ (conditioned on the prompt, unless stated otherwise). Define $Q:=\mathbb{E}\!\left[j_X j_X^\top\right]\succeq 0 $. Then $\mathbb{E}[\{j_X^\top(\widehat u-u_\star)\}^2]=\mathbb{E}[\|\widehat u-u_\star\|_Q^2]$ with $\|v\|_Q^2:=v^\top Qv$.

To align the subspace constraint with the query metric, we reparameterize via $v:=Q^{1/2}u$ (restricted to $\mathrm{range}(Q)$ if $Q$ is singular), so that $\|u\|_Q^2=\|v\|_2^2$. We choose an update subspace of dimension budget $k_{\text{budget}}$ by selecting a rank-$k_{\text{budget}}$ Euclidean projector $\widetilde\Pi(=\widetilde\Pi^\top=\widetilde\Pi^2)$ acting on $v$, and restrict estimators to $\widehat v(r)\in\mathrm{range}(\widetilde\Pi)$.

Let $\bar u(r):=\mathbb{E}[u_\star\mid r]$ denote the Bayes estimator under the linear-Gaussian model, and
set $\bar v(r):=Q^{1/2}\bar u(r)$. The key quantity for subspace design is how much recoverable signal the prompt provides about $u_\star$, captured by the covariance of the posterior mean: $\mathcal{I}:=\mathrm{Var}(\bar u(r))
=\Sigma J_w^\top(J_w\Sigma J_w^\top+\sigma^2 I_n)^{-1}J_w\Sigma.$ Its query-whitened version $\mathcal I_Q:=\mathrm{Var}(\bar v(r))=Q^{1/2}\mathcal I Q^{1/2}$ will determine the Bayes-optimal rank-$k_{\text{budget}}$ update subspace.

\begin{theorem}[Optimal rank-$k_{\text{budget}}$ update subspace]\label{thm:optimal_subspace_sec8}
Under Assumption~\ref{assump:linear_gaussian_prior_sec8}, define the query-weighted Bayes risk 
\begin{align*}
    \mathcal R(\widetilde\Pi) 
    &:=\inf_{\widehat v(\cdot)\in\mathrm{range}(\widetilde\Pi)} \mathbb{E}\!\left[\|\widehat v(r)-Q^{1/2}u_\star\|_2^2\right] \\
    &= \mathrm{tr}(Q\Sigma) - \mathrm{tr}(\widetilde\Pi\mathcal I_Q) .
\end{align*}
Then, any minimizer over rank-$k_{\text{budget}}$ projectors $\widetilde\Pi$ is the projector onto the top-$k_{\text{budget}}$ eigenspace of $\mathcal I_Q$.
\end{theorem}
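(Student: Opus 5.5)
The argument has three steps: a Pythagorean decomposition of the risk around the posterior mean, the computation of the inner infimum for a fixed projector, and a Ky Fan argument for the outer minimization over projectors.

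\textbf{Step 1: decompose around the posterior mean.} Write $v_\star:=Q^{1/2}u_\star$ and $\bar v(r)=\mathbb{E}[v_\star\mid r]$. For any measurable $\widehat v(r)$,
\[
\mathbb{E}\|\widehat v-v_\star\|_2^2=\mathbb{E}\|\widehat v-\bar v\|_2^2+\mathbb{E}\|\bar v-v_\star\|_2^2 .
\]
This is exactly the argument of Proposition~\ref{prop:bayes_decomp_sec2}: the cross term vanishes because $\widehat v-\bar v$ is $r$-measurable and $\mathbb{E}[\bar v-v_\star\mid r]=0$.

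The second term does not depend on $\widehat v$. By the law of total variance it equals
\[
\mathrm{tr}\,\mathrm{Var}(v_\star)-\mathrm{tr}\,\mathrm{Var}(\bar v)=\mathrm{tr}(Q^{1/2}\Sigma Q^{1/2})-\mathrm{tr}(\mathcal I_Q)=\mathrm{tr}(Q\Sigma)-\mathrm{tr}(\mathcal I_Q).
\]
Here I use that $\mathbb{E}v_\star=0$, so $\mathbb{E}\bar v=0$ as well. The formula $\mathcal I=\Sigma J_w^\top(J_w\Sigma J_w^\top+\sigma^2I)^{-1}J_w\Sigma$ is the standard linear-Gaussian conditioning result: $\bar u=\Sigma J_w^\top(J_w\Sigma J_w^\top+\sigma^2 I)^{-1}r$ and $\mathrm{Var}(r)=J_w\Sigma J_w^\top+\sigma^2I$.

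\textbf{Step 2: minimize over estimators in $\mathrm{range}(\widetilde\Pi)$.} For each fixed $r$, the closest point of $\mathrm{range}(\widetilde\Pi)$ to $\bar v(r)$ is the orthogonal projection $\widetilde\Pi\bar v(r)$, and it is measurable in $r$. This gives
\[
\mathbb{E}\|(I-\widetilde\Pi)\bar v\|^2=\mathrm{tr}\big((I-\widetilde\Pi)\mathcal I_Q\big),
\]
using that $I-\widetilde\Pi$ is a symmetric idempotent and $\mathbb{E}[\bar v\bar v^\top]=\mathcal I_Q$. Adding this to the constant term from Step 1 yields $\mathcal R(\widetilde\Pi)=\mathrm{tr}(Q\Sigma)-\mathrm{tr}(\widetilde\Pi\mathcal I_Q)$, which is the claimed identity. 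If $Q$ is singular, I would work throughout in $\mathrm{range}(Q)$; the trace identities are unchanged.

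\textbf{Step 3: optimize over projectors.} Minimizing $\mathcal R$ over rank-$k_{\text{budget}}$ projectors is the same as maximizing $\mathrm{tr}(\widetilde\Pi\mathcal I_Q)$. By Ky Fan's maximum principle, since $\mathcal I_Q\succeq 0$ is symmetric,
\[
\max_{\widetilde\Pi}\mathrm{tr}(\widetilde\Pi\mathcal I_Q)=\sum_{i\le k_{\text{budget}}}\mu_i(\mathcal I_Q),
\]
and the projector onto a top eigenspace attains this value.

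The step I expect to be the main obstacle is the converse: showing that \emph{every} minimizer is such a projector. I would handle it as follows. Write $\widetilde\Pi=WW^\top$ with $W$ having orthonormal columns. Then $\mathrm{tr}(\widetilde\Pi\mathcal I_Q)=\sum_j c_j\mu_j$ with $c_j=\|W^\top e_j\|^2\in[0,1]$ and $\sum_j c_j=k_{\text{budget}}$, where $e_j$ are the eigenvectors of $\mathcal I_Q$. Equality with the top-$k_{\text{budget}}$ sum forces $c_j=1$ whenever $\mu_j>\mu_{k_{\text{budget}}}$ and $c_j=0$ whenever $\mu_j<\mu_{k_{\text{budget}}}$. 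Hence $\mathrm{range}(\widetilde\Pi)$ contains the strictly-top eigenvectors and lies inside their span together with the $\mu_{k_{\text{budget}}}$-eigenspace. This is exactly a top-$k_{\text{budget}}$ eigenspace, with the choice non-unique when $\mu_{k_{\text{budget}}}$ is tied.
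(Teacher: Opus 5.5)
Your proposal is correct and follows essentially the same route as the paper, which likewise reduces to the constrained estimator $\widetilde\Pi\bar v(r)$, splits the risk into $\mathbb{E}\|(I-\widetilde\Pi)\bar v\|_2^2$ plus the posterior-variance term via the law of total variance, and then maximizes $\mathrm{tr}(\widetilde\Pi\mathcal I_Q)$ over rank-$k_{\text{budget}}$ projectors. Your equality-case argument with $c_j=e_j^\top\widetilde\Pi e_j$ is a worthwhile addition: the paper only shows that the top-$k_{\text{budget}}$ eigenprojector attains the maximum, whereas the statement's ``any minimizer'' claim, including non-uniqueness under a tie at $\mu_{k_{\text{budget}}}$, needs exactly the converse you supply.
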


To realize the designed subspace in the original coordinates $u$, map back from $v=Q^{1/2}u$. If $Q\succ 0$, the corresponding $u$-space subspace is $Q^{-1/2}\mathrm{range}(\widetilde\Pi)$. If $Q$ is singular, interpret $Q^{-1/2}$ as the pseudoinverse on $\mathrm{range}(Q)$.

By the law of total variance, the $Q$-weighted total variability of the correction $\mathrm{Var}(Q^{1/2}u_\star)$ decomposes into $\mathcal I_Q$ (recoverable from the prompt) and $\mathbb{E}\!\left[Q^{1/2}\mathrm{Var}(u_\star\mid r)\,Q^{1/2}\right]$ (irreducible).
Hence, $\mathrm{tr}(\widetilde\Pi\mathcal I_Q)$ is the query-aligned posterior-mean variability recovered by the update subspace. This is large when the subspace both (i) contains directions whose posterior means are identifiable from the prompt and (ii) aligns with query sensitivities (via $Q$), consistent with the eigen-alignment findings of Thm.~\ref{thm:predictive_gap_main}. A global (population-level) update mechanism can be obtained by averaging $\mathcal I_Q$ across prompts and applying a plug-in top-eigenspace estimator. Details and a martingale-type regret bound are deferred to Appendix~\ref{app:subspace}.

\subsection{Block/head selection for Transformer}\label{subsec:sec8_transformer_subspace}

\textbf{Transformer partitioning.}\,
To obtain a practical approximation to Thm.~\ref{thm:optimal_subspace_sec8}, we restrict the update projector to be aligned with a Transformer parameter partition (motivated by previous analyses showing that attention-head contributions are so uneven that many heads are prunable~\citep{NEURIPS2019_2c601ad9,voita-etal-2019-analyzing}). 
Specifically, we partition parameters into disjoint blocks (e.g., by layer/head/matrix type), writing the corresponding Jacobian block columns by $J_w=[J^{(b)}]_{b\in\mathcal{B}_{\rm all}}$, and restrict updates to a block-diagonal projector $\Pi=\mathrm{diag}\bigl(\Pi^{(b)}\bigr)_{b\in\mathcal{B}_{\rm all}}$, which includes block on/off selection ($\Pi^{(b)}\in\{0,I_{p_b}\}$).

For block-diagonal $\Pi$, let $\mathcal{I}_{bb}$ denote the diagonal block of $\mathcal I$ for block $b$. If we specify $Q$ by a single query $x$ (i.e., $Q=j_x j_x^\top$) and perform on/off selection, then the risk reduction attributable to block $b$ is proportional to the score
\begin{equation}
    j_x^{(b)\top}\mathcal I_{bb}j_x^{(b)}  \qquad  (\textsc{Query-Aware}), 
\end{equation}
where $j_x^{(b)}$ is the restriction of $j_x$ to block $b$. We refer to ranking blocks by this score as \textsc{Query-Aware} selection.

Under the isotropic correction prior $\Sigma=I_p$, we have $\mathcal I = J_w^\top(K_{\rm full}+\sigma^2 I_n)^{-1}J_w$ with $K_{\rm full}:=J_wJ_w^\top$ (the prompt geometry under full parameter updates). Let $K_{\rm full}^{(b)}:=J^{(b)}J^{(b)\top}$. Then the query-marginal block score becomes
\begin{equation}
\mathrm{tr}\bigl(\mathcal{I}_{bb}\bigr)
=\mathrm{tr}\!\left(K_{\rm full}^{(b)}(K_{\rm full}+\sigma^2 I_n)^{-1}\right). \quad (\textsc{Trace-TopK})
\label{eq:sec8_score_promptspace}
\end{equation}
Selecting the top-$k_{\text{budget}}$ blocks by this score corresponds to the \textsc{Trace-TopK} baseline in Sec.\ref{sec:experiments}. In contrast, for a specific query $x$, the \textsc{Query-Aware} score can be written as $k_x^{(b)\top}(K_{\rm full}+\sigma^2 I_n)^{-1}k_x^{(b)}$, where $k_x^{(b)}:=J^{(b)}j_x^{(b)}$ is the prompt-query coupling induced by block $b$. The factor $(K_{\rm full}+\sigma^2 I_n)^{-1}$ downweights directions already explained by other blocks (or dominated by noise), making the score explicitly redundancy-aware and query-aligned.

For on/off selection $\Pi^{(b)}\in\{0,I_{p_b}\}$ under a block/parameter budget, we rank blocks by an appropriate score (e.g., \textsc{Query-Aware}: $j_x^{(b)\top}\mathcal I_{bb}j_x^{(b)}$ when $Q=j_xj_x^\top$), optionally normalized by $p_b$. For a global update mechanism, use the query-averaged $Q=\mathbb{E}_X[j_X j_X^\top]$.

The exact query-specific \textsc{Query-Aware} score requires one backward pass through the query output to obtain $j_x$. This overhead is additive rather than per-block: a single query backward pass gives $j_x$ for all blocks, after which the block-specific vectors $j_x^{(b)}$ are obtained by slicing. The remaining inverses and trace scores are in prompt space (dimension $n$) and can be implemented with Jacobian--vector products  without materializing $J_w$. When per-query latency is undesirable, the query-averaged choice $Q=\mathbb{E}_X[j_Xj_X^\top]$ gives an amortized global design that removes the per-query backward pass while remaining the Bayes-optimal solution for the averaged query objective; see Appendix~\ref{app:computational} for details.

\begin{takeawaybox}[title=\textbf{Takeaway (C3): optimal update directions}]
For TTT prediction, the Bayes-optimal rank-$k_{\text{budget}}$ update subspace is the top-$k_{\text{budget}}$ eigenspace of $\mathcal I_Q$. For Transformer, this yields query-aware scores for attention block/head selection.
\end{takeawaybox}

\section{Experiments}
\label{sec:experiments}

The experiments are performed for mechanism-level checks of the theory rather than a comprehensive benchmark study. Experiment 1 tests whether evidence selects prompt-dependent horizons, and Experiment 2 tests whether query-aligned directions improve query prediction more reliably than prompt-fit directions.

\begin{figure*}[t]
    \centering
    \includegraphics[width=\linewidth]{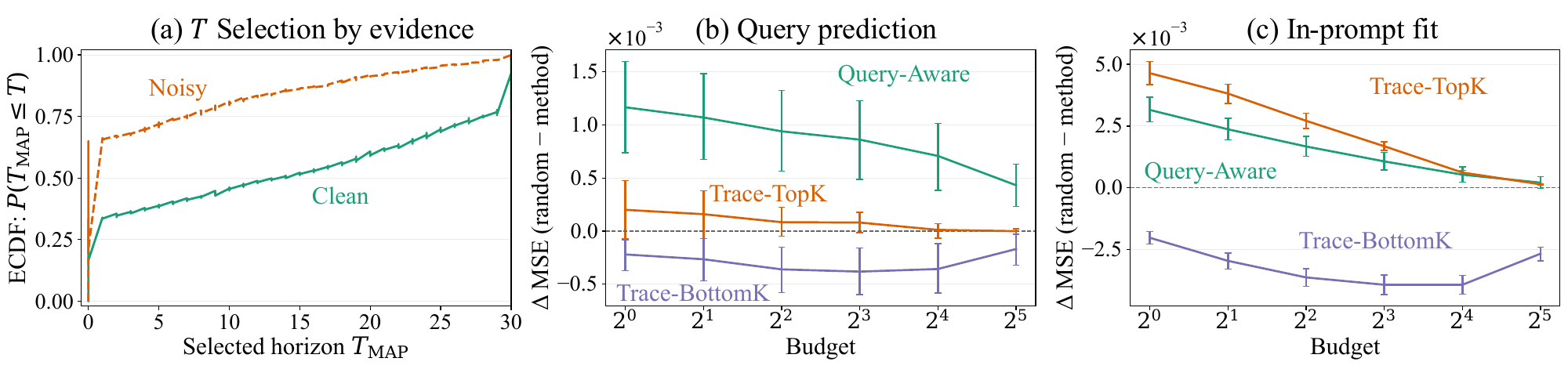}
    \caption{\textbf{(a) Evidence adaptively selects $T$ (Exp1).} ECDF of the selected $T_{\mathrm{MAP}}$. The optimal $T$ is prompt-dependent, so adapting $T$ is preferable to using a fixed $T$ (Claims \textbf{(C1)}--\textbf{(C2)}). See Table~\ref{tab:exp1_summary} for predictive performance. \textbf{(b) Query gains depend on alignment (Exp2).} Query-MSE improvement relative to random selection ($\Delta>0$ is better). \textsc{Query-Aware} consistently outperforms \textsc{Trace-TopK} (Claims \textbf{(C1)}, \textbf{(C3)}). \textbf{(c) Prompt fit is a different objective (Exp2).} The same $\Delta\mathrm{MSE}$ evaluated using prompt-fit MSE. \textsc{Trace-TopK} tends to improve prompt fit, but this does not necessarily generalize to query improvement in (b). Error bars indicate standard error of the mean across tasks.} 
    \label{fig:experiment}
\end{figure*}

\textbf{Experimental setup.}
We use a digit-shift task with \texttt{distilgpt2}~\citep{wolf-etal-2020-transformers}. 

We generate 2{,}000 tasks, each sampled from one of two settings: \textit{clean} ($p_{\text{noise}}=0.0$) and \textit{noisy} ($p_{\text{noise}}=0.55$).
For each task, we sample a shift $s\sim \mathrm{Unif}(0,\ldots,9)$, draw digits $x\sim \mathrm{Unif}(0,\ldots,9)$, and set labels $y=(x+s)\bmod 10$.
Each test instance contains $n=10$ prompt pairs $(x,y)$ and an independent query digit $x$ from the same task. Prompt labels are corrupted independently with probability $p_{\text{noise}}$ by replacing $y$ with a uniformly random digit in $\{0,\ldots,9\}$.

We update only attention Value weights at the (layer, head) level via gradient descent on leave-one-out prompt residuals. Although the model outputs logits over digits, we treat this as a regression problem: labels are normalized to $\tilde y := y/9$ and the model's prediction is taken as the expected digit $\hat y=\sum_{j=0}^9 \frac{j}{9}\,\mathrm{softmax}(\ell)_j$. Full details are in Appendix~\ref{app:experiments}.

\textbf{(Exp1) Per-prompt $T$ selection.}\,
To demonstrate claim \textbf{(C2)}, we compare ICL (no update), TTT with a fixed $T$ ($T=8$), and evidence-based per-prompt selection ($T_{\mathrm{MAP}}$)\footnote{We implement two evidence variants: a fixed-$\sigma$ plug-in and an MLE-$\sigma$ variant; details in Appendix.}. We report averaged MSE across 2{,}000 tasks with $\mathrm{SE}$ denoting the standard error of the mean across tasks.

Table~\ref{tab:exp1_summary} shows that evidence-based $T$ selection improves over both ICL and fixed-$T$ TTT. 
Although the effect sizes are modest, the gains are statistically significant under a task-matched paired analysis. 
For each task, we compute the paired improvement as the baseline query MSE minus the method's query MSE, so that positive values indicate lower error. 
The best fixed horizon $T{=}8$ improves over ICL with a 95\% bootstrap confidence interval $[0.000648,0.001796]$ and paired permutation p-value $p=5.0\times 10^{-5}$. 
Both evidence-based variants further improve over this fixed-$T$ baseline: fixed-$\sigma$ evidence gives CI $[0.000545,0.002894]$ with $p=0.00325$, and MLE-$\sigma$ evidence gives CI $[0.000605,0.003079]$ with $p=0.00250$.

Figure~\ref{fig:experiment} shows the empirical cumulative distribution function (ECDF) of the selected $T_{\mathrm{MAP}}$ under the fixed-$\sigma$ evidence method, revealing substantial variation across prompts rather than a single globally optimal $T$. 
Moreover, $T_{\mathrm{MAP}}$ responds to prompt signal-to-noise: in the noisy regime, evidence tends to select smaller $T$s to mitigate overfitting, whereas, in the clean regime, it favors larger $T$ to maximize adaptation benefits.

\textbf{(Exp2) Alignment-aware block selection.}\,
We next investigate update-subspace design under a fixed budget of $k_{\text{budget}}$ blocks. We compare \textsc{Query-Aware}, \textsc{Trace-TopK}, \textsc{Trace-BottomK}, and random selection.

Consistent with Claim \textbf{(C3)}, Figure~\ref{fig:experiment} illustrates that the \textsc{Query-Aware} selection consistently outperforms random selection. Notably, the \textsc{Trace-TopK} baseline, which prioritizes dominant prompt features without considering the query, often fits the prompt well but fails to generalize to the query. This confirms that effective adaptation requires not just capturing prompt variance, but capturing variance aligned with the query, as predicted by \textbf{(C1)}.

\begin{table}[t]
\centering
\small
\begin{tabular}{lcc}
\toprule
Method & MSE ($\pm$ SE) & Mean $T$ \\
\midrule
TTT (evidence, MLE-$\sigma$)  & $0.0719 \pm 0.0029$ & 11.200 \\
TTT (evidence, fixed-$\sigma$)& $0.0722 \pm 0.0030$ & 9.793 \\
TTT ($T{=}8$)           & $0.0734 \pm 0.0030$ & -- \\
ICL ($T{=}0$)               & $0.0747 \pm 0.0031$ & -- \\
\bottomrule
\end{tabular}
\caption{\textbf{Evidence-based TTT reduces risk (Exp1).} Comparison of MSE over 2{,}000 tasks. While fixed TTT yields marginal gains, adaptive $T_{\mathrm{MAP}}$ (MLE-$\sigma$) achieves the lowest risk, consistent with Claims \textbf{(C1)}-\textbf{(C2)}.}
\label{tab:exp1_summary}
\end{table}

\section{Discussion}\label{sec:conclusion}

We have presented a unified Bayes-risk framework for TTT, interpreting it as implicit Bayesian inference. We found that predictive improvements depend critically on \emph{filter match} and \emph{eigen-alignment} \textbf{(C1)}, explaining why fixed hyperparameters often fail in practice. This Bayesian view also motivates evaluating $T$ through prompt evidence and yields a PAC-Bayes non-overfitting guarantee \textbf{(C2)}. Finally, the Bayes-optimal update subspace is characterized by an information-capture matrix, leading to query-aligned block/head criteria for Transformers \textbf{(C3)}. Experiments support these implications, highlighting the roles of adaptive regularization and query-aligned update geometry. More broadly, as the kernel regime analysis (such as NTK) informed neural network engineering, we believe this work represents an important step toward making TTT more reliable in practice.

\textbf{Novelty boundary.}\,
The implicit-Bayesian view is a conceptual lens rather than our sole technical contribution. Our novelty lies in the TTT-specific consequences of this lens: query-level Bayes-gap identities, PAC-Bayes step selection, and Bayes-optimal update-subspace design.

\textbf{What is expected to transfer.}\,
We expect two lessons to be useful beyond the exact Gaussian-quadratic model. First, the update horizon should depend on the prompt, because prompts can have different signal-to-noise levels and different overfitting risks. Second, update directions should be chosen with the query in mind, because fitting the prompt alone need not improve the query. Our Transformer derivations in Secs.~\ref{subsec:transformer_geometry} and~\ref{subsec:sec8_transformer_subspace} show how these ideas can be expressed as attention block/head geometry and query-aware scores.

\textbf{Scope and future directions.}\,
Our analysis is local. It uses restricted update parameters, a first-order approximation to the model, and a locally quadratic prompt objective. It is therefore best viewed as a theory of few-step TTT around a fixed pretrained model, rather than as a complete theory of arbitrary test-time adaptation. Appendix~\ref{app:linearization_error} gives a finite-step deviation bound, but the present theory does not yet cover large updates, changing representations along the adaptation path, full-parameter TTT, or the optimization dynamics of self-supervised TTT losses.

Several extensions are natural. First, one could move from a fixed local model to a trajectory-level analysis, where the tangent features, kernel, and prompt--query coupling are allowed to evolve during adaptation. Second, one could extend the Bayesian view to non-squared and self-supervised losses, such as entropy minimization, masked prediction, consistency losses, or pseudo-label training. This would require modeling pseudo-residuals, local curvature, and the bias introduced by noisy self-supervision. Third, one could develop scalable approximations to query-aware subspace selection, for example through amortized, low-rank, or query-averaged scores for choosing Transformer blocks, heads, or adapters. Finally, broader experiments on language, vision, and tabular tasks are needed to test which parts of the local Bayesian picture transfer to realistic nonlinear TTT.

\section*{Acknowledgments}
We thank the anonymous reviewers for their constructive comments.
TW was partially supported by JSPS KAKENHI (26K21188), JST ACT-X (JPMJAX23CS) and RIKEN Incentive Research Project.

\section*{Impact Statement}
This paper presents work whose goal is to advance the field of Machine Learning. There are many potential societal consequences of our work, none of which we feel must be specifically highlighted here.
\bibliographystyle{icml2026}
\bibliography{main}

\clearpage
\appendix

\section{Roadmap}

This appendix provides full proofs and additional analyses supporting the main claims. We begin with a self-contained primer on the relevant decision-theoretic background. We then derive the Bayes risk identities and other theorems stated in the main text. Finally, we describe the experimental setup to facilitate reproducibility.

\begin{itemize}[leftmargin=*, itemsep=1pt, parsep=0pt, topsep=2pt]

\item \textbf{Notation and background.}
We list the notation used throughout (App.~\ref{app:notation_list}), provide a brief primer on decision theory (App.~\ref{app:bayes_primer}),
and relate our setting to classical early stopping (App.~\ref{app:early_stopping_relation}).

\item \textbf{Proofs for the main results.}
We provide proofs for the decision-theoretic setup (App.~\ref{app:sec2_proofs}), the spectral-filter and implicit-prior equivalence and out-of-sample (query) forms (App.~\ref{app:implicit_prior}), and Bayes-gap identities under GP benchmarks
including the query-coupled form used in the main text (App.~\ref{app:bayes_gap_identities}). We provide rigorous versions of the three necessity propositions (App.~\ref{app:three_implications}) and full PAC-Bayes details for evidence-based per-prompt horizon selection (App.~\ref{app:pac_horizon_selection}). Finally, we prove the Bayes-optimal subspace characterization, and include global-design and plug-in regret bounds
(App.~\ref{app:subspace}).

\item \textbf{Validity and robustness of the local model.}
We bound finite-step deviations from linearization (App.~\ref{app:linearization_error}), extend filter identities beyond
Gaussianity via second-moment arguments (App.~\ref{app:second_moment_spectral}), and discuss anisotropic priors and
preconditioning (App.~\ref{app:anisotropic}).

\item \textbf{Interpretability: gradient flow and random-matrix theory.}
We provide a gradient-flow interpretation of the update horizon, derive deterministic oracle curves via random-matrix theory, and connect spectral properties to horizon selection (App.~\ref{app:oracle_horizon}).

\item \textbf{Details specific to Transformers and stability.}
We derive Transformer Jacobian and geometry formulas (App.~\ref{app:transformer_geometry_details}), include a simple step-size
stability bound (App.~\ref{app:sec5_stability}), and characterize representable prompt subspaces for value updates
(App.~\ref{app:representable_subspace}).

\item \textbf{Experimental details.}
We describe task generation, optimization and evidence procedures, and the evaluation protocol (App.~\ref{app:experiments}).

\item \textbf{Supplementary real-data check.}
We include a small SST-2 experiment with \texttt{distilgpt2}, using 10 labeled prompt examples and one held-out query per task, to test the same two mechanisms---evidence-based horizon selection and query-aware head selection---outside the synthetic digit-shift setting (App.~\ref{app:sst2_realdata}).
\end{itemize}

\section{Notation list}\label{app:notation_list}
This section collects notation for clarity.

\begin{description}

\item[\textbf{(Sec.~2) }]
\item[$\alpha$:] Latent task variable, $\alpha\sim\pi$.
\item[$\pi$:] Meta-distribution over tasks.
\item[$\mathcal P_\alpha$:] Task-conditional data distribution on $\mathcal X\times\mathcal Y$.
\item[$(X_i,Y_i)$:] Labeled prompt samples, i.i.d.\ given $\alpha$.
\item[$(X,Y)$:] Query input/target random pair drawn from $\mathcal P_\alpha$.
\item[$(x,y)$:] A realized value of $(X,Y)$.
\item[$P_n$:] Labeled prompt of length $n$: $P_n=\{(X_i,Y_i)\}_{i=1}^n$.
\item[$n$:] Number of supervised prompt positions (residual terms), i.e., $r\in\mathbb R^n$ and $K\in\mathbb R^{n\times n}$.
In our simplified setting, each labeled example corresponds to one supervised token position, so $n$ also equals the number of labeled examples.

\item[$d_x$:] Input dimension $\mathcal X\subset\mathbb R^{d_x}$.

\item[$M$:] Prediction rule mapping $(P_n,X)\mapsto \widehat Y$.
\item[$\mathcal R(M)$:] Bayes risk $\mathbb E[(M(P_n,X)-Y)^2]$.
\item[$M_{\mathrm{Bayes}}$:] Bayes predictor $M_{\mathrm{Bayes}}(P_n,X)=\mathbb E[Y\mid P_n,X]$.
\item[$\mathcal G(M)$:] Bayes gap $\mathbb E[(M-M_{\mathrm{Bayes}})^2]$.
\item[$\mathcal V$:] Posterior variance $\mathbb E[\mathrm{Var}(Y\mid P_n,X)]$.

\item[$B$:] Base predictor (typically the no-update ICL predictor).
\item[$C$:] Prompt-dependent correction rule, with $M=B+C$.
\item[$\Delta_B$:] Bayes correction relative to $B$:
$\Delta_B(P_n,X)=M_{\mathrm{Bayes}}(P_n,X)-B(P_n,X)$.

\item[\textbf{(Sec.~3)}]
\item[$w,w_0$:] Model parameters and their pretrained value.
\item[$p$:] Total number of parameters $w\in\mathbb R^p$.
\item[$A$:] Orthonormal basis of the update subspace,
$A\in\mathbb R^{p\times d}$ with $A^\top A=I_d$.
\item[$d$:] Update-subspace dimension.
\item[$\mathcal U$:] Update subspace $\mathcal U=\mathrm{range}(A)$.
\item[$\Pi_{\mathcal U}$:] Orthogonal projector onto $\mathcal U$, $\Pi_{\mathcal U}=AA^\top$.
\item[$\theta$:] Update parameters, $w(\theta)=w_0+A\theta$.
\item[$f_w(x;P_n)$:] Prediction at input $x$ given prompt $P_n$ under parameters $w$.
\item[$f_\theta(x;P_n)$:] Linearized surrogate model.

\item[$P_n^{(-i)}$:] Prompt with the $i$th example removed.
\item[$r$:] Prompt residual vector: $r_i=y_i-B(P_n^{(-i)},x_i)$.
\item[$\phi(x;P_n)$:] Jacobian feature $\phi(x;P_n)=\nabla_\theta f_\theta(x;P_n)\vert_{\theta=0}$.
\item[$\Phi$:] Prompt feature matrix in $\mathbb R^{n\times d}$, with rows $\phi(x_i;P_n^{(-i)})^\top$.
\item[$L(\theta)$:] Quadratic prompt loss $L(\theta)=(2\sigma^2)^{-1}\|r-\Phi\theta\|_2^2$.
\item[$\theta_\star$:] Latent update parameter signal in the local regression model
$r=\Phi\theta_\star+\varepsilon$.
\item[$\varepsilon$:] Noise vector in the local regression model,
$\varepsilon\sim\mathcal N(0,\sigma^2 I_n)$.
\item[$\sigma^2$:] Effective noise or temperature parameter.

\item[$K$:] Prompt-induced geometry (kernel) $K=K(P_n)=\Phi\Phi^\top\in\mathbb R^{n\times n}$.
\item[$s_w(P_n)$:] Vector of prompt supervision outputs: $s_w(P_n)=(s_1(w),\dots,s_n(w))$.
\item[$J_w$:] Jacobian of prompt outputs $J_w=\nabla_w s_w(P_n)\vert_{w=w_0}\in\mathbb R^{n\times p}$.
\item[$K_{\rm full}$:] Full (unrestricted) kernel: $K_{\rm full}=J_wJ_w^\top$.
\item[$\mathcal S$:] Representable prompt subspace: $\mathcal S=\mathrm{range}(K)$.
\item[$k$:] Token index / supervised-position index, $k\in\{1,\dots,n\}$.
\item[$H$:] Number of attention heads.
\item[$h$:] Attention head index.
\item[$\alpha^{(h)}_{ks}$:] Attention weight from token $s$ to token $k$ in head $h$.
\item[$z_s$:] Residual-stream representation of token $s$.
\item[$Z$:] Stack of residual-stream vectors: $Z=[z_1^\top;\dots;z_n^\top]$.
\item[$W_V^{(h)},W_O^{(h)}$:] Value and output matrices of head $h$.
\item[$g_k^{(h)}$:] Upstream gradient into head-$h$ output at token $k$ $g_k^{(h)} :=\nabla_{o_k^{(h)}} s_k(w)\vert_{w_0}$.
\item[$m_k^{(h)}$:] Attention-mixed representation
$m_k^{(h)}=\sum_{s\le k}\alpha^{(h)}_{ks}z_s$.
\item[$M^{(h)}$:] Stack of attention-mixed representations: $M^{(h)}:=\mathcal{A}^{(h)}Z$.
\item[$\beta_k^{(h)}$:] Projected gradient $\beta_k^{(h)}=(W_O^{(h)})^\top g_k^{(h)}$.
\item[$\odot$:] Hadamard (elementwise) product.
\item[$\mathbf{B}^{(h)}$:] Stack of projected gradients $\mathbf{B}^{(h)}=[\beta_1^{(h)\top};\dots;\beta_n^{(h)\top}]$.

\item[\textbf{(Sec.~4)}]
\item[$t$:] GD iteration index ($t=0,1,2,\dots$).
\item[$T$:] Update horizon (number of steps).
\item[$\eta$:] GD step size (in parameter/update space).
\item[$\rho$:] Normalized step size: $\rho=\eta/\sigma^2$.
\item[$\theta_t$:] GD iterate, initialized at $\theta_0=0$.
\item[$\hat f_t$:] Prompt-space correction: $\hat f_t=\Phi\theta_t$.
\item[$G_T$:] TTT smoother: $G_T=I_n-(I_n-\rho K)^T$.
\item[$g_T(\lambda)$:] GD spectral filter: $g_T(\lambda)=1-(1-\rho\lambda)^T$.
\item[$\hat f_T$:] $T$-step prompt correction: $\hat f_T=G_T r$.

\item[$U,\{\lambda_i\}$:] Eigen-decomposition of $K$:
$K=U\mathrm{diag}(\lambda_1,\dots,\lambda_n)U^\top$.

\item[$u_i$:] Eigenvector of $K$ corresponding to the eigenvalue $\lambda_i$ (i.e., the $i$-th column of $U$).

\item[$g(K)$:] Functional calculus of $g$; namely,
$g(K)=U\mathrm{diag}(g(\lambda_i))U^\top$.

\item[$A\succeq 0,\ A\preceq B$:]
For symmetric matrices, $A\succeq 0$ means that $A$ is positive semidefinite, and
$A\preceq B$ means that $B-A$ is positive semidefinite.

\item[$\lambda_{\max}(A)$:]
The largest eigenvalue of a symmetric matrix $A$.

\item[$\|A\|_{\mathrm{op}}$:]
The operator norm of $A$ (equal to the largest singular value; for symmetric $A$, it equals $\max_i|\lambda_i(A)|$).

\item[$A^{\dagger}$:]
The Moore--Penrose pseudoinverse.

\item[$\log\det^{\!*}(A)$:]
For a positive semidefinite matrix $A$ with eigenvalues $\{\mu_i\}$,
$\log\det^{\!*}(A):=\sum_{i:\mu_i>0}\log \mu_i$.

\item[\textbf{(Sec.~5)}]
\item[$f_\star$:] Latent true prompt-level correction.
\item[$\tau^2$:] Prior variance / signal strength.
\item[$\lambda_\star$:] test-time noise-to-signal ratio $\lambda_\star=\sigma^2/\tau^2$.
\item[$g_\star(\lambda)$:] Bayes-optimal shrinkage: $g_\star(\lambda)=\lambda/(\lambda+\lambda_\star)$.
\item[$\hat h_T(x)$:] Query-point correction induced by $T$-step TTT: $\hat h_T(x)=\phi(x;P_n)^\top\theta_T=k_x^\top q_T(K)\,r$.
\item[$f_{\mathrm{Bayes}}$:] Bayes posterior mean.

\item[$k_x$:] Prompt--query kernel vector: $k_x=\Phi\,\phi(x;P_n)\in\mathbb R^n$.
\item[$q_T(\lambda)$:] Query-side TTT filter:
$q_T(\lambda)=\frac{1-(1-\rho\lambda)^T}{\lambda}$ for $\lambda>0$ (and $q_T(0)=\rho T$).
\item[$q_\star(\lambda)$:] Bayes-optimal resolvent: $q_\star(\lambda)=(\lambda+\lambda_\star)^{-1}$.
\item[$\kappa_x$:] Query-point kernel scalar in the GP benchmark, so that
$\begin{psmallmatrix} K & k_x \\ k_x^\top & \kappa_x \end{psmallmatrix} \succeq 0$ and
$\mathrm{Var}(f_x\mid K,k_x,\kappa_x)=\tau^2\kappa_x$.
\item[$\alpha^{(h)}_{ks}$:] Attention weight (head $h$) from token $s$ to token $k$.
\item[$\mathcal{A}^{(h)}$:] Attention matrix with entries $\mathcal{A}^{(h)}_{ks}=\alpha^{(h)}_{ks}$.

\item[\textbf{(Sec.~7)}]
\item[$\mathcal T$:] Finite candidate set of horizons ($\subset\mathbb N_0$).
\item[$\pi_0$:] Prior distribution over $\mathcal T$.
\item[$\Sigma_T$:] Evidence covariance: $\Sigma_T=\sigma^2(I-\rho K)^{-T}$.
\item[$\ell_T(r)$:] Normalized negative log-evidence for horizon $T$.
\item[$\mathcal L(T\mid K)$:] Population cross-entropy risk: $\mathcal L(T\mid K)=\mathbb E[\ell_T(r)\mid K]$.
\item[$\nu_\beta(T\mid r)$:] Gibbs posterior:
$\nu_\beta(T\mid r)\propto \pi_0(T)\exp\{-\beta n\,\ell_T(r)\}$.
\item[$\beta$:] Gibbs temperature parameter in the PAC-Bayes / evidence selection procedure.

\item[$\Sigma_\star$:] True/reference covariance of $r$ conditional on $K$ in Assumption~\ref{eq:ref_model_pac}.
\item[$A_T$:] Information matrix:
$A_T:=\Sigma_\star^{1/2}\Sigma_T^{-1}\Sigma_\star^{1/2}\succeq 0$.
\item[$v_T$:] Variance proxy: $v_T:=\frac{1}{n}\mathrm{tr}(A_T^2)$.
\item[$\bar v$:] Max variance proxy: $\bar v:=\sup_{T\in\mathcal T} v_T$.
\item[$T_{\mathrm{MAP}}$:] Evidence/MAP-selected horizon:
$T_{\mathrm{MAP}}\in\arg\min_{T\in\mathcal T}\ell_T(r)$ (uniform $\pi_0$).

\item[\textbf{(Sec.~8)}]
\item[$u_\star$:] Latent parameter correction, $u_\star\sim\mathcal N(0,\Sigma)$.
\item[$\Sigma$:] Prior covariance of parameter corrections.
\item[$\varepsilon$:] Observation noise, $\varepsilon\sim\mathcal N(0,\sigma^2 I_n)$.
\item[$\bar u(r)$:] Posterior mean: $\bar u(r)=\mathbb E[u_\star\mid r]$.
\item[$\Pi$:] Rank-$d$ orthogonal projector defining an update subspace.
\item[$\widehat u_\Pi(r)$:] Constrained Bayes estimator: $\widehat u_\Pi(r)=\Pi\,\bar u(r)$.
\item[$\mathcal I$:] Information-capture matrix: $\mathcal I=\mathrm{Var}(\bar u(r))=\Sigma J_w^\top(J_w\Sigma J_w^\top+\sigma^2 I_n)^{-1}J_w\Sigma$.

\item[$j_X$:] Query Jacobian: $j_X=\nabla_w f_{w_0}(X;P_n)\in\mathbb R^p$.
\item[$j_x$:] Query Jacobian at a specific query input $x$: $j_x:=\nabla_w f_{w_0}(x;P_n)\in\mathbb R^p$.

\item[$Q$:] Query sensitivity (semi)metric:
$Q:=\mathbb E[\,j_X j_X^\top\,]\succeq 0$ (conditioning on the prompt if desired).
\item[$v$:] $Q$-whitened parameter coordinates: $v:=Q^{1/2}u$ (restricted to $\mathrm{range}(Q)$ if $Q$ is singular).
\item[$\widetilde\Pi$:] Rank-$k_{\text{budget}}$ Euclidean orthogonal projector acting on $v$-space
(defines the update subspace in the $Q$-whitened coordinates).
\item[$k_{\text{budget}}$:] Rank / budget of the update subspace (number of adaptable directions/blocks).
\item[$\mathcal I_Q$:] Query-whitened information-capture matrix:
$\mathcal I_Q:=\mathrm{Var}(\bar v(r))=Q^{1/2}\mathcal I Q^{1/2}\succeq 0$.

\end{description}

\section{Primer on Decision Theory and Bayesian Estimation}
\label{app:bayes_primer}
To render our theoretical framework self-contained and accessible to readers less familiar with statistical decision theory~\citep{LehmannCasella1998,robert2007bayesian,Murphy2022intro}, we briefly review predictive risk, Bayes risk under our mixture task model, and the Bayes posterior (risk-minimizing) estimator. Throughout this primer, we use the squared loss and assume square-integrability (e.g., $\mathbb E[Y^2]<\infty$) so that conditional expectations under squared loss are well-defined. We highlight the minimal assumptions needed for each identity and indicate where they are relaxed in later appendices.

\subsection{Risk and Bayes-optimal prediction for a fixed distribution}
Let $(\Omega,\mathcal F,\mathbb P)$ be a probability space and let $(X,Y):\Omega\to\mathcal X\times\mathbb R$ be a random pair with joint law $\mathcal P$.
A (deterministic) predictor is a measurable map $M:\mathcal X\to\mathbb R$, and the induced prediction is the random variable $\widehat Y:=M(X)$.

\paragraph{Predictive risk.}
Assume $\mathbb E[M(X)^2]<\infty$. The (squared-loss) risk~\citep{Bishop2006PRML,Murphy2022intro} of $M$ under $\mathcal P$ is
\[
    R_{\mathcal P}(M)\ :=\ \mathbb E\bigl[(M(X)-Y)^2\bigr].
\]
Even the best possible predictor incurs nonzero risk when $Y$ is noisy given $X$.

\paragraph{Bayes predictor under squared loss.}
Any minimizer of $R_{\mathcal P}(M)$ over measurable $M$ with $\mathbb E[M(X)^2]<\infty$ is given (uniquely $\mathcal P_X$-a.s.) by the conditional expectation:
\[
    M_{\mathrm{Bayes},\mathcal P}(X)\ :=\ \mathbb E[Y\mid X].
\]
Equivalently, any two minimizers agree $\mathcal P_X$-almost surely.

\subsection{Bayes risk under the mixture task model}
In the context of TTT and ICL, we do not face a single static distribution. Instead, we face a distribution of tasks (e.g., a mixture of different functions or domains~\citep{wakayama2025context}).

As in \eqref{eq:mixture_model_sec2}, let $\alpha\sim\pi$ and, conditional on $\alpha$, let $(X_i,Y_i)_{i=1}^n$ and $(X,Y)$ be conditionally i.i.d.\ with law $\mathcal P_\alpha$. Let $P_n:=\{(X_i,Y_i)\}_{i=1}^n$ and let the available information be $\sigma$-field $\sigma(P_n,X)$. A predictor is any $\sigma(P_n,X)$-measurable map:
\[
    M:\ (P_n,X)\ \mapsto\ \widehat Y\in\mathbb R,
\]
with $\mathbb E[\widehat Y^2]<\infty$.

\paragraph{Bayes risk~\citep{robert2007bayesian,Murphy2022intro}.}
Assume $\mathbb E[M(P_n,X)^2]<\infty$. We evaluate $M$ by the task-averaged (Bayes) risk
\[
    \mathcal R(M)\ :=\ \mathbb E_{\alpha\sim\pi, P_n\sim \mathcal{P}_{\alpha}^{\otimes n}, (X,Y)\sim \mathcal{P}_{\alpha} }\bigl[(M(P_n,X)-Y)^2\bigr],
\]
where the expectation is taken with respect to the joint distribution of $(\alpha,P_n,X,Y)$ induced by the mixture model in \eqref{eq:mixture_model_sec2}. Here, $\mathcal{P}_{\alpha}^{\otimes n}$ denotes n-fold product (tensor) measure of $\mathcal{P}_\alpha$.

\paragraph{Bayes predictor under the mixture model.}
Under squared loss, any minimizer of $\mathcal R(M)$ over measurable maps $M(P_n,X)$ (with finite second moment) is given (uniquely a.s.) by the conditional expectation
\[
    M_{\mathrm{Bayes}}(P_n,X)\ :=\ \mathbb E[Y\mid P_n,X].
\]

\paragraph{Bayes-risk decomposition (Prop.~\ref{prop:bayes_decomp_sec2}).}
For any predictor $M$ with finite second moment, the Bayes risk decomposes as
\[
    \mathcal R(M)
    = \mathcal G(M) \text{ (Bayes gap)}
    + \mathcal V \text{ (posterior variance)}.
\]
\begin{itemize}
\item The \textbf{posterior variance} ($\mathcal{V}$) represents the inherent uncertainty in the targets $Y$ that cannot be resolved even with infinite compute and perfect knowledge of the prior given $\sigma(P_n,X)$. It is constant with respect to the model $M$.
\item  The \textbf{Bayes Gap} ($\mathcal{G}$) measures how closely the model $M$ approximates the optimal Bayes predictor. In other words, this is the excess risk to the Bayes predictor. 
\end{itemize}
To improve performance via TTT, we must reduce the Bayes Gap. This means our adapted model must essentially perform a better approximation of the conditional expectation $\mathbb{E}[Y \mid P_n, X]$ than the base model.

\paragraph{Existence and regular conditional distributions}
\label{rem:condexp_rcd_projection}
Let $\mathcal G\subseteq\mathcal F$ be a sub-$\sigma$-field and let $Y\in L^1(\mathbb P)$.
A conditional expectation $\mathbb E[Y\mid \mathcal G]$ is any $\mathcal G$-measurable random variable $Z$
satisfying $\int_A Z\,d\mathbb P=\int_A Y\,d\mathbb P$ for all $A\in\mathcal G$.
Existence follows from the Radon--Nikodym theorem and $Z$ is unique $\mathbb P$-a.s.; throughout we fix a
$\mathcal G$-measurable version when writing $\mathbb E[Y\mid P_n,X]$.

In our setting $(P_n,X,Y)$ take values in standard Borel spaces, so there exists a regular conditional distribution $\mathbb P(dy\mid P_n,X)$ and, whenever $Y$ is integrable, $\mathbb E[Y\mid P_n,X]=\int y\,\mathbb P(dy\mid P_n,X)$ holds $\sigma(P_n,X)$-a.s.

\section{Relation to classical optimization theory}
\label{app:early_stopping_relation}
Classical early stopping and iterative regularization analyzes gradient descent on a fixed quadratic problem, yielding the spectral filter form $\hat f_T=g_T(K)r$ with $g_T(\lambda)=1-(1-\rho\lambda)^T$~\citet[e.g.,][]{YaoRosascoCaponnetto2007EarlyStopping,engl1996regularization}. We reuse this identity, but our TTT setting differs in three critical ways.

\paragraph{(1) The geometry $K$ is not fixed.}
Under our local linearization, the prompt-space geometry is
$K(P_n)=\Phi\Phi^\top=J_w\Pi_{\mathcal U}J_w^\top$.
Unlike classical analyses where $K$ (or the design) is exogenous, here $K$ varies across prompts and changes with the update mechanism $\Pi_{\mathcal U}$ (which blocks/heads/directions are adaptable). Thus, TTT behavior is inherently prompt- and mechanism-dependent.

\paragraph{(2) The target is query prediction, not only in-sample fitting.}
Beyond the prompt correction $\hat f_T$, prediction relies on the query-point correction
$\hat h_T(x)=k_x^\top q_T(K)r$ with $k_x=\Phi\,\phi(x;P_n)$.
Hence, success depends not only on filter choice (via $q_T$) but also on prompt--query coupling: only eigenmodes aligned with $k_x$ affect the query (Thm.~\ref{thm:predictive_gap_main}).

\paragraph{(3) $T$ and update directions are on-the-fly objects.}
In TTT, a fixed $T$ can be Bayes-suboptimal under test-time prior shift, and $T\!\to\!\infty$ can overfit (Sec.~\ref{sec:three_implications}), motivating per-prompt $T$ selection (Sec.~\ref{sec:eb_horizon_selection}).
Moreover, no choice of $T$ can overcome subspace mismatch: corrections lie in $\mathcal S(P_n)=\mathrm{range}(K(P_n))$, so update-direction design is essential.

Together, these distinctions explain why step sensitivity and update-direction choices are intrinsic to TTT, rather than secondary hyperparameter details.

\section{Proofs (by main-text order)}
\subsection{Proofs for Sec.~\ref{sec:decision_theory}}\label{app:sec2_proofs}

This appendix provides proofs for the decision-theoretic statements in Sec.~\ref{sec:decision_theory}. Throughout, expectations are with respect to the joint distribution induced by the mixture model \eqref{eq:mixture_model_sec2}.

\subsubsection{Proof of Prop.~\ref{prop:bayes_decomp_sec2}}
\label{app:proof_bayes_decomp}

\begin{proof}
Let
$M_{\mathrm{Bayes}}(P_n,X) := \mathbb{E}[Y\mid P_n,X],
$ and $
\xi := Y - M_{\mathrm{Bayes}}(P_n,X)$.
Then $\mathbb{E}[\xi\mid P_n,X]=0$ and $Y=M_{\mathrm{Bayes}}(P_n,X)+\xi$.
For any predictor $M(P_n,X)$ with $\mathbb{E}[M(P_n,X)^2]<\infty$,
\begin{align*}
&\bigl(M(P_n,X)-Y\bigr)^2 \\
&= \bigl(M(P_n,X)-M_{\mathrm{Bayes}}(P_n,X)-\xi\bigr)^2 \\
&= \bigl(M(P_n,X)-M_{\mathrm{Bayes}}(P_n,X)\bigr)^2
   + \xi^2\\
&\quad - 2\xi\bigl(M(P_n,X)-M_{\mathrm{Bayes}}(P_n,X)\bigr).
\end{align*}
Taking expectation and using the tower property,
\begin{align*}
\mathbb{E}\!\left[\xi\bigl(M-M_{\mathrm{Bayes}}\bigr)\right]
&=
\mathbb{E}\!\left[\mathbb{E}\!\left[\xi\bigl(M-M_{\mathrm{Bayes}}\bigr)\mid P_n,X\right]\right]\\
&=
\mathbb{E}\!\left[\bigl(M-M_{\mathrm{Bayes}}\bigr)\,\mathbb{E}[\xi\mid P_n,X]\right]\\
&=0.
\end{align*}
Therefore,
\[
\mathcal{R}(M)
=
\mathbb{E}\!\left[\bigl(M-M_{\mathrm{Bayes}}\bigr)^2\right]
+
\mathbb{E}\!\left[\xi^2\right].
\]
Finally, $\mathbb{E}[\xi^2]=\mathbb{E}[\mathrm{Var}(Y\mid P_n,X)]$ by definition of conditional variance, and it does
not depend on $M$. This yields \eqref{eq:bayes_decomp_sec2}.
\end{proof}

\subsection{Proofs and additional details for Sec.~\ref{sec:spectral_and_prior}}
\label{app:implicit_prior}

This appendix derives the spectral-filter representation of TTT and proves the equivalence to Gaussian posterior means stated in Sec.~\ref{sec:spectral_and_prior}. Before proceeding, several points should be noted.

\paragraph{Beyond squared loss.}
Beyond squared loss, an analogous one-step update can be written by replacing the residual $r$ with the scaled pseudo-residual $\tilde r := -\sigma^2 \nabla_s \ell(s_{w_0}(P_n),y)$ evaluated at $\theta=0$. Extending the exact multi-step spectral-filter identities to general losses requires additional local curvature approximations (e.g., Gauss--Newton/Hessian freezing), which we leave as future work.

\paragraph{Well-posedness of the prior.}
Since $g_T(0)=0$, the operator $G_T=g_T(K)$ vanishes on $\mathrm{null}(K)$ and hence $\hat f_T\in \mathrm{range}(K)$ for all $T$. The implicit prior covariance $C_T$ is supported on $\mathrm{range}(K)$. It has eigenvalue $0$ on $\mathrm{null}(K)$ and positive eigenvalues only on $\mathrm{range}(K)$.

\subsubsection{Derivation of \eqref{eq:spectral_filter_form_sec4}}
\label{app:proof_spectral_filter}

\begin{proof}
Recall the GD update in update parameters (Sec.~\ref{sec:spectral_and_prior}):
\[
\theta_{t+1}=\theta_t+\rho\,\Phi^\top(r-\Phi\theta_t),
\quad \rho=\eta/\sigma^2,
\quad \theta_0=0.
\]
Define $\hat f_t:=\Phi\theta_t\in\mathbb{R}^n$. Multiplying both sides by $\Phi$ gives the prompt-space recursion
\[
    \hat f_{t+1} = \hat f_t+\rho\,\Phi\Phi^\top(r-\Phi\theta_t)
    = \hat f_t+\rho K(r-\hat f_t).
\]
Rearranging yields
\[
    r-\hat f_{t+1}=(I_n-\rho K)(r-\hat f_t).
\]
Since $\hat f_0=0$, we have $r-\hat f_0=r$, and iterating gives $r-\hat f_T=(I_n-\rho K)^T r$, hence $\hat f_T=\bigl(I_n-(I_n-\rho K)^T\bigr)r$.

Diagonalizing $K=U\mathrm{diag}(\lambda_i)U^\top$ implies
\begin{align*}
    G_T &= U\,\mathrm{diag}\!\bigl(1-(1-\rho\lambda_1)^T,\dots,1-(1-\rho\lambda_n)^T\bigr)\,U^\top \\
    &= U\,\mathrm{diag}\!\bigl(g_T(\lambda_1),\dots,g_T(\lambda_n)\bigr)\,U^\top,
\end{align*}
where $g_T(\lambda)=1-(1-\rho\lambda)^T$, which yields the statement.

Finally, if $0<\rho<2/\lambda_{\max}(K)$, then all eigenvalues of $I_n-\rho K$ lie in $(-1,1]$, and if moreover $0<\rho<1/\lambda_{\max}(K)$, then $0\le 1-\rho\lambda_i<1$ for all $i$, so $g_T(\lambda_i)\in[0,1)$ for all finite $T$.
\end{proof}

\subsubsection{Full proof of Prop.~\ref{thm:implicit_prior_sec4}}
\label{app:proof_implicit_prior}

\begin{proof}
Recall $0<\rho<1/\lambda_{\max}(K)$. Let $K=U\mathrm{diag}(\lambda_1,\dots,\lambda_n)U^\top$.
Then
\[
G_T
=
U\,\mathrm{diag}\!\bigl(g_T(\lambda_1),\dots,g_T(\lambda_n)\bigr)\,U^\top,
\]
$0\le g_T(\lambda_i)<1$ for all $i$.

Define the implicit prior covariance $C_T$:
\[
C_T := \sigma^2\,G_T\,(I_n-G_T)^{-1}.
\]
In the eigenbasis of $K$ (equivalently of $G_T$), the matrix $(I_n-G_T)^{-1}$ is diagonal with entries
\[
(1-g_T(\lambda_i))^{-1}=
\begin{cases}
\frac{1}{1-g_T(\lambda_i)}, & \lambda_i>0,\\
1, & \lambda_i=0,
\end{cases}
\]
but since $g_T(0)=0$, the product $g_T(0)(1-g_T(0))^{-1}=0$.
Hence $C_T$ has the form of $C_T = U\mathrm{diag}(c_{T,1},\dots,c_{T,n})U^\top$, where
\[
    c_{T,i}=
    \begin{cases} 
        \sigma^2\,\dfrac{g_T(\lambda_i)}{1-g_T(\lambda_i)}, & \lambda_i>0,\\ 
        0, & \lambda_i=0. 
    \end{cases}
\]

Now consider the Gaussian latent-variable model $r=f+\varepsilon$ with
$f\sim\mathcal{N}(0,C_T)$ and $\varepsilon\sim\mathcal{N}(0,\sigma^2 I_n)$ independent.
Working in the eigenbasis of $C_T$, we have coordinate-wise for $\tilde r:=U^\top r$,
$\tilde f:=U^\top f$, and $\tilde\varepsilon:=U^\top\varepsilon$:
\[
\tilde r_i=\tilde f_i+\tilde\varepsilon_i,
\quad
\tilde f_i\sim\mathcal{N}(0,c_{T,i}),
\quad
\tilde\varepsilon_i\sim\mathcal{N}(0,\sigma^2).
\]
For each coordinate, the posterior mean is given by the scalar Gaussian conditioning formula~\citep{RasmussenWilliams2006GPML,Bishop2006PRML}:
\[
\mathbb{E}[\tilde f_i\mid \tilde r_i]
=
\frac{c_{T,i}}{c_{T,i}+\sigma^2}\,\tilde r_i.
\]
If $\lambda_i>0$, then by the definition of $c_{T,i}$,
\[
\frac{c_{T,i}}{c_{T,i}+\sigma^2}
=
\frac{\sigma^2\frac{g_T(\lambda_i)}{1-g_T(\lambda_i)}}{\sigma^2\frac{g_T(\lambda_i)}{1-g_T(\lambda_i)}+\sigma^2}
=
g_T(\lambda_i).
\]
If $\lambda_i=0$, then $c_{T,i}=0$ and also $g_T(0)=0$, so the identity still holds. Therefore,
\[
\mathbb{E}[\tilde f_i\mid \tilde r_i]=g_T(\lambda_i)\tilde r_i
\quad\text{for all } i,
\]
and transforming back gives
\[
\mathbb{E}[f\mid r]
=
U\mathrm{diag}\!\bigl(g_T(\lambda_1),\dots,g_T(\lambda_n)\bigr)U^\top r
=
\hat f_T,
\]
where the last equality uses \eqref{eq:spectral_filter_form_sec4}.
\end{proof}

\subsubsection{Out-of-sample (query) form of $T$-step GD under the prompt geometry}
\label{app:out_of_sample_form}

We record the kernel-linear form of the $T$-step prediction at a query input $x$ under the linearized model
(Assumption~\ref{assump:linearization_sec3}). This form is used by the predictive Bayes-gap identity in
Appendix~\ref{app:predictive_gap}.

\begin{proposition}[Out-of-sample form of TTT]
\label{prop:out_of_sample_form_app}
Work under Assumptions~\ref{assump:update_subspace_sec3}--\ref{assump:linearization_sec3} and the quadratic prompt loss. Let $K=\Phi\Phi^\top\in\mathbb{R}^{n\times n}$ and let $r\in\mathbb{R}^n$ be the
prompt residual vector. For a query input $x$, define the prompt--query kernel vector
\begin{equation}
    k_x  :=  \Phi\,\phi(x;P_n)\in\mathbb{R}^n,
\label{eq:kx_def_app}
\end{equation}
so $(k_x)_i=\langle \phi(x_i;P_n^{(-i)}),\phi(x;P_n)\rangle$.
Let $\{\theta_t\}_{t\ge 0}$ be the GD iterates \eqref{eq:gd_theta_sec4} with $\theta_0=0$, and define the query correction
$\hat h_T(x):=\phi(x;P_n)^\top\theta_T$. Then for every $T\ge 1$,
\begin{equation}
\hat h_T(x)
=
k_x^\top q_T(K)\,r,
\quad
q_T(K):=\rho\sum_{t=0}^{T-1}(I_n-\rho K)^t,
\label{eq:out_of_sample_qT_app}
\end{equation}
where $\rho=\eta/\sigma^2$. Equivalently, if $K=U\mathrm{diag}(\lambda_1,\dots,\lambda_n)U^\top$, then $\hat h_T(x)=\sum_{i=1}^n (u_i^\top k_x)\,q_T(\lambda_i)\,(u_i^\top r),$ where
\begin{equation}
q_T(\lambda)=
\begin{cases}
\displaystyle \frac{1-(1-\rho\lambda)^T}{\lambda}, & \lambda>0,\\[1.2ex]
\rho T, & \lambda=0.
\end{cases}
\label{eq:out_of_sample_spectral_app}
\end{equation}
Moreover, on the prompt points one recovers $\hat f_T=g_T(K)r$ as $\hat f_T=K q_T(K) r$.
\end{proposition}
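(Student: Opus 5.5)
I will prove the statement by unrolling the gradient-descent recursion in parameter space and showing that every iterate lies in the row space of $\Phi$.

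\emph{Step 1: closed form for $\theta_T$.} From \eqref{eq:gd_theta_sec4},
\[
\theta_{t+1}=(I_d-\rho\Phi^\top\Phi)\theta_t+\rho\Phi^\top r, \qquad \theta_0=0.
\]
By induction this gives
\[
\theta_T=\rho\sum_{t=0}^{T-1}(I_d-\rho\Phi^\top\Phi)^t\Phi^\top r .
\]

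\emph{Step 2: move the operator to prompt space.} I will use the push-through identity $(I_d-\rho\Phi^\top\Phi)^t\Phi^\top=\Phi^\top(I_n-\rho\Phi\Phi^\top)^t$. It follows by induction on $t$ from the case $t=1$:
\[
(I_d-\rho\Phi^\top\Phi)\Phi^\top=\Phi^\top-\rho\Phi^\top\Phi\Phi^\top=\Phi^\top(I_n-\rho K).
\]
Consequently
\[
\theta_T=\Phi^\top q_T(K)\,r, \qquad q_T(K)=\rho\sum_{t=0}^{T-1}(I_n-\rho K)^t .
\]
Then
\[
\hat h_T(x)=\phi(x;P_n)^\top\Phi^\top q_T(K)r=k_x^\top q_T(K)r,
\]
since $k_x=\Phi\phi(x;P_n)$ by \eqref{eq:kx_def_app}. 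This establishes \eqref{eq:out_of_sample_qT_app}.

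\emph{Step 3: spectral form.} Diagonalize $K=U\mathrm{diag}(\lambda_i)U^\top$. The matrix polynomial $q_T(K)$ acts on $u_i$ by the scalar $\rho\sum_{t<T}(1-\rho\lambda_i)^t$. For $\lambda_i>0$ this geometric sum equals $\{1-(1-\rho\lambda_i)^T\}/\lambda_i$; for $\lambda_i=0$ every term is $1$, so the sum is $\rho T$. This gives \eqref{eq:out_of_sample_spectral_app}, and expanding $k_x$ and $r$ in the eigenbasis yields
\[
\hat h_T(x)=\sum_{i=1}^n (u_i^\top k_x)\,q_T(\lambda_i)\,(u_i^\top r).
\]

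\emph{Step 4: consistency on prompt points.} The $i$th prompt row is obtained by replacing $\phi(x;P_n)$ with the $i$th row of $\Phi$, so
\[
\hat f_T=\Phi\theta_T=Kq_T(K)r .
\]
Moreover $\lambda q_T(\lambda)=1-(1-\rho\lambda)^T=g_T(\lambda)$ for $\lambda>0$, and both sides vanish at $\lambda=0$. Hence $Kq_T(K)=g_T(K)$, in agreement with \eqref{eq:spectral_filter_form_sec4}.

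The only point needing care is Step 2: one must avoid inverting $\Phi^\top\Phi$ or $K$, which may be singular. The push-through identity handles this cleanly, and the case $\lambda=0$ in Step 3 is then immediate. No step-size condition is needed for the identity itself.
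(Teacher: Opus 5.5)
Your proof is correct and follows the paper's argument: both unroll the GD recursion and use the intertwining identity $(I_n-\rho K)\Phi=\Phi(I_d-\rho\Phi^\top\Phi)$. You use its transpose to obtain the representer form $\theta_T=\Phi^\top q_T(K)r$, whereas the paper applies it to $\phi(x;P_n)$ to produce $(I_n-\rho K)^t k_x$; your observation that $\lambda q_T(\lambda)$ and $g_T(\lambda)$ both vanish at $\lambda=0$ also gives $Kq_T(K)=g_T(K)$ on all of $\mathbb{R}^n$, slightly sharper than the paper's ``on $\mathrm{range}(K)$'' phrasing.
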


\begin{proof}
Unrolling the GD recursion \eqref{eq:gd_theta_sec4} gives
\[
\theta_T
=
\rho\sum_{t=0}^{T-1}(I_d-\rho\Phi^\top\Phi)^t\,\Phi^\top r.
\]
Therefore,
\[
\hat h_T(x)
=
\phi(x)^\top\theta_T
=
\rho\sum_{t=0}^{T-1}\phi(x)^\top(I_d-\rho\Phi^\top\Phi)^t\,\Phi^\top r.
\]
Using the identity $(I_n-\rho K)\Phi=\Phi(I_d-\rho\Phi^\top\Phi)$ and induction, one has
$(I_n-\rho K)^t\Phi=\Phi(I_d-\rho\Phi^\top\Phi)^t$. Hence
\begin{align*}
    \phi(x)^\top(I_d-\rho\Phi^\top\Phi)^t\,\Phi^\top r 
    &= \bigl(\Phi(I_d-\rho\Phi^\top\Phi)^t\phi(x)\bigr)^\top r \\
    &= \bigl((I_n-\rho K)^t k_x\bigr)^\top r \\
    &= k_x^\top (I_n-\rho K)^t r. 
\end{align*}

Substituting back yields \eqref{eq:out_of_sample_qT_app}. Diagonalizing $K$ gives the scalar form
\eqref{eq:out_of_sample_spectral_app}. Finally, for $\lambda>0$ we have $\lambda q_T(\lambda)=1-(1-\rho\lambda)^T=g_T(\lambda)$,
and $g_T(0)=0$, so $K q_T(K)=g_T(K)$ on $\mathrm{range}(K)$, implying $\hat f_T=K q_T(K)r$ on the prompt points.
\end{proof}

\subsection{Proofs and additional details for Sec.~\ref{sec:bayes_gap}}
\label{app:bayes_gap_identities}
This appendix proves the spectral Bayes-gap identity (Thm.~\ref{thm:bayes_gap_filter_distance_sec5} (prompt values)) and records its query-point analogue under a standard GP extension. 

\subsubsection{Prompt-value (in-sample) identities: Bayes-optimal shrinkage and filter-mismatch gap}
\label{app:prompt_value_gap_identity}
This subsection identifies the Bayes-optimal predictor and shows that the Bayes gap of any spectral estimator is a weighted squared filter mismatch.

Fix a prompt $P_n$ and the associated geometry $K\succeq 0$. Recall the prompt residuals $r\in\mathbb{R}^n$ from Sec.~\ref{sec:local_model}. We model the (unknown) latent prompt-level correction as a random vector
$f_\star\in\mathbb{R}^n$ and adopt the following benchmark prior.

\begin{assumption}[Aligned GP benchmark on prompt values]
\label{assump:gp_prompt_prior_sec5}
Conditioned on $K$, the latent prompt-level correction and observed residuals satisfy $f_\star \mid K \sim \mathcal{N}(0,\tau^2 K), r = f_\star + \varepsilon,
\varepsilon\sim\mathcal{N}(0,\sigma^2 I_n),$ and $
f_\star\perp \varepsilon,$ for some $\tau^2>0$ and effective noise level $\sigma^2>0$.
\end{assumption}

The key structural assumption is alignment: $\mathrm{Cov}(f_\star\mid K)$ shares eigenvectors with $K$ (here $\mathrm{Cov}(f_\star\mid K)=\tau^2 K$). This is what makes the Bayes-optimal estimator a spectral shrinkage rule on $K$ and enables the exact filter-mismatch identities below.

Within this benchmark, we model test-time prior shift as changes in the effective SNR $\tau^2/\sigma^2$ (equivalently $\lambda_\star$), which shifts the Bayes-optimal shrinkage $g_\star$.

\begin{proposition}[Bayes-optimal estimator on prompt values]
\label{prop:bayes_filter_sec5}
Under Assumption~\ref{assump:gp_prompt_prior_sec5} with $K=U\mathrm{diag}(\lambda_1,\dots,\lambda_n)U^\top$, conditioned on $(K,r)$, the Bayes-optimal estimator of $f_\star$
under squared loss is the posterior mean $f_{\mathrm{Bayes}} = U\,\mathrm{diag}\!\bigl(g_\star(\lambda_1),\dots,g_\star(\lambda_n)\bigr)\,U^\top r$, where $g_\star(\lambda):=\lambda/(\lambda+\lambda_\star)$.
\end{proposition}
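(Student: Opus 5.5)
The plan is to use the standard Gaussian conditioning formula and then diagonalize in the eigenbasis of $K$, following the same route as the proof of Prop.~\ref{thm:implicit_prior_sec4}. Under Assumption~\ref{assump:gp_prompt_prior_sec5}, conditional on $K$ the pair $(f_\star,r)$ is jointly Gaussian with mean zero. Its covariance blocks are
\[
\mathrm{Cov}(f_\star)=\tau^2K,\qquad \mathrm{Cov}(f_\star,r)=\tau^2K,\qquad \mathrm{Cov}(r)=\tau^2K+\sigma^2I_n .
\]
The cross-covariance uses $f_\star\perp\varepsilon$. Since $\sigma^2>0$, the matrix $\mathrm{Cov}(r)$ is positive definite and hence invertible. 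The Gaussian conditioning formula (as in \citet{RasmussenWilliams2006GPML}) then gives
\[
\mathbb{E}[f_\star\mid K,r]=\tau^2K(\tau^2K+\sigma^2I_n)^{-1}r=K(K+\lambda_\star I_n)^{-1}r,
\]
where $\lambda_\star=\sigma^2/\tau^2$.

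Next I would invoke the primer in App.~\ref{app:bayes_primer}: under squared loss, the Bayes-optimal estimator given $(K,r)$ is this conditional mean. It is unique almost surely, and the second-moment condition holds because everything is Gaussian.

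The remaining step is the spectral form. Substitute $K=U\mathrm{diag}(\lambda_i)U^\top$. Because $K$ and $(K+\lambda_\star I_n)^{-1}$ are functions of the same symmetric matrix, they commute and share the eigenvectors $u_i$. Hence
\[
K(K+\lambda_\star I_n)^{-1}=U\,\mathrm{diag}\bigl(\lambda_i/(\lambda_i+\lambda_\star)\bigr)U^\top=U\,\mathrm{diag}\bigl(g_\star(\lambda_i)\bigr)U^\top .
\]
Zero eigenvalues need no separate treatment, since $g_\star(0)=0$ and $\lambda_\star>0$ keep everything well defined.

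As a cross-check, one can work coordinatewise in the basis $\tilde r=U^\top r$, exactly as in the proof of Prop.~\ref{thm:implicit_prior_sec4}. The coordinate $\tilde f_i$ is $\mathcal N(0,\tau^2\lambda_i)$, independent of $\tilde\varepsilon_i\sim\mathcal N(0,\sigma^2)$. The scalar posterior mean is therefore $\tfrac{\tau^2\lambda_i}{\tau^2\lambda_i+\sigma^2}\tilde r_i=g_\star(\lambda_i)\tilde r_i$.

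There is no real obstacle here. The only points needing care are that invertibility comes from $\sigma^2>0$ rather than from $K\succ0$, so a singular $K$ is allowed, and that the alignment $\mathrm{Cov}(f_\star)\propto K$ is what turns the posterior mean into a spectral filter of $K$.
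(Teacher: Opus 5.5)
Your proof is correct and follows essentially the paper's route. The paper gives no standalone proof of this proposition, but it derives the same formula by Gaussian conditioning, $f_{\mathrm{Bayes}}=\tau^2K(\tau^2K+\sigma^2 I_n)^{-1}r=K(K+\lambda_\star I_n)^{-1}r$ (proof of Prop.~\ref{thm:app_prior_shift_separation}), and your coordinatewise eigenbasis check is exactly the computation in the proof of Thm.~\ref{thm:bayes_gap_filter_distance_sec5}, with invertibility from $\sigma^2>0$ and $g_\star(0)=0$ correctly covering singular $K$.
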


The Bayes-optimal estimator is a spectral shrinkage rule: each eigenmode with eigenvalue $\lambda$ is scaled by $g_\star(\lambda)\in[0,1]$, where $\lambda_\star$ encodes the test-time noise-to-signal ratio. This provides a precise target for adaptation: a procedure helps when it moves its spectral shrinkage closer to $g_\star$ on the eigenvalues of $K(P_n)$.

We next quantify the Bayes gap of any spectral estimator under the GP benchmark.

Let $K=U\mathrm{diag}(\lambda_1,\dots,\lambda_n)U^\top$.
Given a measurable map $g:[0,\infty)\to\mathbb{R}$ with $g(0)=0$, define the spectral estimator $\hat f_g := g(K)r$, where $g(K):=U\,\mathrm{diag}\!\bigl(g(\lambda_1),\dots,g(\lambda_n)\bigr)\,U^\top.$

\begin{theorem}[Bayes risk and Bayes gap in the spectral domain]
\label{thm:bayes_gap_filter_distance_sec5}
Assume Assumption~\ref{assump:gp_prompt_prior_sec5} and condition on $K=U\mathrm{diag}(\lambda_i)U^\top$.
Let $\hat f_g=g(K)r$ be any spectral estimator. Then
\begin{equation}
\mathbb{E}\!\left[\|\hat f_g - f_\star\|_2^2 \,\middle|\, K\right]
=
\sum_{i=1}^n
\left[
\tau^2\lambda_i(1-g(\lambda_i))^2
+
\sigma^2 g(\lambda_i)^2
\right].
\label{eq:modewise_mse_sec5}
\end{equation}
Moreover, the unique minimizer over $g(\lambda_i)$ is $g_\star(\lambda_i)=\lambda_i/(\lambda_i+\lambda_\star)$,
and the excess Bayes risk (equivalently, Bayes gap to $f_{\mathrm{Bayes}}$) satisfies the exact identity
\begin{align}
&\mathbb{E}\!\left[\|\hat f_g - f_\star\|_2^2 \,\middle|\, K\right]
-
\mathbb{E}\!\left[\|f_{\mathrm{Bayes}} - f_\star\|_2^2 \,\middle|\, K\right] \nonumber\\
&\quad=
\sum_{i=1}^n (\tau^2\lambda_i+\sigma^2)\,\bigl(g(\lambda_i)-g_\star(\lambda_i)\bigr)^2.
\label{eq:excess_risk_filter_distance_sec5}
\end{align}
\end{theorem}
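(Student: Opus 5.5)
The plan is to rotate into the eigenbasis of $K$, where everything decouples into independent scalar problems. Set $\tilde r:=U^\top r$, $\tilde f:=U^\top f_\star$ and $\tilde\varepsilon:=U^\top\varepsilon$. Under Assumption~\ref{assump:gp_prompt_prior_sec5}, conditional on $K$, we have $\tilde f\sim\mathcal N(0,\tau^2\mathrm{diag}(\lambda_i))$ and $\tilde\varepsilon\sim\mathcal N(0,\sigma^2 I_n)$. These are independent, because $U$ is orthogonal and $U$ is $K$-measurable. The estimator becomes $U^\top\hat f_g=\mathrm{diag}(g(\lambda_i))\tilde r$. By orthogonal invariance of the Euclidean norm,
\[
\|\hat f_g-f_\star\|_2^2=\sum_{i=1}^n\bigl(g(\lambda_i)\tilde r_i-\tilde f_i\bigr)^2 .
\]

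For each mode I write $g(\lambda_i)\tilde r_i-\tilde f_i=-(1-g(\lambda_i))\tilde f_i+g(\lambda_i)\tilde\varepsilon_i$. Taking the conditional expectation given $K$ kills the cross term, since $\tilde f_i\perp\tilde\varepsilon_i$ and both are centered. This leaves $\tau^2\lambda_i(1-g(\lambda_i))^2+\sigma^2 g(\lambda_i)^2$. Summing over $i$ gives \eqref{eq:modewise_mse_sec5}.

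Next I treat each summand as a quadratic in $g_i:=g(\lambda_i)$:
\[
\phi_i(g_i)=(\tau^2\lambda_i+\sigma^2)g_i^2-2\tau^2\lambda_i g_i+\tau^2\lambda_i .
\]
The leading coefficient $\tau^2\lambda_i+\sigma^2$ is strictly positive because $\sigma^2>0$, so $\phi_i$ is strictly convex. Its unique minimizer is $g_i=\tau^2\lambda_i/(\tau^2\lambda_i+\sigma^2)=\lambda_i/(\lambda_i+\lambda_\star)=g_\star(\lambda_i)$. This holds also at $\lambda_i=0$, where it gives $g_\star(0)=0$. Completing the square gives the exact identity
\[
\phi_i(g_i)-\phi_i(g_\star(\lambda_i))=(\tau^2\lambda_i+\sigma^2)\bigl(g_i-g_\star(\lambda_i)\bigr)^2 .
\]
Summing over $i$ yields \eqref{eq:excess_risk_filter_distance_sec5}, once I note that $f_{\mathrm{Bayes}}=g_\star(K)r$ by Prop.~\ref{prop:bayes_filter_sec5}. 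That proposition can itself be recovered from the same scalar Gaussian conditioning used in the proof of Prop.~\ref{thm:implicit_prior_sec4}.

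Finally, the statement calls the left-hand side of \eqref{eq:excess_risk_filter_distance_sec5} the Bayes gap $\mathbb E[\|\hat f_g-f_{\mathrm{Bayes}}\|^2\mid K]$, so I need to justify that equivalence. I would use the orthogonality argument of Prop.~\ref{prop:bayes_decomp_sec2}, conditioning on $(K,r)$. Since $\hat f_g-f_{\mathrm{Bayes}}$ is $\sigma(K,r)$-measurable and $\mathbb E[f_{\mathrm{Bayes}}-f_\star\mid K,r]=0$, the cross term vanishes.

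The only delicate points are bookkeeping. The first is the choice of eigenbasis when $K$ has repeated eigenvalues; the risk is basis-independent because $g(K)$ is defined by functional calculus. The second is the null modes $\lambda_i=0$, which contribute $\sigma^2 g(0)^2=0$ under the constraint $g(0)=0$. I do not anticipate a substantive obstacle beyond these.
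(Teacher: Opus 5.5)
Your proposal is correct and follows essentially the same route as the paper: rotate into the eigenbasis of $K$, compute the mode-wise MSE $\tau^2\lambda_i(1-g(\lambda_i))^2+\sigma^2 g(\lambda_i)^2$, and complete the square in each strictly convex scalar quadratic to obtain the weighted filter-mismatch identity. Your explicit use of $f_{\mathrm{Bayes}}=g_\star(K)r$ and your orthogonality argument for the ``equivalently, Bayes gap'' reading are small steps the paper leaves implicit, and they do not change the argument.
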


\begin{proof}
Work under Assumption~\ref{assump:gp_prompt_prior_sec5} and condition on
$K=U\mathrm{diag}(\lambda_1,\dots,\lambda_n)U^\top$.
Let $\tilde f_\star:=U^\top f_\star$, $\tilde\varepsilon:=U^\top \varepsilon$, and $\tilde r:=U^\top r$.
Then, coordinate-wise for $i=1,\dots,n$,
\[
\tilde f_{\star,i}\sim\mathcal{N}(0,\tau^2\lambda_i),\quad
\tilde\varepsilon_i\sim\mathcal{N}(0,\sigma^2),
\quad
\tilde r_i=\tilde f_{\star,i}+\tilde\varepsilon_i,
\]
and $(\tilde f_{\star,i},\tilde\varepsilon_i)$ are independent across $i$.

For a spectral estimator $\hat f_g=g(K)r$, we have in the same eigenbasis
\[
\widetilde{\hat f}_{g,i}:=(U^\top \hat f_g)_i = g(\lambda_i)\tilde r_i.
\]
Hence
\[
\widetilde{\hat f}_{g,i}-\tilde f_{\star,i}
=
\bigl(g(\lambda_i)-1\bigr)\tilde f_{\star,i} + g(\lambda_i)\tilde\varepsilon_i.
\]
By independence and zero means,
\[
\mathbb{E}\!\left[(\widetilde{\hat f}_{g,i}-\tilde f_{\star,i})^2 \,\middle|\, K\right]
=
\tau^2\lambda_i(1-g(\lambda_i))^2+\sigma^2 g(\lambda_i)^2.
\]
Summing over $i$ yields \eqref{eq:modewise_mse_sec5}.

To obtain the excess-risk identity, fix $\lambda\ge 0$ and define the scalar risk
\[
r_\lambda(g):=\tau^2\lambda(1-g)^2+\sigma^2 g^2.
\]
This is a strictly convex quadratic in $g$ with derivative
\[
\frac{d}{dg}r_\lambda(g) = -2\tau^2\lambda(1-g)+2\sigma^2 g
=2\bigl((\tau^2\lambda+\sigma^2)g-\tau^2\lambda\bigr),
\]
so the unique minimizer is
\[
g_\star(\lambda)=\frac{\tau^2\lambda}{\tau^2\lambda+\sigma^2}=\frac{\lambda}{\lambda+\lambda_\star},
\quad \lambda_\star=\sigma^2/\tau^2.
\]
Completing the square gives
\[
r_\lambda(g)-r_\lambda(g_\star)=(\tau^2\lambda+\sigma^2)\,(g-g_\star)^2.
\]
Applying this identity with $\lambda=\lambda_i$ and summing over $i$ yields \eqref{eq:excess_risk_filter_distance_sec5}.
\end{proof}

Thm.~\ref{thm:bayes_gap_filter_distance_sec5} shows that, conditional on $K(P_n)$, each prompt-only algorithm corresponds to a spectral shrinkage rule $g$, and its suboptimality is the weighted squared distance to the Bayes-optimal rule $g_\star$ on the spectrum of $K(P_n)$. We interpret prior mismatch as this filter-level distance.

We now compare the two procedures of interest within the prompt-level correction model:
\begin{itemize}[leftmargin=*, itemsep=1pt, parsep=0pt, topsep=2pt]
\item \textbf{Naive ICL (no update).} No correction is fit on the prompt: $\hat f_{\mathrm{ICL}}=0$,
equivalently $g_{\mathrm{ICL}}(\lambda)\equiv 0$.
\item \textbf{$T$-step TTT.} TTT yields $\hat f_T=g_T(K)r$ with
$g_T(\lambda)=1-(1-\rho\lambda)^T$ (Eq.~\eqref{eq:spectral_filter_form_sec4}).
\end{itemize}

\begin{corollary}[Conditional Bayes gaps for ICL and TTT]
\label{cor:icl_ttt_bayes_gaps_sec5_full}
Under Assumption~\ref{assump:gp_prompt_prior_sec5} and conditioned on $K=U\mathrm{diag}(\lambda_i)U^\top$,

$\mathbb{E}\!\left[\|\hat f_{\mathrm{ICL}}-f_{\mathrm{Bayes}}\|_2^2 \middle| K\right]=
\sum_{i=1}^n (\tau^2\lambda_i+\sigma^2)g_\star(\lambda_i)^2,$ and $
\mathbb{E}\!\left[\|\hat f_T-f_{\mathrm{Bayes}}\|_2^2 \middle| K\right]
=
\sum_{i=1}^n (\tau^2\lambda_i+\sigma^2)\bigl(g_T(\lambda_i)-g_\star(\lambda_i)\bigr)^2.$
In particular, within this benchmark, $T$-step TTT improves over naive ICL (in Bayes gap on prompt values)
if and only if the weighted squared filter mismatch decreases:
\begin{equation}
\sum_{i=1}^n (\tau^2\lambda_i+\sigma^2)\bigl(g_T(\lambda_i)-g_\star(\lambda_i)\bigr)^2
<
\sum_{i=1}^n (\tau^2\lambda_i+\sigma^2)g_\star(\lambda_i)^2.
\end{equation}
\end{corollary}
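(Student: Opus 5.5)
The corollary follows from Thm.~\ref{thm:bayes_gap_filter_distance_sec5} once we express the Bayes gap of any spectral estimator relative to $f_{\mathrm{Bayes}}$, rather than relative to $f_\star$, in the eigenbasis of $K$. Conditioned on $K=U\mathrm{diag}(\lambda_i)U^\top$, Prop.~\ref{prop:bayes_filter_sec5} gives $f_{\mathrm{Bayes}}=g_\star(K)r$. Hence, for any spectral $g$ with $g(0)=0$, we have $\hat f_g-f_{\mathrm{Bayes}}=(g-g_\star)(K)r$. Rotating by $U$, we get $\|\hat f_g-f_{\mathrm{Bayes}}\|_2^2=\sum_i (g(\lambda_i)-g_\star(\lambda_i))^2\tilde r_i^2$ with $\tilde r=U^\top r$. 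Under Assumption~\ref{assump:gp_prompt_prior_sec5}, the coordinates $\tilde r_i$ are independent with $\mathbb E[\tilde r_i^2\mid K]=\tau^2\lambda_i+\sigma^2$. Taking conditional expectations yields the general identity
\[
\mathbb E\bigl[\|\hat f_g-f_{\mathrm{Bayes}}\|_2^2\mid K\bigr]=\sum_{i=1}^n(\tau^2\lambda_i+\sigma^2)\bigl(g(\lambda_i)-g_\star(\lambda_i)\bigr)^2 .
\]
Alternatively, the same identity follows from the orthogonality $\mathbb E[(\hat f_g-f_{\mathrm{Bayes}})^\top(f_{\mathrm{Bayes}}-f_\star)\mid K]=0$. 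This holds because $\hat f_g-f_{\mathrm{Bayes}}$ is $r$-measurable and $f_{\mathrm{Bayes}}=\mathbb E[f_\star\mid r,K]$, so the identity coincides with the excess-risk identity \eqref{eq:excess_risk_filter_distance_sec5}.

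Next, specialize the general identity. For ICL we have $g_{\mathrm{ICL}}\equiv 0$, which gives the first display, $\sum_i(\tau^2\lambda_i+\sigma^2)g_\star(\lambda_i)^2$. For $T$-step TTT, Eq.~\eqref{eq:spectral_filter_form_sec4} gives $\hat f_T=g_T(K)r$ with $g_T(\lambda)=1-(1-\rho\lambda)^T$. Since $g_T(0)=0$, the filter is admissible, and the second display follows. The ``if and only if'' statement is then just the comparison of these two closed-form quantities, each of which is a deterministic function of $K$.

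The only step needing any care is the null-space modes. When $\lambda_i=0$, the variance $\mathbb E[\tilde r_i^2\mid K]$ equals $\sigma^2$, not zero, but both $g$ and $g_\star$ vanish there, so these terms contribute zero and the formula remains valid. No other obstacle is anticipated; the argument is a direct diagonalization.
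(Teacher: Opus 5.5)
Your proof is correct and follows the paper's route: the corollary is the filter-mismatch identity \eqref{eq:excess_risk_filter_distance_sec5} of Thm.~\ref{thm:bayes_gap_filter_distance_sec5}, read as a Bayes gap via the orthogonality you note, specialized to $g\equiv 0$ and $g=g_T$. Your direct computation through $(g-g_\star)(K)r$ and $\mathrm{Cov}(r\mid K)=\tau^2K+\sigma^2 I_n$ is the same eigenbasis diagonalization the paper uses for the query-level Thm.~\ref{thm:predictive_gap_main}, so nothing further is needed.
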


\subsubsection{Proof of Thm.~\ref{thm:predictive_gap_main}}
\label{app:predictive_gap}

\begin{proof}
Condition on $(K,k_x,\kappa_x)$ and work under Assumption~\ref{assump:gp_benchmark}.
By construction, $(f_\star,f_x)$ is jointly Gaussian with mean $0$, and
$\mathrm{Cov}\!\begin{pmatrix} f_\star \\ f_x \end{pmatrix}
=
\tau^2\begin{pmatrix} K & k_x \\ k_x^\top & \kappa_x \end{pmatrix}$,
$r=f_\star+\varepsilon,$ $\varepsilon\sim\mathcal N(0,\sigma^2 I_n)$, and $
\varepsilon\perp(f_\star,f_x)$.
Hence
\begin{align*}
\mathrm{Cov}(r)
&=\tau^2K+\sigma^2 I_n
=\tau^2(K+\lambda_\star I_n), \\
\mathrm{Cov}(f_x,r)
&=\tau^2 k_x^\top,
\qquad
\lambda_\star:=\sigma^2/\tau^2.
\end{align*}
Gaussian conditioning yields the Bayes (posterior-mean) estimator
\begin{align*}
f_{\mathrm{Bayes}}(x)
&=\mathbb E[f_x\mid r,K,k_x,\kappa_x] \\
&=\mathrm{Cov}(f_x,r)\,\mathrm{Cov}(r)^{-1}r \\
&=k_x^\top (K+\lambda_\star I_n)^{-1}r.
\end{align*}

Let $\hat h_q(x)=k_x^\top q(K)\,r$ be any kernel-linear estimator and define
$\Delta q(K):=q(K)-q_\star(K)$ with $q_\star(\lambda):=(\lambda+\lambda_\star)^{-1}$.
Then $\hat h_q(x)-f_{\mathrm{Bayes}}(x)=k_x^\top \Delta q(K)\,r$, and conditioning on $(K,k_x,\kappa_x)$,
\begin{align*}
&\mathbb E\!\left[\{\hat h_q(x)-f_{\mathrm{Bayes}}(x)\}^2\mid K,k_x,\kappa_x\right] \\
&=
k_x^\top \Delta q(K)\,\mathrm{Cov}(r)\,\Delta q(K)\,k_x \\
&=
k_x^\top \Delta q(K)\,(\tau^2K+\sigma^2 I_n)\,\Delta q(K)\,k_x .
\end{align*}

Diagonalize $K=U\,\mathrm{diag}(\lambda_1,\ldots,\lambda_n)\,U^\top$ with columns $\{u_i\}_{i=1}^n$.
Since $q(K)$ and $q_\star(K)$ are functions of $K$, they share eigenvectors with $K$, so expanding in this basis gives
\begin{align*}
&\mathbb E\!\left[\{\hat h_q(x)-f_{\mathrm{Bayes}}(x)\}^2\mid K,k_x,\kappa_x\right] \\
&=
\sum_{i=1}^n (\tau^2\lambda_i+\sigma^2)\,(u_i^\top k_x)^2\,\{q(\lambda_i)-q_\star(\lambda_i)\}^2,
\end{align*}
which matches Thm.~\ref{thm:predictive_gap_main} with $a_i(x):=(\tau^2\lambda_i+\sigma^2)(u_i^\top k_x)^2$.
\end{proof}

\subsection{Formal statements and proofs for Sec.~\ref{sec:three_implications}}
\label{app:three_implications}

\subsubsection{Prior-shift separation for fixed spectral baselines}
\label{app:prior_shift_sep_rigorous}

Here, we use the term prior shift to denote test-time distribution shift that changes the task prior (equivalently, the effective SNR parameter $\lambda_\star=\sigma^2/\tau^2$) in the GP benchmark, thereby altering the Bayes-optimal shrinkage level.

\begin{proposition}[Prior-shift separation on a projector kernel]
\label{thm:app_prior_shift_separation}
Work under Assumption~\ref{assump:gp_benchmark} and focus on the prompt latent correction $f_\star\in\mathbb R^n$
observed through $r=f_\star+\varepsilon$, where (marginally) $f_\star\mid K\sim\mathcal N(0,\tau^2 K)$ and
$\varepsilon\sim\mathcal N(0,\sigma^2 I_n)$ is independent.
Assume $K\in\mathbb{R}^{n\times n}$ is an orthogonal projector of rank $m$ (so $K^2=K$ and $\mathrm{tr}(K)=m$),
and recall the SNR parameter $\lambda=\sigma^2/\tau^2>0$.
Then, the Bayes estimator of $f_\star$ given $(K,r)$ is
\[
    f_{\mathrm{Bayes}}^{(\lambda)} \;=\; \frac{1}{1+\lambda}\,K r.
\]

Now fix any spectral estimator $\hat f_0=g_0(K)r$ with $g_0(0)=0$, and write $a:=g_0(1)$ (so $\hat f_0=aKr$).
Assume $a$ is fixed (i.e., it does not depend on $\lambda$).
Then for any two distinct prior parameters $\lambda_1\neq \lambda_2$, letting
\[
     s_j:=\frac{1}{1+\lambda_j},\qquad \tau_j^2:=\frac{\sigma^2}{\lambda_j}\quad (j\in\{1,2\}),
\]
one has the separation bound
\begin{align*}
&     \max_{j\in\{1,2\}} \mathbb E_{\lambda_j}\!\left[\bigl\|\hat f_0-f_{\mathrm{Bayes}}^{(\lambda_j)}\bigr\|_2^2 \,\middle|\, K\right] \\
& \ge\ m  \min_{j\in\{1,2\}}(\tau_j^2+\sigma^2)\left(\frac{|s_1-s_2|}{2}\right)^2,
\end{align*}
where $\mathbb E_{\lambda_j}$ denotes expectation under the model with prior parameter $\lambda_j$.
\end{proposition}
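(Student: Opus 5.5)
The plan is to reduce everything to the mode-wise identities of Thm.~\ref{thm:bayes_gap_filter_distance_sec5} and Prop.~\ref{prop:bayes_filter_sec5}, specialized to a projector spectrum.

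\textbf{Step 1: the Bayes estimator.} Since $K$ is an orthogonal projector of rank $m$, its eigenvalues are $1$ (multiplicity $m$) and $0$ (multiplicity $n-m$). Prop.~\ref{prop:bayes_filter_sec5} gives $f_{\mathrm{Bayes}}^{(\lambda)}=g_\star(K)r$ with $g_\star(\mu)=\mu/(\mu+\lambda)$. So $g_\star(1)=1/(1+\lambda)$ and $g_\star(0)=0$, which yields $f_{\mathrm{Bayes}}^{(\lambda)}=\frac{1}{1+\lambda}Kr$ by functional calculus ($g(K)=g(1)K$ for a projector when $g(0)=0$).

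\textbf{Step 2: the gap under each prior.} The difference $\hat f_0-f_{\mathrm{Bayes}}^{(\lambda_j)}=(a-s_j)Kr$ is itself a spectral estimator difference. I would either invoke \eqref{eq:excess_risk_filter_distance_sec5} or compute directly. Under $\lambda_j$, $\mathrm{Cov}(r\mid K)=\tau_j^2K+\sigma^2I$, so
\[
\mathbb E_{\lambda_j}\|(a-s_j)Kr\|_2^2=(a-s_j)^2\,\mathrm{tr}\bigl(K(\tau_j^2K+\sigma^2 I)K\bigr)=m(\tau_j^2+\sigma^2)(a-s_j)^2,
\]
using $K^2=K$ and $\mathrm{tr}K=m$. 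Only the $m$ unit modes contribute, each with weight $\tau_j^2\cdot 1+\sigma^2$, consistent with the mode weights of the earlier theorem.

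\textbf{Step 3: separation.} Set $c:=\min_j(\tau_j^2+\sigma^2)$. Then
\[
\max_j m(\tau_j^2+\sigma^2)(a-s_j)^2\ge mc\max_j(a-s_j)^2.
\]
By the triangle inequality, $|s_1-s_2|\le|a-s_1|+|a-s_2|\le 2\max_j|a-s_j|$, so $\max_j(a-s_j)^2\ge(|s_1-s_2|/2)^2$, which gives the claim.

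The point to check is that $a$ does not depend on $j$, which holds by hypothesis. Separation is strict because $\lambda_1\ne\lambda_2$ implies $s_1\ne s_2$. There is no real obstacle; the only care needed is the bookkeeping that the zero modes drop out, which follows because both filters vanish at $0$.
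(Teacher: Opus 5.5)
Your proposal is correct and follows essentially the same route as the paper: you obtain the Bayes estimator $\frac{1}{1+\lambda}Kr$ from the $\{0,1\}$ spectrum of the projector, compute $\mathbb E_{\lambda_j}\|(a-s_j)Kr\|_2^2=(a-s_j)^2\,\mathrm{tr}\bigl(K(\tau_j^2K+\sigma^2I)K\bigr)=m(\tau_j^2+\sigma^2)(a-s_j)^2$, and finish with $\max_j(a-s_j)^2\ge(|s_1-s_2|/2)^2$, which is the paper's midpoint argument in triangle-inequality form. The only cosmetic difference is that you cite Prop.~\ref{prop:bayes_filter_sec5} for the Bayes estimator, whereas the paper redoes the Gaussian conditioning $\tau^2K(\tau^2K+\sigma^2I_n)^{-1}r$ directly.
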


\begin{proof}
Under the stated marginal prompt model,
\[
\mathrm{Cov}(f_\star\mid K)=\tau^2K,\qquad \mathrm{Cov}(r\mid K)=\tau^2K+\sigma^2I_n.
\]
By standard Gaussian conditioning for $r=f_\star+\varepsilon$,
\begin{align*}
    f_{\mathrm{Bayes}}^{(\lambda)}
    &=\mathbb E[f_\star\mid r,K] \\
    &=\tau^2K(\tau^2K+\sigma^2I_n)^{-1}r \\
    &=K(K+\lambda I_n)^{-1}r,
\end{align*}
where $\lambda=\sigma^2/\tau^2$. Since $K$ is a projector, it has only eigenvalues $0$ and $1$, hence $K(K+\lambda I_n)^{-1}=\frac{1}{1+\lambda}K$, which yields $f_{\mathrm{Bayes}}^{(\lambda)}=\frac{1}{1+\lambda}Kr$.

Diagonalize $K=U\,\mathrm{diag}(I_m,0)\,U^\top$. Because $g_0(0)=0$, the functional calculus gives
\[
g_0(K)=U\,\mathrm{diag}(g_0(1)I_m,0)\,U^\top = aK, \quad a:=g_0(1),
\]
so $\hat f_0=g_0(K)r=aKr$.

Fix $j\in\{1,2\}$. Under prior $\lambda_j$ (equivalently $\tau_j^2=\sigma^2/\lambda_j$), we have $f_{\mathrm{Bayes}}^{(\lambda_j)} = s_j Kr$, and $s_j:=\frac{1}{1+\lambda_j}$. Therefore,  $\hat f_0-f_{\mathrm{Bayes}}^{(\lambda_j)}=(a-s_j)Kr \Rightarrow \bigl\|\hat f_0-f_{\mathrm{Bayes}}^{(\lambda_j)}\bigr\|_2^2=(a-s_j)^2\|Kr\|_2^2$. Taking conditional expectation given $K$ (under prior $j$),
\[
     \mathbb E_{\lambda_j}\!\left[\bigl\|\hat f_0-f_{\mathrm{Bayes}}^{(\lambda_j)}\bigr\|_2^2 \,\middle|\, K\right]
=(a-s_j)^2\,\mathbb E_{\lambda_j}[\|Kr\|_2^2\mid K].
\]
Since $\mathbb E_{\lambda_j}[r\mid K]=0$ and $\mathrm{Cov}_{\lambda_j}(r\mid K)=\tau_j^2K+\sigma^2I_n$,
\begin{align*}
\mathbb E_{\lambda_j}[\|Kr\|_2^2\mid K]
&=\mathrm{tr}\!\bigl(\mathrm{Cov}_{\lambda_j}(Kr\mid K)\bigr) \\
&=\mathrm{tr}\!\bigl(K(\tau_j^2K+\sigma^2I_n)K\bigr) \\
&=(\tau_j^2+\sigma^2)\mathrm{tr}(K) \\
&=m(\tau_j^2+\sigma^2). 
\end{align*}
Hence, for each $j\in\{1,2\}$,
\begin{equation}
\label{eq:proj_gap_exact}
    \mathbb E_{\lambda_j}\!\left[\bigl\|\hat f_0-f_{\mathrm{Bayes}}^{(\lambda_j)}\bigr\|_2^2 \,\middle|\, K\right]
    = m(\tau_j^2+\sigma^2)\,(a-s_j)^2.
\end{equation}

Because $s_1\neq s_2$, for any scalar $a$ at least one of the distances $|a-s_1|$ or $|a-s_2|$ is at least
$\frac{1}{2}|s_1-s_2|$ (e.g., by considering the midpoint $(s_1+s_2)/2$). Thus
\[
\max_{j\in\{1,2\}}(a-s_j)^2 \ \ge\ \left(\frac{|s_1-s_2|}{2}\right)^2.
\]
Combine this with \eqref{eq:proj_gap_exact} and lower bound $(\tau_j^2+\sigma^2)$ by
$\min_{j\in\{1,2\}}(\tau_j^2+\sigma^2)$ to obtain the stated separation bound.
The final claim follows because a single fixed $a$ cannot equal both $s_1$ and $s_2$.
\end{proof}

\subsubsection{Full least-squares fitting can be dominated}\label{app:stein_rigorous}

The following proposition is a corollary of the Stein phenomenon~\citep{JamesStein1961Estimation,10.1214/aos/1176325640}.

\begin{proposition}[Inadmissibility of projection on a $p$-dimensional subspace]
\label{prop:projection_ols_inadmissible}
Let $r=f_\star+\varepsilon$ with $\varepsilon\sim\mathcal N(0,\sigma^2 I_n)$ (known $\sigma^2$),
and assume $f_\star\in\mathcal{S}$ where $\mathcal{S}\subseteq\mathbb R^n$ is a fixed subspace of
dimension $d_{\mathrm{st}}:=\dim(\mathcal{S})\ge 3$. Let $\Pi_{\mathcal{S}}$ be the orthogonal projector onto $\mathcal{S}$.
Then the full-fit estimator $\hat f_\infty := \Pi_{\mathcal{S}} r$ is inadmissible under squared loss $\|\hat f-f_\star\|_2^2$; namely, it is dominated by the (subspace) James--Stein shrinkage estimator
\[
\hat f_{\mathrm{JS}}
=
\begin{cases}
0, & \|\Pi_{\mathcal{S}} r\|_2=0,\\[0.6ex]
\left(1-\dfrac{(d_{\mathrm{st}}-2)\sigma^2}{\|\Pi_{\mathcal{S}} r\|_2^2}\right)\Pi_{\mathcal{S}} r,
& \text{otherwise}.
\end{cases}
\]
Equivalently, for all $f_\star\in\mathcal{S}$,
\[
\mathbb E_{f_\star}\|\hat f_{\mathrm{JS}}-f_\star\|_2^2
\ \le\
\mathbb E_{f_\star}\|\hat f_{\infty}-f_\star\|_2^2,
\]
with strict inequality for some $f_\star\in\mathcal{S}$.
\end{proposition}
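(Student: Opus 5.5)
The plan is to reduce the problem to the classical $d_{\mathrm{st}}$-dimensional Gaussian mean problem and then invoke Stein's unbiased risk estimate. Choose an orthonormal basis $V\in\mathbb R^{n\times d_{\mathrm{st}}}$ of $\mathcal S$, so that $\Pi_{\mathcal S}=VV^\top$. Set $\mu:=V^\top f_\star\in\mathbb R^{d_{\mathrm{st}}}$ and $z:=V^\top r=\mu+V^\top\varepsilon$. Since $V^\top V=I$, we have $V^\top\varepsilon\sim\mathcal N(0,\sigma^2 I_{d_{\mathrm{st}}})$. Because $f_\star\in\mathcal S$, we get $f_\star=V\mu$. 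Both estimators lie in $\mathcal S$: $\hat f_\infty=Vz$ and $\hat f_{\mathrm{JS}}=V\,\delta_{\mathrm{JS}}(z)$, where $\delta_{\mathrm{JS}}(z)=\bigl(1-(d_{\mathrm{st}}-2)\sigma^2/\|z\|_2^2\bigr)z$, using $\|\Pi_{\mathcal S}r\|_2=\|z\|_2$. Since $V$ is an isometry, both losses equal the corresponding losses for estimating $\mu$ from $z$. The component of $\varepsilon$ orthogonal to $\mathcal S$ plays no role. The event $\|z\|=0$ has probability zero, so the convention there is irrelevant.

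Next I will compute the two risks. For the full fit, $\mathbb E\|z-\mu\|^2=d_{\mathrm{st}}\sigma^2$. For James--Stein, write $\delta_{\mathrm{JS}}(z)=z+g(z)$ with $g(z)=-c\,z/\|z\|^2$ and $c=(d_{\mathrm{st}}-2)\sigma^2$. Expanding the square gives
\[
\mathbb E\|\delta_{\mathrm{JS}}-\mu\|^2=d_{\mathrm{st}}\sigma^2+\mathbb E\|g(z)\|^2+2\,\mathbb E\langle z-\mu,g(z)\rangle.
\]
I will rewrite the cross term with Stein's lemma, $\mathbb E\langle z-\mu,g(z)\rangle=\sigma^2\,\mathbb E[\nabla\cdot g(z)]$. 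For this $g$, a direct computation gives $\nabla\cdot g=-c(d_{\mathrm{st}}-2)/\|z\|^2$. Combining this with $\|g\|^2=c^2/\|z\|^2$ yields
\[
\mathbb E\|\delta_{\mathrm{JS}}-\mu\|^2=d_{\mathrm{st}}\sigma^2-(d_{\mathrm{st}}-2)^2\sigma^4\,\mathbb E\bigl[\|z\|^{-2}\bigr].
\]

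The main obstacle is justifying Stein's lemma, because $g$ is singular at the origin. Here I need $d_{\mathrm{st}}\ge3$. In that case $\|z\|^{-2}$ is integrable against a Gaussian (polar coordinates give $\int_0 \rho^{d_{\mathrm{st}}-3}\,d\rho<\infty$), and $g$ is weakly differentiable with an integrable divergence. I would justify the integration by parts either by excising a small ball $\{\|z\|<\epsilon\}$ and showing that the boundary flux vanishes like $\epsilon^{d_{\mathrm{st}}-2}\to 0$, or by citing the standard result of \citet{JamesStein1961Estimation} and Stein (1981).

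Finally, since $d_{\mathrm{st}}\ge3$ and $0<\mathbb E\|z\|^{-2}<\infty$ for every $\mu$, the correction term is strictly positive. So the risk of $\hat f_{\mathrm{JS}}$ is strictly smaller than that of $\hat f_\infty$ for every $f_\star\in\mathcal S$, which gives both the weak inequality and the strict one (in fact for all $f_\star\in\mathcal S$). This proves that $\hat f_\infty$ is dominated and hence inadmissible.
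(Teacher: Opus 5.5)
Your proposal is correct and takes the same route as the paper, which gives no separate proof and simply presents the result as a corollary of the classical James--Stein phenomenon (James and Stein 1961; Stein 1981). Your rotation $z=V^\top r$ to a $d_{\mathrm{st}}$-dimensional Gaussian mean problem, followed by Stein's identity, is the standard argument behind that citation. Your risk formula $d_{\mathrm{st}}\sigma^2-(d_{\mathrm{st}}-2)^2\sigma^4\,\mathbb E[\|z\|_2^{-2}]$ gives strict improvement at every $f_\star\in\mathcal S$, which is stronger than the ``for some'' claimed in the statement.
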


\begin{remark}
Although Prop.~\ref{prop:projection_ols_inadmissible} is stated under $f_\star\in\mathcal S$, the conclusion can be read as a statement about improving the estimation of the in-subspace component $\Pi_{\mathcal S}f_\star$ from $\Pi_{\mathcal S}r$. For general $f_\star\in\mathbb R^n$, the orthogonal component contributes a fixed additive error by the Pythagorean decomposition
(Prop.~\ref{prop:app_subspace_separation}).
\end{remark}

\subsubsection{Adapting update steps cannot overcome subspace mismatch}\label{app:subspace_sep_rigorous}

Let $K\succeq 0$ be symmetric and define the representable subspace $\mathcal{S}:=\mathrm{range}(K)\subseteq\mathbb{R}^n$.
Let $\Pi_{\mathcal{S}}$ and $\Pi_{\mathcal{S}^\perp}$ be Euclidean orthogonal projectors.

\begin{proposition}[Irreducible error outside $\mathrm{range}(K)$]
\label{prop:app_subspace_separation}
For any target vector $f_\star\in\mathbb{R}^n$ and any estimate $\hat f\in\mathcal{S}$,
\begin{equation}
\|\hat f-f_\star\|_2^2
=
\|\hat f-\Pi_{\mathcal{S}} f_\star\|_2^2
+
\|\Pi_{\mathcal{S}^\perp} f_\star\|_2^2.
\label{eq:app_subspace_pythagorean}
\end{equation}
In particular, $\|\hat f-f_\star\|_2^2\ge \|\Pi_{\mathcal{S}^\perp} f_\star\|_2^2$ and
$\inf_{\hat f\in\mathcal{S}}\|\hat f-f_\star\|_2^2=\|\Pi_{\mathcal{S}^\perp} f_\star\|_2^2$.

Moreover, any spectral estimator $\hat f_g=g(K)r$ with $g(0)=0$ satisfies $\hat f_g\in\mathcal{S}$. In particular, TTT for any $T$ lies in $\mathcal{S}$.
\end{proposition}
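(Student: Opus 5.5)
The plan is to prove the Pythagorean identity \eqref{eq:app_subspace_pythagorean} from the orthogonal decomposition $\mathbb R^n=\mathcal S\oplus\mathcal S^\perp$. Then I will show that every spectral estimator lies in $\mathcal S$. Since $K$ is symmetric, $\mathcal S=\mathrm{range}(K)$ is a genuine subspace and $\Pi_{\mathcal S}+\Pi_{\mathcal S^\perp}=I_n$. Writing $f_\star=\Pi_{\mathcal S}f_\star+\Pi_{\mathcal S^\perp}f_\star$, for any $\hat f\in\mathcal S$ we get
\[
\hat f-f_\star=\bigl(\hat f-\Pi_{\mathcal S}f_\star\bigr)-\Pi_{\mathcal S^\perp}f_\star .
\]
The first term lies in $\mathcal S$ and the second in $\mathcal S^\perp$. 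Their cross term vanishes, and expanding the squared norm gives the identity.

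The lower bound $\|\hat f-f_\star\|_2^2\ge\|\Pi_{\mathcal S^\perp}f_\star\|_2^2$ follows by dropping the nonnegative first term. For the infimum statement, take $\hat f=\Pi_{\mathcal S}f_\star\in\mathcal S$. This attains the bound, so the infimum is a minimum equal to $\|\Pi_{\mathcal S^\perp}f_\star\|_2^2$.

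For the second part, diagonalize $K=U\mathrm{diag}(\lambda_i)U^\top$. Then $\mathcal S$ is spanned by the eigenvectors $u_i$ with $\lambda_i>0$. Since $g(0)=0$, we have
\[
g(K)r=\sum_{i:\lambda_i>0}g(\lambda_i)(u_i^\top r)\,u_i,
\]
which is a combination of these eigenvectors and hence lies in $\mathcal S$. The TTT filter $g_T(\lambda)=1-(1-\rho\lambda)^T$ satisfies $g_T(0)=0$, so $\hat f_T\in\mathcal S$ for every $T$; alternatively one can use $\hat f_T=Kq_T(K)r$ from Prop.~\ref{prop:out_of_sample_form_app}. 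The informal bound $\inf_T\|\hat f_{g_T}-f_\star\|_2^2\ge\|\Pi_{\mathcal S^\perp}f_\star\|_2^2$ then follows by taking the infimum over $T$.

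No step is a real obstacle. The only point needing care is identifying $\mathrm{range}(K)$ with the span of the eigenvectors having positive eigenvalues, which holds because $K$ is symmetric PSD.
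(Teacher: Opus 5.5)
Your proof is correct and follows the paper's argument step for step. The decomposition $f_\star=\Pi_{\mathcal S}f_\star+\Pi_{\mathcal S^\perp}f_\star$ kills the cross term, and the spectral part uses $g(0)=0$ so that $g(K)$ annihilates $\ker(K)$; your expansion over the eigenvectors with $\lambda_i>0$, and your note that $\hat f=\Pi_{\mathcal S}f_\star$ attains the infimum, spell out what the paper calls immediate.
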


\begin{proof}
Decompose $f_\star=\Pi_{\mathcal{S}}f_\star+\Pi_{\mathcal{S}^\perp}f_\star$.
If $\hat f\in\mathcal{S}$, then $\hat f-\Pi_{\mathcal{S}}f_\star\in\mathcal{S}$, while
$\Pi_{\mathcal{S}^\perp}f_\star\in\mathcal{S}^\perp$, so the cross term vanishes:
\[
\|\hat f-f_\star\|_2^2
=
\|\hat f-\Pi_{\mathcal{S}}f_\star\|_2^2
+
\|\Pi_{\mathcal{S}^\perp}f_\star\|_2^2.
\]
The lower bound and the infimum follow immediately.

For the final claim, diagonalize $K=U\,\mathrm{diag}(\lambda_1,\dots,\lambda_n)\,U^\top$.
If $g(0)=0$, then $g(K)=U\,\mathrm{diag}(g(\lambda_i))\,U^\top$ annihilates $\ker(K)$ and therefore maps
$\mathbb{R}^n$ into $\mathrm{range}(K)=\mathcal{S}$. Hence $\hat f_g=g(K)r\in\mathcal{S}$.
\end{proof}

\subsection{PAC-Bayes details for Sec.~\ref{sec:eb_horizon_selection}}
\label{app:pac_horizon_selection}

This appendix proves the normalized PAC-Bayes generalization bound stated in
Sec.~\ref{sec:eb_horizon_selection}.
Throughout, we condition on the prompt-induced geometry $K\succeq 0$ and treat all matrices as deterministic given $K$.

\begin{remark}[Gibbs posteriors as Radon--Nikodym derivatives; measurability and conditional PAC-Bayes]
\label{rem:gibbs_rn_pacbayes}
We use a Gibbs construction to select (or average over) the update horizon $T$.
This can be formulated cleanly in measure-theoretic terms, which clarifies measurability and why
our PAC-Bayes guarantees are naturally stated \emph{conditionally on the prompt geometry $K$}.

\paragraph{RN definition (general form).}
Let $(\mathcal H,\mathcal A)$ be a hypothesis space (here $\mathcal H=\mathcal T$) and let $\pi_0$ be a prior probability measure on $\mathcal H$.
Given data $D$ (here $D=(r,K)$ or just $r$ when $K$ is fixed) and a measurable empirical score/loss
$\ell:\mathcal H\times \mathsf D\to\mathbb R$,
define the normalizer
\[
Z_\beta(D):=\int_{\mathcal H} \exp\{-\beta n\,\ell(h,D)\}\,\pi_0(dh),
\]
assumed finite and strictly positive.
The Gibbs posterior is the probability measure $\nu_\beta(\cdot\mid D)$ on $\mathcal H$ defined by the Radon--Nikodym derivative
\[
\frac{d\nu_\beta(\cdot\mid D)}{d\pi_0}(h)
=
\frac{\exp\{-\beta n\,\ell(h,D)\}}{Z_\beta(D)}.
\]
In our setting with finite $\mathcal T$, this reduces to the familiar normalized weights
$\nu_\beta(T\mid r)\propto \pi_0(T)\exp\{-\beta n\ell_T(r)\}$.

\paragraph{Measurability (posterior as a kernel).}
If $\ell(h,D)$ is measurable in $(h,D)$, then $Z_\beta(D)$ is measurable in $D$ and the map
$D\mapsto \nu_\beta(\cdot\mid D)$ is a measurable probability kernel from the data space to $(\mathcal H,\mathcal A)$.
Consequently, posterior expectations (model averaging) and measurable selectors such as an argmin/argmax
(e.g., $T_{\mathrm{MAP}}$ with a deterministic tie-breaking rule) are measurable functions of the observed data.

\paragraph{Why the PAC-Bayes statement is naturally conditional on $K$.}
In Sec.~\ref{sec:eb_horizon_selection} and App.~\ref{app:pac_horizon_selection}, we condition on the prompt-induced geometry $K$ and analyze the randomness of $r\mid K$.
This is not merely a convenience: the Gibbs posterior $\nu_\beta(\cdot\mid r,K)$ is defined as an RN tilt of $\pi_0$ given the realized data,
so all PAC-Bayes change-of-measure steps (e.g., Donsker--Varadhan / exponential-moment arguments) can be applied
\emph{pathwise in $K$} to the conditional law of $r\mid K$.
The resulting bounds therefore hold with high probability over $r\mid K$ for each fixed $K$.
If desired, one can then integrate over the distribution of $K$ to obtain unconditional statements.
\end{remark}

\paragraph{Nullspace contributions.}
If $K$ is singular, then on $\mathrm{null}(K)$ we have $\Sigma_T=\sigma^2 I$ for all $T$, so both
$\log\det(\Sigma_T)$ and $r^\top \Sigma_T^{-1} r$ contribute only $T$-independent constants from $\mathrm{null}(K)$.
Equivalently, after ignoring $T$-independent constants, all determinants/inverses may be interpreted on $\mathrm{range}(K)$.

Let $\mathcal T$ be a finite set of candidates of $T$ and let $\pi_0$ be a prior on $\mathcal T$ with full support. Write $\Delta(\mathcal T)$ for the simplex of distributions over $\mathcal T$.

\begin{lemma}[PAC-Bayes with exponential-moment]
\label{lem:app_pac_mgf}
Let $\{X_T(r)\}_{T\in\mathcal T}$ be real-valued random variables and fix $\beta>0$.
Assume there exist deterministic (given $K$) numbers $\psi_T(\beta)$ such that
\begin{equation}
\mathbb E\!\left[\exp\!\bigl(\beta X_T(r)\bigr)\ \middle|\ K\right] \le \exp\{\psi_T(\beta)\}
\qquad \forall T\in\mathcal T.
\label{eq:app_mgf_cert}
\end{equation}
Then for any $\delta\in(0,1)$, with probability at least $1-\delta$ over $r\mid K$, simultaneously for all
$\nu\in\Delta(\mathcal T)$,
\begin{equation}
\mathbb E_{T\sim\nu}[X_T(r)]
\ \le\
\frac{1}{\beta}\Bigl\{\mathrm{KL}(\nu\|\pi_0)+\log\tfrac{1}{\delta}\Bigr\}
+\frac{1}{\beta}\,\mathbb E_{T\sim\nu}\!\bigl[\psi_T(\beta)\bigr].
\label{eq:app_pac_general}
\end{equation}
\end{lemma}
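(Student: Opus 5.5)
The plan is the standard Donsker--Varadhan change-of-measure argument, applied conditionally on $K$. Because $\mathcal T$ is finite, no measurability subtleties arise: every $\nu\in\Delta(\mathcal T)$ is absolutely continuous with respect to $\pi_0$, since $\pi_0$ has full support. The proof has three steps, carried out in this order.

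\textbf{Step 1 (a single nonnegative random variable).} Define
\[
W(r):=\mathbb E_{T\sim\pi_0}\bigl[\exp\{\beta X_T(r)-\psi_T(\beta)\}\bigr]
=\sum_{T\in\mathcal T}\pi_0(T)\,e^{\beta X_T(r)-\psi_T(\beta)}.
\]
This is a finite sum, so by linearity of conditional expectation and \eqref{eq:app_mgf_cert} we get $\mathbb E[W(r)\mid K]\le\sum_T\pi_0(T)=1$. Markov's inequality then gives $\mathbb P(W(r)\ge 1/\delta\mid K)\le\delta$. Hence, on an event $E$ of conditional probability at least $1-\delta$, we have $\log W(r)<\log(1/\delta)$.

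\textbf{Step 2 (variational formula).} The Donsker--Varadhan/Gibbs variational identity states that for any function $h$ on $\mathcal T$,
\[
\log\mathbb E_{\pi_0}[e^{h}]=\sup_{\nu\in\Delta(\mathcal T)}\bigl\{\mathbb E_\nu[h]-\mathrm{KL}(\nu\|\pi_0)\bigr\}.
\]
Only the inequality $\mathbb E_\nu[h]-\mathrm{KL}(\nu\|\pi_0)\le\log\mathbb E_{\pi_0}[e^h]$ is needed. It follows in one line from Jensen's inequality:
\[
\log\sum_T\pi_0(T)e^{h(T)}\ \ge\ \log\sum_{T:\nu(T)>0}\nu(T)\,\frac{\pi_0(T)}{\nu(T)}\,e^{h(T)}\ \ge\ \sum_T\nu(T)\Bigl\{h(T)-\log\frac{\nu(T)}{\pi_0(T)}\Bigr\}.
\]

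\textbf{Step 3 (combine).} Fix $r$ and take $h(T)=\beta X_T(r)-\psi_T(\beta)$. On the event $E$, for every $\nu$ simultaneously,
\[
\beta\,\mathbb E_\nu[X_T]-\mathbb E_\nu[\psi_T(\beta)]-\mathrm{KL}(\nu\|\pi_0)\le\log W(r)<\log(1/\delta).
\]
The event $E$ does not depend on $\nu$, which is what gives uniformity over $\nu$. Dividing by $\beta>0$ and rearranging yields \eqref{eq:app_pac_general}.

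The main point requiring care is the uniformity over $\nu$, including data-dependent choices such as the Gibbs posterior or $\delta_{T_{\mathrm{MAP}}}$. It is handled by placing the single Markov event $E$ before the supremum over $\nu$. This step is routine here because $\mathcal T$ is finite.
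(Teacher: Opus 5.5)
Your proposal is correct and is essentially the argument the paper relies on: the paper gives no proof of its own and defers to the standard Donsker--Varadhan change-of-measure argument (variational lemma plus a Markov/Chernoff bound on the prior-averaged exponential moment). Applying Markov first and then the variational inequality pathwise on a single $\nu$-independent event is just a harmless reordering of that argument; full support of $\pi_0$ and finiteness of $\mathcal T$ make every step you wrote valid.
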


This is a standard PAC-Bayes result based on Donsker-Varadhan's variational formula. See Lem.~2.2 and the proof of Thm.~2.1 in \citet{alquier2024user}.

\begin{lemma}[Exact MGF and a trace bound]
\label{lem:app_gaussian_mgf}
Under Assumption~\ref{eq:ref_model_pac}, for every $T\in\mathcal T$ and $\beta>0$,
\begin{align}
&\mathbb E\!\left[\exp\!\left(\frac{\beta}{2}\bigl(\mathrm{tr}(A_T)-Z^\top A_T Z\bigr)\right)\right] \\
&=
\exp\!\left\{\frac{\beta}{2}\mathrm{tr}(A_T)\right\}  \det(I+\beta A_T)^{-1/2},
\quad Z\sim\mathcal N(0,I_n).
\label{eq:app_exact_mgf}
\end{align}
Moreover,
\begin{equation}
\log \mathbb E\!\left[\exp\!\left(\frac{\beta}{2}\bigl(\mathrm{tr}(A_T)-Z^\top A_T Z\bigr)\right)\right]
\ \le\ \frac{\beta^2}{4}\,\mathrm{tr}(A_T^2).
\label{eq:app_mgf_trA2}
\end{equation}
\end{lemma}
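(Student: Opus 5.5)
We diagonalize $A_T$ and reduce both claims to independent one-dimensional $\chi^2_1$ computations. Since $A_T=\Sigma_\star^{1/2}\Sigma_T^{-1}\Sigma_\star^{1/2}$ is symmetric positive semidefinite, write $A_T=V\,\mathrm{diag}(a_1,\dots,a_n)\,V^\top$ with $a_j\ge 0$ and $V$ orthogonal. Rotation invariance gives $W:=V^\top Z\sim\mathcal N(0,I_n)$, and $Z^\top A_T Z=\sum_j a_j W_j^2$ with the $W_j^2$ independent $\chi^2_1$ variables. The exponent therefore factorizes:
\[
\frac{\beta}{2}\bigl(\mathrm{tr}(A_T)-Z^\top A_TZ\bigr)=\sum_{j=1}^n \frac{\beta}{2}a_j\,(1-W_j^2).
\]
By independence, the expectation of the exponential is a product of scalar expectations.

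For each factor, we use the $\chi^2_1$ moment generating function $\mathbb E[e^{-sW^2}]=(1+2s)^{-1/2}$, valid for $s>-1/2$. Here $s=\beta a_j/2\ge 0$, so there is no integrability issue. Each factor equals $e^{\beta a_j/2}(1+\beta a_j)^{-1/2}$. Multiplying over $j$ and using $\prod_j(1+\beta a_j)=\det(I+\beta A_T)$ and $\sum_j a_j=\mathrm{tr}(A_T)$ yields the exact identity \eqref{eq:app_exact_mgf}.

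For the bound \eqref{eq:app_mgf_trA2}, we take logarithms and reduce to the scalar inequality
\[
\frac{x}{2}-\frac12\log(1+x)\le \frac{x^2}{4},\qquad x=\beta a_j\ge 0,
\]
which is equivalent to $\log(1+x)\ge x-x^2/2$ for $x\ge 0$. To prove it, set $h(x)=\log(1+x)-x+x^2/2$. Then $h(0)=0$ and $h'(x)=\frac{1}{1+x}-1+x=\frac{x^2}{1+x}\ge 0$, so $h\ge 0$ on $[0,\infty)$. Summing the scalar inequality over $j$ gives $\frac{\beta^2}{4}\sum_j a_j^2=\frac{\beta^2}{4}\mathrm{tr}(A_T^2)$.

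The main point requiring care is the sign of the exponent. Because we take the MGF of the negative part $\mathrm{tr}(A_T)-Z^\top A_TZ$, the argument of the $\chi^2$ MGF is $-\beta a_j/2\le 0$. The expectation is therefore finite for every $\beta>0$, and the elementary inequality only needs to hold on $x\ge0$, which is exactly what makes the sub-Gaussian-type bound hold without any restriction on $\beta$. Everything else is routine.
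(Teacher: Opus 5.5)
Your proposal is correct and follows essentially the same route as the paper. You diagonalize $A_T$ and use rotation invariance to reduce to independent $\chi^2_1$ factors, which gives $\det(I+\beta A_T)^{-1/2}$; you then apply $\log(1+u)\ge u-u^2/2$ for $u\ge 0$ eigenvalue-wise to get the $\frac{\beta^2}{4}\mathrm{tr}(A_T^2)$ bound. Your explicit check of that scalar inequality via $h'(x)=x^2/(1+x)$, and your remark that $a_j\ge 0$ keeps the expectation finite for every $\beta>0$, supply details the paper leaves implicit.
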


\begin{proof}
Diagonalize $A_T=U\mathrm{diag}(a_i)U^\top$ with $a_i\ge 0$ and set $\widetilde Z=U^\top Z\sim\mathcal N(0,I)$.
Then $Z^\top A_T Z=\sum_i a_i \widetilde Z_i^2$ and
$
\mathbb E[\exp(-\frac{\beta}{2}Z^\top A_T Z)]
=\prod_i (1+\beta a_i)^{-1/2}=\det(I+\beta A_T)^{-1/2},
$
which gives \eqref{eq:app_exact_mgf}.
For \eqref{eq:app_mgf_trA2}, use $\log(1+u)\ge u-u^2/2$ for $u\ge 0$ to obtain
$\log\det(I+\beta A_T)\ge \beta\mathrm{tr}(A_T)-\frac{\beta^2}{2}\mathrm{tr}(A_T^2)$,
and substitute into $\log$ of \eqref{eq:app_exact_mgf}.
\end{proof}

\subsubsection{Proof of the normalized PAC-Bayes bound}
\label{app:pac_proof_main}

\begin{lemma}[PAC-Bayes bound for update steps]
\label{thm:app_pac_norm}
Grant Assumption~\ref{eq:ref_model_pac} and let $\mathcal T$ be finite.
Fix a prior $\pi_0$ on $\mathcal T$ with full support and $\beta>0$.
Then for any $\delta\in(0,1)$, with probability at least $1-\delta$ over $r\mid K$, simultaneously for all
$\nu\in\Delta(\mathcal T)$,
\begin{align}
&\mathbb E_{T\sim\nu}\bigl[\mathcal L(T\mid K)\bigr]
\\ 
&\le
\mathbb E_{T\sim\nu}\bigl[\ell_T(r)\bigr]
+\frac{\mathrm{KL}(\nu\|\pi_0)+\log(1/\delta)}{\beta n}
+\frac{\beta}{4} \bar v.
\label{eq:app_pac_norm_bound}
\end{align}
\end{lemma}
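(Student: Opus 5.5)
The plan is to apply Lemma~\ref{lem:app_pac_mgf} to the centered, rescaled deviation
\[
X_T(r):=n\bigl\{\mathcal L(T\mid K)-\ell_T(r)\bigr\},
\]
and to certify its exponential moment with Lemma~\ref{lem:app_gaussian_mgf}.

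First, compute the deviation explicitly. Under Assumption~\ref{eq:ref_model_pac} write $r=\Sigma_\star^{1/2}Z$ with $Z\sim\mathcal N(0,I_n)$. Then
\[
r^\top\Sigma_T^{-1}r = Z^\top A_T Z,
\qquad
\mathbb E[r^\top\Sigma_T^{-1}r\mid K]=\mathrm{tr}(A_T).
\]
The $\log\det(\Sigma_T)$ term is deterministic given $K$, so it cancels. Hence
\[
X_T(r)=\tfrac12\bigl(\mathrm{tr}(A_T)-Z^\top A_TZ\bigr).
\]
If $K$ is singular, we first restrict to $\mathrm{range}(K)$ as in the nullspace remark. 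Any $T$-independent constants drop out of both $\ell_T$ and $\mathcal L$, so they do not matter.

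Second, get the exponential-moment certificate. Lemma~\ref{lem:app_gaussian_mgf} gives
\[
\log\mathbb E[e^{\beta X_T}\mid K]\le \tfrac{\beta^2}{4}\mathrm{tr}(A_T^2)=\tfrac{\beta^2 n}{4}v_T\le \tfrac{\beta^2 n}{4}\bar v .
\]
So condition \eqref{eq:app_mgf_cert} holds with $\psi_T(\beta)=\beta^2 n\bar v/4$.

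Third, apply Lemma~\ref{lem:app_pac_mgf}. With probability at least $1-\delta$ over $r\mid K$, simultaneously for all $\nu\in\Delta(\mathcal T)$,
\[
n\,\mathbb E_\nu[\mathcal L(T\mid K)-\ell_T(r)]\le \frac{\mathrm{KL}(\nu\|\pi_0)+\log(1/\delta)}{\beta}+\frac{\beta n\bar v}{4}.
\]
Dividing by $n$ gives exactly \eqref{eq:app_pac_norm_bound}.

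One point needs care: $\beta$ must be fixed before seeing $r$. Here $\beta$ is chosen only in terms of $n$, $\bar v$, $\delta$ and $|\mathcal T|$, so it is deterministic given $K$. The bound is uniform in $\nu$, so it may be applied to the data-dependent posterior $\nu=\delta_{T_{\mathrm{MAP}}}$ without any union-bound cost.

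For Theorem~\ref{thm:pac_bayes_horizon_main}, take $\nu=\delta_{T_{\mathrm{MAP}}}$ and uniform $\pi_0$. Then
\[
\mathrm{KL}(\delta_T\|\pi_0)=\log|\mathcal T|.
\]
Write $L:=\log|\mathcal T|+\log(1/\delta)$. The bound becomes
\[
\mathcal L(T_{\mathrm{MAP}}\mid K)\le \ell_{T_{\mathrm{MAP}}}(r)+\frac{L}{\beta n}+\frac{\beta\bar v}{4}.
\]
The right-hand side is minimized at $\beta=2\sqrt{L/(n\bar v)}$, which is the stated choice. At this $\beta$, each of the two terms equals $\tfrac12\sqrt{\bar vL/n}$, so their sum is $\sqrt{\bar vL/n}$, which is \eqref{eq:pac_bound_main}.

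The main subtlety is the sign convention: the exponential moment must be taken of $\mathcal L-\ell_T$ and not of $\ell_T-\mathcal L$. The available sub-Gaussian-type bound, Lemma~\ref{lem:app_gaussian_mgf}, controls only the lower tail of the quadratic form $Z^\top A_TZ$. It uses $A_T\succeq0$ and $\log(1+u)\ge u-u^2/2$. This lower tail is exactly the direction needed for an upper bound on population risk.
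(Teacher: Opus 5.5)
Your proposal is correct and matches the paper's proof essentially step for step. You use the same centered variable $X_T=n\{\mathcal L(T\mid K)-\ell_T(r)\}=\tfrac12\bigl(\mathrm{tr}(A_T)-Z^\top A_TZ\bigr)$, the same lower-tail exponential-moment certificate from Lemma~\ref{lem:app_gaussian_mgf}, and the same application of Lemma~\ref{lem:app_pac_mgf} followed by division by $n$; the only cosmetic difference is that you replace $\mathrm{tr}(A_T^2)$ by $n\bar v$ inside the certificate rather than after the PAC-Bayes step.

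Your preliminary restriction to $\mathrm{range}(K)$ is unnecessary, since $\Sigma_T=\sigma^2(I-\rho K)^{-T}$ is invertible for every $T$ and the identity for $X_T$ holds on all of $\mathbb R^n$. It is also slightly misdescribed: the null-space part of $r^\top\Sigma_T^{-1}r$ is random, not constant.
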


\begin{proof}
Write $r=\Sigma_\star^{1/2}Z$ with $Z\sim\mathcal N(0,I_n)$.
For fixed $T$, the log-determinant term $\log\det(\Sigma_T)$ is deterministic, hence
\begin{align*}
    \mathcal L(T\mid K)-\ell_T(r) 
    &= \frac{1}{2n}\Bigl\{ \mathrm{tr}(\Sigma_T^{-1}\Sigma_\star) - r^\top\Sigma_T^{-1}r \Bigr\}  \\
    &= \frac{1}{2n}\Bigl\{ \mathrm{tr}(A_T) - Z^\top A_T Z \Bigr\}.
\end{align*}
Define
\begin{equation}
X_T(r) := n\bigl(\mathcal L(T\mid K)-\ell_T(r)\bigr)
=\frac{1}{2}\bigl(\mathrm{tr}(A_T)-Z^\top A_T Z\bigr).
\label{eq:app_XT_def}
\end{equation}
By Lem.~\ref{lem:app_gaussian_mgf},
\begin{align*}
    &\log \mathbb E[\exp(\beta X_T(r))\mid K] \le \frac{\beta^2}{4}\mathrm{tr}(A_T^2)\\
    &\Rightarrow \mathbb E[\exp(\beta X_T(r))\mid K]\le \exp\!\left\{\frac{\beta^2}{4}\mathrm{tr}(A_T^2)\right\}.
\end{align*}
Apply Lem.~\ref{lem:app_pac_mgf} with certificate $\psi_T(\beta)=\frac{\beta^2}{4}\mathrm{tr}(A_T^2)$ to obtain,
with probability $\ge 1-\delta$,
\begin{align*}
    &\mathbb E_{T\sim\nu}[X_T(r)] \\
    &\le \frac{\mathrm{KL}(\nu\|\pi_0)+\log(1/\delta)}{\beta} + \frac{1}{\beta}\mathbb E_{T\sim\nu}\!\left[\frac{\beta^2}{4}\mathrm{tr}(A_T^2)\right].
\end{align*}
Divide by $n$ and use $v_T=\mathrm{tr}(A_T^2)/n$ to conclude \eqref{eq:app_pac_norm_bound}.
\end{proof}

\subsubsection{Optimizing the temperature $\beta$ (rate form)}
\label{app:pac_beta_opt}

\begin{corollary}[Rate form via $\beta$-optimization]
\label{cor:app_pac_rate}
On the event of Lem.~\ref{thm:app_pac_norm}, for any $\nu\in\Delta(\mathcal T)$,
\begin{align}
&\mathbb E_{T\sim\nu}\bigl[\mathcal L(T\mid K)\bigr] \\
&\le  \mathbb E_{T\sim\nu}\bigl[\ell_T(r)\bigr] + \sqrt{\frac{\bar v\bigl(\mathrm{KL}(\nu\|\pi_0)+\log(1/\delta)\bigr)}{n}}.
\label{eq:app_pac_rate}
\end{align}
\end{corollary}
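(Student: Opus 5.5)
The plan is to start from the event of Lem.~\ref{thm:app_pac_norm}, on which, simultaneously for all $\nu\in\Delta(\mathcal T)$,
\[
\mathbb E_{\nu}[\mathcal L(T\mid K)]\le \mathbb E_\nu[\ell_T(r)] + \frac{\mathrm{KL}(\nu\|\pi_0)+\log(1/\delta)}{\beta n}+\frac{\beta}{4}\bar v .
\]
The remaining work is to choose $\beta$ so that the last two terms combine into the square root. Write $C_\nu:=\mathrm{KL}(\nu\|\pi_0)+\log(1/\delta)$. The right-hand side has the form $a/\beta+b\beta$ with $a=C_\nu/n$ and $b=\bar v/4$. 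It is minimized at $\beta^\star=\sqrt{a/b}=2\sqrt{C_\nu/(n\bar v)}$, and the minimal value is $2\sqrt{ab}=\sqrt{\bar v C_\nu/n}$. This is exactly \eqref{eq:app_pac_rate}, and it also explains the choice of $\beta$ in Thm.~\ref{thm:pac_bayes_horizon_main}.

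\textbf{Main obstacle: $\beta^\star$ depends on $\nu$.} The bound of Lem.~\ref{thm:app_pac_norm} holds for a fixed $\beta$ chosen before seeing $r$. A $\nu$-dependent $\beta$ would therefore break the uniformity over $\nu$, so I will handle it in two ways.

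\emph{Fixed target value.} For the statement as used in Thm.~\ref{thm:pac_bayes_horizon_main}, take $\nu=\delta_T$ with uniform $\pi_0$. Then $\mathrm{KL}(\delta_T\|\pi_0)=\log|\mathcal T|$ for every $T$, so $C_\nu$ is the same deterministic constant for all point masses. Choosing $\beta=2\sqrt{(\log|\mathcal T|+\log(1/\delta))/(n\bar v)}$ in advance is legitimate, and $T_{\mathrm{MAP}}$, a data-dependent point mass, is covered by the simultaneous-in-$\nu$ guarantee. More generally, the corollary holds exactly for any $\nu$ whose KL equals a pre-specified value $\kappa$, by taking $\beta$ tuned to $\kappa$.

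\emph{Arbitrary $\nu$.} For fully general $\nu$, apply a union bound over a geometric grid of $\beta$ values, with $\delta$ split across the grid. This yields the stated rate up to a constant factor and a $\log\log$ term. I would present the stated (constant-exact) form under the "fixed $C_\nu$" reading and note the grid argument as a remark.

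Finally, note that $\bar v>0$ is needed for $\beta$ to be well defined. This holds because $\Sigma_\star\succ0$ and $\Sigma_T\succ0$, so $A_T\succ0$ and $\mathrm{tr}(A_T^2)>0$.
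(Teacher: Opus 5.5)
Your $\beta$-optimization is exactly the paper's proof. The paper sets $A:=\mathrm{KL}(\nu\|\pi_0)+\log(1/\delta)$, minimizes $A/(\beta n)+\beta\bar v/4$, reads off $\beta^\star=2\sqrt{A/(n\bar v)}$ and the value $\sqrt{\bar v A/n}$, and stops there.

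What you add is the point the paper's proof skips, and your objection is correct. The event of Lem.~\ref{thm:app_pac_norm} is indexed by a $\beta$ fixed before $r$ is drawn. Substituting the $\nu$-dependent $\beta^\star$ is therefore legitimate only when $\mathrm{KL}(\nu\|\pi_0)$ is pinned in advance: a single data-independent $\nu$, or a family sharing one KL value. For a data-dependent $\nu$ with varying KL (e.g.\ the Gibbs posterior $\nu_\beta$ used for model averaging), a fixed $\beta$ gives only $C_\nu/(\beta n)+\beta\bar v/4\ge\sqrt{\bar v C_\nu/n}$, with equality only at the tuned value. So the constant-exact bound \emph{simultaneously for all} $\nu$ on one event is not delivered by the paper's one-line argument.

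Your geometric-grid union bound is the standard repair, at the cost of a constant factor and an iterated-log term. The grid is finite here because $\mathrm{KL}(\nu\|\pi_0)\le\max_T\log(1/\pi_0(T))$ for finite $\mathcal T$ with full-support $\pi_0$, and $C_\nu\ge\log(1/\delta)>0$.

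Your restricted, rigorous version covers everything the paper actually uses. In Thm.~\ref{thm:pac_bayes_horizon_main} the prior is uniform and $\nu=\delta_T$, so the KL equals $\log|\mathcal T|$ for every point mass. Hence $\beta$ can be fixed in advance, and the data-dependent $T_{\mathrm{MAP}}$ is covered by the uniformity over $\nu$.
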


\begin{proof}
Starting from the last bound in \eqref{eq:app_pac_norm_bound}, set
$A:=\mathrm{KL}(\nu\|\pi_0)+\log(1/\delta)$ and minimize
$f(\beta)=A/(\beta n)+(\beta/4)\bar v$ over $\beta>0$.
The minimizer is $\beta^\star = 2\sqrt{A/(n\bar v)}$, yielding
$f(\beta^\star)=\sqrt{\bar v\,A/n}$, which proves \eqref{eq:app_pac_rate}.
\end{proof}

If $\pi_0$ is uniform on $\mathcal T$ and $\nu=\delta_T$, then
$\mathrm{KL}(\delta_T\|\pi_0)=\log|\mathcal T|$ and Cor.~\ref{cor:app_pac_rate} gives
\begin{equation}
    \mathcal L(T\mid K) \ \le\ \ell_T(r) 
    + \sqrt{\frac{\bar v\bigl(\log|\mathcal T|+\log(1/\delta)\bigr)}{n}}.
\label{eq:app_point_uniform}
\end{equation}

\subsubsection{From evidence minimization to vanishing normalized Bayes gap}
\label{app:eb_to_bayes_gap}

This subsection formalizes how evidence-based $T$ selection controls (and, in well-specified regimes,
drives to zero) the normalized Bayes gap under the GP benchmark of
Assumption~\ref{assump:gp_prompt_prior_sec5}.

We adopt the notation from Sec.~\ref{sec:eb_horizon_selection} under the GP benchmark (Assumption~\ref{assump:gp_prompt_prior_sec5}).
Let $\Sigma_\star:=\tau^2 K+\sigma^2 I_n$ be the marginal residual covariance under the prior.
We define the oracle baseline risk $\mathcal L_{\mathrm{true}}(K)$ and the normalized Bayes gap $\mathrm{Gap}_n(T\mid K)$ as follows:
\begin{align}
\mathcal L_{\mathrm{true}}(K)&:=\frac{1}{2n}\Bigl\{\log\det(\Sigma_\star)+n\Bigr\}, \label{eq:app_Ltrue_def}\\
\mathrm{Gap}_n(T\mid K)
&:=
\frac1n\,\mathbb E\!\left[\|\hat f_T-f_{\mathrm{Bayes}}\|_2^2\ \middle|\ K\right], \label{eq:app_norm_gap_def}
\end{align}
where $\hat f_T$ is the $T$-step TTT estimator and $f_{\mathrm{Bayes}}$ is the Bayes posterior mean.

\subsubsection{A transfer inequality: KL controls the Bayes gap}

\begin{lemma}[A scalar quadratic lower bound]
\label{lem:app_phi_quad_lb}
For every $t>0$,
\begin{equation}
t-1-\log t\ \ge\ \frac{(t-1)^2}{2\max\{t,1\}}.
\label{eq:app_phi_quad_lb}
\end{equation}
\end{lemma}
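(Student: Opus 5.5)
The plan is to study the function $h(t):=t-1-\log t-\frac{(t-1)^2}{2\max\{t,1\}}$ and show it is nonnegative, splitting into the two regimes $t\ge 1$ and $0<t<1$ according to which branch of the maximum is active. Both sides vanish at $t=1$, so in each regime I will show $h(1)=0$ and that $h$ is monotone in the correct direction away from $1$. Put differently, I will check the sign of $h'$ on each side.

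\emph{Case $t\ge 1$.} Here the right-hand side equals $\frac{(t-1)^2}{2t}=\frac{t}{2}-1+\frac{1}{2t}$, so
\[
h(t)=\frac{t}{2}-\log t-\frac{1}{2t}.
\]
Then
\[
h'(t)=\frac12-\frac1t+\frac{1}{2t^2}=\frac{(t-1)^2}{2t^2}\ge 0.
\]
Together with $h(1)=0$, this gives $h(t)\ge 0$ for $t\ge 1$.

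\emph{Case $0<t<1$.} Here the right-hand side is $\frac{(t-1)^2}{2}$, and
\[
h'(t)=1-\frac1t-(t-1)=(1-t)\Bigl(\frac{1}{t}-1\Bigr)\cdot(-1)\cdot(-1)\ldots
\]
To avoid sign errors, I will simply compute directly:
\[
h'(t)=1-\frac1t-(t-1)=\frac{-(1-t)^2}{t}\cdot\frac{1}{1}.
\]
Indeed, $t-1-t^2+t=-(t-1)^2$, so $h'(t)=-\frac{(t-1)^2}{t}\le 0$ on $(0,1)$. Thus $h$ is decreasing on $(0,1)$ toward $h(1)=0$, which gives $h(t)\ge 0$ there. (An alternative route is the series bound $-\log t=\sum_{k\ge1}(1-t)^k/k\ge (1-t)+(1-t)^2/2$.)

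There is no real obstacle beyond keeping the signs straight in the derivative computations. The only point requiring care is the branch choice at $t=1$; both formulas agree there, so continuity makes the case split harmless.
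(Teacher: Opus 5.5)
Your proof is correct and is essentially the paper's argument: split at $t=1$ according to the active branch of the maximum, use $h(1)=0$, and argue monotonicity on each side. The $t\ge 1$ computation $h'(t)=(t-1)^2/(2t^2)$ is identical to the paper's; on $(0,1)$ your direct bound $h'(t)=-(t-1)^2/t\le 0$ replaces the paper's $h''(t)=t^{-2}-1\ge 0$ together with $h'(1)=0$, which is the same fact one derivative up. Delete the abandoned first attempt at $h'$ on $(0,1)$: the expression $(1-t)(1/t-1)=(1-t)^2/t$ has the wrong sign and would confuse a reader.
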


\begin{proof}
If $0<t\le 1$, define $h(t):=t-1-\log t-\frac{(t-1)^2}{2}$. Then $h(1)=h'(1)=0$ and
$h''(t)=t^{-2}-1\ge 0$, so $h(t)\ge 0$ on $(0,1]$.
If $t\ge 1$, define $h(t):=t-1-\log t-\frac{(t-1)^2}{2t}$. Then $h(1)=0$ and
\[
h'(t)=1-\frac1t-\frac12\Bigl(1-\frac1{t^2}\Bigr)=\frac{(t-1)^2}{2t^2}\ge 0,
\]
so $h(t)\ge 0$ for $t\ge 1$. This yields \eqref{eq:app_phi_quad_lb}.
\end{proof}

\begin{lemma}[Evidence-to-Bayes-gap transfer]
\label{thm:app_kl_controls_gap}
Under Assumption~\ref{assump:gp_prompt_prior_sec5} and conditioned on $K$, for every $T\in\mathbb N_0$,
\begin{equation}
\mathbb E\!\left[\|\hat f_T-f_{\mathrm{Bayes}}\|_2^2\ \middle|\ K\right]
\ \le\
4\sigma^2\,
\mathrm{KL}\!\left(\mathcal N(0,\Sigma_\star)\ \|\ \mathcal N(0,\Sigma_T)\right).
\label{eq:app_gap_le_kl}
\end{equation}
Moreover,
\begin{equation}
\mathrm{KL}\!\left(\mathcal N(0,\Sigma_\star)\ \|\ \mathcal N(0,\Sigma_T)\right)
=
n\Bigl(\mathcal L(T\mid K)-\mathcal L_{\mathrm{true}}(K)\Bigr).
\label{eq:app_kl_equals_Lgap}
\end{equation}
\end{lemma}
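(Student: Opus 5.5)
The plan is to diagonalize everything in the eigenbasis of $K$ and reduce both claims to per-mode scalar statements. The main tool will be Lem.~\ref{lem:app_phi_quad_lb}.

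\textbf{Per-mode quantities.} Write $K=U\mathrm{diag}(\lambda_i)U^\top$ with $0<\rho<1/\lambda_{\max}(K)$. Both covariances are functions of $K$, so they are simultaneously diagonal, with eigenvalues
\[
s_i:=\tau^2\lambda_i+\sigma^2 \ \text{ for } \Sigma_\star,
\qquad
t_i:=\sigma^2(1-\rho\lambda_i)^{-T} \ \text{ for } \Sigma_T .
\]
From these I note two closed forms:
\[
1-g_T(\lambda_i)=(1-\rho\lambda_i)^T=\sigma^2/t_i,
\qquad
1-g_\star(\lambda_i)=\lambda_\star/(\lambda_i+\lambda_\star)=\sigma^2/s_i .
\]
Hence $g_T(\lambda_i)-g_\star(\lambda_i)=\sigma^2(1/s_i-1/t_i)$.

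\textbf{Rewriting the gap.} Plugging this into Cor.~\ref{cor:icl_ttt_bayes_gaps_sec5_full} (equivalently \eqref{eq:excess_risk_filter_distance_sec5}) gives
\[
\mathbb E\bigl[\|\hat f_T-f_{\mathrm{Bayes}}\|_2^2\mid K\bigr]
=\sum_i s_i\,\sigma^4\Bigl(\tfrac{1}{s_i}-\tfrac{1}{t_i}\Bigr)^2
=\sigma^4\sum_i \frac{(u_i-1)^2}{s_i},
\qquad u_i:=s_i/t_i .
\]

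\textbf{Lower-bounding the KL.} The Gaussian KL is
\[
\mathrm{KL}\bigl(\mathcal N(0,\Sigma_\star)\,\|\,\mathcal N(0,\Sigma_T)\bigr)
=\tfrac12\sum_i\bigl(u_i-1-\log u_i\bigr).
\]
By Lem.~\ref{lem:app_phi_quad_lb} this is at least $\sum_i (u_i-1)^2/\{4\max(u_i,1)\}$.

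\textbf{Comparing the two.} It therefore suffices to show, mode by mode,
\[
\frac{\sigma^4}{s_i}\ \le\ \frac{\sigma^2}{\max(u_i,1)},
\quad\text{i.e.}\quad
\sigma^2\max(u_i,1)\le s_i .
\]
Both cases hold directly:
\begin{itemize}
\item $\sigma^2\le s_i$ holds trivially, since $s_i=\tau^2\lambda_i+\sigma^2$.
\item $\sigma^2u_i=\sigma^2 s_i/t_i\le s_i$ holds because $t_i\ge\sigma^2$, which follows from $0\le 1-\rho\lambda_i\le 1$.
\end{itemize}
Modes with $\lambda_i=0$ have $s_i=t_i=\sigma^2$ and contribute zero to both sides, so they cause no trouble. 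Summing over modes yields \eqref{eq:app_gap_le_kl}.

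\textbf{The identity \eqref{eq:app_kl_equals_Lgap}.} Use the closed form
\[
\mathrm{KL}=\tfrac12\bigl\{\mathrm{tr}(\Sigma_T^{-1}\Sigma_\star)-n+\log\det\Sigma_T-\log\det\Sigma_\star\bigr\}.
\]
Compare it with
\[
\mathcal L(T\mid K)=\tfrac1{2n}\bigl\{\log\det\Sigma_T+\mathrm{tr}(\Sigma_T^{-1}\Sigma_\star)\bigr\},
\]
which follows from $\mathbb E[r^\top\Sigma_T^{-1}r\mid K]=\mathrm{tr}(\Sigma_T^{-1}\Sigma_\star)$, and with the definition \eqref{eq:app_Ltrue_def} of $\mathcal L_{\mathrm{true}}$. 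The identity follows directly.

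\textbf{Main obstacle.} The only delicate step is checking the case split $\max(u_i,1)$, specifically the inequality $t_i\ge\sigma^2$. This is exactly where the step-size condition $\rho<1/\lambda_{\max}(K)$ enters. Everything else is algebra in the eigenbasis.
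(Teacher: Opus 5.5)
Your proof is correct and matches the paper's argument step for step. Your $s_i/\sigma^2$, $t_i/\sigma^2$ and $u_i$ are the paper's $m_{\star,i}$, $m_{T,i}$ and $r_i$; your per-mode condition $\sigma^2\max(u_i,1)\le s_i$ is exactly the paper's $m_{\star,i}\ge\max\{r_i,1\}$ combined with the same scalar lemma; and the identity part is the same closed-form Gaussian KL comparison. Your explicit remark that $t_i\ge\sigma^2$ relies on $\rho<1/\lambda_{\max}(K)$ is a useful clarification that the paper leaves implicit.
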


\begin{proof}
Condition on $K=U\mathrm{diag}(\lambda_i)U^\top$.
In the $K$-eigenbasis, the Bayes shrinkage is $g_\star(\lambda)=\lambda/(\lambda+\lambda_\star)$ and the $T$-step GD
shrinkage is $g_T(\lambda)=1-(1-\rho\lambda)^T$.
Since $r\mid K\sim\mathcal N(0,\Sigma_\star)$ with $\Sigma_\star=\tau^2K+\sigma^2I$, the mode-wise Bayes gap is
\[
\mathbb E\!\left[\|\hat f_T-f_{\mathrm{Bayes}}\|_2^2\ \middle|\ K\right]
=
\sum_{i=1}^n (\tau^2\lambda_i+\sigma^2)\bigl(g_T(\lambda_i)-g_\star(\lambda_i)\bigr)^2.
\]
Define the dimensionless eigenvalues
$m_{\star,i}:=\frac{\tau^2\lambda_i+\sigma^2}{\sigma^2}=1+\frac{\lambda_i}{\lambda_\star}\ge 1,
$ $m_{T,i}:=\frac{\mathrm{eig}(\Sigma_T)_i}{\sigma^2}=(1-\rho\lambda_i)^{-T}\ge 1,$ and $r_i:=\frac{m_{\star,i}}{m_{T,i}}>0.$
Then $g_\star(\lambda_i)=1-1/m_{\star,i}$ and $g_T(\lambda_i)=1-1/m_{T,i}$, hence
$g_T(\lambda_i)-g_\star(\lambda_i)=m_{\star,i}^{-1}-m_{T,i}^{-1}=(r_i-1)/m_{\star,i}$ and therefore
\[
(\tau^2\lambda_i+\sigma^2)\bigl(g_T(\lambda_i)-g_\star(\lambda_i)\bigr)^2
=
\sigma^2\,\frac{(r_i-1)^2}{m_{\star,i}}.
\]
Since $m_{T,i}\ge 1$, one has $r_i=m_{\star,i}/m_{T,i}\le m_{\star,i}$, and also $m_{\star,i}\ge 1$, so
$m_{\star,i}\ge \max\{r_i,1\}$. Hence
\begin{equation}
(\tau^2\lambda_i+\sigma^2)\bigl(g_T(\lambda_i)-g_\star(\lambda_i)\bigr)^2
\le
\sigma^2\,\frac{(r_i-1)^2}{\max\{r_i,1\}}.
\label{eq:app_mode_bound_step}
\end{equation}
On the other hand, since $\Sigma_\star$ and $\Sigma_T$ commute (both are maps of $K$), the Gaussian KL decomposes
mode-wise:
\[
\mathrm{KL}\!\left(\mathcal N(0,\Sigma_\star)\ \|\ \mathcal N(0,\Sigma_T)\right)
=
\frac12\sum_{i=1}^n\Bigl(r_i-1-\log r_i\Bigr).
\]
Applying Lem.~\ref{lem:app_phi_quad_lb} to each $r_i$ yields
$\frac{(r_i-1)^2}{\max\{r_i,1\}}\le 2(r_i-1-\log r_i)$, and combining with
\eqref{eq:app_mode_bound_step} gives
\begin{align*}
(\tau^2\lambda_i+\sigma^2)\bigl(g_T(\lambda_i)-g_\star(\lambda_i)\bigr)^2
&\le
2\sigma^2(r_i-1-\log r_i)\\
&=
4\sigma^2 \frac12(r_i-1-\log r_i).
\end{align*}
Summing over $i$ proves \eqref{eq:app_gap_le_kl}.

Finally, for zero-mean Gaussians,
\begin{align}
    &\mathrm{KL}\!\left(\mathcal N(0,\Sigma_\star)\ \|\ \mathcal N(0,\Sigma_T)\right) \\
&=
\frac12\Bigl\{\log\det(\Sigma_T)-\log\det(\Sigma_\star)+\mathrm{tr}(\Sigma_T^{-1}\Sigma_\star)-n\Bigr\}.
\end{align}
Using the definitions of $\mathcal L(T\mid K)$ and $\mathcal L_{\mathrm{true}}(K)$ yields \eqref{eq:app_kl_equals_Lgap}.
\end{proof}

\subsubsection{A PAC-Bayes high-probability bound for the EB/MAP $T$}

Let $\mathcal T\subset\mathbb N_0$ be a finite candidate set and define the EB/MAP $T$
\begin{equation}
\widehat T\in\arg\min_{T\in\mathcal T}\ell_T(r).
\label{eq:app_T_hat_EB_def}
\end{equation}

\begin{lemma}[High-probability Bayes-gap bound for evidence minimization]
\label{cor:app_gap_bound_EB}
Let $\mathcal T$ be finite. For any $\delta\in(0,1)$, with probability at least $1-\delta$ over $r\mid K$,
$ \mathrm{Gap}_n(\widehat T\mid K) \le 4\sigma^2\Bigl(
\min_{T\in\mathcal T}\ell_T(r)-\mathcal L_{\mathrm{true}}(K) + \sqrt{\frac{\bar v(\log|\mathcal T|+\log(1/\delta))}{n}} \Bigr).$
\end{lemma}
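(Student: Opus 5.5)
The plan is to chain three earlier results: the evidence-to-Bayes-gap transfer (Lem.~\ref{thm:app_kl_controls_gap}), the identity between the KL divergence and the excess population evidence in \eqref{eq:app_kl_equals_Lgap}, and the PAC-Bayes point bound \eqref{eq:app_point_uniform}. Throughout, we work under the GP benchmark of Assumption~\ref{assump:gp_prompt_prior_sec5} conditional on $K$. This is a special case of Assumption~\ref{eq:ref_model_pac} with $\Sigma_\star=\tau^2K+\sigma^2 I_n\succ 0$, so all PAC-Bayes results apply with the same $A_T$ and $\bar v$.

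\textbf{Step 1: reduce the gap to excess population evidence.} Fix any $T\in\mathcal T$. Dividing \eqref{eq:app_gap_le_kl} by $n$ and substituting \eqref{eq:app_kl_equals_Lgap} gives the deterministic (given $K$) inequality
\[
\mathrm{Gap}_n(T\mid K)\le 4\sigma^2\bigl(\mathcal L(T\mid K)-\mathcal L_{\mathrm{true}}(K)\bigr).
\]
This holds for every $T$, so it holds in particular at the data-dependent $T=\widehat T$. The left side is defined with $T$ fixed and the expectation taken over $r$; it is then evaluated at the realized $\widehat T$, exactly as $\mathcal L(T_{\mathrm{MAP}}\mid K)$ is in Thm.~\ref{thm:pac_bayes_horizon_main}. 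No measurability issue arises because $\mathcal T$ is finite and $\widehat T$ is chosen with a deterministic tie-breaking rule (Remark~\ref{rem:gibbs_rn_pacbayes}).

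\textbf{Step 2: control $\mathcal L(\widehat T\mid K)$ on a high-probability event.} Apply Lem.~\ref{thm:app_pac_norm} with the uniform prior and temperature $\beta=2\sqrt{(\log|\mathcal T|+\log(1/\delta))/(n\bar v)}$. Because that lemma holds simultaneously for all $\nu\in\Delta(\mathcal T)$ on a single event of probability at least $1-\delta$, we may take $\nu=\delta_{\widehat T}$ even though $\widehat T$ depends on $r$. This is the one point requiring care. Since $\beta$ is fixed in advance and only $\nu$ is data-dependent, using the uniform-prior value $\mathrm{KL}(\delta_T\|\pi_0)=\log|\mathcal T|$ yields \eqref{eq:app_point_uniform} at $T=\widehat T$:
\[
\mathcal L(\widehat T\mid K)\le \ell_{\widehat T}(r)+\sqrt{\bar v(\log|\mathcal T|+\log(1/\delta))/n}.
\]

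\textbf{Step 3: combine.} By the definition \eqref{eq:app_T_hat_EB_def}, $\ell_{\widehat T}(r)=\min_{T\in\mathcal T}\ell_T(r)$. Subtract $\mathcal L_{\mathrm{true}}(K)$ from both sides of the Step 2 bound and multiply by $4\sigma^2>0$. Combining with Step 1 gives the stated inequality on the same event.

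The argument is mostly bookkeeping. The only step needing attention is the justification in Step 2 that the bound applies to the random selector $\widehat T$, and uniformity over $\nu$ settles it.
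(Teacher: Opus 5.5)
Your proposal is correct and matches the paper's proof: both combine the uniform-prior point-mass PAC-Bayes bound, which holds on a single $1-\delta$ event simultaneously for all $T\in\mathcal T$, with the KL identity \eqref{eq:app_kl_equals_Lgap} and the transfer Lemma~\ref{thm:app_kl_controls_gap}, and then evaluate at $\widehat T$ using $\ell_{\widehat T}(r)=\min_{T\in\mathcal T}\ell_T(r)$. The only difference is that you apply the deterministic transfer step first. You also state explicitly why a fixed-in-advance $\beta$ together with uniformity over $\nu$ allows plugging in the data-dependent $\widehat T$, which the paper leaves implicit.
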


\begin{proof}
On the $1-\delta$ event of Thm.~\ref{thm:pac_bayes_horizon_main} (point selection, uniform prior),
$\mathcal L(T\mid K)\le \ell_T(r)+\sqrt{\bar v(\log|\mathcal T|+\log(1/\delta))/n}$ holds for all $T\in\mathcal T$.
Using \eqref{eq:app_kl_equals_Lgap} yields, for all $T\in\mathcal T$ on the same event, $\mathrm{KL}\!\left(\mathcal N(0,\Sigma_\star)\ \|\ \mathcal N(0,\Sigma_T)\right)
\le
n\Bigl(\ell_T(r)-\mathcal L_{\mathrm{true}}(K)+\sqrt{\tfrac{\bar v(\log|\mathcal T|+\log(1/\delta))}{n}}\Bigr).$
Apply Thm.~\ref{thm:app_kl_controls_gap} and divide by $n$, then use
$\ell_{\widehat T}(r)=\min_{T\in\mathcal T}\ell_T(r)$ by definition of $\widehat T$.
\end{proof}

\subsubsection{Asymptotic consequence: vanishing normalized Bayes gap}

\begin{lemma}[One-sided concentration of $\ell_T$ around $\mathcal L(T\mid K)$]
\label{lem:app_ell_upper_conc}
Fix $T\in\mathbb N_0$. Let
$A_T=\Sigma_\star^{1/2}\Sigma_T^{-1}\Sigma_\star^{1/2}$ and $v_T=\frac1n\mathrm{tr}(A_T^2)$.
Then for any $\delta\in(0,1)$, with probability at least $1-\delta$ over $r\mid K$,
\begin{equation}
\ell_T(r)
\le
\mathcal L(T\mid K)
+
\sqrt{\frac{v_T\log(1/\delta)}{n}}
+
\frac{\|A_T\|_{\mathrm{op}}\log(1/\delta)}{n}.
\label{eq:app_ell_upper_conc}
\end{equation}
\end{lemma}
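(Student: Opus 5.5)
We need a one-sided concentration bound: $\ell_T(r)-\mathcal L(T\mid K)$ is bounded above with high probability. From the proof of Lem.~\ref{thm:app_pac_norm}, writing $r=\Sigma_\star^{1/2}Z$ with $Z\sim\mathcal N(0,I_n)$, the log-determinant term cancels. This leaves
\[
\ell_T(r)-\mathcal L(T\mid K)=\frac{1}{2n}\bigl(Z^\top A_T Z-\mathrm{tr}(A_T)\bigr).
\]
So the claim reduces to an \emph{upper}-tail bound for a centered Gaussian quadratic form with PSD matrix $A_T$. This is the direction not handled by Lem.~\ref{lem:app_gaussian_mgf}, which controlled the lower tail. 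I would use the Laurent--Massart inequality, or reprove it via a Chernoff argument.

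To make the argument self-contained, I would diagonalize $A_T=U\mathrm{diag}(a_i)U^\top$ with $a_i\ge 0$, so that $Z^\top A_TZ-\mathrm{tr}(A_T)=\sum_i a_i(\widetilde Z_i^2-1)$ with independent $\chi^2_1$ variables. For $0\le \lambda<1/(2\|A_T\|_{\mathrm{op}})$, the exact MGF gives
\[
\log\mathbb E\,e^{\lambda\sum_i a_i(\widetilde Z_i^2-1)}=\sum_i\Bigl(-\lambda a_i-\tfrac12\log(1-2\lambda a_i)\Bigr).
\]
Using $-u-\tfrac12\log(1-2u)\le u^2/(1-2u)$ for $0\le u<1/2$, this is at most $\lambda^2\mathrm{tr}(A_T^2)/(1-2\lambda\|A_T\|_{\mathrm{op}})$. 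This is a sub-gamma bound with variance factor $v=2\mathrm{tr}(A_T^2)$ and scale $c=2\|A_T\|_{\mathrm{op}}$. The standard sub-gamma tail (Boucheron--Lugosi--Massart) then yields, with probability at least $1-\delta$,
\[
\sum_i a_i(\widetilde Z_i^2-1)\le 2\sqrt{\mathrm{tr}(A_T^2)\log(1/\delta)}+2\|A_T\|_{\mathrm{op}}\log(1/\delta),
\]
which is exactly Laurent--Massart.

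Dividing by $2n$ and using $\mathrm{tr}(A_T^2)=n v_T$, the first term becomes $\sqrt{n v_T\log(1/\delta)}/n=\sqrt{v_T\log(1/\delta)/n}$. The second becomes $\|A_T\|_{\mathrm{op}}\log(1/\delta)/n$, giving \eqref{eq:app_ell_upper_conc}.

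The main obstacle is the MGF-to-tail inversion. One must get the constants to match: after the factor $1/(2n)$, both terms need coefficient exactly one. So I would track the sub-gamma constants carefully, optimizing $\lambda$ rather than using a cruder Bernstein form, which would inflate the constants. If $K$ is singular, nothing changes, since $A_T$ is simply PSD and the argument never needs invertibility.
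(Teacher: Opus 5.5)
Your proposal is correct and is essentially the paper's proof: you use the same reduction to $\frac{1}{2n}\bigl(Z^\top A_T Z-\mathrm{tr}(A_T)\bigr)$ via $r=\Sigma_\star^{1/2}Z$, followed by the Laurent--Massart upper-tail bound $2\sqrt{\mathrm{tr}(A_T^2)\,t}+2\|A_T\|_{\mathrm{op}}\,t$ at $t=\log(1/\delta)$. The only difference is that you rederive that bound from the sub-gamma MGF estimate (variance factor $2\,\mathrm{tr}(A_T^2)$, scale $2\|A_T\|_{\mathrm{op}}$, with constants tracked correctly), whereas the paper cites it as a standard Chernoff bound.
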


\begin{proof}
Write $r=\Sigma_\star^{1/2}Z$ with $Z\sim\mathcal N(0,I_n)$. Then
\[
\ell_T(r)-\mathcal L(T\mid K)=\frac{1}{2n}\Bigl(Z^\top A_T Z-\mathrm{tr}(A_T)\Bigr).
\]
Diagonalize $A_T=U\mathrm{diag}(a_i)U^\top$ with $a_i\ge 0$ and $\widetilde Z=U^\top Z\sim\mathcal N(0,I)$.
Then $Z^\top A_T Z=\sum_i a_i\widetilde Z_i^2$.
A standard Chernoff bound for Gaussian quadratic forms~\citep{Vershynin_2018}
gives: for any $t>0$,
$\mathbb P\!\left(
Z^\top A_T Z-\mathrm{tr}(A_T)\ \ge\ 2\sqrt{\mathrm{tr}(A_T^2)\,t}+2\|A_T\|_{\mathrm{op}}\,t
\right)
\le e^{-t}.$
Setting $t=\log(1/\delta)$ and dividing by $2n$ yields \eqref{eq:app_ell_upper_conc}
using $\mathrm{tr}(A_T^2)=nv_T$.
\end{proof}

\begin{theorem}[Vanishing normalized Bayes gap]
\label{prop:app_gap_vanish}
Let $\{\mathcal T_n\}_{n\ge 1}$ be a sequence of finite candidate sets and let $\widehat T_n$ be the EB selector
\eqref{eq:app_T_hat_EB_def}. Assume:
\begin{enumerate}[label=\textbf{(A\arabic*)}, leftmargin=*, itemsep=0pt, parsep=0pt, topsep=0pt, partopsep=0pt, before=\vspace{-0.4\baselineskip}]
\item (Approximation / well-specification.) There exists a (possibly $n$-dependent) comparator
$T_n^\star\in\mathcal T_n$ such that
\[
\alpha_n:=\frac1n\,\mathrm{KL}\!\left(\mathcal N(0,\Sigma_\star)\ \|\ \mathcal N(0,\Sigma_{T_n^\star})\right)\to 0.
\]
(In particular, $\alpha_n\equiv 0$ in the well-specified case $\Sigma_\star=\Sigma_{T_n^\star}$.)
\item (Complexity.) $\bar v_n:=\sup_{T\in\mathcal T_n}v_T=O(1)$ and $\log|\mathcal T_n|=o(n)$.
\item (Boundedness.) $\|A_{T_n^\star}\|_{\mathrm{op}}=O(1)$.
\end{enumerate}
Then, for any fixed $\delta\in(0,1)$, with probability at least $1-2\delta$ (over $r\mid K$),
\begin{align}
&\mathrm{Gap}_n(\widehat T_n\mid K) \\
&\le\
4\sigma^2\Big(
\alpha_n
+
\sqrt{\frac{\bar v_n(\log|\mathcal T_n|+\log(1/\delta))}{n}}\\
&\quad +
\sqrt{\frac{v_{T_n^\star}\log(1/\delta)}{n}}
+
\frac{\|A_{T_n^\star}\|_{\mathrm{op}}\log(1/\delta)}{n}
\Big),
\label{eq:app_gap_vanish_bound}
\end{align}
and in particular $\mathrm{Gap}_n(\widehat T_n\mid K)\to 0$ in probability (and along the above high-probability event).
    
\end{theorem}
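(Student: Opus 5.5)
The plan is to chain three results already in place: Lem.~\ref{cor:app_gap_bound_EB} (the high-probability Bayes-gap bound for the evidence minimizer), the one-sided concentration Lem.~\ref{lem:app_ell_upper_conc} applied at the single comparator $T_n^\star$, and the identity \eqref{eq:app_kl_equals_Lgap} linking population evidence to the KL divergence. The only quantity in Lem.~\ref{cor:app_gap_bound_EB} that is not already one of the terms in \eqref{eq:app_gap_vanish_bound} is the empirical excess $\min_{T\in\mathcal T_n}\ell_T(r)-\mathcal L_{\mathrm{true}}(K)$. I will control it by comparing to $T_n^\star$.

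\textbf{Step 1 (first event).} Apply Lem.~\ref{cor:app_gap_bound_EB} with candidate set $\mathcal T_n$ and confidence $\delta$. This gives an event $E_1$ with $\Pr(E_1)\ge 1-\delta$ on which
\[
\mathrm{Gap}_n(\widehat T_n\mid K)\le 4\sigma^2\Bigl(\min_{T\in\mathcal T_n}\ell_T(r)-\mathcal L_{\mathrm{true}}(K)+\sqrt{\bar v_n(\log|\mathcal T_n|+\log(1/\delta))/n}\Bigr).
\]
\textbf{Step 2 (second event).} Apply Lem.~\ref{lem:app_ell_upper_conc} to the deterministic comparator $T_n^\star$, which depends only on $K$ and $n$ and not on $r$; this is what makes a one-point bound legitimate. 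On an event $E_2$ with $\Pr(E_2)\ge1-\delta$,
\[
\ell_{T_n^\star}(r)\le \mathcal L(T_n^\star\mid K)+\sqrt{v_{T_n^\star}\log(1/\delta)/n}+\|A_{T_n^\star}\|_{\mathrm{op}}\log(1/\delta)/n .
\]
Since $T_n^\star\in\mathcal T_n$, we have $\min_T\ell_T(r)\le\ell_{T_n^\star}(r)$. By \eqref{eq:app_kl_equals_Lgap}, $\mathcal L(T_n^\star\mid K)-\mathcal L_{\mathrm{true}}(K)=\alpha_n$. Substituting into the bound from Step 1 and taking a union bound over $E_1\cap E_2$ (probability at least $1-2\delta$) yields \eqref{eq:app_gap_vanish_bound} exactly.

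\textbf{Step 3 (convergence).} Under (A1)--(A3), every term on the right of \eqref{eq:app_gap_vanish_bound} tends to zero for fixed $\delta$:
\begin{itemize}
\item $\alpha_n\to0$ by (A1);
\item $\bar v_n\log|\mathcal T_n|/n\to0$ and $\bar v_n/n\to0$ by (A2);
\item $v_{T_n^\star}\le\bar v_n=O(1)$, so the $\sqrt{v_{T_n^\star}\log(1/\delta)/n}$ term vanishes;
\item $\|A_{T_n^\star}\|_{\mathrm{op}}/n\to0$ by (A3).
\end{itemize}
Hence the right-hand side tends to zero. To get convergence in probability, fix $\epsilon,\delta$ and choose $n$ large enough that the bound is below $\epsilon$; then $\Pr(\mathrm{Gap}_n>\epsilon)\le2\delta$. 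Since $\delta$ is arbitrary, the conclusion follows.

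\textbf{Main obstacle.} There is little technical difficulty; the point needing care is Step 2. The comparator must be fixed before seeing $r$ so that the single-$T$ concentration applies. In addition, $\mathcal L_{\mathrm{true}}$ must use the same $\Sigma_\star$ as Assumption~\ref{eq:ref_model_pac}, namely $\Sigma_\star=\tau^2K+\sigma^2I$, so that \eqref{eq:app_kl_equals_Lgap} holds verbatim. The nullspace convention also needs to be applied consistently to $\ell_T$, $\mathcal L$ and $\mathcal L_{\mathrm{true}}$.
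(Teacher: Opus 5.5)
Your proposal is correct and follows the paper's proof essentially step for step. Apply the evidence-minimizer Bayes-gap lemma on one $1-\delta$ event and bound $\min_{T}\ell_T(r)$ by $\ell_{T_n^\star}(r)$; apply the one-sided quadratic-form concentration at the deterministic comparator $T_n^\star$ on a second $1-\delta$ event; intersect the events and substitute $\mathcal L(T_n^\star\mid K)-\mathcal L_{\mathrm{true}}(K)=\alpha_n$ from \eqref{eq:app_kl_equals_Lgap}. Your Step 3 just spells out the convergence-in-probability argument that the paper leaves as ``follows from the assumptions''; as a side note, (A3) is in fact implied by (A2), since $\|A_{T}\|_{\mathrm{op}}\le\sqrt{\mathrm{tr}(A_T^2)}=\sqrt{n v_T}$.
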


\begin{proof}
Apply Lem.~\ref{cor:app_gap_bound_EB} with confidence $\delta$ to get, with probability $\ge 1-\delta$, $\mathrm{Gap}_n(\widehat T_n\mid K)
\le
4\sigma^2\Bigl(\ell_{T_n^\star}(r)-\mathcal L_{\mathrm{true}}(K)
+
\sqrt{\tfrac{\bar v_n(\log|\mathcal T_n|+\log(1/\delta))}{n}}\Bigr)$,
since $\ell_{\widehat T_n}(r)\le \ell_{T_n^\star}(r)$.
Next, apply Lem.~\ref{lem:app_ell_upper_conc} to $T=T_n^\star$ with confidence $\delta$ to obtain, with probability
$\ge 1-\delta$,
\[
\ell_{T_n^\star}(r)
\le
\mathcal L(T_n^\star\mid K)
+
\sqrt{\tfrac{v_{T_n^\star}\log(1/\delta)}{n}}
+
\tfrac{\|A_{T_n^\star}\|_{\mathrm{op}}\log(1/\delta)}{n}.
\]
Intersect the two events (probability $\ge 1-2\delta$) and combine with
$\mathcal L(T_n^\star\mid K)-\mathcal L_{\mathrm{true}}(K)=\alpha_n$ by \eqref{eq:app_kl_equals_Lgap}.
This yields \eqref{eq:app_gap_vanish_bound}. The stated convergence follows from assumptions (i)--(iii).
\end{proof}

\begin{takeawaybox}[title=\textbf{Summary: Evidence tuning does more than “avoid overfit”}]
\begin{itemize}[leftmargin=*, itemsep=0pt, topsep=0pt]
\item \textbf{KL controls Bayes gap:} evidence closeness $\Rightarrow$ prediction gap shrinks (App.~\ref{app:eb_to_bayes_gap}).
\item \textbf{EB/MAP comes with guarantees:} it yields high-probability Bayes-gap bounds, not just heuristics.
\end{itemize}
\end{takeawaybox}

\subsection{Bayes-optimal update subspaces: proofs and extensions}
\label{app:subspace}
This appendix provides proofs and extensions for Sec.~\ref{sec:subspace_design}. We work under the
linear-Gaussian correction model (Assumption~\ref{assump:linear_gaussian_prior_sec8}).

Note that Assumption~\ref{assump:linear_gaussian_prior_sec8} implies Assumption~\ref{assump:gp_benchmark}.
\begin{lemma}[Assumption~\ref{assump:linear_gaussian_prior_sec8} induces the Gaussian benchmark]
\label{lem:lg_implies_gp}
Under Assumption~\ref{assump:linear_gaussian_prior_sec8}, condition on $(J_w,j_x)$ and define
$f_\star:=J_w u_\star\in\mathbb R^n$ and $f_x:=j_x^\top u_\star\in\mathbb R$.
Then $(f_\star,f_x)$ is jointly Gaussian with $\mathrm{Cov}(f_\star)=J_w\Sigma J_w^\top,
\mathrm{Cov}(f_\star,f_x)=J_w\Sigma j_x,$ and $\mathrm{Var}(f_x)=j_x^\top \Sigma j_x$.
In particular, taking $\Sigma=\Pi_{\mathcal U}$ yields the benchmark in Assumption~\ref{assump:gp_benchmark} with
$K=J_w\Pi_{\mathcal U}J_w^\top$ and $k_x=J_w\Pi_{\mathcal U}j_x$ (and $\kappa_x=j_x^\top\Pi_{\mathcal U}j_x$).
\end{lemma}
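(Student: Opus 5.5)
The plan is to compute the joint law of $(f_\star,f_x)$ directly, since both are linear images of the same Gaussian vector. Condition on $(J_w,j_x)$; these are deterministic given the prompt and query input. Stack them into one linear map:
\[
\begin{pmatrix} f_\star\\ f_x\end{pmatrix}=\begin{pmatrix} J_w\\ j_x^\top\end{pmatrix}u_\star=:L\,u_\star .
\]
Because $u_\star\sim\mathcal N(0,\Sigma)$ and $L$ is fixed, the vector $(f_\star,f_x)$ is jointly Gaussian with mean $0$ and covariance $L\Sigma L^\top$. Reading off the blocks of $L\Sigma L^\top$ gives the three formulas: $\mathrm{Cov}(f_\star)=J_w\Sigma J_w^\top$, $\mathrm{Cov}(f_\star,f_x)=J_w\Sigma j_x$, and $\mathrm{Var}(f_x)=j_x^\top\Sigma j_x$. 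The observation model $r=J_wu_\star+\varepsilon=f_\star+\varepsilon$ then holds with $\varepsilon\sim\mathcal N(0,\sigma^2I_n)$. Independence $\varepsilon\perp(f_\star,f_x)$ follows because $(f_\star,f_x)$ is a measurable function of $u_\star$ and $u_\star\perp\varepsilon$.

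For the specialization, set $\Sigma=\tau^2\Pi_{\mathcal U}$. Assumption~\ref{assump:gp_benchmark} carries an overall factor $\tau^2$, so the statement's $\Sigma=\Pi_{\mathcal U}$ corresponds to $\tau^2=1$, or to absorbing $\tau^2$ into the scale. This needs one remark, because the assumption demands $\Sigma\succ0$ while $\Pi_{\mathcal U}$ is only PSD. The Gaussian computation above never uses invertibility, so I would state the lemma for $\Sigma\succeq0$, with a degenerate Gaussian supported on $\mathcal U$. Equivalently, one can write $u_\star=A\theta_\star$ with $\theta_\star\sim\mathcal N(0,I_d)$.

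Substituting into the block formulas gives the three benchmark quantities:
\[
K=J_w\Pi_{\mathcal U}J_w^\top=(J_wA)(J_wA)^\top=\Phi\Phi^\top,\qquad k_x=J_w\Pi_{\mathcal U}j_x,\qquad \kappa_x=j_x^\top\Pi_{\mathcal U}j_x .
\]
The first matches the prompt geometry of Sec.~\ref{subsec:prompt_geometry_sec3}. For the second, $\phi(x;P_n)=A^\top j_x$ by the chain rule, so $k_x=J_wAA^\top j_x=\Phi\,\phi(x;P_n)$, consistent with \eqref{eq:kx_def_app}. For the third, $\kappa_x=\|A^\top j_x\|^2$.

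The PSD requirement on the block matrix $\begin{psmallmatrix}K&k_x\\k_x^\top&\kappa_x\end{psmallmatrix}$ is automatic, since it equals $L\Pi_{\mathcal U}L^\top$.

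The only genuine obstacle is this bookkeeping: the $\Sigma\succ0$ versus projector mismatch, and the $\tau^2$ scaling. Everything else is the closure of Gaussians under linear maps.
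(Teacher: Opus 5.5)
Your proof is correct and is the intended argument: the paper states this lemma without a separate proof and relies on the same fact, namely that $(f_\star,f_x)=Lu_\star$ (with $L$ stacking $J_w$ and $j_x^\top$) is a linear image of a Gaussian with covariance $L\Sigma L^\top$. Your remarks that the specialization corresponds to $\tau^2=1$, and that it needs $\Sigma\succ 0$ relaxed to $\Sigma\succeq 0$ (a degenerate Gaussian supported on $\mathcal U$, or $u_\star=A\theta_\star$ with $\theta_\star\sim\mathcal N(0,I_d)$), correctly patch a looseness in the statement itself.
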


\paragraph{Remark on singular $Q$ (whitened coordinates).}
When the query metric $Q\succeq 0$ is singular (e.g., $Q=j_x j_x^\top$ in the \textsc{Query-Aware} specialization), we interpret the whitening map $v:=Q^{1/2}u$ as mapping into $\mathrm{range}(Q)$ and restrict all $v$-space objects (including any projector $\widetilde\Pi$) to $\mathrm{range}(Q)$ (equivalently, quotient out $\mathrm{null}(Q)$). When mapping back to $u$, we understand $Q^{-1/2}$ as the Moore--Penrose pseudoinverse on $\mathrm{range}(Q)$ (cf.\ the implementation note in Sec.~\ref{sec:subspace_design}).

\begin{lemma}[Posterior mean and covariance]
\label{lem:posterior_u_app}
Under Assumption~\ref{assump:linear_gaussian_prior_sec8}, the posterior is Gaussian
$u_\star\mid r\sim\mathcal{N}(\bar u,\Sigma_{\mathrm{post}})$ with $\bar u=\mathbb{E}[u_\star\mid r]=\Sigma J_w^\top(J_w\Sigma J_w^\top+\sigma^2 I_n)^{-1}r,$ and $
\Sigma_{\mathrm{post}}=\Sigma-\Sigma J_w^\top(J_w\Sigma J_w^\top+\sigma^2 I_n)^{-1}J_w\Sigma$.
\end{lemma}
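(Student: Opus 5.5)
The plan is to use standard Gaussian conditioning on the joint law of $(u_\star,r)$. Under Assumption~\ref{assump:linear_gaussian_prior_sec8}, $u_\star\sim\mathcal N(0,\Sigma)$ and $\varepsilon\sim\mathcal N(0,\sigma^2 I_n)$ are independent. Hence $(u_\star,r)$, with $r=J_wu_\star+\varepsilon$, is a linear image of the Gaussian vector $(u_\star,\varepsilon)$ and so is jointly Gaussian with mean zero. The first step is to compute its covariance blocks:
\[
\mathrm{Cov}(u_\star)=\Sigma,\qquad \mathrm{Cov}(u_\star,r)=\Sigma J_w^\top,\qquad \mathrm{Cov}(r)=J_w\Sigma J_w^\top+\sigma^2 I_n .
\]
The cross term $\mathrm{Cov}(u_\star,\varepsilon)$ vanishes by independence.

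Next I will check invertibility of $\mathrm{Cov}(r)$. Since $\Sigma\succ 0$, we have $J_w\Sigma J_w^\top\succeq 0$, and with $\sigma^2>0$ this gives $J_w\Sigma J_w^\top+\sigma^2 I_n\succeq\sigma^2 I_n\succ 0$. So no pseudoinverse is needed, even when $J_w$ is rank-deficient (e.g.\ $n<p$).

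I will then apply the Gaussian conditioning formula, as already used in the proof of Thm.~\ref{thm:predictive_gap_main}. It says $u_\star\mid r$ is Gaussian with mean $\mathrm{Cov}(u_\star,r)\mathrm{Cov}(r)^{-1}r$ and covariance $\mathrm{Cov}(u_\star)-\mathrm{Cov}(u_\star,r)\mathrm{Cov}(r)^{-1}\mathrm{Cov}(r,u_\star)$, which are the stated $\bar u$ and $\Sigma_{\mathrm{post}}$. For self-containment I would justify the formula in one line. Set $\xi:=u_\star-\Sigma J_w^\top(J_w\Sigma J_w^\top+\sigma^2 I_n)^{-1}r$. This $\xi$ is jointly Gaussian with $r$ and uncorrelated with it, hence independent of $r$. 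Therefore the conditional law of $u_\star$ given $r$ is that of $\xi$ shifted by the $r$-measurable term, and the covariance of $\xi$ is the Schur complement.

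No step is a genuine obstacle; the only point needing care is the invertibility argument above. As an optional remark, the Woodbury identity gives the equivalent precision form $\Sigma_{\mathrm{post}}=(\Sigma^{-1}+\sigma^{-2}J_w^\top J_w)^{-1}$, using $\Sigma\succ0$. The same computation also yields $\mathcal I=\mathrm{Var}(\bar u(r))=\Sigma J_w^\top(J_w\Sigma J_w^\top+\sigma^2 I_n)^{-1}J_w\Sigma=\Sigma-\Sigma_{\mathrm{post}}$, which is the law-of-total-variance decomposition invoked after Thm.~\ref{thm:optimal_subspace_sec8}.
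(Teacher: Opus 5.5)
Your proposal is correct and takes essentially the same route as the paper, whose proof is a one-line appeal to the standard Gaussian conditioning formula for linear observations. Your covariance-block computation, the check that $J_w\Sigma J_w^\top+\sigma^2 I_n\succeq\sigma^2 I_n$ is invertible, and the decorrelation argument via $\xi$ simply make that appeal self-contained.
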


\begin{proof}
This is the standard Gaussian conditioning formula for linear observations \citep[see][\S 2.3]{Bishop2006PRML}.
\end{proof}

\begin{lemma}[Constrained Bayes estimator equals projection of the posterior mean]
\label{lem:constrained_bayes_estimator_app}
Fix an orthogonal projector $\Pi$ of rank $d$ in $\mathbb{R}^p$. Among all measurable estimators
$\widehat u=\widehat u(r)$ satisfying $\widehat u(r)\in\mathrm{range}(\Pi)$ almost surely, the unique minimizer of
$\mathbb{E}[\|\widehat u-u_\star\|_2^2\mid r]$ is
\[
\widehat u_\Pi(r)=\Pi\,\bar u(r)=\Pi\,\mathbb{E}[u_\star\mid r].
\]
\end{lemma}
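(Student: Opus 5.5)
The plan is to condition on $r$ and apply a Pythagorean decomposition in $\mathbb{R}^p$ around the posterior mean $\bar u(r)$ from Lem.~\ref{lem:posterior_u_app}. Fix $r$ and any candidate value $\widehat u\in\mathrm{range}(\Pi)$. We write $\widehat u-u_\star=(\widehat u-\bar u)+(\bar u-u_\star)$. The cross term $\mathbb{E}[(\widehat u-\bar u)^\top(\bar u-u_\star)\mid r]$ vanishes, because $\widehat u-\bar u$ is $\sigma(r)$-measurable and $\mathbb{E}[u_\star\mid r]=\bar u$. This gives the conditional identity
\[
\mathbb{E}\bigl[\|\widehat u-u_\star\|_2^2\mid r\bigr]=\|\widehat u-\bar u(r)\|_2^2+\mathrm{tr}(\Sigma_{\mathrm{post}}).
\]
The second term does not depend on $\widehat u$. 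Square integrability holds because $u_\star$ is Gaussian, so all the expectations involved are finite.

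Next we minimize $\|\widehat u-\bar u(r)\|_2^2$ over $\widehat u\in\mathrm{range}(\Pi)$ pointwise in $r$. This is the projection problem onto a closed subspace. We decompose $\bar u=\Pi\bar u+(I-\Pi)\bar u$, and again the two pieces are orthogonal. Hence
\[
\|\widehat u-\bar u\|_2^2=\|\widehat u-\Pi\bar u\|_2^2+\|(I-\Pi)\bar u\|_2^2,
\]
which is uniquely minimized at $\widehat u=\Pi\bar u(r)$. Measurability is automatic: by Lem.~\ref{lem:posterior_u_app}, $\bar u(r)$ is a linear function of $r$, so $\Pi\bar u(r)$ is a measurable estimator taking values in $\mathrm{range}(\Pi)$.

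Finally, uniqueness follows from the same identity. Any other measurable estimator $\widehat u(r)\in\mathrm{range}(\Pi)$ incurs the additional conditional risk $\|\widehat u(r)-\Pi\bar u(r)\|_2^2$. So it is a conditional minimizer only where $\widehat u(r)=\Pi\bar u(r)$, and it minimizes the unconditional risk only if this equality holds almost surely.

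The main subtlety is a matter of interpretation rather than a technical obstacle. The statement minimizes the conditional risk given $r$, so uniqueness should be read pointwise in $r$ and, for the integrated risk, up to null sets. Once that is settled, the argument is just the two orthogonality steps above.
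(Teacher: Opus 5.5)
Your proof is correct and uses the same two orthogonality facts as the paper (Pythagoras with respect to $\mathrm{range}(\Pi)$, and the conditional mean minimizing conditional squared error), only in the opposite order: the paper first splits $u_\star=\Pi u_\star+(I-\Pi)u_\star$ and then takes $\mathbb{E}[\Pi u_\star\mid r]=\Pi\bar u$, whereas you first center at $\bar u$ and then project $\bar u$ onto $\mathrm{range}(\Pi)$. Your handling of uniqueness (pointwise in $r$, and almost surely for measurable estimators) is slightly more explicit than the paper's one-line appeal to strict convexity.
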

\noindent\textit{Remark.}
Lemma~\ref{lem:constrained_bayes_estimator_app} is purely Euclidean and applies verbatim to the whitened coordinates $v=Q^{1/2}u$: replacing $(u_\star,\bar u,\Pi)$ with $(v_\star,\bar v,\widetilde\Pi)$ yields the constrained estimator $\widehat v_{\widetilde\Pi}(r)=\widetilde\Pi\,\bar v(r)$.

\begin{proof}
Condition on $r$ and decompose $u_\star=\Pi u_\star+(I-\Pi)u_\star$ into orthogonal components. For any
$\widehat u\in\mathrm{range}(\Pi)$,
$\|\widehat u-u_\star\|_2^2=\|\widehat u-\Pi u_\star\|_2^2+\|(I-\Pi)u_\star\|_2^2$.
The second term is independent of $\widehat u$, so the conditional risk is minimized by
$\widehat u=\mathbb{E}[\Pi u_\star\mid r]=\Pi\bar u$. Uniqueness follows from strict convexity.
\end{proof}

\subsubsection{Proof of Thm.~\ref{thm:optimal_subspace_sec8}}
\label{app:proof_optimal_subspace}

\begin{proof}
Work in the $Q$-whitened coordinates $v:=Q^{1/2}u$ (restricted to $\mathrm{range}(Q)$ if $Q$ is singular). Define $v_\star:=Q^{1/2}u_\star$, $\bar v(r):=Q^{1/2}\bar u(r)$, and $\mathcal I_Q:=\mathrm{Var}(\bar v(r))=Q^{1/2}\mathcal I Q^{1/2}\succeq 0$.
Fix a rank-$k_{\text{budget}}$ Euclidean projector $\widetilde\Pi$ acting on $v$-space and restrict estimators to
$\widehat v(r)\in\mathrm{range}(\widetilde\Pi)$ almost surely. By Lem.~\ref{lem:constrained_bayes_estimator_app} (applied to $(v_\star,\bar v)$),
the unique Bayes-optimal constrained estimator is $\widehat v_{\widetilde\Pi}(r)=\widetilde\Pi\,\bar v(r)$.

Write $v_\star=\bar v+z$ with $z:=v_\star-\bar v$. Then $\mathbb E[z\mid r]=0$ and
$\mathrm{Var}(v_\star)=\mathrm{Var}(\bar v)+\mathbb E[\mathrm{Var}(v_\star\mid r)]
=\mathcal I_Q+\mathbb E[\mathrm{Var}(v_\star\mid r)]$.
Moreover, $\bar v$ and $z$ are uncorrelated (and jointly Gaussian), so the cross term vanishes:
\begin{align*}
\mathcal R(\widetilde\Pi)
&=\mathbb E\bigl[\|\widehat v_{\widetilde\Pi}(r)-v_\star\|_2^2\bigr]
=\mathbb E\bigl[\|\widetilde\Pi\bar v-(\bar v+z)\|_2^2\bigr]\\
&=\mathbb E\bigl[\|(I-\widetilde\Pi)\bar v\|_2^2\bigr]+\mathbb E\bigl[\|z\|_2^2\bigr]\\
&=\mathrm{tr}\!\bigl((I-\widetilde\Pi)\mathcal I_Q\bigr)+\mathrm{tr}\!\bigl(\mathbb E[\mathrm{Var}(v_\star\mid r)]\bigr)\\
&=\mathrm{tr}(\mathrm{Var}(v_\star))-\mathrm{tr}(\widetilde\Pi\mathcal I_Q)\\
&=\mathrm{tr}(Q^{1/2}\Sigma Q^{1/2})-\mathrm{tr}(\widetilde\Pi\mathcal I_Q) \\
&=\mathrm{tr}(Q\Sigma)-\mathrm{tr}(\widetilde\Pi\mathcal I_Q).
\end{align*}

Therefore, minimizing $\mathcal R(\widetilde\Pi)$ over rank-$k_{\text{budget}}$ projectors is equivalent to maximizing
$\mathrm{tr}(\widetilde\Pi\mathcal I_Q)$. Let $\mathcal I_Q=\sum_{i=1}^p \lambda_i v_i v_i^\top$ with
$\lambda_1\ge\cdots\ge\lambda_p\ge 0$. The maximum of $\mathrm{tr}(\widetilde\Pi\mathcal I_Q)$ over rank-$k_{\text{budget}}$
projectors is achieved by taking $\widetilde\Pi$ to project onto $\mathrm{span}\{v_1,\dots,v_{k_{\text{budget}}}\}$.
\end{proof}

Let $J:=J_w(P_n)$ denote the prompt Jacobian (random via random prompts). In this subsection, we take the query metric $Q\succeq 0$ to be global across prompts (e.g., $Q=\mathbb E[j_X j_X^\top]$ under the test-time query distribution). \footnote{If one instead lets $Q$ depend on the prompt, the same derivation holds conditionally, but mapping back to an $u$-space update subspace involves $Q^{-1/2}(P_n)$ and can become prompt-dependent.} Define $\mathcal I(J):=\Sigma J^\top(J\Sigma J^\top+\sigma^2 I_n)^{-1}J\Sigma$, $\mathcal I_Q(J):=Q^{1/2}\mathcal I(J)Q^{1/2}$, and $\mathcal I_{Q,\mathrm{glob}}:=\mathbb E[\mathcal I_Q(J)]$, where the expectation is over the prompt distribution (equivalently over $J$).

\begin{corollary}[Global optimal subspace]
\label{cor:global_optimal_subspace_app}
Any rank-$k_{\text{budget}}$ projector $\widetilde\Pi^\star$ onto the top-$k_{\text{budget}}$ eigenspace of
$\mathcal I_{Q,\mathrm{glob}}$ maximizes $\mathrm{tr}(\widetilde\Pi\,\mathcal I_{Q,\mathrm{glob}})$ and hence minimizes
$\mathbb E[\mathcal R_{J,Q}(\widetilde\Pi)]$ over rank-$k_{\text{budget}}$ projectors, where
$\mathcal R_{J,Q}(\widetilde\Pi)$ is the conditional risk in Thm.~\ref{thm:optimal_subspace_sec8}.
\end{corollary}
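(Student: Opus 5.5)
The plan is to reduce the global statement to the prompt-conditional identity of Thm.~\ref{thm:optimal_subspace_sec8} and then average over prompts. Because $Q$ is taken to be global, a fixed projector $\widetilde\Pi$ on $v$-space is a legitimate design variable that does not depend on $J$. For each realization of the prompt Jacobian $J$, the proof of Thm.~\ref{thm:optimal_subspace_sec8} gives
\[
\mathcal R_{J,Q}(\widetilde\Pi)=\mathrm{tr}(Q\Sigma)-\mathrm{tr}\bigl(\widetilde\Pi\,\mathcal I_Q(J)\bigr).
\]
Here $\widetilde\Pi$ is held fixed and the inner infimum over $\widehat v(\cdot)$ is attained by $\widetilde\Pi\bar v_J(r)$ (Lem.~\ref{lem:constrained_bayes_estimator_app}). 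The first term $\mathrm{tr}(Q\Sigma)$ does not depend on $J$, since $\Sigma$ and $Q$ are global.

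Next, take expectation over the prompt distribution. I will use linearity of trace and expectation, which is justified because $0\preceq\mathcal I_Q(J)\preceq Q^{1/2}\Sigma Q^{1/2}$ is uniformly bounded, so every expectation is finite and Fubini applies. This yields
\[
\mathbb E[\mathcal R_{J,Q}(\widetilde\Pi)]=\mathrm{tr}(Q\Sigma)-\mathrm{tr}(\widetilde\Pi\,\mathcal I_{Q,\mathrm{glob}}).
\]
Minimizing the averaged risk over rank-$k_{\text{budget}}$ projectors is therefore equivalent to maximizing $\mathrm{tr}(\widetilde\Pi\,\mathcal I_{Q,\mathrm{glob}})$. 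Note that $\mathcal I_{Q,\mathrm{glob}}$ is symmetric PSD, being an average of PSD matrices.

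Finally, invoke Ky Fan's maximum principle, the same step used at the end of the proof of Thm.~\ref{thm:optimal_subspace_sec8}. For symmetric $M$ with eigenvalues $\mu_1\ge\cdots$, one has $\max_{\mathrm{rank}\,\widetilde\Pi=k}\mathrm{tr}(\widetilde\Pi M)=\sum_{i\le k}\mu_i$, and the maximum is attained by any projector onto a top-$k$ eigenspace. This holds with ties as well, which is why the statement says ``any''.

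The only delicate point is the bookkeeping when $Q$ is singular: all objects must be restricted to $\mathrm{range}(Q)$, as in the remark on singular $Q$. Because $Q$ is global, this restriction is the same subspace for all $J$, so the averaging step goes through unchanged.
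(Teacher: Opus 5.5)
Your proposal is correct and follows essentially the same route as the paper: apply the identity of Thm.~\ref{thm:optimal_subspace_sec8} conditionally on $J$, average over prompts using linearity of trace and expectation to obtain $\mathbb E[\mathcal R_{J,Q}(\widetilde\Pi)]=\mathrm{tr}(Q\Sigma)-\mathrm{tr}(\widetilde\Pi\,\mathcal I_{Q,\mathrm{glob}})$, and conclude by the top-eigenspace (Ky Fan) maximization. Your extra remarks fill in details the paper leaves implicit: integrability via the uniform bound $0\preceq\mathcal I_Q(J)\preceq Q^{1/2}\Sigma Q^{1/2}$, the handling of ties, and the restriction to $\mathrm{range}(Q)$ when the global $Q$ is singular.
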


\begin{proof}
By Thm.~\ref{thm:optimal_subspace_sec8} applied conditionally on $(J,Q)$,
$\mathcal R_{J,Q}(\widetilde\Pi)=\mathrm{tr}(Q\Sigma)-\mathrm{tr}(\widetilde\Pi\,\mathcal I_Q(J))$.
Taking expectation over random prompts (equivalently over $J$) yields
$\mathbb E[\mathcal R_{J,Q}(\widetilde\Pi)]=\mathbb E[\mathrm{tr}(Q\Sigma)]-\mathrm{tr}(\widetilde\Pi\,\mathcal I_{Q,\mathrm{glob}})$.
Thus minimizing $\mathbb E[\mathcal R_{J,Q}(\widetilde\Pi)]$ is equivalent to maximizing
$\mathrm{tr}(\widetilde\Pi\,\mathcal I_{Q,\mathrm{glob}})$, achieved by the top-$k_{\text{budget}}$ eigenspace of
$\mathcal I_{Q,\mathrm{glob}}$.
\end{proof}

\subsubsection{Finite-sample plug-in subspace selection and a regret bound}
\label{app:finite_sample_subspace}

Suppose we estimate $\mathcal I_{Q,\mathrm{glob}}$ from $m$ prompts, yielding $\mathcal I_Q(J_1),\dots,\mathcal I_Q(J_m)$, and define
\[
\widehat{\mathcal I}_Q:=\frac{1}{m}\sum_{j=1}^m \mathcal I_Q(J_j).
\]
Let $\widehat{\widetilde\Pi}$ be the rank-$k_{\text{budget}}$ projector onto the top-$k_{\text{budget}}$ eigenspace of
$\widehat{\mathcal I}_Q$, and let $\widetilde\Pi^\star$ be the analogous projector for $\mathcal I_{Q,\mathrm{glob}}$.

\begin{assumption}[Matrix-martingale concentration conditions for $\widehat{\mathcal I}_Q$]
\label{assump:IQ_freedman_app}
Let $\mathcal F_j:=\sigma(J_1,\dots,J_j)$ and define the centered increments $X_j:=\mathcal I_Q(J_j)-\mathcal I_{Q,\mathrm{glob}} \in \mathbb R^{d_Q\times d_Q}$.
Assume:
\begin{enumerate}[label=\textbf{(A\arabic*)}, leftmargin=*, itemsep=0pt, parsep=0pt, topsep=0pt, partopsep=0pt, before=\vspace{-0.4\baselineskip}]
\item ({Martingale-difference mean.})
$\mathbb E[X_j\mid \mathcal F_{j-1}]=0$ a.s.\ for all $j$.
(In particular, this holds when $J_j$ are i.i.d.)
\item ({Uniform boundedness.})
There exists $R>0$ such that $\|X_j\|_{\mathrm{op}}\le R$ a.s.\ for all $j$.
\item ({Predictable quadratic variation bound.})
Let
$
V_m:=\sum_{j=1}^m \mathbb E[X_j^2\mid \mathcal F_{j-1}].
$
Assume $\|V_m\|_{\mathrm{op}}\le \nu$ a.s.\ for some $\nu>0$.
\end{enumerate}
\end{assumption}

\begin{theorem}[Subspace selection regret via matrix Bernstein/Freedman]
\label{thm:subspace_regret_app}
Under Assump.~\ref{assump:IQ_freedman_app}, for any $\delta\in(0,1)$,
with probability at least $1-\delta$ over the $m$ prompts used to form $\widehat{\mathcal I}_Q$,
\begin{equation}
\label{eq:app_IQ_conc_eps}
\bigl\|\widehat{\mathcal I}_Q-\mathcal I_{Q,\mathrm{glob}}\bigr\|_{\mathrm{op}}
\ \le\
\varepsilon_{m,\delta},
\end{equation}
where $\varepsilon_{m,\delta} :=
\frac{1}{m}(
\sqrt{2\nu\log\frac{2d_Q}{\delta}}
+
\frac{2R}{3}\log\frac{2d_Q}{\delta}
)$.
Consequently, on the same event,
\begin{equation}
\label{eq:app_regret_highprob}
\mathbb{E}[\mathcal R_{J,Q}(\widehat{\widetilde\Pi})]
-
\mathbb{E}[\mathcal R_{J,Q}(\widetilde\Pi^\star)]
\ \le\
2k_{\text{budget}}\,
\varepsilon_{m,\delta}.
\end{equation}
\end{theorem}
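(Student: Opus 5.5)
The argument has two parts. First I prove the concentration bound \eqref{eq:app_IQ_conc_eps} with a matrix Freedman inequality. Then I turn operator-norm closeness into a regret bound using the risk identity of Thm.~\ref{thm:optimal_subspace_sec8} together with Cor.~\ref{cor:global_optimal_subspace_app}.

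\textbf{Concentration.} Write $S_m:=\sum_{j=1}^m X_j=m(\widehat{\mathcal I}_Q-\mathcal I_{Q,\mathrm{glob}})$. By (A1), $S_m$ is a martingale of symmetric $d_Q\times d_Q$ matrices with respect to $\{\mathcal F_j\}$. By (A2) its increments satisfy $\|X_j\|_{\mathrm{op}}\le R$, and by (A3) its predictable quadratic variation satisfies $\|V_m\|_{\mathrm{op}}\le\nu$ almost surely. Tropp's matrix Freedman inequality, applied to both $S_m$ and $-S_m$, gives
\[
\mathbb P\bigl(\|S_m\|_{\mathrm{op}}\ge t\bigr)\le 2d_Q\exp\!\Bigl(-\frac{t^2/2}{\nu+Rt/3}\Bigr).
\]
I set the right-hand side equal to $\delta$, write $L:=\log(2d_Q/\delta)$, and solve the quadratic $t^2=2\nu L+\tfrac{2}{3}RLt$. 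Using $\sqrt{a+b}\le\sqrt a+\sqrt b$, a sufficient threshold is $t=\sqrt{2\nu L}+\tfrac{2R}{3}L$. Dividing by $m$ gives exactly $\varepsilon_{m,\delta}$.

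\textbf{Regret.} By Cor.~\ref{cor:global_optimal_subspace_app},
\[
\mathbb E[\mathcal R_{J,Q}(\widetilde\Pi)]=\mathrm{const}-\mathrm{tr}(\widetilde\Pi\,\mathcal I_{Q,\mathrm{glob}}).
\]
One point needs care here: $\widehat{\widetilde\Pi}$ is random, but the expectation is taken over a fresh prompt independent of the $m$ estimation prompts. So I condition on $\widehat{\widetilde\Pi}$ and treat it as fixed. Write $\mathcal I:=\mathcal I_{Q,\mathrm{glob}}$ and $\widehat{\mathcal I}:=\widehat{\mathcal I}_Q$ for short. The regret equals $\mathrm{tr}(\widetilde\Pi^\star\mathcal I)-\mathrm{tr}(\widehat{\widetilde\Pi}\mathcal I)$, and I split it into three pieces:
\[
\bigl[\mathrm{tr}(\widetilde\Pi^\star\mathcal I)-\mathrm{tr}(\widetilde\Pi^\star\widehat{\mathcal I})\bigr]
+\bigl[\mathrm{tr}(\widetilde\Pi^\star\widehat{\mathcal I})-\mathrm{tr}(\widehat{\widetilde\Pi}\widehat{\mathcal I})\bigr]
+\bigl[\mathrm{tr}(\widehat{\widetilde\Pi}\widehat{\mathcal I})-\mathrm{tr}(\widehat{\widetilde\Pi}\mathcal I)\bigr].
\]
The middle bracket is $\le 0$, because $\widehat{\widetilde\Pi}$ maximizes $\mathrm{tr}(\cdot\,\widehat{\mathcal I})$ over rank-$k_{\text{budget}}$ projectors; this is the Ky Fan argument from the proof of Thm.~\ref{thm:optimal_subspace_sec8}. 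For the outer brackets I use that, for any rank-$k$ projector $P$ and symmetric $E$, $|\mathrm{tr}(PE)|\le\|P\|_{\ast}\|E\|_{\mathrm{op}}=k\|E\|_{\mathrm{op}}$. Each outer bracket is therefore at most $k_{\text{budget}}\varepsilon_{m,\delta}$ on the concentration event, and the total is $2k_{\text{budget}}\varepsilon_{m,\delta}$.

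The main obstacle is stating the matrix Freedman inequality with constants that match $\varepsilon_{m,\delta}$, including the dimension factor $2d_Q$ that comes from the two-sided bound. The requirement that $\widehat{\widetilde\Pi}$ be independent of the evaluation prompt should also be made explicit, since the regret expectation is only meaningful conditionally on the estimation sample.
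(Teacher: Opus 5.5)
Your proposal is correct and follows the paper's proof essentially step for step: matrix Freedman applied to $S_m=m(\widehat{\mathcal I}_Q-\mathcal I_{Q,\mathrm{glob}})$ with the same threshold $t$, then the optimality of $\widehat{\widetilde\Pi}$ for $\mathrm{tr}(\cdot\,\widehat{\mathcal I}_Q)$, the bound $|\mathrm{tr}(\widetilde\Pi E)|\le k_{\text{budget}}\|E\|_{\mathrm{op}}$, and the global-risk identity of Cor.~\ref{cor:global_optimal_subspace_app}. Two points you spell out are left implicit in the paper and are harmless clarifications: why the chosen $t$ dominates the root of the quadratic, and why $\widehat{\widetilde\Pi}$ can be treated as fixed relative to the evaluation prompt.
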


\begin{proof}
Let $S_m:=\sum_{j=1}^m X_j = m(\widehat{\mathcal I}_Q-\mathcal I_{Q,\mathrm{glob}})$.
Under Assump.~\ref{assump:IQ_freedman_app} (A1)--(A3), the self-adjoint matrix Freedman inequality~\citep{Tropp2012MatrixBernstein} states
that for all $t\ge 0$,
\[
\mathbb P\!\left(\|S_m\|_{\mathrm{op}}\ge t\right)
\ \le\
2d_Q\cdot \exp\!\left(
-\frac{t^2}{2\nu+\frac{2Rt}{3}}
\right).
\]
Setting
$
t=\sqrt{2\nu\log\frac{2d_Q}{\delta}}+\frac{2R}{3}\log\frac{2d_Q}{\delta}
$
yields $\mathbb P(\|S_m\|_{\mathrm{op}}\ge t)\le \delta$.
Dividing by $m$ gives \eqref{eq:app_IQ_conc_eps}.

On the event $\|\widehat{\mathcal I}_Q-\mathcal I_{Q,\mathrm{glob}}\|_{\mathrm{op}}\le \varepsilon_{m,\delta}$,
write $\widehat{\mathcal I}_Q=\mathcal I_{Q,\mathrm{glob}}+E$ with $\|E\|_{\mathrm{op}}\le \varepsilon_{m,\delta}$.
Since $\widehat{\widetilde\Pi}$ maximizes $\mathrm{tr}(\widetilde\Pi\,\widehat{\mathcal I}_Q)$ over rank-$k_{\text{budget}}$
projectors,
\[
\mathrm{tr}(\widehat{\widetilde\Pi}\,\widehat{\mathcal I}_Q)
\ge
\mathrm{tr}(\widetilde\Pi^\star\,\widehat{\mathcal I}_Q).
\]
Expanding $\widehat{\mathcal I}_Q=\mathcal I_{Q,\mathrm{glob}}+E$ gives
\[
\mathrm{tr}(\widetilde\Pi^\star\,\mathcal I_{Q,\mathrm{glob}})
-
\mathrm{tr}(\widehat{\widetilde\Pi}\,\mathcal I_{Q,\mathrm{glob}})
\le
\mathrm{tr}(\widetilde\Pi^\star E)-\mathrm{tr}(\widehat{\widetilde\Pi}E).
\]
For any rank-$k_{\text{budget}}$ projector $\widetilde\Pi$,
$|\mathrm{tr}(\widetilde\Pi E)|\le k_{\text{budget}}\|E\|_{\mathrm{op}}$,
so the RHS is at most $2k_{\text{budget}}\varepsilon_{m,\delta}$.
Finally, by the global-risk identity
$\mathbb E[\mathcal R_{J,Q}(\widetilde\Pi)]
=\text{const}-\mathrm{tr}(\widetilde\Pi\,\mathcal I_{Q,\mathrm{glob}})$
(cf.\ Cor.~\ref{cor:global_optimal_subspace_app}),
this implies \eqref{eq:app_regret_highprob}.
\end{proof}

\begin{remark}[i.i.d.\ prompts]
If $J_1,\dots,J_m$ are i.i.d., then (A1) holds. One may take
$\nu = m\,\bigl\|\mathbb E[X_1^2]\bigr\|_{\mathrm{op}}$
and \eqref{eq:app_IQ_conc_eps} becomes
\[
\bigl\|\widehat{\mathcal I}_Q-\mathcal I_{Q,\mathrm{glob}}\bigr\|_{\mathrm{op}}
\ \le\
\sqrt{\frac{2\bigl\|\mathbb E[X_1^2]\bigr\|_{\mathrm{op}}\log\frac{2d_Q}{\delta}}{m}}
+
\frac{2R}{3m}\log\frac{2d_Q}{\delta}.
\]
\end{remark}

\begin{takeawaybox}[title=\textbf{Summary: Subspace design can be learned globally}]
\begin{itemize}[leftmargin=*, itemsep=0pt, topsep=0pt]
\item \textbf{Global mechanism:} average $\mathcal I_Q$ over prompts, take top eigenspace (App.~\ref{app:subspace}).
\item \textbf{Finite-sample stability:} plug-in top-eigenspace has a concentration and regret bound.
\end{itemize}
\end{takeawaybox}

\section{Validity \& Robustness}

\subsection{Finite-step validity of the local linearization}
\label{app:linearization_error}

In the main text, starting in Sec.~\ref{sec:local_model}, we suppose the locally linearized (kernel/NTK) regime. This appendix analyzes how accurately the linearized model represents the actual nonlinear gradient updates used in prompt-only TTT. We show that this linear model is a reliable approximation in Theorem~\ref{thm:finite_step_deviation_prompt_app}. As long as the local conditions are smooth and the step size is moderate, the difference between the true nonlinear updates and the linear version remains small for a few steps ($T$).

\subsubsection{Nonlinear prompt dynamics and linearized dynamics}
\label{app:nonlinear_vs_linear}

Given a prompt $P_n$, we define $s_w(P_n) \in \mathbb{R}^n$ as the vector of predictions produced by the model at weights $w$ for the $n$ supervision points.
Let $w_0$ be pretrained weights and $r:=y-s_{w_0}(P_n)\in\mathbb{R}^n$ the base residuals. We restrict test-time updates to a $d$-dimensional subspace via $w(\theta)=w_0+A\theta$ (Assumption~\ref{assump:update_subspace_sec3}).

Define the in-prompt prediction shift $s(\theta) := s_{w(\theta)}(P_n)-s_{w_0}(P_n)\in\mathbb{R}^n$ with $s(0)=0$. Consider the squared prompt loss
\begin{equation} 
    L(\theta):=\frac{1}{2\sigma^2}\|r-s(\theta)\|_2^2.
    \label{eq:app_prompt_loss_nonlinear}
\end{equation}
Let the Jacobian in update parameters be $J(\theta):=\nabla_\theta s(\theta)\in\mathbb{R}^{n\times d}$ and the fixed Jacobian be $\Phi:=J(0)\in\mathbb{R}^{n\times d}$.

\paragraph{Nonlinear TTT (true GD).}
Gradient descent with step size $\eta>0$ on \eqref{eq:app_prompt_loss_nonlinear} is
\begin{equation}
    \theta_{t+1} 
    =\theta_t-\eta\nabla L(\theta_t) 
    =\theta_t+\rho\,J(\theta_t)^\top\bigl(r-s(\theta_t)\bigr),
\label{eq:app_true_gd}
\end{equation}
where $\theta_0=0$ and $\rho:=\eta/\sigma^2$.
Define the induced prompt-space correction and residual $f_t:=s(\theta_t)\in\mathbb{R}^n$, and $e_t:=r-f_t=r-s(\theta_t)\in\mathbb{R}^n$.

\paragraph{Linearized TTT (fixed Jacobian).}
The linearized model replaces $s(\theta)$ by $\Phi\theta$. The corresponding GD iterates satisfy
\begin{equation}
    \theta_{t+1}^{\mathrm{lin}} 
    =\theta_t^{\mathrm{lin}}+\rho\,\Phi^\top\bigl(r-\Phi\theta_t^{\mathrm{lin}}\bigr), 
    \quad
    \theta_0^{\mathrm{lin}}=0,
    \label{eq:app_lin_gd}
\end{equation}
with prompt-space correction $f_t^{\mathrm{lin}}:=\Phi\theta_t^{\mathrm{lin}}\in\mathbb{R}^n$ and residual $e_t^{\mathrm{lin}}:=r-f_t^{\mathrm{lin}}$.
Let the (prompt-dependent) restricted kernel at initialization be $K_0:=\Phi\Phi^\top\succeq 0$. Then, $f_t^{\mathrm{lin}}$ satisfies the closed recursion
\begin{equation}
f_{t+1}^{\mathrm{lin}}=f_t^{\mathrm{lin}}+\rho K_0\,(r-f_t^{\mathrm{lin}})
=f_t^{\mathrm{lin}}+\rho K_0\,e_t^{\mathrm{lin}},
\label{eq:app_lin_kernel_recursion}
\end{equation}
and hence $f_T^{\mathrm{lin}}=\bigl(I_n-(I_n-\rho K_0)^T\bigr)r$.

\subsubsection{Finite-step deviation bound }
\label{app:finite_step_prompt}

We impose standard regularity assumptions in a neighborhood of $\theta=0$.

\begin{assumption}[Bounded and Lipschitz Jacobian]
\label{assump:JL_local_app}
There exist $R>0$ and constants $B,L>0$ such that for all $\theta,\theta'\in\mathbb{R}^d$ with
$\|\theta\|_2\le R$ and $\|\theta'\|_2\le R$,
\begin{align}
\|J(\theta)\|_{\mathrm{op}} &\le B,
\label{eq:app_J_bounded}\\
\|J(\theta)-J(\theta')\|_{\mathrm{op}} &\le L\,\|\theta-\theta'\|_2.
\label{eq:app_J_Lipschitz}
\end{align}
\end{assumption}

Assumption~\ref{assump:JL_local_app} implies a second-order Taylor remainder bound.

\begin{lemma}[Second-order remainder bound]
\label{lem:second_order_remainder_app}
Under Assumption~\ref{assump:JL_local_app}, for any $\theta,u\in\mathbb{R}^d$ such that $\|\theta\|_2\le R$ and $\|\theta+u\|_2\le R$, $\|s(\theta+u)-s(\theta)-J(\theta)u\|_2 \ \le\ \frac{L}{2}\,\|u\|_2^2$ holds.
\end{lemma}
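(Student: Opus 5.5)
The plan is to obtain the bound from the integral form of Taylor's theorem applied to the map $s$ along the segment from $\theta$ to $\theta+u$. First I would check that the whole segment $\{\theta+\xi u:\xi\in[0,1]\}$ lies in the ball $\{\|\vartheta\|_2\le R\}$. This follows from convexity of the Euclidean ball, since both endpoints $\theta$ and $\theta+u$ have norm at most $R$ by hypothesis. As a result, the bounds \eqref{eq:app_J_bounded}--\eqref{eq:app_J_Lipschitz} of Assumption~\ref{assump:JL_local_app} are available at every point of the segment.

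Next I would write the fundamental theorem of calculus for the vector-valued function $\xi\mapsto s(\theta+\xi u)$. Its derivative is $J(\theta+\xi u)u$, because $J$ is the Jacobian of $s$. This gives
\[
s(\theta+u)-s(\theta)=\int_0^1 J(\theta+\xi u)\,u\,d\xi .
\]
Using $\int_0^1 d\xi=1$, I would subtract the linear term $J(\theta)u$ inside the integral:
\[
s(\theta+u)-s(\theta)-J(\theta)u=\int_0^1 \bigl(J(\theta+\xi u)-J(\theta)\bigr)u\,d\xi .
\]

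Taking Euclidean norms and applying the triangle inequality for Bochner/Riemann integrals, the left-hand side is bounded by $\int_0^1\|J(\theta+\xi u)-J(\theta)\|_{\mathrm{op}}\|u\|_2\,d\xi$. The Lipschitz condition \eqref{eq:app_J_Lipschitz}, applied with $\theta'=\theta+\xi u$, bounds the integrand by $L\xi\|u\|_2^2$. Integrating over $\xi\in[0,1]$ yields $\tfrac{L}{2}\|u\|_2^2$, which is the claim. The boundedness condition \eqref{eq:app_J_bounded} is not needed here.

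The only delicate point is regularity. The integral representation requires $\xi\mapsto J(\theta+\xi u)$ to be integrable along the segment, and this is guaranteed because the Lipschitz condition makes $J$ continuous on the ball. It also requires $s$ to be differentiable with Jacobian $J$ on the ball, which is implicit in the definition $J(\theta)=\nabla_\theta s(\theta)$. Given these two facts, the remaining steps are routine.
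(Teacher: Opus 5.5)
Your proposal is correct and follows the paper's proof essentially step for step: the integral form of Taylor's theorem along the segment, subtraction of $J(\theta)u$ inside the integral, and the Lipschitz bound $L\xi\|u\|_2^2$ integrated over $[0,1]$ to give $\tfrac{L}{2}\|u\|_2^2$. Your explicit convexity check that the segment stays in the ball, and your regularity remark, fill in details the paper leaves implicit.
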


\begin{proof}
By the integral form of Taylor's theorem, $s(\theta+u)-s(\theta)=\int_0^1 J(\theta+t u)\,u\,dt$. Hence, $s(\theta+u)-s(\theta)-J(\theta)u
=\int_0^1 (J(\theta+t u)-J(\theta))u\,dt$, and \eqref{eq:app_J_Lipschitz} yields
$\|s(\theta+u)-s(\theta)-J(\theta)u\|_2\le \int_0^1 Lt\|u\|_2^2\,dt=\frac{L}{2}\|u\|_2^2$.
\end{proof}

Suppose the step-size regime
\begin{equation}
    0<\rho\le \frac{1}{B^2},
\label{eq:app_rho_small}
\end{equation}
which ensures $\|I_n-\rho K_t\|_{\mathrm{op}}\le 1$ for all $t$ as long as $\|J(\theta_t)\|_{\mathrm{op}}\le B$, where $K_t:=J(\theta_t)J(\theta_t)^\top$.

\begin{assumption}[Iterate stability]
\label{assump:iterate_stability_app}
For a given $T\in\mathbb{N}$, the nonlinear GD iterates \eqref{eq:app_true_gd} satisfy
$\|\theta_t\|_2\le R$ for all $t=0,1,\dots,T$.
\end{assumption}

Define the time-varying kernel and the linearized kernel $K_t:=J(\theta_t)J(\theta_t)^\top\succeq 0$, and $K_0:=J(0)J(0)^\top=\Phi\Phi^\top$. We first derive a function-space recursion for the nonlinear dynamics.

\begin{lemma}[Nonlinear prompt-space recursion]
\label{lem:nonlinear_prompt_recursion_app}
Under Assumption~\ref{assump:JL_local_app}, if $\|\theta_t\|_2\le R$ and $\|\theta_{t+1}\|_2\le R$, then $f_{t+1}=f_t+\rho K_t e_t + \xi_t$, where the remainder term $\xi_t\in\mathbb{R}^n$ satisfies $\|\xi_t\|_2 \ \le\ \frac{L}{2}\,\|\theta_{t+1}-\theta_t\|_2^2$.
\end{lemma}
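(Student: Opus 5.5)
The plan is to write the exact nonlinear increment of $f_t=s(\theta_t)$ as its first-order term plus a Taylor remainder, and then identify that first-order term using the gradient step. The remainder will be controlled by Lem.~\ref{lem:second_order_remainder_app}.

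Set $u:=\theta_{t+1}-\theta_t$. By hypothesis both $\theta_t$ and $\theta_t+u=\theta_{t+1}$ lie in the ball of radius $R$. This is exactly the condition needed to apply Lem.~\ref{lem:second_order_remainder_app} at base point $\theta=\theta_t$.

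\textbf{Step 1 (split the increment).} Write
\[
f_{t+1}-f_t=s(\theta_{t+1})-s(\theta_t)=J(\theta_t)u+\xi_t,
\qquad
\xi_t:=s(\theta_t+u)-s(\theta_t)-J(\theta_t)u .
\]
Lem.~\ref{lem:second_order_remainder_app} then gives $\|\xi_t\|_2\le \tfrac{L}{2}\|u\|_2^2=\tfrac{L}{2}\|\theta_{t+1}-\theta_t\|_2^2$ directly.

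\textbf{Step 2 (identify the linear term).} Substitute the nonlinear GD step \eqref{eq:app_true_gd}, which says $u=\rho\,J(\theta_t)^\top(r-s(\theta_t))=\rho\,J(\theta_t)^\top e_t$. This gives
\[
J(\theta_t)u=\rho\,J(\theta_t)J(\theta_t)^\top e_t=\rho K_t e_t .
\]
Combining the two steps yields $f_{t+1}=f_t+\rho K_t e_t+\xi_t$ with the stated remainder bound.

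There is no real obstacle. The only point needing care is that both endpoints of the segment lie in the region where Assumption~\ref{assump:JL_local_app} applies. This holds because the ball is convex, so the whole segment $\theta_t+\tau u$ for $\tau\in[0,1]$ lies inside it, as the integral form in Lem.~\ref{lem:second_order_remainder_app} requires. The bound $\|J\|_{\mathrm{op}}\le B$ is not needed here; it only enters later, when this recursion is compared with \eqref{eq:app_lin_kernel_recursion}.
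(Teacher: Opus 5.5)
Your proposal is correct and matches the paper's proof essentially line for line. Both set $u=\theta_{t+1}-\theta_t=\rho J(\theta_t)^\top e_t$, apply Lem.~\ref{lem:second_order_remainder_app} at base point $\theta_t$ to obtain the remainder bound, and identify $J(\theta_t)u=\rho K_t e_t$; your remark that convexity of the ball keeps the whole segment where the Lipschitz bound holds is a detail the paper leaves implicit.
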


\begin{proof}
Let $\Delta\theta_t:=\theta_{t+1}-\theta_t=\rho J(\theta_t)^\top e_t$ by \eqref{eq:app_true_gd}. Apply Lem.~\ref{lem:second_order_remainder_app} with $\theta=\theta_t$ and $u=\Delta\theta_t$:
\[
    f_{t+1}-f_t=s(\theta_t+\Delta\theta_t)-s(\theta_t)=J(\theta_t)\Delta\theta_t+\xi_t,
\]
where $\|\xi_t\|_2\le \frac{L}{2}\|\Delta\theta_t\|_2^2.$ Since $J(\theta_t)\Delta\theta_t=\rho J(\theta_t)J(\theta_t)^\top e_t=\rho K_t e_t$, we obtain the statement.
\end{proof}

We now bound the deviation $f_T-f_T^{\mathrm{lin}}$.

\begin{theorem}[Finite-step deviation between nonlinear and linearized prompt corrections]\label{thm:finite_step_deviation_prompt_app}
Fix a $T\in\mathbb{N}$. Assume Assumptions~\ref{assump:JL_local_app} and \ref{assump:iterate_stability_app},
and the step-size condition \eqref{eq:app_rho_small}. Let $f_T=s(\theta_T)$ be the nonlinear prompt correction after
$T$ steps of \eqref{eq:app_true_gd}, and let $f_T^{\mathrm{lin}}=\Phi\theta_T^{\mathrm{lin}}$ be the linearized prompt
correction after $T$ steps of \eqref{eq:app_lin_gd}. Define the residual scale inside the stability ball $E_R:=\sup_{\|\theta\|_2\le R}\|r-s(\theta)\|_2.$
Then
\begin{align}
\|f_T-f_T^{\mathrm{lin}}\|_2 &\le \frac{3}{2}\,L\,B^2\,\rho^2\,T^2\, \bigl(\max\{E_R,\|r\|_2\}\bigr)^2 \\
&\le \frac{3}{2}\,L\,B^2\,\rho^2\,T^2\,E_R^2.
\label{eq:app_finite_step_deviation_bound}
\end{align}
In particular, for fixed local regularity constants $(B,L)$ and residual scale $E_R$, the discrepancy is
$O\bigl((\rho T)^2\bigr)$.
\end{theorem}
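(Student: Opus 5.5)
We compare the two prompt-space recursions directly. By Lem.~\ref{lem:nonlinear_prompt_recursion_app}, which applies because Assumption~\ref{assump:iterate_stability_app} keeps every iterate in the ball, the nonlinear dynamics satisfy $e_{t+1}=(I_n-\rho K_t)e_t-\xi_t$. By \eqref{eq:app_lin_kernel_recursion}, the linear dynamics satisfy $e^{\mathrm{lin}}_{t+1}=(I_n-\rho K_0)e^{\mathrm{lin}}_t$. Both start from $e_0=e_0^{\mathrm{lin}}=r$. Set $d_t:=e_t-e_t^{\mathrm{lin}}$, so that $f_T-f_T^{\mathrm{lin}}=-d_T$. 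Subtracting the two recursions gives
\begin{equation}
d_{t+1}=(I_n-\rho K_0)\,d_t-\rho\,(K_t-K_0)\,e_t-\xi_t,\qquad d_0=0 .
\end{equation}
Under \eqref{eq:app_rho_small} we have $\|I_n-\rho K_0\|_{\mathrm{op}}\le 1$. Unrolling and applying the triangle inequality therefore yields
\begin{equation}
\|d_T\|_2\le\sum_{t=0}^{T-1}\Bigl(\rho\|K_t-K_0\|_{\mathrm{op}}\|e_t\|_2+\|\xi_t\|_2\Bigr).
\end{equation}
The coefficient $\tfrac32$ in the statement suggests that the kernel-drift term contributes a factor of $1$ and the Taylor remainder a factor of $\tfrac12$.

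Next we bound each term using Assumption~\ref{assump:JL_local_app}. The GD step is $\Delta\theta_t=\rho J(\theta_t)^\top e_t$, so $\|\Delta\theta_t\|_2\le\rho B E_R$, since $\|e_t\|_2\le E_R$ inside the ball. Summing the steps gives $\|\theta_t\|_2\le \rho B E_R\, t$. For the kernel drift, write
\begin{equation}
K_t-K_0=J(\theta_t)\bigl(J(\theta_t)-J(0)\bigr)^\top+\bigl(J(\theta_t)-J(0)\bigr)J(0)^\top,
\end{equation}
which gives $\|K_t-K_0\|_{\mathrm{op}}\le 2BL\|\theta_t\|_2\le 2\rho B^2LE_R\,t$. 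Lem.~\ref{lem:nonlinear_prompt_recursion_app} gives $\|\xi_t\|_2\le\frac L2\rho^2B^2E_R^2$. Substituting these bounds, the drift sum is at most $2\rho^2B^2LE_R^2\sum_{t<T}t=\rho^2B^2LE_R^2\,T(T-1)$. The remainder sum is at most $\frac L2\rho^2B^2E_R^2\,T$. Their total is at most $\tfrac32 LB^2\rho^2T^2E_R^2$, because $T(T-1)+T/2\le \tfrac32T^2$.

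The $\max\{E_R,\|r\|_2\}$ version comes from using $\|e_t\|_2\le\max\{E_R,\|r\|_2\}$. Since $\theta_0=0$ lies in the ball, $\|r\|_2=\|e_0\|_2\le E_R$, so the maximum equals $E_R$ and the two forms of the bound coincide.

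The main obstacle is the drift term $\rho\|K_t-K_0\|_{\mathrm{op}}\|e_t\|_2$. Its sum is what produces the $T^2$ growth, and the constants have to be tracked carefully to land on $\tfrac32$ rather than a larger factor. If the crude bound $\|K_t-K_0\|_{\mathrm{op}}\le 2BL\|\theta_t\|_2$ together with $\sum_{t<T} t\le T^2/2$ turns out not to suffice, I would sharpen $\sum_t\|\theta_t\|_2$, or absorb the remainder sum into the slack $T(T-1)\le T^2-T$. Beyond that, the only care needed is to confirm that the lemmas are always invoked with both $\theta_t$ and $\theta_{t+1}$ inside the ball, which Assumption~\ref{assump:iterate_stability_app} guarantees for all $t\le T$.
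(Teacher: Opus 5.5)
Your proof is correct and follows the paper's argument: a discrete comparison of the two prompt-space recursions, with the same kernel-drift bound $\|K_t-K_0\|_{\mathrm{op}}\le 2LB^2\rho\,t\,E_R$, the same Taylor-remainder bound, and the same summation $T(T-1)+T/2\le\tfrac32T^2$. The only difference is how the cross term is split: the paper propagates with $I_n-\rho K_t$ and puts $e_t^{\mathrm{lin}}$ (bounded by $\|r\|_2$) in the drift term, which is where $\max\{E_R,\|r\|_2\}$ comes from; you propagate with $I_n-\rho K_0$ and use the nonlinear residual $e_t$ (bounded by $E_R$), and you correctly note that $\|r\|_2=\|e_0\|_2\le E_R$, so the two forms of the bound coincide.
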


\begin{proof}
Define the discrepancy $\delta_t:=f_t-f_t^{\mathrm{lin}}\in\mathbb{R}^n$. By
Lem.~\ref{lem:nonlinear_prompt_recursion_app} and \eqref{eq:app_lin_kernel_recursion},
\begin{align*}
\delta_{t+1}
&=
\bigl(f_t+\rho K_t e_t+\xi_t\bigr)-\bigl(f_t^{\mathrm{lin}}+\rho K_0 e_t^{\mathrm{lin}}\bigr) \\
&=
\delta_t+\rho K_t(r-f_t)-\rho K_0(r-f_t^{\mathrm{lin}})+\xi_t \\
&=
\delta_t+\rho K_t(e_t^{\mathrm{lin}}-\delta_t)-\rho K_0 e_t^{\mathrm{lin}}+\xi_t \\
&=
(I_n-\rho K_t)\delta_t + \rho (K_t-K_0)e_t^{\mathrm{lin}}+\xi_t.
\end{align*}
Taking norms and using $\|I_n-\rho K_t\|_{\mathrm{op}}\le 1$ (from \eqref{eq:app_rho_small} and $\|J(\theta_t)\|_{\mathrm{op}}\le B$),
\begin{equation}
\|\delta_{t+1}\|_2
\le
\|\delta_t\|_2 + \rho\|K_t-K_0\|_{\mathrm{op}}\|e_t^{\mathrm{lin}}\|_2+\|\xi_t\|_2.
\label{eq:app_delta_step}
\end{equation}

Under \eqref{eq:app_rho_small}, $0\preceq I_n-\rho K_0\preceq I_n$, so
$e_t^{\mathrm{lin}}=(I_n-\rho K_0)^t r$ implies $\|e_t^{\mathrm{lin}}\|_2\le \|r\|_2$ for all $t$.

Write $J_t:=J(\theta_t)$. Then, $K_t-K_0=J_tJ_t^\top-J_0J_0^\top=(J_t-J_0)J_t^\top + J_0(J_t-J_0)^\top,$
so
\(
\|K_t-K_0\|_{\mathrm{op}}
\le \|J_t-J_0\|_{\mathrm{op}}\|J_t\|_{\mathrm{op}}+\|J_0\|_{\mathrm{op}}\|J_t-J_0\|_{\mathrm{op}}
\le 2B\,\|J_t-J_0\|_{\mathrm{op}}.
\)
By \eqref{eq:app_J_Lipschitz}, $\|J_t-J_0\|_{\mathrm{op}}\le L\|\theta_t\|_2$. Moreover, from \eqref{eq:app_true_gd} and
\eqref{eq:app_J_bounded}, we have
\begin{align*}
   \|\theta_{t+1}-\theta_t\|_2 
   &=\rho\|J(\theta_t)^\top e_t\|_2  \\ 
   &\le \rho\,\|J(\theta_t)\|_{\mathrm{op}}\,\|e_t\|_2 \\
   &\le \rho\,B\,E_R,
\end{align*}
since $\|e_t\|_2=\|r-s(\theta_t)\|_2\le E_R$ for $\|\theta_t\|_2\le R$.
Thus $\|\theta_t\|_2\le t\,\rho B E_R$, and hence
\begin{equation}
    \|K_t-K_0\|_{\mathrm{op}} \le 2B  L\|\theta_t\|_2 \le 2LB^2\,\rho\,t\,E_R.
\label{eq:app_K_drift}
\end{equation}

From Lem.~\ref{lem:nonlinear_prompt_recursion_app} and the bound on $\|\theta_{t+1}-\theta_t\|_2$,
\begin{equation}
     \|\xi_t\|_2
     \le \frac{L}{2}\,(\rho B E_R)^2
     =\frac{L}{2}\,B^2\rho^2 E_R^2.
\label{eq:app_xi_final}
\end{equation}

Plug \eqref{eq:app_K_drift}, $\|e_t^{\mathrm{lin}}\|_2\le \|r\|_2$, and \eqref{eq:app_xi_final} into \eqref{eq:app_delta_step}:
\[
\|\delta_{t+1}\|_2
\le
\|\delta_t\|_2
+
\rho  (2LB^2\rho t E_R)  \|r\|_2
+
\frac{L}{2}B^2\rho^2 E_R^2.
\]
Let $E:=\max\{E_R,\|r\|_2\}$. Then $\|r\|_2\le E$ and $E_R\le E$, so
\[
\|\delta_{t+1}\|_2
\le
\|\delta_t\|_2
+
2LB^2\rho^2\,t\,E^2
+
\frac{L}{2}B^2\rho^2 E^2.
\]
Since $\delta_0=0$, summing from $t=0$ to $T-1$ gives
\begin{align*}
    \|\delta_T\|_2 &\le 2LB^2\rho^2 E^2 \sum_{t=0}^{T-1} t + \frac{L}{2}B^2\rho^2 E^2 \sum_{t=0}^{T-1} 1 \\
    &= LB^2\rho^2 E^2 T(T-1) + \frac{L}{2}B^2\rho^2 E^2 T \\
    &\le \frac{3}{2}LB^2\rho^2 E^2 T^2,
\end{align*}
which is \eqref{eq:app_finite_step_deviation_bound}.
\end{proof}

\begin{corollary}[Kernel drift along nonlinear TTT]
\label{cor:kernel_drift_app}
Under the assumptions of Thm.~\ref{thm:finite_step_deviation_prompt_app}, for every $t\le T$,
\begin{equation}
\|K_t-K_0\|_{\mathrm{op}}\le 2LB^2\,\rho\,t\,E_R.
\label{eq:app_kernel_drift_cor}
\end{equation}
\end{corollary}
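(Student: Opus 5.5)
The statement to prove is Cor.~\ref{cor:kernel_drift_app}. The bound \eqref{eq:app_kernel_drift_cor} was essentially derived inside the proof of Thm.~\ref{thm:finite_step_deviation_prompt_app} as \eqref{eq:app_K_drift}. The plan is therefore to isolate that chain of inequalities and check that it holds for every $t\le T$, not only for the indices used in the summation.

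\textbf{Step 1: kernel difference in terms of Jacobians.} Fix $t\le T$ and write $J_t:=J(\theta_t)$. We use the algebraic split
\[
K_t-K_0=(J_t-J_0)J_t^\top+J_0(J_t-J_0)^\top .
\]
Assumption~\ref{assump:iterate_stability_app} gives $\|\theta_t\|_2\le R$, and $\|\theta_0\|_2=0\le R$. Hence \eqref{eq:app_J_bounded} applies to both $J_t$ and $J_0$. By submultiplicativity, $\|K_t-K_0\|_{\mathrm{op}}\le 2B\|J_t-J_0\|_{\mathrm{op}}$. Then \eqref{eq:app_J_Lipschitz} with $\theta'=0$ gives $\|J_t-J_0\|_{\mathrm{op}}\le L\|\theta_t\|_2$.

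\textbf{Step 2: iterate size by telescoping.} Next we bound $\|\theta_t\|_2$ by summing increments:
\[
\|\theta_t\|_2\le\sum_{s=0}^{t-1}\|\theta_{s+1}-\theta_s\|_2 .
\]
For each $s<t\le T$, the update \eqref{eq:app_true_gd} gives
\[
\|\theta_{s+1}-\theta_s\|_2=\rho\|J_s^\top e_s\|_2\le\rho B\|e_s\|_2 .
\]
Since $\|\theta_s\|_2\le R$, the definition of $E_R$ gives $\|e_s\|_2=\|r-s(\theta_s)\|_2\le E_R$. Thus $\|\theta_t\|_2\le t\rho BE_R$.

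\textbf{Step 3: combine.} Chaining Steps 1 and 2 yields
\[
\|K_t-K_0\|_{\mathrm{op}}\le 2BL\cdot t\rho BE_R=2LB^2\rho tE_R .
\]
For $t=0$ the bound is trivially $0$.

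There is no real obstacle. The only point needing care is that the step-size condition \eqref{eq:app_rho_small} is not used here. Stability must hold at every intermediate iterate $s\le t$, so that both the Jacobian bound and $E_R$ apply. This is exactly what Assumption~\ref{assump:iterate_stability_app} provides for all $s\le T$.
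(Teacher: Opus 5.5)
Your proof is correct and is essentially the paper's argument: the paper derives this bound inside the proof of Thm.~\ref{thm:finite_step_deviation_prompt_app} via the same split $K_t-K_0=(J_t-J_0)J_t^\top+J_0(J_t-J_0)^\top$, the Lipschitz bound $\|J_t-J_0\|_{\mathrm{op}}\le L\|\theta_t\|_2$, and the increment bound $\|\theta_{s+1}-\theta_s\|_2\le\rho BE_R$ summed over $s<t$. Your observation that the step-size condition \eqref{eq:app_rho_small} is not needed for this particular bound is also accurate.
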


\paragraph{Deviation bound for query predictions}
Thm.~\ref{thm:finite_step_deviation_prompt_app} controls discrepancies on the prompt supervision values. A similar bound can be derived for query-point predictions under additional smoothness assumptions. Concretely, suppose the query prediction shift
\[
    \Delta(x;\theta):=f_{w(\theta)}(x;P_n)-f_{w_0}(x;P_n)
\]
is twice differentiable in $\theta$ on the same ball $\{\|\theta\|_2\le R\}$, with Jacobian $\nabla_\theta \Delta(x;\theta)$ bounded and Lipschitz uniformly in $\theta$. Then, one can repeat the argument above to show that the gap between the nonlinear query correction $\Delta(x;\theta_T)$ and its linearized form $\phi(x;P_n)^\top\theta_T^{\mathrm{lin}}$ is $O((\rho T)^2)$. We omit the algebraic details to avoid repetition: the proof follows the same contraction, kernel-drift, Taylor-remainder decomposition as in Thm.~\ref{thm:finite_step_deviation_prompt_app}, merely replacing the prompt map with the query map.

\subsection{Generalizing Filter Mismatch to Non-Gaussian Priors}
\label{app:second_moment_spectral}

In Sec.~\ref{sec:bayes_gap}, we analyzed the Bayes gap assuming Gaussianity.
This appendix discusses the robustness of the mode-wise MSE decomposition and the squared mismatch identity. The statements require only independence and finite second moments.

\begin{proposition}[Mode-wise MSE and squared mismatch without Gaussianity]
\label{prop:second_moment_mismatch}
Fix $K\succeq 0$ and suppose that $r = f_\star + \varepsilon,
\mathbb E[f_\star]=0,\ \mathrm{Cov}(f_\star)=C_\star\ 
$ $\mathbb E[\varepsilon]=0,\ \mathrm{Cov}(\varepsilon)=\sigma^2 I_n,
f_\star \perp \varepsilon.$ Let $S$ be any symmetric matrix such that $SK=KS$. Then, there exists an orthogonal matrix $U$ such that $K$ and $S$ are simultaneously diagonalized as $K = U\,\mathrm{diag}(\lambda_1,\dots,\lambda_n)\,U^\top, S = U\,\mathrm{diag}(s_1,\dots,s_n)\,U^\top$. Define $c_i:=u_i^\top C_\star u_i=\mathrm{Var}(u_i^\top f_\star)\ge 0$, where $u_i$ is the $i$th column of $U$. Then, the MSE of $\hat f_S:=Sr$ satisfies
\begin{equation}
    \mathbb E\!\left[\|\hat f_S-f_\star\|_2^2 \,\middle|\, K,S\right]
    = \sum_{i=1}^n\Bigl\{c_i(1-s_i)^2+\sigma^2 s_i^2\Bigr\}.
\label{eq:second_moment_mse}
\end{equation}
Moreover, for each coordinate $i$, the unique minimizer over $s_i\in\mathbb R$ is $s_i^\star = \frac{c_i}{c_i+\sigma^2}$ and the excess risk admits the exact squared-mismatch identity
\begin{align}
   &\mathbb E\!\left[\|\hat f_S-f_\star\|_2^2 \,\middle|\, K,S\right] -
   \mathbb E\!\left[\|\hat f_{S^\star}-f_\star\|_2^2 \,\middle|\, K,S\right] \\
   &= \sum_{i=1}^n (c_i+\sigma^2)\,(s_i-s_i^\star)^2,
\label{eq:second_moment_mismatch}
\end{align}
where $S^\star:=U\,\mathrm{diag}(s_1^\star,\dots,s_n^\star)\,U^\top$.
\end{proposition}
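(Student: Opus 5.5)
The plan is to mirror the proof of Thm.~\ref{thm:bayes_gap_filter_distance_sec5}, but to use only second moments and the independence of $f_\star$ and $\varepsilon$.

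\textbf{Step 1: simultaneous diagonalization.} Since $K$ and $S$ are commuting real symmetric matrices, the spectral theorem gives a common orthonormal eigenbasis. Concretely, each eigenspace of $K$ is invariant under $S$. Diagonalizing $S$ restricted to each eigenspace, and concatenating the resulting bases, produces the orthogonal matrix $U$.

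\textbf{Step 2: mode-wise MSE.} Write $\hat f_S-f_\star=(S-I)f_\star+S\varepsilon$. Expanding the squared norm gives three pieces.
\begin{itemize}
\item The cross term $2\,\mathbb E[f_\star^\top(S-I)^\top S\varepsilon]$ vanishes, because $f_\star\perp\varepsilon$ and $\mathbb E[\varepsilon]=0$.
\item The signal term is $\mathrm{tr}\bigl((S-I)C_\star(S-I)\bigr)$.
\item The noise term is $\sigma^2\mathrm{tr}(S^2)$.
\end{itemize}
In the basis $U$ we have $S-I=U\,\mathrm{diag}(s_i-1)\,U^\top$. Hence
\[
\mathrm{tr}\bigl((S-I)C_\star(S-I)\bigr)=\sum_i (s_i-1)^2\,u_i^\top C_\star u_i .
\]
This uses only the diagonal entries $c_i$ of $U^\top C_\star U$, even though $C_\star$ need not be diagonal in this basis. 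That is the key point of the argument. Likewise $\mathrm{tr}(S^2)=\sum_i s_i^2$, which yields \eqref{eq:second_moment_mse}.

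\textbf{Step 3: minimization and excess risk.} For each $i$, the summand $c_i(1-s)^2+\sigma^2 s^2$ is a strictly convex quadratic in $s$, since $\sigma^2>0$. Its minimizer is $s_i^\star=c_i/(c_i+\sigma^2)$. Completing the square, exactly as in Thm.~\ref{thm:bayes_gap_filter_distance_sec5}, gives the excess $(c_i+\sigma^2)(s_i-s_i^\star)^2$. The matrix $S^\star$ shares the basis $U$, so it also commutes with $K$. Formula \eqref{eq:second_moment_mse} therefore applies to $S^\star$ with the same $c_i$. Subtracting the two expressions coordinatewise gives \eqref{eq:second_moment_mismatch}.

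\textbf{Main obstacle.} The only subtlety is the off-diagonal entries of $U^\top C_\star U$ when $C_\star$ does not commute with $K$. They drop out because the trace against a diagonal matrix picks up only the diagonal. A second subtlety arises when $K$ has repeated eigenvalues: then $U$ is not unique, so the $c_i$ depend on the chosen basis. This causes no problem, because $U$ is fixed once and for all, and $S^\star$ is defined with respect to that same $U$.
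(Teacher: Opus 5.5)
Your proposal is correct and follows essentially the same route as the paper. The paper rotates to $\tilde f=U^\top f_\star$, $\tilde\varepsilon=U^\top\varepsilon$ and expands $\sum_i\bigl((s_i-1)\tilde f_i+s_i\tilde\varepsilon_i\bigr)^2$ coordinatewise, which is the coordinate form of your trace computation (so the off-diagonal entries of $U^\top C_\star U$ never appear). It then completes the square mode by mode, exactly as you do.
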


This proposition suggests that, relying only on second-moment conditions (no Gaussianity), the estimation error decomposes into independent mode-wise shrinkage problems in the eigenbasis of $K$. The optimal shrinkage is given by $s_i^\star = \frac{c_i}{c_i + \sigma^2}$, and the excess risk admits an exact identity as the weighted sum of squared mismatches $(s_i - s_i^\star)^2$.

\begin{proof}
Let $\tilde r:=U^\top r$, $\tilde f:=U^\top f_\star$, and $\tilde\varepsilon:=U^\top\varepsilon$. Since $U$ is orthogonal, $\|\hat f_S-f_\star\|_2^2=\|\widetilde{\hat f}-\tilde f\|_2^2$, where $\widetilde{\hat f}=\mathrm{diag}(s_i)\tilde r$ and $\tilde r=\tilde f+\tilde\varepsilon$.
Thus,
\[
    \|\widetilde{\hat f}-\tilde f\|_2^2
    = \sum_{i=1}^n (s_i\tilde r_i-\tilde f_i)^2
    = \sum_{i=1}^n \bigl((s_i-1)\tilde f_i + s_i\tilde\varepsilon_i\bigr)^2.
\]
Taking expectations and using $f_\star\perp\varepsilon$ (hence $\mathbb E[\tilde f_i\tilde\varepsilon_i]=0$),
$\mathbb E[\tilde f_i^2]=c_i$, and $\mathbb E[\tilde\varepsilon_i^2]=\sigma^2$ yields \eqref{eq:second_moment_mse}.

For fixed $i$, the scalar risk is a strictly convex quadratic
$c_i(1-s_i)^2+\sigma^2 s_i^2$, whose unique minimizer is $s_i^\star = \frac{c_i}{c_i+\sigma^2}$. Completing the square gives $c_i(1-s)^2+\sigma^2 s^2 = c_i(1-s^\star)^2+\sigma^2(s^\star)^2 + (c_i+\sigma^2)(s-s^\star)^2$, which implies \eqref{eq:second_moment_mismatch} after summing over $i$.
\end{proof}

\begin{remark}[Connection to the GP benchmark in Sec.~\ref{sec:bayes_gap}]
If $C_\star=\tau^2 K$ for some $\tau^2>0$ (covariance alignment), then $c_i=\tau^2\lambda_i$ and $s_i^\star=\lambda_i/(\lambda_i+\lambda_\star)$ with $\lambda_\star=\sigma^2/\tau^2$, recovering the shrinkage $g_\star(\lambda)$ used in the main text. If, in addition, $(f_\star,\varepsilon)$ are jointly Gaussian, then $\hat f_{S^\star}$ coincides with the Bayes posterior mean for that GP model.
\end{remark}

\subsection{Extension to anisotropic priors}\label{app:anisotropic}

This subsection extends the GP benchmark by relaxing the aligned covariance assumption (covariance proportional to $K$) in Assumption~\ref{assump:gp_benchmark}, allowing anisotropic task variability in the update parameters. Specifically, we show that appropriate preconditioning yields Bayes-consistent filtering under the induced anisotropic geometry.

\begin{assumption}[Anisotropic Gaussian prior in update parameters]
\label{assump:aniso_theta_prior_app}
Let $\Phi\in\mathbb{R}^{n\times d}$ be the prompt feature matrix and consider the linearized residual model $r=\Phi\theta_\star+\varepsilon,\,
\varepsilon\sim\mathcal{N}(0,\sigma^2 I_n)$. Assume an anisotropic prior on the latent update parameter signal $\theta_\star \sim \mathcal{N}(0,\tau^2\Sigma)$, with $\Sigma\in\mathbb{R}^{d\times d}$ and $\lambda_\star:=\sigma^2/\tau^2$.
\end{assumption}

\begin{proposition}[Bayes-optimal predictor on prompt values under anisotropic priors]\label{prop:aniso_bayes_prompt_app}
Under Assumption~\ref{assump:aniso_theta_prior_app}, the latent prompt-level signal $f_\star:=\Phi\theta_\star$ satisfies $f_\star \sim \mathcal{N}(0,\tau^2\widetilde K),\, \widetilde K:=\Phi\Sigma\Phi^\top,$ and the Bayes-optimal estimator of $f_\star$ given $r$ under squared loss is
\begin{equation}
    f_{\mathrm{Bayes}}
    = \mathbb{E}[f_\star\mid r]
    = \widetilde K(\widetilde K+\lambda_\star I_n)^{-1}r.
\label{eq:aniso_bayes_prompt_app}
\end{equation}
\end{proposition}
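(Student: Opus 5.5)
The plan is to compute the joint law of $(f_\star,r)$ directly from the linear-Gaussian structure in Assumption~\ref{assump:aniso_theta_prior_app}, and then apply standard Gaussian conditioning, exactly as in the proof of Thm.~\ref{thm:predictive_gap_main}. No eigen-alignment argument is needed. The Bayes-optimal estimator under squared loss is the posterior mean (App.~\ref{app:bayes_primer}), so it suffices to identify $\mathbb E[f_\star\mid r]$.

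First, since $f_\star=\Phi\theta_\star$ is a linear image of $\theta_\star\sim\mathcal N(0,\tau^2\Sigma)$, it is Gaussian with mean $0$ and covariance $\tau^2\Phi\Sigma\Phi^\top=\tau^2\widetilde K$. Here I take $\Sigma\succeq 0$ symmetric, which is implicit in its role as a covariance. Next, $r=f_\star+\varepsilon$ with $\varepsilon\perp\theta_\star$, so $(f_\star,r)$ is jointly Gaussian and zero-mean, with
\[
\mathrm{Cov}(r)=\tau^2\widetilde K+\sigma^2 I_n=\tau^2(\widetilde K+\lambda_\star I_n),\qquad \mathrm{Cov}(f_\star,r)=\tau^2\widetilde K.
\]
Because $\sigma^2>0$, the matrix $\widetilde K+\lambda_\star I_n\succ 0$ is invertible. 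The conditioning formula then gives
\[
\mathbb E[f_\star\mid r]=\mathrm{Cov}(f_\star,r)\mathrm{Cov}(r)^{-1}r=\widetilde K(\widetilde K+\lambda_\star I_n)^{-1}r.
\]
This is \eqref{eq:aniso_bayes_prompt_app}.

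The proof is essentially routine. The only point needing care is well-posedness: the independence of $\varepsilon$ from $\theta_\star$ and the positivity $\lambda_\star>0$ are what make the inverse exist. If desired, I would remark afterwards that $\widetilde K=\Phi\Sigma\Phi^\top$ is the geometry induced by preconditioned GD, $\theta_{t+1}=\theta_t+\rho\Sigma\Phi^\top(r-\Phi\theta_t)$. Repeating the argument of App.~\ref{app:proof_spectral_filter} with $K$ replaced by $\widetilde K$ gives the filter $g_T(\widetilde K)$, which is then compared with $g_\star(\widetilde K)$ via Thm.~\ref{thm:bayes_gap_filter_distance_sec5} applied to $\widetilde K$.
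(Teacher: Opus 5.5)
Your proposal is correct and matches the paper's proof: you push the prior on $\theta_\star$ through the linear map $\Phi$ to get $f_\star\sim\mathcal{N}(0,\tau^2\widetilde K)$, and then apply Gaussian conditioning to $r=f_\star+\varepsilon$. You also state explicitly two points the paper's two-line proof leaves implicit: the independence $\varepsilon\perp\theta_\star$, and the invertibility of $\widetilde K+\lambda_\star I_n$, which follows from $\sigma^2>0$.
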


\begin{proof}
Since $\theta_\star\sim\mathcal{N}(0,\tau^2\Sigma)$ and $f_\star=\Phi\theta_\star$ is linear in $\theta_\star$,
we have $f_\star\sim\mathcal{N}(0,\tau^2\Phi\Sigma\Phi^\top)=\mathcal{N}(0,\tau^2\widetilde K)$.
With $r=f_\star+\varepsilon$ and $\varepsilon\sim\mathcal{N}(0,\sigma^2 I_n)$, Gaussian conditioning yields the
posterior mean \eqref{eq:aniso_bayes_prompt_app}.
\end{proof}

\begin{theorem}[Preconditioned GD induces spectral filtering under $\widetilde K$]
\label{thm:precond_gd_filter_app}
Under Assumption~\ref{assump:aniso_theta_prior_app}, consider preconditioned GD in $\theta$-space
\begin{equation}
    \theta_{t+1}=\theta_t+\rho\,\Sigma\,\Phi^\top(r-\Phi\theta_t),
\label{eq:precond_gd_theta_app}
\end{equation}
where $\theta_0=0$ and $\rho:=\eta/\sigma^2$. Let $\hat f_t:=\Phi\theta_t$. Then, $\hat f_{t+1}=\hat f_t+\rho\,\widetilde K\,(r-\hat f_t)$ holds, where $\widetilde K:=\Phi\Sigma\Phi^\top$ and hence, for any $T\ge 0$, $\hat f_T = \widetilde G_T r$ holds, where $\widetilde G_T := I_n-(I_n-\rho \widetilde K)^T$.
\end{theorem}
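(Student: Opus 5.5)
The argument mirrors the derivation of \eqref{eq:spectral_filter_form_sec4} in App.~\ref{app:proof_spectral_filter}, with $\Phi$ replaced by the preconditioned pair $(\Phi,\Sigma)$.

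\textbf{Step 1: prompt-space recursion.} Multiply the preconditioned update \eqref{eq:precond_gd_theta_app} on the left by $\Phi$. Since $\hat f_t=\Phi\theta_t$, this gives
\[
\hat f_{t+1}=\Phi\theta_{t+1}=\Phi\theta_t+\rho\,\Phi\Sigma\Phi^\top(r-\Phi\theta_t)=\hat f_t+\rho\,\widetilde K\,(r-\hat f_t),
\]
which is the claimed recursion with $\widetilde K=\Phi\Sigma\Phi^\top$. No property of $\Sigma$ is needed here, only that it enters linearly between $\Phi$ and $\Phi^\top$.

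\textbf{Step 2: residual contraction and unrolling.} Subtract both sides from $r$ to get $r-\hat f_{t+1}=(I_n-\rho\widetilde K)(r-\hat f_t)$. Because $\theta_0=0$, we have $\hat f_0=0$, so $r-\hat f_0=r$. Induction on $t$ then gives $r-\hat f_T=(I_n-\rho\widetilde K)^T r$, that is,
\[
\hat f_T=\bigl(I_n-(I_n-\rho\widetilde K)^T\bigr)r=\widetilde G_T r.
\]
The case $T=0$ is trivial since $\widetilde G_0=0$.

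\textbf{Spectral form and the remaining issue.} The algebra is routine, so the only delicate point is what "spectral filtering" means here. If $\Sigma\succeq 0$ (as implicitly required for $\Sigma$ to be a covariance in Assumption~\ref{assump:aniso_theta_prior_app}), then $\widetilde K$ is symmetric PSD. Its eigendecomposition $\widetilde K=\widetilde U\mathrm{diag}(\tilde\lambda_i)\widetilde U^\top$ turns $\widetilde G_T$ into $\widetilde U\mathrm{diag}(g_T(\tilde\lambda_i))\widetilde U^\top$, with the same filter $g_T(\lambda)=1-(1-\rho\lambda)^T$ as before. Under $0<\rho<1/\lambda_{\max}(\widetilde K)$, we get $g_T\in[0,1)$, exactly as in App.~\ref{app:proof_spectral_filter}.

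Combined with Prop.~\ref{prop:aniso_bayes_prompt_app}, this places $\hat f_T$ in the same eigenbasis as the anisotropic Bayes rule $\widetilde K(\widetilde K+\lambda_\star I_n)^{-1}$. This is the sense in which preconditioning is Bayes-consistent. I would state that remark only after the identity itself is proved, since the theorem asserts the identity alone.
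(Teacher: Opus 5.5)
Your proposal is correct and follows the paper's proof: left-multiply the preconditioned update by $\Phi$ to obtain the prompt-space recursion with $\widetilde K=\Phi\Sigma\Phi^\top$, then unroll the residual contraction $r-\hat f_{t+1}=(I_n-\rho\widetilde K)(r-\hat f_t)$ from $\hat f_0=0$. Your added spectral remark (symmetric PSD $\widetilde K$ when $\Sigma\succeq 0$, with filter $g_T\in[0,1)$ for $0<\rho<1/\lambda_{\max}(\widetilde K)$) matches the paper's post-theorem comment and is not needed for the identity itself.
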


Consequently, all spectral-filter and Bayes-gap results for TTT apply verbatim after replacing $K$ by $\widetilde K$: TTT selects an implicit prior under the geometry induced by $\widetilde K$, and the Bayes-optimal shrinkage on prompt values is $\widetilde K(\widetilde K+\lambda_\star I_n)^{-1}$.

\begin{proof}
Multiply \eqref{eq:precond_gd_theta_app} by $\Phi$ to get
\begin{align*}
    \hat f_{t+1}=\Phi\theta_{t+1}
    &=\Phi\theta_t+\rho\Phi\Sigma\Phi^\top(r-\Phi\theta_t) \\
    &=\hat f_t+\rho\widetilde K(r-\hat f_t).
\end{align*}
Solving the linear recursion concludes the proof.
\end{proof}

\begin{remark}[Interpretation: whitening and Bayes-consistency]
\label{rem:whitening_app}
Under the anisotropic prior, the Bayes-optimal predictor depends on $\widetilde K=\Phi\Sigma\Phi^\top$ rather than $K=\Phi\Phi^\top$. Preconditioning by $\Sigma$ reshapes the effective geometry so that the induced GD filter family can be viewed as Bayes-consistent shrinkage after whitening the update parameter prior.
\end{remark}

\begin{takeawaybox}[title=\textbf{Summary: robustness of the spectral theory}]
\begin{itemize}[leftmargin=*, itemsep=0pt, topsep=0pt]
\item \textbf{Even non-Gaussian:} excess risk is still a squared spectral shrinkage mismatch under 2nd moments (App.~\ref{app:second_moment_spectral}).
\item \textbf{Anisotropic tasks:} preconditioning replaces $K$ by $\widetilde K=\Phi\Sigma\Phi^\top$ (App.~\ref{app:anisotropic}).
\end{itemize}
\end{takeawaybox}

\section{Gradient flow and random-matrix theory view of the update horizon $T$}\label{app:oracle_horizon}

This appendix explains how the update horizon is governed by the spectrum of the prompt geometry $K(P_n)$. We recast discrete $T$-step updates via the continuous diffusion time $t\approx \rho T$, under which the estimator becomes an explicit spectral filter $g_t(\lambda)=1-e^{-t\lambda}$, making “how far to adapt” a one-parameter shrinkage rule over eigenmodes. We then use this view to define an oracle (risk-optimal) horizon and to motivate deterministic proxies from random-matrix concentration, clarifying why the best $T$ is inherently prompt- and subspace-dependent.

\subsection{Gradient flow as the canonical horizon parameter}

Fix a prompt of length $n$. Let $K\in\mathbb{R}^{n\times n}$ be the prompt geometry, and $r\in\mathbb{R}^n$ be the prompt residual vector. Write an eigendecomposition $K=U\mathrm{diag}(\lambda_1,\dots,\lambda_n)U^\top$ with $\lambda_i\ge 0$.

\begin{proposition}[Gradient flow solution and spectral filter]\label{prop:app_gf_solution_short}
Consider the gradient-flow ODE $\dot f(t)=K(r-f(t))$ with $f(0)=0$.
Then
\begin{equation}
    f(t)=(I-e^{-tK})r
    = U\,\mathrm{diag}\!\bigl(g_t(\lambda_1),\dots,g_t(\lambda_n)\bigr)\,U^\top r,
\label{eq:app_gf_filter_short}
\end{equation}
where $g_t(\lambda):=1-e^{-t\lambda}.$
\end{proposition}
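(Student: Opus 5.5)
The plan is to solve the ODE as a linear constant-coefficient system. I would first change variables to the residual $e(t):=r-f(t)$, which removes the inhomogeneous term. Since $r$ is constant, $\dot e(t)=-\dot f(t)=-K\,e(t)$ with $e(0)=r$. By existence and uniqueness for linear ODEs with constant coefficients (Picard--Lindelöf, with the right-hand side globally Lipschitz with constant $\|K\|_{\mathrm{op}}$), the unique solution is $e(t)=e^{-tK}r$. I would verify this by differentiating the power series $e^{-tK}=\sum_{k\ge0}(-t)^kK^k/k!$ term by term. Hence $f(t)=r-e^{-tK}r=(I-e^{-tK})r$, which is the first equality in \eqref{eq:app_gf_filter_short}.

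For the spectral form, I would use the eigendecomposition $K=U\mathrm{diag}(\lambda_i)U^\top$ with $U$ orthogonal. This gives $K^k=U\mathrm{diag}(\lambda_i^k)U^\top$, so summing the exponential series yields $e^{-tK}=U\mathrm{diag}(e^{-t\lambda_i})U^\top$. Because $I=UU^\top$, it follows that $I-e^{-tK}=U\mathrm{diag}(1-e^{-t\lambda_i})U^\top=U\mathrm{diag}(g_t(\lambda_i))U^\top$, with $g_t(\lambda)=1-e^{-t\lambda}$ as stated.

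As an alternative check, one can work coordinatewise, mirroring the proof of \eqref{eq:spectral_filter_form_sec4}. Set $\tilde f:=U^\top f$ and $\tilde r:=U^\top r$. The system then decouples into scalar ODEs $\dot{\tilde f}_i=\lambda_i(\tilde r_i-\tilde f_i)$ with $\tilde f_i(0)=0$, whose solutions are $\tilde f_i(t)=(1-e^{-t\lambda_i})\tilde r_i$. Null modes with $\lambda_i=0$ give $\tilde f_i\equiv0$, consistent with $g_t(0)=0$.

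No step is a real obstacle. The only points needing care are uniqueness, so that the closed form is \emph{the} solution, and the legitimacy of the functional calculus for the matrix exponential. Both follow from standard facts for symmetric $K$.
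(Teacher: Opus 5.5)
Your proposal is correct and follows the paper's own proof essentially verbatim. You pass to the residual $e(t)=r-f(t)$, solve $\dot e=-Ke$ with $e(0)=r$ to get $e(t)=e^{-tK}r$, and diagonalize $K$ to read off the filter $g_t(\lambda)=1-e^{-t\lambda}$; your remarks on uniqueness and the coordinatewise check are harmless additions the paper leaves implicit.
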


\begin{proof}
Let $e(t):=r-f(t)$. Then $\dot e(t)=-Ke(t)$ and $e(0)=r$, so $e(t)=e^{-tK}r$, hence $f(t)=r-e(t)$.
Diagonalization yields the scalar filter.
\end{proof}

Each eigenmode $u_i$ is learned at rate $\lambda_i$, so large-$\lambda$ modes fit quickly and small-$\lambda$ modes fit slowly.

Note that our discrete GD recursion is $f_{t+1}=f_t+\rho K(r-f_t)$.
The gradient flow is written in the rescaled time that absorbs $\rho$, and thus the canonical horizon parameter is the diffusion time $t\simeq \rho T$.

Recall the discrete $T$-step GD filter (main text)
\[
    g_T(\lambda)=1-(1-\rho\lambda)^T,\qquad 0<\rho<1/\lambda_{\max}(K).
\]
\begin{lemma}[Discrete GD $\approx$ gradient flow]\label{lem:app_gd_to_gf_short}
Fix $\lambda\ge 0$ and assume $\rho\lambda<1$. Let $t:=\rho T$. Then
\begin{equation}
    0\le g_T(\lambda)-g_t(\lambda)=e^{-t\lambda}-(1-\rho\lambda)^T
    \le \frac{T(\rho\lambda)^2}{2(1-\rho\lambda)}\,e^{-t\lambda}.
\end{equation}
\end{lemma}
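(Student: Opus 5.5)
Set $x:=\rho\lambda\in[0,1)$ and $t=\rho T$. Then $e^{-t\lambda}=e^{-Tx}$, and the claim reduces to a scalar inequality in $x$ and $T$:
\[
0\le e^{-Tx}-(1-x)^T\le \frac{T x^2}{2(1-x)}\,e^{-Tx}.
\]
The identity $g_T(\lambda)-g_t(\lambda)=e^{-t\lambda}-(1-\rho\lambda)^T$ is immediate from the definitions. The lower bound follows from $0\le 1-x\le e^{-x}$, raised to the $T$-th power; both sides are nonnegative, so this is legitimate. The cases $x=0$ and $T=0$ are trivial, so we assume $x\in(0,1)$ and $T\ge 1$.

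For the upper bound we factor out $e^{-Tx}$:
\[
e^{-Tx}-(1-x)^T=e^{-Tx}\bigl(1-e^{T\{x+\log(1-x)\}}\bigr).
\]
Define $\psi(x):=-x-\log(1-x)\ge 0$. Since $1-e^{-y}\le y$ for $y\ge 0$, the bracket is at most $T\psi(x)$. It then suffices to show
\[
\psi(x)\le \frac{x^2}{2(1-x)}.
\]
We check this with the power series $\psi(x)=\sum_{k\ge 2}x^k/k$. Each term satisfies $x^k/k\le x^k/2$, so $\psi(x)\le \frac12\sum_{k\ge2}x^k=\frac{x^2}{2(1-x)}$. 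Substituting back with $x=\rho\lambda$ gives the stated bound.

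The only step needing care is the choice of factorization. Writing the difference relative to $e^{-Tx}$ rather than $(1-x)^T$ is what produces the factor $e^{-t\lambda}$ on the right-hand side. After that, the inequality $\psi(x)\le x^2/(2(1-x))$ is the main (mild) obstacle, and the series comparison handles it directly.
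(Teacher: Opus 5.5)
Your proof is correct and follows essentially the same argument as the paper. Both use the substitution $x=\rho\lambda$, the bound $1-x\le e^{-x}$ for nonnegativity, the series estimate $-x-\log(1-x)\le x^2/(2(1-x))$, and $1-e^{-y}\le y$ after factoring out $e^{-Tx}$ to get the upper bound.
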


\begin{proof}
Let $x=\rho\lambda\in[0,1)$. Since $1-x\le e^{-x}$, $(1-x)^T\le e^{-xT}$, giving $g_T\ge g_t$. Also $\log(1-x)=-x-\sum_{k\ge 2}x^k/k\ge -x-x^2/(2(1-x))$, hence $(1-x)^T\ge e^{-xT}\exp\{-Tx^2/(2(1-x))\}$. Rearranging and using $1-e^{-y}\le y$ concludes the proof.
\end{proof}

\subsection{Bayes gap for a fixed prompt: oracle diffusion time and mixture}

We evaluate horizons under the aligned GP benchmark (used throughout the main text for filter-match analysis).

\begin{assumption}[Aligned GP benchmark]\label{assump:app_gp_short}
Conditioned on $K$, $f_\star\sim\mathcal{N}(0,\tau^2K)$ and $r=f_\star+\varepsilon$ with $\varepsilon\sim\mathcal{N}(0,\sigma^2 I_n)$ independent.
Define $\lambda_\star:=\sigma^2/\tau^2$ and $g_\star(\lambda):=\lambda/(\lambda+\lambda_\star)$.
\end{assumption}

\begin{proposition}[Mode-wise risk and mismatch identity (recall)]\label{prop:app_risk_mismatch_short}
Under Assump.~\ref{assump:app_gp_short} and conditioned on $K=U\mathrm{diag}(\lambda_i)U^\top$, for any measurable $g$ with $g(0)=0$,
the spectral estimator $\hat f_g=g(K)r$ satisfies
\begin{equation}
\mathbb{E}\!\left[\|\hat f_g-f_\star\|_2^2\mid K\right]
=
\sum_{i=1}^n\Bigl[\tau^2\lambda_i(1-g(\lambda_i))^2+\sigma^2 g(\lambda_i)^2\Bigr].
\label{eq:app_mse_mode_short}
\end{equation}
Moreover, $g_\star(\lambda)=\lambda/(\lambda+\lambda_\star)$ uniquely minimizes the bracketed term, and the excess Bayes risk is
\begin{align}
    &\mathbb{E}\!\left[\|\hat f_g-f_\star\|_2^2\mid K\right]-\mathbb{E}\!\left[\|g_\star(K)r-f_\star\|_2^2\mid K\right] \\
    &= \sum_{i=1}^n a_i\bigl(g(\lambda_i)-g_\star(\lambda_i)\bigr)^2,
\end{align}
where $a_i:=\tau^2\lambda_i+\sigma^2$.
\end{proposition}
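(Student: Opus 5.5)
This is a restatement of Theorem~\ref{thm:bayes_gap_filter_distance_sec5} under the identical benchmark (Assumption~\ref{assump:app_gp_short} coincides with Assumption~\ref{assump:gp_prompt_prior_sec5}). The plan is to invoke that result directly, but I would also record the short self-contained argument so the appendix reads independently. Condition on $K=U\mathrm{diag}(\lambda_i)U^\top$ and rotate into the eigenbasis: set $\tilde f:=U^\top f_\star$, $\tilde\varepsilon:=U^\top\varepsilon$ and $\tilde r:=U^\top r$. Orthogonal invariance of the isotropic Gaussian noise, together with $\mathrm{Cov}(f_\star\mid K)=\tau^2K$, gives the following: the coordinates $\tilde f_i\sim\mathcal N(0,\tau^2\lambda_i)$ and $\tilde\varepsilon_i\sim\mathcal N(0,\sigma^2)$ are mutually independent, and $\tilde r_i=\tilde f_i+\tilde\varepsilon_i$.

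Since $g(K)=U\mathrm{diag}(g(\lambda_i))U^\top$, the estimator acts coordinatewise, $(U^\top\hat f_g)_i=g(\lambda_i)\tilde r_i$. Hence the $i$-th error component is $(g(\lambda_i)-1)\tilde f_i+g(\lambda_i)\tilde\varepsilon_i$. Squaring, taking conditional expectation and dropping the cross term (by independence and zero means) gives the bracketed mode-wise risk. Summing over $i$, using $\|U^\top v\|_2=\|v\|_2$, yields \eqref{eq:app_mse_mode_short}. The constraint $g(0)=0$ is harmless: on null modes the bracket reduces to $\sigma^2 g(0)^2=0$, and $g_\star(0)=0$ as well.

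For the second part, fix $\lambda$ and treat $r_\lambda(g)=\tau^2\lambda(1-g)^2+\sigma^2g^2$ as a strictly convex quadratic in $g$, with leading coefficient $\tau^2\lambda+\sigma^2>0$. Setting its derivative to zero gives $g_\star(\lambda)=\tau^2\lambda/(\tau^2\lambda+\sigma^2)=\lambda/(\lambda+\lambda_\star)$. Completing the square gives $r_\lambda(g)-r_\lambda(g_\star)=(\tau^2\lambda+\sigma^2)(g-g_\star)^2$. Apply this with $\lambda=\lambda_i$ and sum over $i$. The subtracted term is exactly the risk of $g_\star(K)r$, because $g_\star$ is itself a spectral filter, so the same mode-wise formula applies to it. This gives the excess-risk identity with $a_i=\tau^2\lambda_i+\sigma^2$.

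There is no real obstacle here. The only point needing care is the justification that the rotated signal and noise coordinates are independent across modes. This rests on the alignment $\mathrm{Cov}(f_\star\mid K)\propto K$: the prior covariance is diagonal in the same basis as the filter, which is exactly the benchmark structure.
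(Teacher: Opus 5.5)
Your proposal is correct and takes the same route as the paper. The statement is a recall of Theorem~\ref{thm:bayes_gap_filter_distance_sec5}, whose proof likewise rotates into the eigenbasis of $K$, reads off the mode-wise risk $\tau^2\lambda_i(1-g(\lambda_i))^2+\sigma^2 g(\lambda_i)^2$, and completes the square in the strictly convex scalar quadratic to obtain the weights $a_i=\tau^2\lambda_i+\sigma^2$; your remarks on null modes and on applying the same formula to $g_\star$ are accurate and only make explicit what the paper leaves implicit.
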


Specializing to gradient flow $\hat f(t)=g_t(K)r$ with $g_t(\lambda)=1-e^{-t\lambda}$ gives a one-parameter family of risks. Define the oracle time
\begin{equation}
t^\star(K)\in\arg\min_{t\ge 0}\ \mathbb{E}\!\left[\|g_t(K)r-f_\star\|_2^2\mid K\right].
\label{eq:app_oracle_time_def_short}
\end{equation}

\subsubsection{Mode-wise ideal time}
For a single eigenvalue $\lambda>0$, matching $g_t(\lambda)=g_\star(\lambda)$ yields an explicit ideal time.

\begin{lemma}[Mode-wise Bayes-matching time]\label{lem:app_mode_time_short}
For $\lambda>0$, the unique $t$ satisfying $g_t(\lambda)=g_\star(\lambda)$ is
\begin{equation}
t_\lambda^\star=\frac{1}{\lambda}\log\!\Bigl(1+\frac{\lambda}{\lambda_\star}\Bigr).
\label{eq:app_mode_time_short}
\end{equation}
\end{lemma}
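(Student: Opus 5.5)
The plan is to solve the scalar equation $g_t(\lambda)=g_\star(\lambda)$ explicitly and then confirm uniqueness by monotonicity in $t$. Using the definitions $g_t(\lambda)=1-e^{-t\lambda}$ from Prop.~\ref{prop:app_gf_solution_short} and $g_\star(\lambda)=\lambda/(\lambda+\lambda_\star)$ from Assump.~\ref{assump:app_gp_short}, the equation becomes
\[
e^{-t\lambda}=1-\frac{\lambda}{\lambda+\lambda_\star}=\frac{\lambda_\star}{\lambda+\lambda_\star}.
\]
Taking logarithms is legitimate because both sides are strictly positive, since $\lambda_\star=\sigma^2/\tau^2>0$. This gives
\[
t\lambda=\log\frac{\lambda+\lambda_\star}{\lambda_\star}=\log\Bigl(1+\frac{\lambda}{\lambda_\star}\Bigr),
\]
and dividing by $\lambda>0$ yields the candidate $t_\lambda^\star$ of \eqref{eq:app_mode_time_short}.

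For existence and uniqueness, I would note that for fixed $\lambda>0$ the map $t\mapsto g_t(\lambda)$ is continuous and strictly increasing on $[0,\infty)$. Its derivative is $\lambda e^{-t\lambda}>0$, and it ranges over $[0,1)$. The target value $g_\star(\lambda)$ lies in $(0,1)$, so exactly one $t\ge 0$ attains it, namely the one computed above. The argument also gives $t_\lambda^\star>0$, because $\log(1+\lambda/\lambda_\star)>0$.

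There is no real obstacle. The only points needing care are the positivity conditions: $\lambda>0$ is required for the division, and $\lambda_\star>0$ is required so that the logarithm is finite. Both are guaranteed by the hypotheses.
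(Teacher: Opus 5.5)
Your proposal is correct and matches the paper's proof, which likewise reduces $g_t(\lambda)=g_\star(\lambda)$ to $e^{-t\lambda}=\lambda_\star/(\lambda+\lambda_\star)$ and solves by taking logarithms. Your separate monotonicity argument for uniqueness is fine but redundant: taking logarithms and dividing by $\lambda>0$ are both injective, so the chain of equivalences already gives uniqueness over all real $t$.
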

\begin{proof}
$g_t(\lambda)=1-e^{-t\lambda}$ equals $g_\star(\lambda)=\lambda/(\lambda+\lambda_\star)$ iff
$e^{-t\lambda}=\lambda_\star/(\lambda+\lambda_\star)$, i.e.\ \eqref{eq:app_mode_time_short}.
\end{proof}

\paragraph{Why a single time is a compromise.}
The Bayes-matching time $t^\star_\lambda$ depends on $\lambda$, so unless the spectrum is nearly degenerate, there is no single $t$ that simultaneously matches $g_t(\lambda_i)=g_\star(\lambda_i)$ for all modes. Consequently, any fixed horizon must trade off over-fitting fast modes (large $\lambda$) against under-fitting slow modes (small $\lambda$). This motivates selecting a representative eigenvalue (e.g.\ $\mu(K)$) or using a time-mixture.

A simple near-oracle proxy matches at a representative eigenvalue. Define the weighted mean
\begin{equation}
\mu(K):=\frac{\sum_{i=1}^n a_i\lambda_i}{\sum_{i=1}^n a_i},
\qquad a_i=\tau^2\lambda_i+\sigma^2.
\label{eq:app_mu_short}
\end{equation}
Set the match-at-$\mu$ time
\begin{equation}
t_\mu:=\frac{1}{\mu(K)}\log\!\Bigl(1+\frac{\mu(K)}{\lambda_\star}\Bigr)
\, \Rightarrow\, g_{t_\mu}(\mu(K))=g_\star(\mu(K)).
\label{eq:app_tmu_short}
\end{equation}

\begin{proposition}[Bayes as an exponential mixture of diffusion times]\label{prop:app_mixture_short}
For $\lambda_\star>0$ and all $\lambda\ge 0$,
\begin{equation}
    g_\star(\lambda)
    =\int_0^\infty \lambda_\star e^{-\lambda_\star s}\,\bigl(1-e^{-s\lambda}\bigr)\,ds.
\label{eq:app_mixture_short}
\end{equation}
The same identity holds at the matrix level $g_\star(K)=\int_0^\infty \lambda_\star e^{-\lambda_\star s}(I-e^{-sK})ds$.
\end{proposition}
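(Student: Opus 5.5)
The plan is to prove the scalar identity \eqref{eq:app_mixture_short} by direct integration, and then lift it to matrices through the eigendecomposition of $K$.

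\emph{Scalar case.} Fix $\lambda\ge 0$ and $\lambda_\star>0$. Split the integrand as $\lambda_\star e^{-\lambda_\star s}-\lambda_\star e^{-(\lambda_\star+\lambda)s}$. Both terms are nonnegative and integrable on $[0,\infty)$ because $\lambda_\star>0$ and $\lambda_\star+\lambda>0$. The two elementary integrals give
\[
\int_0^\infty \lambda_\star e^{-\lambda_\star s}\,ds=1,
\qquad
\int_0^\infty \lambda_\star e^{-(\lambda_\star+\lambda)s}\,ds=\frac{\lambda_\star}{\lambda_\star+\lambda}.
\]
Hence the integral equals $1-\lambda_\star/(\lambda+\lambda_\star)=\lambda/(\lambda+\lambda_\star)=g_\star(\lambda)$. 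The case $\lambda=0$ is consistent, since both sides vanish.

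There is also a probabilistic reading: if $S\sim\mathrm{Exp}(\lambda_\star)$, then $g_\star(\lambda)=1-\mathbb{E}[e^{-S\lambda}]$, and $\mathbb{E}[e^{-S\lambda}]=\lambda_\star/(\lambda_\star+\lambda)$ is the Laplace transform of the exponential law. This reading makes explicit that Bayes shrinkage is the gradient-flow filter of Prop.~\ref{prop:app_gf_solution_short} averaged over an exponentially distributed diffusion time.

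\emph{Matrix case.} Write $K=U\mathrm{diag}(\lambda_i)U^\top$. Then
\[
I-e^{-sK}=U\,\mathrm{diag}\bigl(1-e^{-s\lambda_i}\bigr)\,U^\top .
\]
Because $U$ does not depend on $s$, the matrix-valued integral (taken entrywise) commutes with conjugation by $U$. This reduces the matrix integral to the diagonal matrix of the scalar integrals, which equals $U\mathrm{diag}(g_\star(\lambda_i))U^\top=g_\star(K)$.

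The only point needing care is justifying the exchange of the integral with the finite matrix operations. This is immediate: each entry of $I-e^{-sK}$ is a finite linear combination of the functions $1-e^{-s\lambda_i}$, each of which is bounded by $1$, and they are weighted by the integrable density $\lambda_\star e^{-\lambda_\star s}$. Dominated convergence, or simply linearity of the Lebesgue integral, then applies. No step should pose a real obstacle.
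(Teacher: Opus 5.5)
Your proposal is correct and follows the paper's proof: you split the integral into the two elementary exponential integrals $1$ and $\lambda_\star/(\lambda_\star+\lambda)$, subtract them, and lift the result to $K$ by spectral calculus. The only difference is that you spell out the integrability and the justification for pulling the integral through conjugation by $U$, which the paper leaves implicit.
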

\begin{proof}
Compute the scalar integral:
$\int_0^\infty \lambda_\star e^{-\lambda_\star s}ds=1$ and
$\int_0^\infty \lambda_\star e^{-(\lambda_\star+\lambda)s}ds=\lambda_\star/(\lambda_\star+\lambda)$.
Subtract to obtain $g_\star(\lambda)$. Apply spectral calculus for the matrix form.
\end{proof}

\paragraph{Interpretation (random stopping time).}
Let $S\sim\mathrm{Exp}(\lambda_\star)$ be an exponential random time. Then, Prop.~\ref{prop:app_mixture_short} is equivalent to $g_\star(K)r=\mathbb{E}[(I-e^{-SK})r]$. In words, the Bayes shrinkage equals the expected gradient-flow predictor evaluated at a random diffusion time. This highlights why broad spectra are problematic for a single deterministic $t$ and Bayes effectively averages over a range of times.

\subsubsection{A continuous-time spectral concentration bound}
Define the mismatch $d_t(\lambda):=g_t(\lambda)-g_\star(\lambda)$.
\begin{proposition}[Spectral concentration of mismatch (gradient flow)]
\label{prop:app_gf_concentration_short}
For any $t\ge 0$,
$\sum_{i=1}^n a_i\,d_t(\lambda_i)^2 \le 2\Big(\sum_{i=1}^n a_i\Big)\,d_t(\mu)^2 + 2\Big(t+\frac{1}{\lambda_\star}\Big)^2\sum_{i=1}^n a_i(\lambda_i-\mu)^2$, where $a_i=\tau^2\lambda_i+\sigma^2$ and $\mu=\frac{\sum_i a_i\lambda_i}{\sum_i a_i}$.
\end{proposition}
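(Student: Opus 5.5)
The approach is to linearize the mismatch around the representative eigenvalue $\mu$ and then split it with the elementary inequality $(x+y)^2\le 2x^2+2y^2$. For each mode, write
\[
d_t(\lambda_i)=d_t(\mu)+\bigl(d_t(\lambda_i)-d_t(\mu)\bigr).
\]
Applying $(x+y)^2\le 2x^2+2y^2$ termwise and multiplying by the weights $a_i$ gives
\[
\sum_i a_i d_t(\lambda_i)^2\le 2\Bigl(\sum_i a_i\Bigr)d_t(\mu)^2+2\sum_i a_i\bigl(d_t(\lambda_i)-d_t(\mu)\bigr)^2 .
\]
The first term is already the first term of the claimed bound. The proof therefore reduces to showing that $d_t$ is Lipschitz on $[0,\infty)$ with constant $t+1/\lambda_\star$. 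Then $(d_t(\lambda_i)-d_t(\mu))^2\le (t+1/\lambda_\star)^2(\lambda_i-\mu)^2$, and summing yields the second term.

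For the Lipschitz bound, use the triangle inequality $|d_t'(\lambda)|\le |g_t'(\lambda)|+|g_\star'(\lambda)|$.
\begin{itemize}
\item For the gradient-flow filter, $g_t'(\lambda)=t e^{-t\lambda}\in[0,t]$ for $\lambda\ge 0$.
\item For the Bayes filter, $g_\star'(\lambda)=\lambda_\star/(\lambda+\lambda_\star)^2\le 1/\lambda_\star$ for $\lambda\ge 0$.
\end{itemize}
Hence $\sup_{\lambda\ge0}|d_t'(\lambda)|\le t+1/\lambda_\star$. The mean value theorem on the interval between $\lambda_i$ and $\mu$ then gives the needed bound. This interval lies in $[0,\infty)$ because all $\lambda_i\ge 0$ and $\mu$ is a convex combination of them with nonnegative weights $a_i$.

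No real obstacle is expected. The only point to check is that $\mu\ge 0$, so that both derivative bounds apply on the whole interval. One could sharpen the constant to $\max\{t,1/\lambda_\star\}$ by noting that $g_t'$ and $g_\star'$ have the same sign, but the stated bound only needs the crude sum. Note also that the weighted-mean property of $\mu$, namely $\sum_i a_i(\lambda_i-\mu)=0$, is not needed for this inequality. It only explains why $\mu$ is the natural centering point, since it minimizes the spread term $\sum_i a_i(\lambda_i-c)^2$ over $c$.
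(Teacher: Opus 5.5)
Your proposal is correct and follows essentially the same route as the paper: bound $|d_t'|$ by $t+1/\lambda_\star$ using the derivatives of $g_t$ and $g_\star$, apply the resulting Lipschitz estimate around $\mu$, and then use $(x+y)^2\le 2x^2+2y^2$ weighted by $a_i$. Your side remarks are also correct: $\mu\ge 0$ as a convex combination of the $\lambda_i$, and the constant could be sharpened to $\max\{t,1/\lambda_\star\}$ because $g_t'$ and $g_\star'$ are both nonnegative.
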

\begin{proof}
$g_t'(\lambda)=t e^{-t\lambda}\le t$ so $g_t$ is $t$-Lipschitz.
Also $g_\star'(\lambda)=\frac{\lambda_\star}{(\lambda+\lambda_\star)^2}\le 1/\lambda_\star$.
Hence $d_t$ is $(t+1/\lambda_\star)$-Lipschitz.
For each $i$, $|d_t(\lambda_i)|\le |d_t(\mu)|+(t+1/\lambda_\star)|\lambda_i-\mu|$.
Square and sum with weights $a_i$ to obtain the inequality.
\end{proof}

\paragraph{Interpretation (horizon instability).}
The risk identity decomposes the error into two meaningful components: $d_t(\mu)^2$ represents the mismatch at a representative eigenvalue, while $\sum_i a_i(\lambda_i-\mu)^2$ penalizes spectral spread via weighted variance. Notably, the Lipschitz factor $(t+1/\lambda_\star)$ grows linearly with $t$, suggesting that longer diffusion times heighten the procedure's sensitivity to spectral heterogeneity. This scaling relationship provides a principled explanation for horizon instability, particularly why performance degrades when the prompt geometry $K$ possesses a broad spectrum.

\begin{takeawaybox}[title=\textbf{Summary: Why $T$ is prompt-dependent (diffusion intuition)}]
\begin{itemize}[leftmargin=*, itemsep=0pt, topsep=0pt]
\item \textbf{Instability comes from spectral heterogeneity:} wide spectra make a single $t$ overfit fast modes and underfit slow ones, so performance becomes sensitive to $T$ .
\item \textbf{Bayes behaves like mixing times:} the Bayes shrinkage equals a mixture over stopping times.
\end{itemize}
\end{takeawaybox}

\subsection{RMT: deterministic oracle curves from asymptotic spectra}

Define the per-token risk under gradient flow
\[
    \overline{\mathcal{R}}_{\mathrm{in},n}(t\mid K):=\frac1n\sum_{i=1}^n\Bigl[\tau^2\lambda_i e^{-2t\lambda_i}+\sigma^2(1-e^{-t\lambda_i})^2\Bigr].
\]

\paragraph{Spectral Concentration and Horizon Selection.}
Although the prompt kernel $K$ is random, in high dimensions, its spectrum is highly predictable. Specifically, averages over eigenvalues $\frac{1}{n} \sum_{i=1}^n h(\lambda_i(K))$ fluctuate little and converge to integrals under a limiting law (e.g., the Marchenko--Pastur distribution). Since our normalized risk can be expressed as such an eigenvalue-average for each $t$ (i.e., $\overline{\mathcal{R}}_{\mathrm{in},n}(t \mid K) = \frac{1}{n} \mathrm{tr} \, \psi_t(K)$ for an appropriate function $\psi_t$), the entire risk curve $t \mapsto \overline{\mathcal{R}}_{\mathrm{in},n}(t \mid K)$ and its minimizer admit deterministic approximations. This provides a rigorous foundation for using simple spectral moments, such as $\mu(K)$, as stable and robust horizon predictors.

\begin{assumption}[Wishart kernel model]\label{assump:app_wishart_short}
Let $\Phi=\frac{1}{\sqrt d}X\in\mathbb{R}^{n\times d}$ with i.i.d.\ mean-zero, variance-one subgaussian entries, and $n,d\to\infty$ with $n/d\to\gamma\in(0,\infty)$.
Set $K=\Phi\Phi^\top=\frac{1}{d}XX^\top$.
\end{assumption}

\begin{proposition}[Asymptotic risk under a Marchenko--Pastur geometry]
\label{prop:app_risk_limit_short}
Let $\Phi\in\mathbb{R}^{n\times d}$ have i.i.d.\ entries $\Phi_{ij}\sim\mathcal{N}(0,1/d)$ and set
$K:=\Phi\Phi^\top$.
Assume $n,d\to\infty$ with $n/d\to\gamma\in(0,\infty)$.
Under the aligned GP benchmark (Assump.~\ref{assump:app_gp_short}), consider the gradient-flow estimator
$\hat f(t)=(I-e^{-tK})r$.
Then, almost surely,
\begin{align}
\frac1n\,\mathbb{E}\!\left[\|\hat f(t)-f_\star\|_2^2 \,\middle|\, K\right]
\ \longrightarrow\
\int \psi_t(\lambda)\,\mu_\gamma(d\lambda),
\label{eq:app_risk_limit_mp}
\end{align}
where $\psi_t(\lambda):=\tau^2\lambda e^{-2t\lambda}+\sigma^2(1-e^{-t\lambda})^2$ and $\mu_\gamma$ is the
Marchenko--Pastur law with aspect ratio $\gamma$, i.e.
$\mu_\gamma(d\lambda)
=
\Bigl(1-\frac{1}{\gamma}\Bigr)_+\delta_0(d\lambda)
+
\frac{\sqrt{(\lambda_+-\lambda)(\lambda-\lambda_-)}}{2\pi\gamma\lambda}
\,\mathbf{1}_{[\lambda_-,\lambda_+]}(\lambda)\,d\lambda,
$ with $\lambda_\pm=(1\pm\sqrt{\gamma})^2.$
\end{proposition}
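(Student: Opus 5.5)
The proof has two parts: an exact conditional computation, followed by a spectral limit. First, I condition on $K$ and reuse the mode-wise risk identity of Prop.~\ref{prop:app_risk_mismatch_short} with $g=g_t$, $g_t(\lambda)=1-e^{-t\lambda}$. Since $\tau^2\lambda(1-g_t(\lambda))^2=\tau^2\lambda e^{-2t\lambda}$, this gives the exact identity
\[
\frac1n\,\mathbb{E}\!\left[\|\hat f(t)-f_\star\|_2^2 \,\middle|\, K\right]=\frac1n\sum_{i=1}^n\psi_t(\lambda_i(K))=\int\psi_t\,d\mu_{K},
\]
where $\mu_K:=\frac1n\sum_i\delta_{\lambda_i(K)}$ is the empirical spectral distribution (ESD) of $K$. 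Eigenvalues at zero contribute $\psi_t(0)=0$. This is consistent with the atom $(1-1/\gamma)_+\delta_0$ in $\mu_\gamma$, because $K=\Phi\Phi^\top$ has at least $n-d$ zero eigenvalues when $n>d$. The problem therefore reduces to showing $\int\psi_t\,d\mu_K\to\int\psi_t\,d\mu_\gamma$ almost surely.

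For this I invoke the Marchenko--Pastur theorem. Write $K=\frac1d XX^\top$ with $X$ an $n\times d$ matrix of i.i.d.\ $\mathcal N(0,1)$ entries and $n/d\to\gamma$. The ESD of $K$ then converges weakly, almost surely, to the MP law with ratio $\gamma$ in the normalization stated: support $[(1-\sqrt\gamma)^2,(1+\sqrt\gamma)^2]$ and an atom of mass $(1-1/\gamma)_+$ at zero. I would check this normalization by comparison with the standard form for $\frac1d XX^\top$ with $n/d\to\gamma$; the density $\sqrt{(\lambda_+-\lambda)(\lambda-\lambda_-)}/(2\pi\gamma\lambda)$ matches that form.

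The main obstacle is upgrading weak convergence to convergence of $\int\psi_t\,d\mu_K$, because $\psi_t$ is continuous but unbounded through the term $\tau^2\lambda e^{-2t\lambda}$ at $t=0$. For $t>0$, $\psi_t$ is bounded and continuous on $[0,\infty)$, so weak convergence suffices immediately. For $t=0$ the integrand is $\psi_0(\lambda)=\tau^2\lambda$, so $\int\psi_0\,d\mu_K=\tau^2\,\mathrm{tr}(K)/n$. This converges almost surely to $\tau^2$ by the strong law of large numbers (the mean of $X_{ij}^2$ is $1$), and $\tau^2$ equals the first moment $\tau^2\int\lambda\,d\mu_\gamma$ of the MP law. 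Alternatively, one can use the Bai--Yin result that $\lambda_{\max}(K)\to(1+\sqrt\gamma)^2$ almost surely. Then all spectra eventually lie in a fixed compact set, on which $\psi_t$ can be replaced by a bounded continuous truncation, and weak convergence gives the limit uniformly in $t\ge0$. That uniform version is also what would justify the statements about deterministic oracle curves.
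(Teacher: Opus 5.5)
Your proposal is correct and follows the paper's proof. The mode-wise risk identity reduces the conditional risk to $\int\psi_t\,d\mu_K$, and the almost-sure Marchenko--Pastur limit for $\frac1d XX^\top$, together with boundedness and continuity of $\psi_t$, gives the result.

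Your separate treatment of $t=0$, via $\mathrm{tr}(K)/n\to 1=\int\lambda\,\mu_\gamma(d\lambda)$, is a genuine tightening: there $\psi_0(\lambda)=\tau^2\lambda$ is unbounded, and the paper's proof skips this case by asserting boundedness for every $t$. Your uniform-in-$t\ge 0$ aside is not needed for the statement, but its justification is too quick. The family $\{(1-e^{-t\lambda})^2\}_{t\ge0}$ is not equicontinuous at $\lambda=0$, so truncation plus weak convergence only gives uniformity over bounded ranges of $t$. Over all $t$, one can instead use that this term is monotone in $t\lambda$, together with uniform convergence of the spectral distribution functions.
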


\begin{proof}
By Prop.~\ref{prop:app_risk_mismatch_short} with $g=g_t(\lambda)=1-e^{-t\lambda}$, conditioning on
$K=U\mathrm{diag}(\lambda_i)U^\top$ gives
\[
\frac1n\,\mathbb{E}\!\left[\|\hat f(t)-f_\star\|_2^2 \,\middle|\, K\right]
=
\frac1n\sum_{i=1}^n \psi_t(\lambda_i),
\]
where $\psi_t(\lambda)=\tau^2\lambda e^{-2t\lambda}+\sigma^2(1-e^{-t\lambda})^2.$
Let $\mu_n:=\frac1n\sum_{i=1}^n \delta_{\lambda_i}$ be the empirical spectral measure of $K$.
For $K=\Phi\Phi^\top=(1/d)XX^\top$ with $X_{ij}\sim\mathcal N(0,1)$, the Marchenko-Pastur theorem yields $\mu_n\Rightarrow \mu_\gamma$ almost surely as $n,d\to\infty$ with $n/d\to\gamma$
\citep{MarchenkoPastur1967,BaiSilverstein2010}.
Since $\psi_t$ is bounded and continuous on $[0,\infty)$, we have
$\int \psi_t\,d\mu_n \to \int \psi_t\,d\mu_\gamma$ almost surely.
\end{proof}

\subsubsection{Moment proxy: deterministic ``representative eigenvalue''}
The weighted mean eigenvalue $\mu(K)$ in \eqref{eq:app_mu_short} admits a deterministic limit under MP moments.
Since $m_1(\gamma):=\int \lambda\,\nu_\gamma(d\lambda)=1$ and $m_2(\gamma):=\int \lambda^2\,\nu_\gamma(d\lambda)=1+\gamma$,
\begin{proposition}[Deterministic equivalent of $\mu(K)$]\label{prop:app_mu_limit_short}
Under Assump.~\ref{assump:app_wishart_short},
\begin{equation}
\mu(K)\xrightarrow{\mathrm{a.s.}}\mu_\infty
:=
\frac{\tau^2 m_2(\gamma)+\sigma^2 m_1(\gamma)}{\tau^2 m_1(\gamma)+\sigma^2}
=
\frac{(1+\gamma)+\lambda_\star}{1+\lambda_\star}.
\label{eq:app_mu_infty_short}
\end{equation}
Thus the match-at-$\mu$ time concentrates around
\begin{equation}
t_{\mu_\infty}
=
\frac{1}{\mu_\infty}\log\!\Bigl(1+\frac{\mu_\infty}{\lambda_\star}\Bigr).
\label{eq:app_t_mu_infty_short}
\end{equation}
\end{proposition}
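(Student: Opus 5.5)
The plan is to write $\mu(K)$ as a ratio of normalized spectral moments and pass to the limit in the numerator and denominator separately. Since $a_i=\tau^2\lambda_i+\sigma^2$, dividing top and bottom by $n$ gives
\[
\mu(K)=\frac{\tau^2\,\hat m_2+\sigma^2\,\hat m_1}{\tau^2\,\hat m_1+\sigma^2},
\qquad \hat m_k:=\frac1n\sum_{i=1}^n\lambda_i^k=\frac1n\mathrm{tr}(K^k).
\]
It therefore suffices to show $\hat m_1\to m_1(\gamma)=1$ and $\hat m_2\to m_2(\gamma)=1+\gamma$ almost surely. The map $(x,y)\mapsto(\tau^2y+\sigma^2x)/(\tau^2x+\sigma^2)$ is continuous wherever the denominator is positive, which holds automatically since $\sigma^2>0$. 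The continuous mapping theorem then gives $\mu(K)\to\mu_\infty$ a.s. Dividing numerator and denominator of $\mu_\infty$ by $\tau^2$ yields the closed form $\{(1+\gamma)+\lambda_\star\}/(1+\lambda_\star)$.

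For the moments, I would not rely on weak convergence of the empirical spectral measure alone, as in Prop.~\ref{prop:app_risk_limit_short}. The test functions $\lambda$ and $\lambda^2$ are unbounded, so weak convergence is not enough. There are two routes, and I would use the first.

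\textbf{Direct trace computation.} We have $\hat m_1=\frac1{nd}\sum_{i,j}X_{ij}^2$, which converges a.s. to $1$ by the strong law of large numbers, since the entries are i.i.d. with unit variance. Next, $\hat m_2=\frac{1}{nd^2}\sum_{i,i'}\bigl(\sum_j X_{ij}X_{i'j}\bigr)^2$. Splitting into the diagonal part $i=i'$ and the off-diagonal part $i\ne i'$ gives expected contributions $\approx \frac{n d}{nd^2}\cdot d\cdot 1=1$ (plus an $O(1/d)$ fourth-moment correction) and $\approx\frac{n^2 d}{nd^2}=n/d\to\gamma$. So $\mathbb E\hat m_2\to1+\gamma$.

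Almost sure convergence then follows from a variance bound $\mathrm{Var}(\hat m_2)=O(1/n^2)$, combined with Chebyshev and Borel--Cantelli. The variance bound is the standard moment-method computation, and it uses the subgaussian (hence all-moments) assumption.

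\textbf{Spectral-measure route.} Alternatively, one can cite the known a.s. convergence of the extreme eigenvalues, $\lambda_{\max}(K)\to\lambda_+$ (Bai--Yin, valid for subgaussian entries). This confines the spectrum to a compact set eventually a.s. On that set, $\lambda$ and $\lambda^2$ can be replaced by bounded continuous truncations, so weak convergence $\mu_n\Rightarrow\mu_\gamma$ gives the moment limits.

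The last step is the statement about $t_{\mu_\infty}$. The map $\mu\mapsto\mu^{-1}\log(1+\mu/\lambda_\star)$ is continuous on $(0,\infty)$, and $\mu_\infty\ge1>0$. Continuous mapping therefore gives $t_\mu\to t_{\mu_\infty}$ a.s.

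The main obstacle is the unboundedness issue in the moment step, in particular the a.s. (not just in-expectation) convergence of $\hat m_2$. I would handle it via the fourth-moment variance bound, falling back on the Bai--Yin argument if needed. A secondary point is that the case $\gamma>1$ introduces zero eigenvalues. These are harmless here, because they enter $\hat m_k$ with weight zero and the denominator $\tau^2\hat m_1+\sigma^2$ stays bounded away from zero.
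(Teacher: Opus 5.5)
Your proposal is correct and uses the same reduction as the paper: write $\mu(K)=(\tau^2\hat m_2+\sigma^2\hat m_1)/(\tau^2\hat m_1+\sigma^2)$ with $\hat m_k=\frac1n\mathrm{tr}(K^k)$, then pass to the limit. The difference is in how you justify the moment limits.

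The paper's proof just says ``use the Marchenko--Pastur theorem.'' Read as the a.s.\ weak convergence $\mu_n\Rightarrow\mu_\gamma$ that the paper invokes in Prop.~\ref{prop:app_risk_limit_short}, this does not by itself give convergence of $\int\lambda\,d\mu_n$ and $\int\lambda^2\,d\mu_n$. Those test functions are unbounded, and that earlier proposition explicitly uses boundedness of $\psi_t$. The paper's one-liner is complete only if read as the moment-method form of the theorem, which does yield $\frac1n\mathrm{tr}(K^k)\to m_k(\gamma)$ a.s.

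Your direct trace computation makes this step explicit and self-contained. It combines the mean calculation with an $O(n^{-2})$ variance bound and Borel--Cantelli; the Bai--Yin confinement plus truncation is an equally valid alternative. The cost is a short moment calculation that uses the subgaussian (all-moments) hypothesis. You also supply the continuity argument for $t_\mu\to t_{\mu_\infty}$, which the paper leaves implicit; it works because $\mu_\infty=1+\gamma/(1+\lambda_\star)>0$.

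One minor point concerns $\hat m_1$. If the matrices are not built from a single fixed double array, replace the strong law there by the same Chebyshev--Borel--Cantelli argument. This works because $\mathrm{Var}(\hat m_1)=\mathrm{Var}(X_{11}^2)/(nd)$ is summable.
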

\begin{proof}
Write $\mu(K)=\frac{\tau^2\frac1n\sum\lambda_i^2+\sigma^2\frac1n\sum\lambda_i}{\tau^2\frac1n\sum\lambda_i+\sigma^2}$ and use the Marchenko-Pastur theorem.
\end{proof}

\paragraph{Concentration replacement (what is random).}
Under Assump.~\ref{assump:app_wishart_short}, $K$ (hence $\{\lambda_i\}$, $\mu(K)$, and $\overline{\mathcal{R}}_{\mathrm{in},n}$) is random.
The limits $\nu_\gamma$, $\mu_\infty$, and $\mathcal{R}_\infty$ are deterministic. Standard concentration for linear spectral statistics implies $\mu(K)$ and $\overline{\mathcal{R}}_{\mathrm{in},n}(t\mid K)$ concentrate around these limits (proof sketch omitted; see standard subgaussian random matrix concentration).

\subsection{Transformer head example: head choice $\Rightarrow$ spectrum $\Rightarrow$ horizon}

We specialize to Value-matrix updates in Transformer head $h$ under the global-upstream regime. With attention matrix $\mathcal{A}^{(h)}$ (entries $\alpha_{ks}^{(h)}$), residual stream stack $Z\in\mathbb{R}^{n\times d_z}$, and upstream direction $\beta^{(h)}$, the head-induced prompt kernel is
\begin{equation}
K^{(h,V)}\approx \|\beta^{(h)}\|_2^2\,\mathcal{A}^{(h)}(ZZ^\top)\mathcal{A}^{(h)\top},
\,
K=\sum_{h\in\mathcal{H}}K^{(h,V)}.
\label{eq:app_head_kernel_short}
\end{equation}
Thus head selection $\mathcal{H}$ changes the eigenvalues $\{\lambda_i(K)\}$ and therefore changes the near-oracle time (e.g.\ $t_\mu$ in \eqref{eq:app_tmu_short}).

\subsubsection{Two regimes and a toy calculation}

\paragraph{(i) Spiky / low-rank head.}
Suppose $\mathcal{A}^{(h)}Z$ is approximately rank-one:
$\mathcal{A}^{(h)}Z\approx \sqrt{\kappa}\,v u^\top$ with $\|v\|_2=1$.
Then $K^{(h,V)}\approx \lambda_{\mathrm{sp}}\;vv^\top$ with spike
\[
\lambda_{\mathrm{sp}}:=\|\beta^{(h)}\|_2^2\,\kappa.
\]
Toy horizon. If $K\approx \lambda_{\mathrm{sp}}\,vv^\top$ exactly, matching $g_t(\lambda_{\mathrm{sp}})=g_\star(\lambda_{\mathrm{sp}})$ yields
\begin{equation}
t_{\mathrm{sp}}
=
\frac{1}{\lambda_{\mathrm{sp}}}\log\!\Bigl(1+\frac{\lambda_{\mathrm{sp}}}{\lambda_\star}\Bigr),
\qquad
T_{\mathrm{sp}}\approx t_{\mathrm{sp}}/\rho.
\label{eq:app_spike_time_short}
\end{equation}

\paragraph{Intuition (spike $\Rightarrow$ step-sensitivity).}
A dominant spike makes the effective dynamics essentially one-dimensional and very fast. The residual energy along the spike direction decays on the scale $t\sim 1/\lambda_{\mathrm{sp}}$.
Hence small changes in $T$ (or $\rho$) can move the predictor from under- to over-adaptation quickly, producing sharp performance cliffs.

\paragraph{(ii) Mixing head / MP-like bulk.}
If $\mathcal{A}^{(h)}Z$ behaves like a high-rank random feature matrix, then $K(\mathcal{H})$ can have an MP-like bulk.
Let $d_{\mathcal{H}}$ denote the effective feature dimension contributed by selected heads, and $\gamma_{\mathcal{H}}:=n/d_{\mathcal{H}}$.
Then the deterministic proxy \eqref{eq:app_mu_infty_short} suggests
$\mu_\infty(\gamma_{\mathcal{H}})=\frac{(1+\gamma_{\mathcal{H}})+\lambda_\star}{1+\lambda_\star}$, and $t_{\mu_\infty(\gamma_{\mathcal{H}})}=\frac{1}{\mu_\infty(\gamma_{\mathcal{H}})}\log\!\Bigl(1+\frac{\mu_\infty(\gamma_{\mathcal{H}})}{\lambda_\star}\Bigr)$.

\paragraph{Intuition.}
Adding heads expands the effective dimension $d_{\mathcal H}$, thereby reducing the aspect ratio $\gamma_{\mathcal H}$. This adjustment reshapes the spectral distribution of $K(\mathcal H)$ and, consequently, shifts the predicted near-oracle horizon, reflecting a higher adaptation capacity.

\begin{takeawaybox}[title=\textbf{Summary: spectrum couples how far and {which heads} to adapt}]
\begin{itemize}[leftmargin=*, itemsep=0pt, topsep=0pt]
\item \textbf{Spiky heads:} large eigenvalues $\Rightarrow$ faster diffusion $\Rightarrow$ smaller near-oracle times.
\item \textbf{Mixing heads:} bulk spectra $\Rightarrow$ horizons are well-approximated by deterministic equivalents.
\end{itemize}
\noindent
Which heads to adapt and how far to adapt are spectrally coupled through the prompt geometry.
\end{takeawaybox}

\section{Transformer-specific derivations}

\subsection{Technical details for Transformer Jacobian features}
\label{app:transformer_geometry_details}

\subsubsection{Outer-product Jacobian for linear layers}
\begin{lemma}[Outer-product Jacobian for linear layers (exact at $w_0$)]
\label{lem:outer_product_jacobian}
Let $y=Wx$ with $W\in\mathbb{R}^{d_{\mathrm{out}}\times d_{\mathrm{in}}}$ be a linear layer in the computation graph,
and let $s_i(w)$ be any scalar output. Evaluate all forward activations and backpropagated gradients at $w_0$, and define $a_i := x\big|_{w_0}\in\mathbb{R}^{d_{\mathrm{in}}}$ and $b_i := \nabla_y s_i(w)\big|_{w_0}\in\mathbb{R}^{d_{\mathrm{out}}}$. Then, using the column-major convention for $\mathrm{vec}(\cdot)$,
\[
\nabla_{\mathrm{vec}(W)} s_i(w)\big|_{w_0} = a_i\otimes b_i.
\]
Consequently, the kernel contribution induced by a full update of $W$ satisfies
\[
K_{ij}^{(W)} =
\Bigl\langle \nabla_{\mathrm{vec}(W)} s_i,\ \nabla_{\mathrm{vec}(W)} s_j\Bigr\rangle
= (a_i^\top a_j)\,(b_i^\top b_j).
\]
\end{lemma}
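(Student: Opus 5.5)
The plan is a direct chain-rule computation at the frozen point $w_0$, followed by an inner-product identity for Kronecker products.

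\textbf{Step 1 (gradient).} Fix the forward activations and backpropagated quantities at $w_0$, as prescribed. The scalar output $s_i$ then depends on $W$ only through $y=Wx$, with $x$ held at $a_i$. I will perturb $W\mapsto W+E$, so that $y\mapsto y+Ea_i$. To first order,
\[
ds_i=\langle b_i, E a_i\rangle=\mathrm{tr}(b_i^\top E a_i)=\mathrm{tr}(a_i b_i^\top E),
\]
which gives the matrix gradient $\nabla_W s_i=b_i a_i^\top$.

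\textbf{Step 2 (vectorization).} Under the column-major convention, the Kronecker identity $\mathrm{vec}(uv^\top)=v\otimes u$ applied to $b_ia_i^\top$ yields $\mathrm{vec}(b_i a_i^\top)=a_i\otimes b_i$. Equivalently, the identity $\langle \mathrm{vec}(G),\mathrm{vec}(E)\rangle=\mathrm{tr}(G^\top E)$ identifies the vectorized gradient directly.

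\textbf{Step 3 (kernel entry).} Using the mixed-product property,
\[
(a_i\otimes b_i)^\top(a_j\otimes b_j)=(a_i^\top a_j)\otimes(b_i^\top b_j).
\]
Both factors on the right are scalars, so this equals $(a_i^\top a_j)(b_i^\top b_j)$. Because a full update of $W$ corresponds to $\Pi_{\mathcal U}$ being the identity on those coordinates, the induced kernel is $J_w\Pi_{\mathcal U}J_w^\top$ restricted to $W$, and its $(i,j)$ entry is exactly this inner product.

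The only subtle point is the frozen-feature convention. If $a_i$ depended on $W$, as happens when $W$ is reused upstream, the total derivative would pick up extra terms. The statement evaluates activations at $w_0$ and treats them as constants, and the derivative is taken with respect to this single occurrence of $W$, so these extra terms are excluded by construction. I will state this explicitly as the interpretation that makes the gradient exact. The remainder of the argument is bookkeeping of the vec convention.
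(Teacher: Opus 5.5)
Your proposal is correct and follows essentially the same route as the paper: the chain rule with $x$ held fixed at $w_0$ gives $\nabla_W s_i=b_ia_i^\top$, the column-major identity $\mathrm{vec}(ba^\top)=a\otimes b$ gives the vectorized gradient, and the Kronecker inner-product (mixed-product) identity gives $(a_i^\top a_j)(b_i^\top b_j)$. Your explicit remark that the frozen-feature convention rules out extra terms when $W$ reappears upstream matches the paper's "holding $x$ fixed" step and its later Assumption~\ref{assump:frozen_attention}.
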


\begin{proof}
By the chain rule for $y=Wx$, holding $x$ fixed when differentiating w.r.t.\ $W$,
\(
\nabla_W s_i(w)\big|_{w_0} = \bigl(\nabla_y s_i(w)\big|_{w_0}\bigr)\, \bigl(x\big|_{w_0}\bigr)^\top
= b_i a_i^\top.
\)
Under the column-major convention, $\mathrm{vec}(b a^\top)=a\otimes b$, hence
$\nabla_{\mathrm{vec}(W)} s_i(w)\big|_{w_0}=\mathrm{vec}(b_i a_i^\top)=a_i\otimes b_i$.
Finally,
$\langle a_i\otimes b_i,\ a_j\otimes b_j\rangle=(a_i^\top a_j)(b_i^\top b_j)$.
\end{proof}

\subsubsection{Local linearization regime: fixed forward/backward features}
\begin{assumption}[Fixed forward/backward features in the local regime]
\label{assump:frozen_attention}
When forming Jacobian features for selected parameter blocks and the induced geometry $K$, we evaluate all forward activations (e.g.\ residual streams $z_s$ and attention weights $\alpha^{(h)}_{ts}$)
and all upstream gradients (e.g.\ $g_i$) at $w_0$ and treat them as constants under the local linearization.
Equivalently, we ignore second-order feedback effects whereby changing one block perturbs intermediate activations/gradients
used to define Jacobian features of another block.
\end{assumption}

\paragraph{Remark (why attention can be treated as fixed for $W_V/W_O$ features).}
For standard attention, $\alpha^{(h)}_{ts}$ depends on $(W_Q^{(h)},W_K^{(h)})$ and the input activations,
but not directly on $W_V^{(h)}$ or $W_O^{(h)}$.
Thus, when differentiating w.r.t.\ $W_V^{(h)}$ or $W_O^{(h)}$ at $w_0$, holding $\alpha^{(h)}$ fixed is exact at the layer level;
Assumption~\ref{assump:frozen_attention} only rules out indirect, higher-order changes of $\alpha^{(h)}$ through upstream activation shifts when multiple blocks are updated.
Similar “frozen-attention” simplifications also appear in theoretical analyses of attention in random-feature regimes~\citep{fu2023single_attention}.

\subsubsection{Derivation: value-matrix updates $W_V$}
Fix a coordinate $i$ that supervises token $k=k(i)$ and let $g_i:=\nabla_{o_k^{(h)}} s_i(w)\big|_{w_0}$ be the upstream gradient into $o_k^{(h)}$ at $w_0$.
Under Assumption~\ref{assump:frozen_attention}, the first-order dependence on $W_V^{(h)}$ is
\[
\delta s_i = \Bigl\langle g_i,\; W_O^{(h)}\sum_{s\le k}\alpha_{ks}^{(h)}\, \delta W_V^{(h)} z_s\Bigr\rangle
\;+\; o(\|\delta W_V^{(h)}\|).
\]
Define $\beta_i^{(h)} := (W_O^{(h)})^\top g_i \in\mathbb{R}^{d_h},$ and $m_i^{(h)} := \sum_{s\le k(i)} \alpha_{k(i)s}^{(h)} z_s \in\mathbb{R}^{d_{\mathrm{model}}}$. Then $\delta s_i=\langle \beta_i^{(h)},\ \delta W_V^{(h)} m_i^{(h)}\rangle + o(\|\delta W_V^{(h)}\|)$, so by
Lem.~\ref{lem:outer_product_jacobian} applied at $w_0$,
$
\nabla_{\mathrm{vec}(W_V^{(h)})} s_i(w)\big|_{w_0} = m_i^{(h)}\otimes \beta_i^{(h)},
\quad
K_{ij}^{(h,V)} = (m_i^{(h)\top}m_j^{(h)})\,(\beta_i^{(h)\top}\beta_j^{(h)}).
$

\paragraph{Matrix form and PSD.}
Under tokenwise supervision ($i\equiv k$), stacking $m_k^{(h)}$ and $\beta_k^{(h)}$ gives the Hadamard form
\eqref{eq:KV_hadamard}. Since $M^{(h)}M^{(h)\top}\succeq 0$ and $\mathbf{B}^{(h)}\mathbf{B}^{(h)\top}\succeq 0$,
the Schur product theorem implies $K^{(h,V)}\succeq 0$.
The global-upstream specialization yields the attention-sandwiched Gram \eqref{eq:KV_sandwich_globalbeta}.

\subsubsection{Derivation: output-matrix updates $W_O$}
If we update $W_O^{(h)}$ only, then (at $w_0$)
\(
\delta s_i = \langle g_i,\ \delta W_O^{(h)}u_{k(i)}^{(h)}\rangle \;+\; o(\|\delta W_O^{(h)}\|),
\)
so Lem.~\ref{lem:outer_product_jacobian} yields
\(
\nabla_{\mathrm{vec}(W_O^{(h)})} s_i(w)\big|_{w_0} = u_{k(i)}^{(h)}\otimes g_i,
\,
K_{ij}^{(h,O)} = \bigl(u_{k(i)}^{(h)\top}u_{k(j)}^{(h)}\bigr)\,\bigl(g_i^\top g_j\bigr).
\)
Since $u^{(h)}=\mathcal{A}^{(h)}V^{(h)}$ with $V^{(h)}:=ZW_V^{(h)\top}$, this again exhibits attention-induced structure.

\subsubsection{Derivation: MLP block updates}
For an MLP block $h_k=\sigma(W_1 z_k)$ and $\mathrm{mlp}_k=W_2 h_k$,
updating $W_2$ yields Jacobian features $h_k\otimes g_k$ (Gram of hidden activations).
Updating $W_1$ yields features
$z_k\otimes \bigl(\sigma'(W_1 z_k)\odot (W_2^\top g_k)\bigr)$,
again an outer product (input $\times$ backpropagated gate).

\subsubsection{Additivity across updated blocks}
If $\mathcal{B}$ is a set of disjoint parameter blocks that are updated, stacking Jacobians blockwise yields
$\Phi=[\Phi^{(b)}]_{b\in\mathcal{B}}$ and hence $K=\Phi\Phi^\top=\sum_{b\in\mathcal{B}}K^{(b)}$.

\subsection{A simple stability bound for the step size}
\label{app:sec5_stability}

In the global-upstream regime \eqref{eq:KV_sandwich_globalbeta} for a head $h$,
\[
K^{(h,V)}=\|\beta^{(h)}\|_2^2\, M^{(h)}M^{(h)\top},
\quad M^{(h)}=\mathcal{A}^{(h)}Z.
\]
Therefore,
\(
\lambda_{\max}(K^{(h,V)})=\|\beta^{(h)}\|_2^2\,\|M^{(h)}\|_{\rm op}^2
\le \|\beta^{(h)}\|_2^2\,\|\mathcal{A}^{(h)}\|_{\rm op}^2\,\|Z\|_{\rm op}^2.
\)
Since $K=\sum_{b\in\mathcal B}K^{(b)}$ is PSD, $\lambda_{\max}(K)\le\sum_{b\in\mathcal B}\lambda_{\max}(K^{(b)})$,
so enlarging the updated block set tightens the admissible step-size range $0<\rho<1/\lambda_{\max}(K)$.

\subsection{Representable prompt subspace for value updates}
\label{app:representable_subspace}

\begin{lemma}[Representable prompt subspace for $W_V$ updates (global-$\beta$ case)]
\label{lem:representable_subspace_WV}
Assume tokenwise supervision and a global upstream direction $g_k\equiv g$, so that
$\beta_k^{(h)}\equiv\beta^{(h)}$ with $\|\beta^{(h)}\|>0$ for each head $h$.
For a head $h$ with $W_V$ updates, $K^{(h,V)}=\|\beta^{(h)}\|_2^2 M^{(h)}M^{(h)\top}$ where $M^{(h)}=\mathcal{A}^{(h)}Z$.
Hence $\mathrm{range}(K^{(h,V)})=\mathrm{range}(M^{(h)})=\mathrm{range}(\mathcal{A}^{(h)}Z)$.
If we update a head set $\mathcal{H}$ (and sum their kernel components), then
\begin{align}
    \mathrm{range}\!\left(\sum_{h\in\mathcal{H}} K^{(h,V)}\right) 
    &= \sum_{h\in\mathcal{H}} \mathrm{range}(M^{(h)}) \\
    &= \mathrm{range}\!\left([\,M^{(h)}\,]_{h\in\mathcal{H}}\right),
\end{align}
where $[\,M^{(h)}\,]_{h\in\mathcal{H}}$ denotes column-wise concatenation.
\end{lemma}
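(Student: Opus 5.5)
The plan is pure linear algebra on ranges of PSD Gram matrices. First I prove the single-head identity. In the global-upstream regime, Eq.~\eqref{eq:KV_sandwich_globalbeta} gives $K^{(h,V)}=\|\beta^{(h)}\|_2^2\,\mathcal{A}^{(h)}ZZ^\top\mathcal{A}^{(h)\top}=c_h M^{(h)}M^{(h)\top}$ with $c_h:=\|\beta^{(h)}\|_2^2>0$. Since $c_h$ is a nonzero scalar, $\mathrm{range}(K^{(h,V)})=\mathrm{range}(M^{(h)}M^{(h)\top})$. The standard fact $\mathrm{range}(MM^\top)=\mathrm{range}(M)$ then finishes this step, and I will verify it through null spaces. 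If $MM^\top v=0$, then $\|M^\top v\|_2^2=v^\top MM^\top v=0$, so $\mathrm{null}(MM^\top)=\mathrm{null}(M^\top)$. Taking orthogonal complements gives $\mathrm{range}(MM^\top)=\mathrm{range}(M)$, because $MM^\top$ is symmetric and $\mathrm{range}(M)=\mathrm{null}(M^\top)^\perp$.

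Next I treat a head set $\mathcal H$. Let $\widetilde M:=[\,c_h^{1/2}M^{(h)}\,]_{h\in\mathcal H}$ denote the column-wise concatenation. Block multiplication gives $\widetilde M\widetilde M^\top=\sum_{h\in\mathcal H}c_hM^{(h)}M^{(h)\top}=\sum_{h\in\mathcal H}K^{(h,V)}$. Applying the same range fact to $\widetilde M$ yields
\[
\mathrm{range}\Bigl(\sum_{h\in\mathcal H}K^{(h,V)}\Bigr)=\mathrm{range}(\widetilde M).
\]
Rescaling column blocks by positive scalars does not change the column span, so $\mathrm{range}(\widetilde M)=\mathrm{range}([\,M^{(h)}\,]_{h\in\mathcal H})$. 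The column span of a concatenation is the sum of the column spans of its blocks, which gives $\sum_{h\in\mathcal H}\mathrm{range}(M^{(h)})$. This establishes both displayed equalities.

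A second route gives a consistency check, again via null spaces. For PSD matrices $K_h$, $v^\top(\sum_h K_h)v=0$ holds exactly when $v^\top K_hv=0$ for every $h$, that is, when $K_hv=0$ for every $h$. Hence $\mathrm{null}(\sum_h K_h)=\bigcap_h\mathrm{null}(K_h)$. Taking complements gives $\mathrm{range}(\sum_h K_h)=\sum_h\mathrm{range}(K_h)$, and combining this with the single-head step gives the same result.

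None of these steps is a real obstacle. The only point that needs care is that the hypothesis $\|\beta^{(h)}\|>0$ is what prevents a head from collapsing to the zero kernel, and this is used in the single-head step.
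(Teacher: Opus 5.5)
Your proof is correct, and the single-head step is the same as the paper's. The paper simply asserts $\mathrm{range}(M^{(h)}M^{(h)\top})=\mathrm{range}(M^{(h)})$, while you verify it through null spaces.

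For the head set, your main route differs from the paper's. You write the sum as one Gram matrix $\widetilde M\widetilde M^\top$ with $\widetilde M=[\,c_h^{1/2}M^{(h)}\,]_{h\in\mathcal H}$ and apply the same range lemma again. This gives the concatenated form directly, and you then read off the sum of column spans. The paper instead proves the general fact $\mathrm{null}(\sum_\ell S_\ell)=\bigcap_\ell\mathrm{null}(S_\ell)$ for PSD $S_\ell$, deduces $\mathrm{range}(\sum_h K^{(h,V)})=\sum_h\mathrm{range}(K^{(h,V)})$, and calls the concatenation identity standard. That is exactly your ``second route.''

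Each route has an advantage. Your factorization argument is slightly more economical: it uses one lemma and makes the paper's ``standard'' final equality explicit. The paper's null-space argument is more general, because it works for any finite sum of PSD matrices without an explicit joint factorization. For example, it would still give $\mathrm{range}(\sum_h K^{(h,V)})=\sum_h\mathrm{range}(K^{(h,V)})$ for the Hadamard kernels of Eq.~\eqref{eq:KV_hadamard} outside the global-$\beta$ regime.

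Your remark that $\|\beta^{(h)}\|>0$ is needed is right. A head with $\beta^{(h)}=0$ contributes the zero kernel, so for that head both the single-head identity and the positive rescaling of its column block fail.
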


\begin{proof}
In the global-$\beta$ case, $K^{(h,V)}=\|\beta^{(h)}\|_2^2 M^{(h)}M^{(h)\top}$, hence
$\mathrm{range}(K^{(h,V)})=\mathrm{range}(M^{(h)}M^{(h)\top})=\mathrm{range}(M^{(h)})$.
For the sum, note that for symmetric PSD matrices $\{S_\ell\}$,
$\mathrm{null}(\sum_\ell S_\ell)=\cap_\ell \mathrm{null}(S_\ell)$ because
$x^\top(\sum_\ell S_\ell)x=0$ implies $x^\top S_\ell x=0$ for every $\ell$; by PSD-ness this gives $S_\ell x=0$.
Taking orthogonal complements yields
$\mathrm{range}(\sum_\ell S_\ell)=\sum_\ell \mathrm{range}(S_\ell)$.
Finally, $\sum_{h\in\mathcal{H}}\mathrm{range}(M^{(h)})=\mathrm{range}([M^{(h)}]_{h\in\mathcal{H}})$ is standard.
\end{proof}

\subsection{Computational cost, memory, and scalable approximations}
\label{app:computational}

This appendix complements Sections~\ref{sec:eb_horizon_selection} and~\ref{sec:subspace_design} by (i) making
the computational overhead explicit (time, memory, and additional forward/backward passes), and (ii) describing
practical approximations for large Jacobians. We emphasize that the expressions
$(J_w\Sigma J_w^\top+\sigma^2 I)^{-1}$ and $\mathcal I$ can look expensive, but the core linear algebra lives in
\emph{prompt space} (dimension $n$) and can be implemented via Jacobian--vector products (JVP/VJP), without
materializing $J_w\in\mathbb{R}^{n\times p}$.

\subsubsection{Bookkeeping and primitive costs}
We use the following dimensions (consistent with the main text):
\begin{itemize}
\item $p$: number of model parameters.
\item $n$: number of prompt positions / residual terms (Sec.~\ref{sec:local_model}).
\item $d$: dimension of the update subspace $\mathcal U=\mathrm{range}(A)$ (Assumption~\ref{assump:update_subspace_sec3}).
\item $\mathcal{B}_{\rm all}$: partition of parameters into blocks (e.g., by layer/head/matrix type)
used in Sec.~\ref{sec:subspace_design}.
\end{itemize}

To keep the discussion model-agnostic, we express runtime in terms of primitive passes:
\begin{itemize}
\item $C_{\mathrm{fwd}}$: cost of one forward pass on the prompt objective (producing the supervision outputs
$s_w(P_n)\in\mathbb{R}^n$ and the scalar loss).
\item $C_{\mathrm{bwd}}(\mathcal{S})$: cost of one backward pass producing gradients restricted to a parameter set
$\mathcal{S}$ (e.g., the updated parameters, or a single block $b$).
In practice, $C_{\mathrm{bwd}}(\mathcal{S})$ is typically of the same order as a full backward pass unless
specialized kernels or partial-backprop are used.
\end{itemize}

Throughout, the \emph{memory footprint} at test time is dominated by storing activations needed for a backward pass.
Importantly, memory does \emph{not} scale with the number of adaptation steps $T$ when updates are performed
sequentially (activations are freed after each step), but time does scale with $T$.

\subsubsection{Baseline: no-update inference vs.\ standard TTT}
Let $T$ be the number of test-time gradient steps on the prompt loss.
A coarse but useful accounting is:
\begin{itemize}
\item \textbf{No-update inference (ICL / standard inference):}
one forward pass on the prompt+query, and no backward pass.
\item \textbf{Standard TTT (few-step adaptation):}
each step requires (at least) one forward pass to form the prompt loss and one backward pass to compute gradients,
followed by a final forward pass for the query prediction. Thus,
\[
\text{TTT runtime} \approx T\,(C_{\mathrm{fwd}}+C_{\mathrm{bwd}}(\mathcal U)) + C_{\mathrm{fwd,query}}.
\]
\end{itemize}
This matches common lightweight TTT baselines (e.g., updating only a small parameter subset / LoRA state)
where the dominant cost is the backward pass, not the parameter update itself.

\subsubsection{Constructing the prompt geometry $K$ without forming $J_w$}
Many quantities in Sections~\ref{sec:spectral_and_prior}--\ref{sec:eb_horizon_selection} depend on
$K=\Phi\Phi^\top=J_w\Pi_{\mathcal U}J_w^\top\in\mathbb{R}^{n\times n}$.
Crucially, $K$ is an $n\times n$ matrix: when $n$ is modest (few-shot prompts or a small set of tokens),
forming and factorizing $K$ is cheap, and the real bottleneck is accessing Jacobian information.

\paragraph{Explicit feature construction ($\Phi$).}
Recall $\Phi\in\mathbb{R}^{n\times d}$ stacks the Jacobian features
$\phi(x_i;P_n^{(-i)})^\top=\nabla_\theta s_i(w(\theta))\vert_{\theta=0}$.
If one explicitly builds $\Phi$, the simplest approach is $n$ reverse-mode calls (one per $s_i$) restricted to
the updated parameters:
\[
\text{cost to build }\Phi \;\approx\; n\cdot C_{\mathrm{bwd}}(\mathcal U),
\]
\[
\text{memory}\;\approx\;\text{one backward-pass footprint}.
\]
Once $\Phi$ is available, $K=\Phi\Phi^\top$ costs $O(n^2 d)$ flops and stores $O(n^2)$ numbers.

\paragraph{Implicit matrix--vector products (recommended for large $p$ and/or larger $n$).}
Many downstream procedures do \emph{not} require $\Phi$ itself, only products of the form $Kv$ or
$(K+\lambda I)^{-1}v$. These can be computed without forming $J_w$:
\begin{equation}
Kv \;=\; J_w\Pi_{\mathcal U}J_w^\top v.
\label{eq:app_K_matvec}
\end{equation}
A single $Kv$ can be computed with one VJP and one JVP (or two reverse-mode passes via Pearlmutter's trick):
\begin{enumerate}
\item compute $g := J_w^\top v$ \hfill (one VJP / backward pass),
\item project $g_{\mathcal U} := \Pi_{\mathcal U} g$ \hfill (cheap: masking or low-rank projection),
\item compute $Kv := J_w g_{\mathcal U}$ \hfill (one JVP / forward-mode, or equivalent).
\end{enumerate}
Thus,
\[
\text{cost of one }Kv \;\approx\; C_{\mathrm{bwd}}(\text{params}) + C_{\mathrm{jvp}}(\text{params}),
\]
where $C_{\mathrm{jvp}}$ depends on the autodiff implementation. This is the standard way to scale kernel methods
and Gauss--Newton/Fisher computations in large models.

\paragraph{Solves with $K$ for larger $n$.}
When $n$ is small, one can form $K$ explicitly and use a Cholesky factorization at $O(n^3)$ time.
When $n$ is larger, one can solve linear systems with conjugate gradients (CG) using the matvec in
\eqref{eq:app_K_matvec}. Each CG iteration costs one $Kv$ (plus $O(n)$ vector ops), and the number of iterations
depends on the conditioning of $K+\lambda I$.

\subsubsection{Cost of PAC-Bayes horizon selection (Sec.~\ref{sec:eb_horizon_selection})}
Horizon selection evaluates (for $T\in\mathcal T$) the evidence score
\[
\ell_T(r)=\frac{1}{2n}\left\{\log\det(\Sigma_T)+r^\top\Sigma_T^{-1}r\right\},
\]
where
$\Sigma_T=\sigma^2(I-\rho K)^{-T}$.
Given an eigendecomposition $K=U\mathrm{diag}(\lambda_i)U^\top$, we have
\[
\log\det(\Sigma_T)=n\log\sigma^2 - \sum_{i=1}^n T\log(1-\rho\lambda_i),
\]
\[
r^\top\Sigma_T^{-1}r
=\frac{1}{\sigma^2}\sum_{i=1}^n (1-\rho\lambda_i)^T (u_i^\top r)^2,
\]
so evaluating $\ell_T$ for \emph{all} $T\in\mathcal T$ costs $O(|\mathcal T|\,n)$ once $\{\lambda_i,u_i^\top r\}$ are known.
Therefore, the total overhead of evidence-based selection is:
\[
\underbrace{\text{cost to obtain }K}_{\text{dominant (Jacobian access)}}\;+\;
\underbrace{O(n^3)}_{\text{eig/Cholesky (cheap if $n$ is small)}}\;+\;
\underbrace{O(|\mathcal T|\,n)}_{\text{negligible}}.
\]
In typical few-shot regimes (small $n$), the extra prompt-space linear algebra is negligible relative to even a
single backward pass.

\paragraph{Net compute can decrease.}
A practical point is that horizon selection can \emph{reduce} total test-time compute by choosing small $T$ (or $T=0$)
when adaptation is predicted to be unhelpful, potentially offsetting the cost of estimating $K$.

\subsubsection{Cost of Bayes-optimal subspace design (Sec.~\ref{sec:subspace_design})}
Thm.~\ref{thm:optimal_subspace_sec8} involves the information-capture matrix
\[
\mathcal I=\Sigma J_w^\top(J_w\Sigma J_w^\top+\sigma^2 I_n)^{-1}J_w\Sigma\in\mathbb{R}^{p\times p},
\]
whose \emph{full} formation is infeasible for modern $p$. In practice, we advocate two implementation regimes.

\paragraph{(A) Offline global design.}
In many deployments, the update mechanism (which blocks/heads are trainable at test time) is fixed \emph{ahead of time}.
One can estimate a \emph{global} design from a development set of prompts (Cor.~\ref{cor:global_optimal_subspace_app})
and then keep the selected subspace fixed at test time.
In this regime, \textbf{there is no additional test-time cost} beyond the chosen TTT updates.

\paragraph{(B) Prompt-adaptive scoring via prompt-space formulas.}
When restricting to block on/off selection and (optionally) an isotropic correction prior $\Sigma=I_p$, Sec.~\ref{subsec:sec8_transformer_subspace} shows that the block score admits a purely prompt-space form:
\begin{equation}
s_b
:=\mathrm{tr}(\mathcal I_{bb})
=\mathrm{tr}\!\left(K_{\rm full}^{(b)}(K_{\rm full}+\sigma^2 I_n)^{-1}\right),
\label{eq:app_block_score_promptspace}
\end{equation}
where $K_{\rm full}=\sum_{b}K_{\rm full}^{(b)}$ and $K_{\rm full}^{(b)}=J^{(b)}J^{(b)\top}$. Here the only matrix inverse is $(K_{\rm full}+\sigma^2 I_n)^{-1}$ in $\mathbb{R}^{n\times n}$.
Thus, the \emph{linear algebra} is again in prompt space.

The remaining challenge is to obtain $K_{\rm full}^{(b)}$ (or its action on vectors) without forming $J^{(b)}$.
This can be done with implicit Jacobian products restricted to block $b$:
\[
K_{\rm full}^{(b)} v
= J^{(b)}J^{(b)\top} v
\]
via one VJP and one JVP restricted to block $b$.

\paragraph{Hutchinson estimator for large block sets.}
If scoring every block exactly is expensive, one can approximate \eqref{eq:app_block_score_promptspace} with
$R$ Hutchinson probes $z_r\sim\{\pm1\}^n$:
\begin{equation}
s_b
=\mathbb{E}_{z}\big[z^\top K_{\rm full}^{(b)}(K_{\rm full}+\sigma^2 I)^{-1}z\big]
\approx \frac{1}{R}\sum_{r=1}^R z_r^\top K_{\rm full}^{(b)}u_r,
\label{eq:app_hutchinson_block_score}
\end{equation}
where $u_r:=(K_{\rm full}+\sigma^2 I)^{-1}z_r$.
When $n$ is small, each $u_r$ can be computed by a direct $n\times n$ solve; when $n$ is large, CG with the matvec
$K_{\rm full}v$ applies. The expensive part per probe is applying $K_{\rm full}^{(b)}$ to $u_r$, which can be done
implicitly without storing gradients.

\paragraph{Top-eigenspace computation without forming $\mathcal I$.}
If one truly requires the top-$d$ eigenspace of $\mathcal I$ (e.g., within a block for low-rank adaptation),
one can use randomized SVD/Lanczos with only $\mathcal I$--vector products.
A matvec $\mathcal I v$ can be implemented as:
\[
\mathcal I v
=
\Sigma J_w^\top \underbrace{(J_w\Sigma J_w^\top+\sigma^2 I_n)^{-1}}_{\text{$n\times n$ solve}}
J_w\Sigma v,
\]
which requires (i) one JVP to compute $J_w\Sigma v$, (ii) one prompt-space solve, and (iii) one VJP to apply $J_w^\top$,
plus multiplications by $\Sigma$ (cheap if $\Sigma$ is diagonal/structured). This yields a scalable route to
approximate eigenspaces without explicit $p\times p$ matrices.

\subsubsection{Forward/backward pass accounting (summary)}
Table~\ref{tab:app_compute_summary} summarizes the dominant pass counts at test time.
Here $n_J$ denotes the number of additional Jacobian evaluations needed to build or approximate $K$ (typically $n_J=n$
for explicit $\Phi$, and $n_J=O(R)$ for randomized trace/eigensolvers).

\begin{table*}[t]
\centering
\small
\begin{tabular}{lccc}
\toprule
Method & \# forward passes & \# backward/Jacobian passes & Remarks \\
\midrule
No-update inference (ICL) & $1$ & $0$ & baseline inference only \\
Standard TTT ($T$ steps) & $T+1$ & $T$ & one prompt step = fwd+bwd \\
TTT + horizon selection (ours) & $T+1$ & $T + n_J$ & $n_J$ to estimate $K$ / spectrum \\
TTT + offline subspace design (ours) & $T+1$ & $T$ & no extra test-time overhead \\
\bottomrule
\end{tabular}
\caption{Test-time pass counts (coarse). Horizon selection adds Jacobian access to estimate $K$, while offline subspace
design adds no test-time cost.}
\label{tab:app_compute_summary}
\end{table*}

\subsubsection{Practical guidance and regimes of tractability}
\paragraph{When is the matrix inverse cheap?}
The inverse that appears in our design objectives, $(J_w\Sigma J_w^\top+\sigma^2 I_n)^{-1}$, is an $n\times n$ matrix.
Thus it is cheap whenever $n$ is modest. If $n$ is large (e.g., token-level supervision on long prompts),
one should not form this inverse explicitly; instead use CG/Lanczos with implicit matvecs.

\paragraph{Where is the true bottleneck?}
The dominant cost is accessing Jacobian information, not the $n\times n$ linear algebra.
This is why we emphasize (i) restricting test-time updates to a small parameter subset, and
(ii) using implicit Jacobian products instead of explicit Jacobian materialization.

\paragraph{Amortization via offline design.}
The subspace-design step can be performed offline once (global design) and reused at test time, eliminating
any additional inference-time overhead while still benefiting from a principled update mechanism.

\paragraph{Compute--robustness tradeoff.}
Our framework explicitly trades compute for robustness:
step selection can avoid harmful over-adaptation (and may reduce $T$), while subspace design can restrict updates
to identifiable, redundancy-aware directions (reducing the effective dimension of adaptation).
Both tradeoffs are consistent with the empirical observation that ``more test-time optimization'' can be worse
when the prompt signal is weak or misspecified.

\section{Experimental Details}\label{app:experiments}

\subsection{Task and Data Generation}

We evaluate our framework using \texttt{distilgpt2} on a synthetic integer regression task.

We use the pretrained GPT-2 family model and, in particular, the \texttt{distilgpt2} checkpoint released via the Hugging Face Transformers library \citep{radford2019language,wolf-etal-2020-transformers,sanh2019distilbert}.

\textbf{Task Distribution.} Each task is defined by a hidden shift $s \in \{0, \dots, 9\}$. Data pairs $(x, y)$ are generated such that the clean label is $y = (x + s) \pmod{10}$. At test time, we sample $n=10$ input digits and one query digit from ${0,\ldots,9}$.

\textbf{Noise and Templates.} We introduce label noise by replacing the target $y$ with a uniformly random digit sampled from $\{0, \dots, 9\}$ with probability $p_{\text{noise}}$. We simulate two regimes: a \textit{clean} regime where $p_{\text{noise}}=0.0$, and a \textit{noisy} regime where $p_{\text{noise}}=0.55$. These values create a bimodal distribution over prompt quality, allowing us to test adaptive $T$ selection under varying signal-to-noise ratios.

The model sees the data formatted via two templates, selected randomly per task: \texttt{"x->y"} (Template 0) or \texttt{"x:y"} (Template 1).

\subsection{Model and TTT Optimization}

\textbf{Architecture.} We use the pretrained \texttt{distilgpt2} model (6 layers, 12 heads, 768 embedding dim).

\textbf{Update Subspace.} We restrict optimization to the attention \textbf{Value} projection weights. For a selected block $(l, h)$ (layer $l$, head $h$), we construct a binary mask $M_{l,h}$ targeting the indices corresponding to the Value matrix in the fused \texttt{c attn} linear layer (indices $2\, d_{\text{model}}$\ to\ $3\, d_{\text{model}}$). This corresponds to the $W_V^{(h)}$ parameters in the notation of Section~\ref{subsec:transformer_geometry}.

\textbf{Optimization Objective.}
We map the digit label $y\in\{0,\ldots,9\}$ to $\tilde y:=y/9\in[0,1]$ and predict $\widehat y_{w(\theta)}(x;P_n):=\frac{1}{9}\sum_{j=0}^9 j\,p_d$, where $p_d$ are the softmax probabilities of the digit tokens. For each prompt position $i$, we use leave-one-out supervision and define the residual
$r_i(\theta):=\tilde y_i-\widehat y_{w(\theta)}(x_i;P_n^{(-i)})$.
The prompt loss is $L(\theta)=\frac{1}{2\sigma^2}\sum_{i=1}^n r_i(\theta)^2$.
In the few-step regime, linearizing $\widehat y_{w(\theta)}$ at $\theta=0$ yields
$r(\theta)\approx r-\Phi\theta$, recovering the quadratic surrogate $L(\theta)=\frac{1}{2\sigma^2}\|r-\Phi\theta\|_2^2$ as in Section~\ref{subsec:prompt_geometry_sec3}.

Updates are performed via the stochastic gradient descent on the leave-one-out prompt residuals for $T$ steps with step size $\eta = \rho \sigma^2$ (equivalently, scaled step size $\rho$).

\subsection{Evidence-Based $T$ Selection}

We select the horizon $T$ by minimizing the negative log-evidence of the prompt residuals $r$. We assume the residuals follow a zero-mean Gaussian induced by the TTT filter: $r \sim \mathcal{N}(0, \Sigma_T)$.

Using the eigen-decomposition of the kernel $K = U \Lambda U^\top$ with eigenvalues $\lambda_i$, the covariance is $\Sigma_T = \sigma^2 (I - \rho K)^{-T}$. We implement two selection criteria:

\textbf{1. Fixed-$\sigma$ Evidence.} We estimate a constant noise variance $\hat{\sigma}^2 = \frac{1}{n}\|r\|^2$ from the unadapted residuals. We then select $T$ to minimize:
\begin{equation}
\mathcal{L}(T) = -\frac{T}{n} \sum_{i=1}^n \log(1 - \rho \lambda_i) + \frac{1}{n \hat{\sigma}^2} \sum_{i=1}^n (1 - \rho \lambda_i)^T (\tilde{r}_i)^2,
\end{equation}
where $\tilde{r} = U^\top r$ are the rotated residuals. This corresponds to the fixed-$\sigma^2$ variant discussed in Section~\ref{sec:eb_horizon_selection}.

\textbf{2. MLE-$\sigma$ Evidence.} We treat $\sigma^2$ as a free parameter to be profiled out~\citep{MurphyVanDerVaart2000Profile}. We minimize the profile log-likelihood: $\mathcal{L}_{\text{MLE}}(T) =
\log\left( \frac{1}{n} \sum_{i=1}^n (1 - \rho \lambda_i)^T (\tilde{r}_i)^2 \right)
- \frac{T}{n} \sum_{i=1}^n \log(1 - \rho \lambda_i)$. Both methods select $T_{\mathrm{MAP}} \in \arg\min_{T \in \mathcal{T}} \mathcal{L}(T)$ where $\mathcal{T} = \{0, 1, 2, \ldots, 30\}$.

\subsection{Subspace Selection Algorithms}

In Experiment 2, we select a subset of heads from the last 4 layers (48 candidate blocks). We vary $k_{\text{budget}} \in \{1, 2, 4, 8, 16, 32\}$ to assess scaling behavior.

\textbf{Trace-TopK.} We compute the trace score for each block $b$ as $\text{Tr}(K^{(b)} (K_{\text{sum}} + \sigma^2 I)^{-1})$, where $K^{(b)}$ is the kernel for block $b$ and $K_{\text{sum}}$ is the sum of all candidate kernels.

\textbf{Query-Aware.} 
Consistent with Theorem~\ref{thm:optimal_subspace_sec8}, which identifies the subspace maximizing posterior variance reduction, we prioritize update directions with significant query-aligned gradients. In practice, this corresponds to iteratively selecting the block $b$ that maximizes the squared magnitude of the predicted correction $h^2$, where $h = k_x^\top Q_T(K_{\text{curr}} + K^{(b)}) r$. Here, $k_x$ is the prompt-query coupling vector, $K_{\text{curr}}$ is the current kernel sum, and $Q_T(\lambda) = \frac{1 - (1-\rho\lambda)^T}{\lambda}$ is the TTT filter response.

\textbf{Trace-BottomK.}
We use the same trace score as \textsc{Trace-TopK}, but select the $k_{\text{budget}}$ blocks with the smallest scores $s_b$. We include \textsc{Trace-BottomK} as a negative-control baseline for query-agnostic selection. 

\textbf{Random (Uniform).}
As a baseline, we sample $k_{\text{budget}}$ blocks uniformly at random without replacement from the candidate set (e.g., the 48 (layer, head) blocks), and update only the selected blocks.

\subsection{Hyperparameters}
We selected the prior scale $c$ via a pilot study on 20 tasks distinct from the test set, sweeping $c \in \{0.05, 0.1, 0.2\}$ with fixed $T=8$. For the main experiments, we used a step size scale $\rho \approx c / \lambda_{\max}$ to ensure stability ($\rho < 1/\lambda_{\max}(K)$ as required by the monotone-filter regime in Section~\ref{sec:spectral_and_prior}).

\paragraph{Evaluation Protocol.} We evaluate all methods on 2{,}000 independent test tasks for Experiment 1, and 500 tasks for Experiment 2 (due to the additional computational cost of multiple block configurations).

\section{Supplementary Real-Data Experiment on SST-2}
\label{app:sst2_realdata}

We include a small real-data experiment on SST-2 to check whether the two design principles from the main text also appear outside the synthetic digit-shift task. This experiment is intended as supplementary mechanism-level evidence, not as a definitive benchmark.

\paragraph{Dataset and task construction.}
SST-2 is a binary sentence-level sentiment classification dataset~\citep{socher-etal-2013-recursive}. We sample tasks from the validation split. Each task consists of 10 labeled prompt sentences and one held-out query sentence, sampled without replacement within the task. The query label is used only for evaluation. We encode the negative and positive labels as $0$ and $1$, respectively, and use a fixed text-label template: each prompt example is written as \texttt{Text: \{sentence\} Label: \{0/1\}}, followed by the query prefix \texttt{Text: \{query\} Label:}. To keep the setting close to the squared-loss theory, we cast the binary classification problem as scalar regression on labels in $\{0,1\}$ and evaluate query mean-squared error.

\paragraph{Prediction model.}
We use \texttt{distilgpt2} as a causal language model. The prediction at the query is computed from the next-token distribution restricted to the two single-token label strings for $0$ and $1$: if $p_0,p_1$ are the restricted softmax probabilities, the scalar prediction is $\hat y=p_1$ (equivalently, the expected normalized label). For computational consistency across tasks, input texts are truncated before constructing the prompt, and all model weights are reset to the pretrained checkpoint before each TTT run.

\paragraph{TTT protocol.}
We use the same few-step TTT protocol as in the main experiments. Prompt residuals are computed in a leave-one-out manner to avoid label leakage: when forming the residual for a prompt example, its own label is not included in the prompt used by the base predictor. The prompt loss is the squared error between these residuals and the linearized prediction shifts. Adaptation updates only attention Value weights, selected at the attention-head level; for GPT-2's fused \texttt{c\_attn} parameter, this corresponds to the Value slice from $2d_{\rm model}$ to $3d_{\rm model}$. Gradients outside the selected Value-head slices are masked to zero. We use 50 sampled tasks in this supplementary run.

\paragraph{Kernel and step-size construction.}
For each task, we compute the prompt kernel and prompt--query coupling from Jacobian features with respect to the candidate Value-head parameters. The effective noise level is estimated by the plug-in value $\hat\sigma^2=\|r\|_2^2/n$ from the base leave-one-out residuals. The scaled step size is chosen by the same stable normalization as in the synthetic experiment, $\rho=c/\lambda_{\max}(K)$ with $c=0.1$ and clipped so that $\rho<1/\lambda_{\max}(K)$.

\paragraph{Horizon selection.}
For the update horizon experiment, we compare a fixed-horizon baseline with $T=8$ against evidence-based per-prompt selection. The evidence methods search over the same grid as in the synthetic experiment, $T\in\{0,1,\ldots,30\}$, using either a fixed plug-in estimate of $\sigma$ or the profiled MLE-$\sigma$ variant. The evidence score is evaluated from the prompt-kernel eigenspectrum and the rotated residuals, so the candidate horizons are not evaluated by rerunning TTT from scratch.

\paragraph{Subspace selection.}
For the block/head selection experiment, we consider 48 candidate attention heads from the last four layers of \texttt{distilgpt2}. We evaluate budgets $k_{\text{budget}}\in\{1,2,4,8,16,32\}$ and compare \textsc{Query-Aware}, \textsc{Trace-TopK}, \textsc{Trace-BottomK}, and Random selection. \textsc{Query-Aware} uses the prompt--query coupling to greedily select heads under the fixed horizon $T=8$; \textsc{Trace-TopK} ranks heads by the query-agnostic trace score; \textsc{Trace-BottomK} is the corresponding negative-control baseline. The Random baseline is averaged over five independent random head selections for each task and budget. As in the main text, subspace results are reported as query-MSE improvement relative to Random, so positive values indicate lower query error than Random.

\begin{table}[h]
\centering
\small
\begin{tabular}{lc}
\toprule
Method & Query MSE \\
\midrule
Fixed-$T$ TTT & 0.328 \\
Evidence, fixed-$\sigma$ & 0.325 \\
Evidence, MLE-$\sigma$ & 0.324 \\
\bottomrule
\end{tabular}
\caption{\textbf{SST-2 horizon selection.} Evidence-based horizon selection slightly improves over the fixed-$T$ baseline in this small real-data study.}
\label{tab:app_sst2_horizon}
\end{table}

\begin{table}[h]
\centering
\small
\begin{tabular}{cccc}
\toprule
Budget & \textsc{Query-Aware} & \textsc{Trace-TopK} & \textsc{Trace-BottomK} \\
\midrule
1  & $+0.0010$ & $+0.0005$ & $-0.0001$ \\
2  & $+0.0019$ & $+0.0011$ & $-0.0004$ \\
4  & $+0.0031$ & $+0.0017$ & $-0.0007$ \\
8  & $+0.0039$ & $+0.0031$ & $-0.0022$ \\
16 & $+0.0059$ & $+0.0060$ & $-0.0030$ \\
32 & $+0.0029$ & $+0.0029$ & $-0.0047$ \\
\bottomrule
\end{tabular}
\caption{\textbf{SST-2 subspace selection.} Mean query-MSE improvement relative to Random selection; positive values are better. \textsc{Query-Aware} improves over Random across budgets and is stronger than \textsc{Trace-TopK} for small-to-moderate budgets, while \textsc{Trace-BottomK} consistently hurts.}
\label{tab:app_sst2_subspace}
\end{table}

Overall, the results show the same qualitative pattern as the synthetic experiment: evidence-based selection gives a modest improvement over a fixed horizon, and query-aware update directions tend to improve query prediction more reliably than query-agnostic negative controls. Because this study uses only 50 tasks and a small distilled language model, we treat it as preliminary support for the proposed mechanisms rather than evidence of broad practical superiority.

\end{document}